\newcommand{\aspmc}{\ensuremath{\mathsf{asp2sat}}\xspace}
\newcommand{\FIX}[1]{#1} %
\newcommand{\FIXR}[1]{#1} %
\newcommand{\longversion}[1]{}
\newcommand{\shortversion}[1]{#1}
\newcommand{\futuresketch}[1]{}
\newcommand{\footnoteitext}[1]{\stepcounter{footnote}
  \footnotetext[\thefootnote]{#1}}
\newlength\shlength
\newcommand\xshlongvec[2][0]{\setlength\shlength{#1pt}%
  \stackengine{-5.6pt}{$#2$}{\smash{$\kern\shlength%
    \stackengine{7.55pt}{$\mathchar"017E$}%
      {\rule{\widthof{$#2$}}{.57pt}\kern.4pt}{O}{r}{F}{F}{L}\kern-\shlength$}}%
      {O}{c}{F}{T}{S}}
\newcolumntype{H}{>{\setbox0=\hbox\bgroup}c<{\egroup}@{}}
\newcommand{\problemFont}[1]{\textsc{#1}}
\newcommand{\mtext}[1]{\ensuremath{\mathcal{#1}}}
\newcommand{\cntc}[0]{\ensuremath{\#\cdot}}
\DeclareMathOperator{\poly}{poly}
\DeclareMathOperator{\rank}{ord}
\DeclareMathOperator{\SP}{ProofStates}
\DeclareMathOperator{\checkord}{isLvl}
\DeclareMathOperator{\checkmod}{CheckMod}
\DeclareMathOperator{\gatherproof}{proven}
\DeclareMathOperator{\possord}{lvl}
\newcommand{\MAI}[2]{\ensuremath{#1^+_{#2}}}%
\newcommand{\citex}[1]{\citeauthor{#1}~\shortcite{#1}}
\newcommand{\citey}[1]{\citeauthor{#1},~\citeyear{#1}}
\renewcommand{\P}{\ensuremath{\textsc{P}}\xspace}
\newcommand{\NP}{\ensuremath{\textsc{NP}}\xspace}
\newcommand{\SIGMA}[2]{\ensuremath{\Sigma_{\textrm{#1}}^{\textrm{#2}}}}
\newcommand{\HCF}{\text{HCF}\xspace}
\tikzstyle{tdnode} = [draw,rounded corners,top color=vertexTopColor,bottom color=vertexBottomColor,minimum size=1.5em]
\tikzstyle{stdnode} = [tdnode, font=\scriptsize]
\tikzstyle{stdnodecompact} = [stdnode, inner sep = 1.5pt, outer sep = 0.1pt]
\tikzstyle{stdnodetable} = [stdnode, inner sep = 0.5pt, outer sep = 0]
\tikzstyle{stdnodenum} = [minimum size=1.5em, font=\scriptsize]
\tikzstyle{tdedge} = [-,draw,thick]
\tikzstyle{tdlabel} = [draw=none, rectangle, fill=none, inner sep=0pt, font=\scriptsize]
\colorlet{vertexTopColor}{white}
\colorlet{vertexBottomColor}{black!10}
\tikzstyle{squigarrow} = [->,line join=round,decorate, decoration={
\tikzstyle{dashedarrow} = [->,dashed]
\newif\iflong
\newcommand{\restrict}[2]{\ensuremath{#1\cap #2}}
\newcommand{\SB}{\{}%
\newcommand{\SM}{\mid}%
\newcommand{\SE}{\}}%
\def\hy{\hbox{-}\nobreak\hskip0pt}
\newcommand{\mdpa}[1]{\ensuremath{\mathtt{PCNT}_{#1}}}
\newcommand{\ta}[1]{\ensuremath{2^{#1}}}
\newcommand{\Card}[1]{\left|#1\right|}
\newcommand{\CCard}[1]{\|#1\|}
\newcommand{\algo}[1]{\ensuremath{\mathbb{#1}}}
\DeclareMathOperator{\width}{width}
\DeclareMathOperator{\children}{chldr}
\DeclareMathOperator{\rootOf}{root}
\DeclarePairedDelimiter\ceil{\lceil}{\rceil}
\newcommand{\bvali}[3]{\ensuremath{[\![#1]\!]_{#2,#3}}}
\newcommand{\algorithmfootnote}[2][\footnotesize]{
  \let\old@algocf@finish\@algocf@finish
  \def\@algocf@finish{\old@algocf@finish
    \leavevmode\rlap{\begin{minipage}{\linewidth}
    #1#2
    \end{minipage}}
  }
}
\DeclareMathOperator{\orig}{\algo{A}\hy origins}
\DeclareMathOperator{\origs}{\algo{A}\hy origins}
\newcommand{\origa}[1]{\operatorname{#1\hy origins}}
\newcommand{\origse}[1]{\operatorname{#1\hy origins}}
\DeclareMathOperator{\Ext}{Ext}
\DeclareMathOperator{\Exts}{Exts}
\DeclareMathOperator{\PExt}{SatExt}
\DeclareMathOperator{\pmc}{pasc}
\DeclareMathOperator{\ipmc}{ipasc}
\DeclareMathOperator{\bucket}{=_P}%
\DeclareMathOperator{\buckets}{buckets}
\DeclareMathOperator{\subbuckets}{sub\hy buckets}
\newcommand{\TTT}{\ensuremath{\mathcal{T}}}%
\newcommand{\WWW}{\ensuremath{\mathcal{W}}}%
\newcommand{\por}{\vee}
\newcommand{\eqdef}{\ensuremath{\,\mathrel{\mathop:}=}}
\newcommand{\hsep}{\leftarrow\,}
\newcommand{\SAT}{\textsc{SAT}\xspace}
\newcommand{\ASP}{\textsc{ASP}\xspace}
\newcommand{\PASP}{\textsc{\#PAs}\xspace}%
\newcommand{\PDASP}{\textsc{\#PDAs}\xspace}%
\newcommand{\AlgA}{\algo{A}}%
\newcommand{\AlgS}{\AlgA}%
\newcommand{\PROJ}{\algo{PROJ}\xspace}
\newcommand{\at}{\text{\normalfont at}}
\newcommand{\var}{\at}
\newcommand{\bigO}[1]{\ensuremath{{\mathcal O}(#1)}}
\newcommand{\CCC}{\ensuremath{\mathcal{C}}}%
\newcommand{\tuplecolor}[1]{\textcolor{#1}}
\newcommand{\inputPredColor}{orange!55!red}
\newcommand{\outputPredColor}{blue!45!black}
\newcommand{\statePredColor}{green!62!black}
\newcommand{\specialPredColor}{red!62!black}
\newcommand{\tabval}{\ensuremath{u}}
\newcommand{\tab}[1]{\ensuremath{\tau_{#1}}}
\newcommand{\att}[1]{\ensuremath{\at_{\hspace{-0.05em}\leq\hspace{-0.05em}#1}}}
\newcommand{\attneq}[1]{\ensuremath{\at_{\hspace{-0.05em}<\hspace{-0.05em}#1}}}
\newcommand{\prog}{\ensuremath{\Pi}}
\newcommand{\progt}[1]{\ensuremath{\prog_{\hspace{-0.05em}\leq\hspace{-0.05em}#1}}}
\newcommand{\progtneq}[1]{\ensuremath{\prog_{\hspace{-0.05em}<\hspace{-0.05em}#1}}}
\newcommand{\dpa}{\ensuremath{\mathtt{DP}}}
\newcommand{\Tab}[1]{\ensuremath{\text{C-Tabs}}}
\def\thyph{\text{-}\penalty0\hskip0pt\relax}
\newcommand{\ATab}[1]{\ensuremath{#1\thyph\text{Comp}}}
\newcommand{\tw}[1]{\mathit{tw}(#1)}
\newcommand{\Nat}{\mathbb{N}} %
\DeclareMathOperator{\type}{type}
\newcommand{\intr}{\textit{int}}
\newcommand{\leaf}{\textit{leaf}}
\newcommand{\rem}{\textit{forget}}
\newcommand{\join}{\textit{join}}
\newtheorem{example}{Example}
\newtheorem{conjecture}{Conjecture}
\newtheorem{proposition}{Proposition}
\newtheorem{hypothesis}{Hypothesis}
\newtheorem{observation}{Observation}
\newtheorem{theorem}{Theorem}
\newtheorem{lemma}{Lemma}
\newtheorem{definition}{Definition}
\newtheorem{corollary}{Corollary}
\newenvironment{restateproposition}[1][\unskip]{%
  \begingroup

}%
{%
  \addtocounter{proposition}{-1}
  \endgroup
}%
\newenvironment{restatetheorem}[1][\unskip]{%
  \begingroup

}%
{%
  \addtocounter{theorem}{-1}
  \endgroup
}%
\DeclareMathOperator{\pcnt}{pasc}
\DeclareMathOperator{\local}{local}
\DeclareMathOperator{\sipmc}{s-ipasc}
\DeclareMathOperator{\icnt}{ipasc}
\newcommand{\PRIM}{\ensuremath{{\algo{PHC}}}\xspace}
\newcommand{\INC}{\ensuremath{{\algo{INC}}}\xspace}
\begin{document}

\title{Treewidth-aware Reductions of Normal \ASP to \SAT -- Is Normal \ASP Harder than \SAT after All?}
\author{Markus Hecher\\
TU Wien, Vienna, Austria\\
\href{hecher@dbai.tuwien.ac.at}{hecher@dbai.tuwien.ac.at}}

\maketitle
\begin{abstract}
Answer Set Programming (ASP) is a paradigm for modeling and solving problems for knowledge representation and reasoning.
There are plenty of results dedicated to studying
the hardness of (fragments of) ASP.
So far, these studies resulted in characterizations in terms of computational complexity
as well as in fine-grained insights 
presented in form of dichotomy-style results, 
lower bounds when translating to other formalisms like propositional satisfiability (SAT),
and even detailed parameterized complexity landscapes.
A generic parameter in parameterized complexity originating from graph theory is 
the so-called \emph{treewidth}, which in a sense captures structural density of a program.
Recently, there was an increase in the number of treewidth-based solvers
related to SAT.
While there are translations from (normal) ASP to SAT, no reduction that preserves treewidth or at least keeps track of the treewidth increase is known.
In this paper we propose a novel reduction from normal ASP to SAT that is aware of the treewidth,
and guarantees that a slight increase of treewidth is indeed sufficient.
Further, we show a new result establishing that, when considering treewidth,
already the fragment of normal ASP is slightly harder than SAT (under reasonable assumptions in computational complexity).
This also confirms that our reduction probably cannot be significantly improved and that the slight increase of treewidth
is unavoidable.
\FIXR{Finally, we present an empirical study of our novel reduction from normal ASP to SAT, where we
	compare treewidth upper bounds that are obtained via known decomposition heuristics.
	Overall, our reduction works better with these heuristics than existing translations. }
\end{abstract}

\section{Introduction}

Answer Set Programming (ASP)~\cite{BrewkaEiterTruszczynski11} is an
active research area of knowledge representation and reasoning. %
ASP provides a declarative modeling language and problem solving
framework~\cite{GebserKaminskiKaufmannSchaub12} for hard computational
problems, which has been widely
applied~\cite{BalducciniGelfondNogueira06a,NiemelaSimonsSoininen99,NogueiraBalducciniGelfond01a,GuziolowskiEtAl13a,SchaubWoltran18,AbelsEtAl19}.
There are very efficient ASP solvers~\cite{GebserKaminskiKaufmannSchaub12,AlvianoEtAl19,CabalarEtAl20} as well as several recent (language) extensions~\cite{CabalarEtAl20,CabalarEtAl20c,CalimeriEtAl20,CabalarEtAl20b}.
In ASP, questions are encoded into rules and constraints that form a
program (over atoms), whose solutions %
are called answer~sets.

In terms of computational complexity, the \emph{consistency problem} of deciding the existence of
an answer set is well-studied, i.e., the problem is~$\Sigma_2^P$-complete~\cite{EiterGottlob95}.
Some fragments of ASP have lower complexity though. 
A prominent example
is the class of \emph{head-cycle-free (HCF)}
programs~\cite{Ben-EliyahuDechter94}, which is a generalization of 
the class of \emph{normal} programs
and requires the absence of
cycles in a dependency graph representation of the program. 
Deciding whether such a program has an answer set is \NP-complete.

There is also a wide range of more fine-grained studies~\cite{Truszczynski11} for ASP, also in parameterized
complexity~\cite{CyganEtAl15,Niedermeier06,DowneyFellows13,FlumGrohe06}, 
where certain (combinations of) parameters~\cite{FichteKroneggerWoltran19,LacknerPfandler12} %
are taken into account.
In parameterized complexity, the ``hardness'' of a problem 
is classified according to the impact 
of a \emph{parameter} for solving the problem.
There, one often distinguishes the runtime dependency of the parameter, e.g., levels of 
exponentiality~\cite{LokshtanovMarxSaurabh11,MarxMitsou16} in the parameter,
required for problem solving.
Concretely, under the reasonable \emph{Exponential Time Hypothesis (ETH)}~\cite{ImpagliazzoPaturiZane01}, 
\emph{propositional satisfiability (\SAT)} is single exponential in the structural parameter treewidth, whereas
evaluating \emph{Quantified Boolean formulas (QBFs)}
of quantifier depth two is~\cite{LampisMitsou17} 
double exponential\footnote{Double exponentiality refers to runtimes of the form~$2^{2^{\mathcal{O}(k)}} \cdot n$.} %
in the treewidth~$k$.

For ASP there is growing research on 
treewidth~\cite{JaklPichlerWoltran09,FichteEtAl17a,FichteHecher19},
which even involves grounding~\cite{BichlerMorakWoltran18,BliemEtAl20}.
Algorithms of these works exploit structural restrictions (in form of treewidth) of a given program,
and often run in polynomial time in the program size,
while being exponential only in the treewidth.
Intuitively, treewidth gives rise to a \emph{tree decomposition},
which allows %
solving numerous NP-hard problems in parts \FIXR{(via dynamic programming)} %
and indicates the maximum number of variables one has to investigate 
in such parts during evaluation.
There were also dedicated competitions~\cite{Dell17a} 
and notable progresses %
in \SAT~\cite{FichteHecherZisser19,CharwatWoltran19}
and other areas~\cite{BannachBerndt19}. %

Naturally, there are numerous reductions of ASP~\cite{Clark77,Ben-EliyahuDechter94,LinZhao03,Janhunen06,AlvianoDodaro16} and extensions thereof~\cite{BomansonJanhunen13,Bomanson17} to \SAT.
These reductions have been investigated
in the context of resulting formula size and number of auxiliary variables.
However, structural dependency in form of, e.g., treewidth, has not been considered yet.
Notably, there are existing reductions causing a sub-quadratic blow-up in the number of variables (auxiliary variables), which is unavoidable~\cite{LifschitzRazborov06} if the answer sets should be preserved (bijectively).
However, if one considers the structural dependency in form of treewidth, existing reductions
could cause quadratic or even unbounded overhead in the treewidth.

On the contrary,
we present a novel reduction %
for HCF programs that increases the treewidth~$k$ at most \emph{sub-quadratically} ($k\cdot\log(k)$).
This is indeed interesting as there is a close connection~\cite{AtseriasFichteThurley11}
between resolution-width %
and treewidth,
resulting in efficient \SAT solver runs on instances of small treewidth.
As a result, our reduction could be of use
for %
solving approaches based on \SAT solvers~\cite{LinZhao04,Janhunen06}.
Then, we establish  
lower bounds, under ETH, for exploiting treewidth for consistency of normal programs.
This renders normal \ASP ``harder'' than \SAT.
At the same time we prove that one cannot significantly improve the reduction, i.e., avoid the sub-quadratic increase of treewidth.

\smallskip\noindent\textbf{Contributions.} Concretely, we provide the following.

\begin{enumerate}%
	\item First, we present a novel reduction from HCF programs to \SAT, which only requires linearly many auxiliary variables
	plus a number of auxiliary variables that is linear in the instance size and slightly superexponential in the treewidth of the \SAT instance.
	This is achieved by guiding the whole reduction along a tree decomposition of the program.
	Thereby the reduction only increases the treewidth sub-quadratically, i.e., the treewidth of the resulting \SAT formula is slightly larger
	than the treewidth of the given program.
	\item \FIX{Then, we develop and discuss a slightly different reduction from HCF programs to \SAT, where we show a bijective correspondence between answer sets of the program and models of the propositional formula for a certain sub-class of programs.
    This reduction, while preserving bijectivity for certain programs, comes at the cost of a quadratic increase of treewidth.%
}
	\item We show that %
	avoiding a sub-quadratic increase in the treewidth is very unlikely.
	Concretely, we establish that under the widely believed \emph{Exponential Time Hypothesis (ETH)}, one cannot decide ASP in time $2^{o(k \cdot \log(k))} \cdot n$,
	with treewidth~$k$ and program size~$n$. %
	This is in contrast to the runtime for deciding \SAT:~$2^{\mathcal{O}(k)}\cdot n$ with treewidth~$k$ and size~$n$ of the formula. %
	As a result, this establishes that the consistency of normal \ASP programs is already harder than  \SAT
	using treewidth. 
	Note that this is surprising as  %
	both problems are of similar hardness according to classical complexity (\NP-complete).
	\futuresketch{Further, compared to known results that do not rely on assumptions like the ETH, but restrict to, e.g., modular reductions~\cite{Janhunen06}, or involve the need of auxiliary variables~\cite{LifschitzRazborov06}, this shows that the increase of treewidth is likely unavoidable already for the consistency problem.} %
	\item \FIXR{Finally, we present an empirical study of the first contribution, where we
	compare treewidth upper bounds that are obtained via existing decomposition heuristics.
	Interestingly, compared with existing translations, in both acyclic and cyclic scenarios,
	our reduction overall works better
	with these heuristics than existing translations.  
	}
\end{enumerate}

\noindent\FIXR{This is an extended version of a paper~\cite{Hecher20} %
at the 17th International Conference on Principles of Knowledge Representation and Reasoning (KR 2020). %
In addition to the conference version, this work contains additional examples and more detailed explanations. Further, we added the second contribution and worked out details in terms of bijective preservation of answer sets. We were also able to simplify the reduction of the third contribution and discuss relations to other works in detail.
Further, we added an empirical study of treewidth, where we compare treewidth upper bounds of our reduction and an exisiting translation via an efficient decomposer based on heuristics.
}

\smallskip
\noindent\textbf{Related Work.}
For disjunctive ASP and extensions thereof, 
algorithms have been proposed~\cite{JaklPichlerWoltran09,PichlerEtAl14,FichteEtAl17a} %
running in time linear in the instance size, but double exponential
in the treewidth. %
Under ETH, one cannot significantly improve this runtime,
using a result~\cite{LampisMitsou17} for QBFs with quantifier depth two 
and a standard reduction~\cite{EiterGottlob95} from this QBF fragment to disjunctive ASP.
Unsurprisingly, \SAT only requires single exponential runtime~\cite{SamerSzeider10b} 
in the treewidth.
However, for normal and HCF programs
only a slightly superexponential algorithm~\cite{FichteHecher19} for solving consistency is known so far.
Still, the question whether the slightly superexponentiality can
be avoided %
was left open. %
The proposed algorithm %
was used for %
counting answer sets involving projection~\cite{GebserKaufmannSchaub09a}, %
which is at least double exponential~\cite{FichteEtAl18} in the treewidth.%
\futuresketch{However, for plain counting (single exponential), it can overcount due to lacking unique level mappings (orderings). %
}

\noindent\FIXR{There are also further studies on certain classes of programs and their relationships in the form of whether there exist certain reductions between these classes, thereby bijectively preserving the answer sets. These studies
result in an expressive power hierarchy among program classes~\cite{Janhunen06}.}
\FIXR{Note that while existing results~\cite{LifschitzRazborov06} and the expressive power hierarchy weakly indicate that normal \ASP might be slightly harder than \SAT, these results mostly deal with bijectively preserving all answer sets. 
However, our work considers the plain consistency problem,
where \ASP and \SAT are \emph{both \NP-complete}.
}

\section{Preliminaries}
\futuresketch{
\paragraph{Basics and Combinatorics.}
For a set~$X$, let $\ta{X}$ be the \emph{power set of~$X$}
consisting of all subsets~$Y$ with $\emptyset \subseteq Y \subseteq X$.
Let $\vec s$ be a sequence of elements of~$X$. When we address the
$i$-th element of the sequence~$\vec s$ for a given positive
integer~$i$, we simply write $\vec s_{(i)}$. The sequence~$\vec s$
\emph{induces} an \emph{ordering~$<_{\vec s}$} on the elements in~$X$
by defining the
relation~$<_{\vec s} \eqdef \SB (\vec s_{(i)},\vec s_{(j)}) \SM 1 \leq
i < j \leq \Card{\vec s}\SE$.
Given some integer~$n$ and a family of finite subsets~$X_1$, $X_2$,
$\ldots$, $X_n$. Then, the generalized combinatorial
inclusion-exclusion principle~\cite{GrahamGrotschelLovasz95a} states
that the number of elements in the union over all subsets is
$\Card{\bigcup^n_{j = 1} X_j} = \sum_{I \subseteq \{1, \ldots, n\}, I
  \neq \emptyset} (-1)^{\Card{I}-1} \Card{\bigcap_{i \in I} X_i}$.

\paragraph{Computational Complexity.}
We assume familiarity with standard notions in computational
complexity~\cite{Papadimitriou94}
and use counting complexity classes as defined
by~\citex{DurandHermannKolaitis05}.
For parameterized complexity, we refer to standard
texts~\cite{CyganEtAl15}. 
We recall some basic notions.
Let $\Sigma$ and $\Sigma'$ be some finite alphabets.  We call
$I \in \Sigma^*$ an \emph{instance} and $\CCard{I}$ denotes the size
of~$I$.  %
Let $L \subseteq \Sigma^* \times \Nat$ and
$L' \subseteq {\Sigma'}^*\times \Nat$ be two parameterized problems. An
\emph{fpt-reduction} $r$ from $L$ to $L'$ is a many-to-one reduction
from $\Sigma^*\times \Nat$ to ${\Sigma'}^*\times \Nat$ such that for all
$I \in \Sigma^*$ we have $(I,k) \in L$ if and only if
$r(I,k)=(I',k')\in L'$ such that $k' \leq g(k)$ for a fixed computable
function $g: \Nat \rightarrow \Nat$, and there is a computable function
$f$ and a constant $c$ such that $r$ is computable in time
$O(f(k)\CCard{I}^c)$. If additionally~$g$ is a linear function,
then~$r$ is referred to as~\emph{fpl-reduction}.
A \emph{witness function} is a
function~$\mathcal{W}\colon \Sigma^* \rightarrow 2^{{\Sigma'}^*}$ that
maps an instance~$I \in \Sigma^*$ to a finite subset
of~${\Sigma'}^*$. We call the set~$\WWW(I)$ the \emph{witnesses}. A
\emph{parameterized counting
  problem}~$L: \Sigma^* \times \Nat \rightarrow \Nat_0$ is a
function that maps a given instance~$I \in \Sigma^*$ and an
integer~$k \in \Nat$ to the cardinality of its
witnesses~$\Card{\WWW(I)}$.
Let $\mtext{C}$ be a decision complexity class,~e.g., \P. Then,
$\cntc\mtext{C}$ denotes the class of all counting problems whose
witness function~$\WWW$ satisfies (i)~there is a
function~$f: \Nat_0 \rightarrow \Nat_0$ such that for every
instance~$I \in \Sigma^*$ and every $W \in \WWW(I)$ we have
$\Card{W} \leq f(\CCard{I})$ and $f$ is computable in
time~$\bigO{\CCard{I}^c}$ for some constant~$c$ and (ii)~for every
instance~$I \in \Sigma^*$ the decision problem~$\WWW(I)$ belongs to
the complexity class~$\mtext{C}$.
Then, $\cntc\P$ is the complexity class consisting of all counting
problems associated with decision problems in \NP.
Let $L$ and $L'$ be counting problems with witness functions~$\WWW$
and $\WWW'$. A \emph{parsimonious reduction} from~$L$ to $L'$ is a
polynomial-time reduction~$r: \Sigma^* \rightarrow \Sigma'^*$ such
that for all~$I \in \Sigma^*$, we
have~$\Card{\WWW(I)}=\Card{\WWW'(r(I))}$. It is easy to see that the
counting complexity classes~$\cntc\mtext{C}$ defined above are closed
under parsimonious reductions. It is clear for counting problems~$L$
and $L'$ that if $L \in \cntc\mtext{C}$ and there is a parsimonious
reduction from~$L'$ to $L$, then $L' \in \cntc\mtext{C}$.

}

\noindent Before we discuss our reductions, 
we briefly recall some basics.

\smallskip
\noindent\textbf{Answer Set Programming (ASP).}
We assume familiarity with propositional satisfiability (\SAT)~\cite{BiereHeuleMaarenWalsh09,KleineBuningLettman99}, 
and follow standard definitions of propositional ASP~\cite{BrewkaEiterTruszczynski11,JanhunenNiemela16a}.
Let $\ell$, $m$, $n$ be non-negative integers such that
$\ell \leq m \leq n$, $a_1$, $\ldots$, $a_n$ be distinct propositional
atoms. Moreover, we refer by \emph{literal} to a \emph{propositional variable (atom)} or the negation
thereof.
A \emph{program}~$\prog$ is a set of \emph{rules} of the form
\begin{align*}
a_1\por \cdots \por a_\ell \hsep a_{\ell+1}, \ldots, a_{m}, \neg
a_{m+1}, \ldots, \neg a_n.
\end{align*}
For a rule~$r$, we let $H_r \eqdef \{a_1, \ldots, a_\ell\}$,
$B^+_r \eqdef \{a_{\ell+1}, \ldots, a_{m}\}$, and
$B^-_r \eqdef \{a_{m+1}, \ldots, a_n\}$.
We denote the sets of \emph{atoms} occurring in a rule~$r$ or in a
program~$\prog$ by $\at(r) \eqdef H_r \cup B^+_r \cup B^-_r$ and
$\at(\prog)\eqdef \bigcup_{r\in\prog} \at(r)$.
\FIX{A rule~$r$ is \emph{normal} if $\Card{H_r} \leq 1$ 
 and \emph{unary} if~$\Card{B_r^+}\leq 1$.
 Then, a program~$\prog$ is \emph{normal} or \emph{unary} if all its rules~$r\in\prog$ are normal or unary, respectively.}
The \emph{positive dependency digraph}~$D_\prog$ of $\prog$ is the
directed graph defined on the set of atoms
from~$\bigcup_{r\in \prog}H_r \cup B^+_r$, where for every
rule~$r \in \prog$ two atoms $a\in B^+_r$ and~$b\in H_r$ are joined by
an edge~$(a,b)$.
A \emph{head-cycle} of~$D_\prog$ is an $\{a, b\}$-cycle\footnote{Let
  $G=(V,E)$ be a digraph and $W \subseteq V$. Then, a (directed) cycle in~$G$ is
  a $W$-cycle if it contains all vertices from~$W$.} for two distinct
atoms~$a$, $b \in H_r$ for some rule $r \in \prog$. 
Program~$\prog$ is
\emph{head-cycle-free (HCF)} if $D_\prog$ contains no
head-cycle~\cite{Ben-EliyahuDechter94}
\FIX{and we say~$\prog$ is \emph{tight} if there is no directed cycle in~$D_\prog$~\cite{Fages94}.}

An \emph{interpretation} $I$ is a set of atoms. $I$ \emph{satisfies} a
rule~$r$ if $(H_r\,\cup\, B^-_r) \,\cap\, I \neq \emptyset$ or
$B^+_r \setminus I \neq \emptyset$.  $I$ is a \emph{model} of $\prog$
if it satisfies all rules of~$\prog$, in symbols $I \models \prog$. %
For brevity, we view propositional formulas 
as sets of formulas (e.g., clauses) that need to be satisfied, and
use the notion of interpretations, models, and satisfiability analogously. %
The \emph{Gelfond-Lifschitz
  (GL) reduct} of~$\prog$ under~$I$ is the program~$\prog^I$ obtained
from $\prog$ by first removing all rules~$r$ with
$B^-_r\cap I\neq \emptyset$ and then removing all~$\neg z$ where
$z \in B^-_r$ from the remaining
rules~$r$~\cite{GelfondLifschitz91}. %
$I$ is an \emph{answer set} of a program~$\prog$ if $I$ is a minimal
model of~$\prog^I$. %
The problem of deciding whether an \ASP program has an answer set is called
\emph{consistency}, which is \SIGMA{2}{P}-complete~\cite{EiterGottlob95}. 
If the input is restricted to normal programs, the complexity drops to
\NP-complete~\cite{BidoitFroidevaux91,MarekTruszczynski91}.
A head-cycle-free program~$\prog$ %
can be translated into a normal program in polynomial
time~\cite{Ben-EliyahuDechter94}.
\FIX{Further, the answer sets of a tight program can 
be represented by means of the models of a propositional formula, obtainable in linear time via, e.g., Clark's completion~\cite{Clark77}.}
The following characterization of answer sets is often
applied when considering normal programs~\cite{LinZhao03}.
For a given set~$A\subseteq\at(\Pi)$ of atoms, a function~$\varphi: A \rightarrow \{0,\ldots,\Card{A}-1\}$ is an \emph{ordering} over~$A$.
Let~$I$ be a model of a normal program~$\prog$, and~$\varphi$ be an ordering over~$I$.
\FIX{We say a rule~$r\in \Pi$ is \emph{suitable for proving~$a\in I$} if (i) $a\in H_r$, (ii) $B_r^+\subseteq I$, (iii) $I\cap B_r^- = \emptyset$, as well as (iv) $I\cap (H_r\setminus\{a\}])=\emptyset$.}
 An atom~$a\in I$ is \emph{proven} 
if there is a rule~$r\in\prog$ \emph{proving~$a$}, which is the case if $r$ is suitable for proving~$a$ and~$\varphi(b) < \varphi(a)$ for every~$b\in B_r^+$. Then, $I$ is an
\emph{answer set} of~$\prog$ if (i)~$I$ is a model of~$\prog$, and
(ii) \emph{$I$ is proven}, i.e., every~$a \in I$ is proven.
\FIX{This characterization vacuously holds also for head-cycle-free
programs~\cite{Ben-EliyahuDechter94}, since in HCF programs, vaguely speaking, all but one atom of the head of any rule can be ``shifted'' to the negative body~\cite{DixGottlobMarek96}.}
\futuresketch{Given a program~$\prog$. It is folklore that an atom~$a$ of any answer
set of $\prog$ has to occur in some head of a rule
of~$\prog$~\cite[Ch~2]{GebserKaminskiKaufmannSchaub12}, which we 
hence assume in the following. %
}
\longversion{%
} 
\longversion{%
}%
\longversion{%
}%
\FIX{
\begin{figure}[t]%
  \centering
  \shortversion{ %
    \begin{tikzpicture}[node distance=7mm,every node/.style={fill,circle,inner sep=2pt}]
\node (e) [label={[text height=1.5ex,yshift=0.0cm,xshift=0.05cm]left:$e$}] {};
\node (a) [right of=e,label={[text height=.85ex,xshift=0.25em]left:$a$}] {};
\node (d) [below of=e, label={[text height=1.5ex,xshift=-.34em,yshift=.52em]right:$d$}] {};
\node (b) [below of=a,label={[text height=1.5ex,yshift=0.09cm,xshift=-0.07cm]right:$b$}] {};
\node (c) [left of=d,label={[text height=1.5ex,yshift=0.09cm,xshift=0.05cm]left:$c$}] {};
\draw[->] (d) to  (c);
\draw[->] (d) to  (e);
\draw[->] (b) to  (d);
\draw[->] (b) to  (e);
\draw[->] (e) to  (b);
\end{tikzpicture}%
    \vspace{-.4em}
    \caption{\FIX{Dependency graph~$D_\prog$ of~$\prog$ (cf., Example~\ref{ex:running1}).}}
\label{fig:depgraph}
  }%
\end{figure}
}
\begin{example}%
\label{ex:running1}\label{ex:running}
Consider the given program
\vspace{-0.1em}
$\prog\eqdef$
$\SB\overbrace{ a \lor b \hsep}^{r_1};\, %
\overbrace{c \lor e \hsep d}^{r_2};\, %
\overbrace{d \hsep b, \neg e}^{r_3};\,\allowbreak %
\overbrace{e \hsep b, \neg d}^{r_4};\,
\overbrace{b \hsep e, \neg d}^{r_5};\, %
\overbrace{d \hsep \neg b}^{r_6} %
\SE$.
\FIX{Observe that $\prog$ is not tight, since the dependency graph~$D_\prog$ of Figure~\ref{fig:depgraph} contains the cycle~$b,d,e$.
However, the program~$\prog$ is head-cycle-free since there is neither an~$\{a,b\}$-cycle, nor a~$\{c,e\}$-cycle in~$D_\prog$.
Therefore, rule~$r_1$ allows shifting~\cite{DixGottlobMarek96} and actually corresponds to the two rules $a\leftarrow \neg b$ and~$b\leftarrow \neg a$. Analogously, rule~$r_2$ can be seen as the rules~$c\leftarrow d,\neg e$ and~$e\leftarrow d,\neg c$.}
Then, $I\eqdef\{b, c, d\}$ is an answer set of~$\prog$,
since~$I\models\Pi$, and we can prove with ordering
$\varphi \eqdef\{b\mapsto 0, d\mapsto 1, c\mapsto 2\}$
atom~$b$ by rule~$r_1$, 
atom~$d$ by rule~$r_3$, and
atom~$c$ by rule~$r_2$.
Further answer sets of~$\Pi$ are $\{b,e\}$,
$\{a,c,d\}$, and~$\{a,d,e\}$.

\FIX{
The characterization above already fails for simple programs that are not HCF.
Consider for example program~$\prog'\eqdef \{ a \vee b \leftarrow; a \leftarrow b; b \leftarrow a\}$, which has only one answer set~$I'=\{a,b\}$.
However, $I'$ cannot be proven. If the first rule~$a \vee b \leftarrow$ shall prove~$a$, we require~$b\notin I'$ (and vice versa). Then, the remaining two rules of~$\Pi'$ can only prove either~$a$ or~$b$, but fail to prove~$I'$, since both rules proving~$I'$ (together) prohibit every ordering due to the cyclic dependency.
}
\end{example}%

\smallskip
\noindent\textbf{Tree Decompositions (TDs).} %
We assume familiarity with graph terminology, cf.,~\cite{Diestel12}.
A \emph{tree decomposition (TD)}~\cite{RobertsonSeymour86} %
of a given graph~$G{=}(V,E)$ is a pair
$\TTT{=}(T,\chi)$ where $T$ is a tree rooted at~$\rootOf(T)$ and $\chi$ %
assigns to each node $t$ of~$T$ a set~$\chi(t)\subseteq V$,
called \emph{bag}, such that (i) $V=\bigcup_{t\text{ of }T}\chi(t)$, (ii)
$E\subseteq\SB \{u,v\} \SM t\text{ in } T, \{u,v\}\subseteq \chi(t)\SE$,
and (iii) for each $r, s, t\text{ of } T$, such that $s$ lies on the path
from~$r$ to $t$, we have $\chi(r) \cap \chi(t) \subseteq \chi(s)$.
For every node~$t$ of~$T$, we denote by $\children(t)$ the \emph{set of child nodes of~$t$} in~$T$.
The \emph{bags~$\chi_{\leq t}$ below~$t$} consists of the union of all bags of nodes below~$t$ in~$T$, including~$t$. %
We
let $\width(\TTT) {\eqdef} \max_{t\text{ of } T}\Card{\chi(t)}-1$.
The
\emph{treewidth} $\tw{G}$ of $G$ is the minimum $\width({\TTT})$ over
all TDs $\TTT$ of $G$. For a graph~$G$, one can compute a TD of~$G$, whose width is at most~$5\cdot\tw{G}$, in \emph{single exponential time}~\cite{BodlaenderEtAl13} in the treewidth (5-approximation). %
\FIXR{The so-called \emph{pathwidth} of~$G$ refers to the minimum~$\width(\TTT)$
over all TDs $\TTT$ of~$G$, whose trees are just paths.}
For a node~$t \text{ of } T$, we say that $\type(t)$ is $\leaf$ if $t$ has
no children and~$\chi(t)=\emptyset$; $\join$ if $t$ has children~$t'$ and $t''$ with
$t'\neq t''$ and $\chi(t) = \chi(t') = \chi(t'')$; $\intr$
(``introduce'') if $t$ has a single child~$t'$,
$\chi(t') \subseteq \chi(t)$ and $|\chi(t)| = |\chi(t')| + 1$; $\rem$
if $t$ has a single child~$t'$,
$\chi(t') \supseteq \chi(t)$ and $|\chi(t')| = |\chi(t)| + 1$. If for
every node $t\text{ of } T$, %
$\type(t) \in \{ \leaf, \join, \intr, \rem\}$, the TD is called \emph{nice}.
A TD can be turned into a nice TD~\cite{Kloks94a}[Lem.\ 13.1.3] \emph{without increasing the width} in linear~time.

\begin{example}\label{ex:td}
  Figure~\ref{fig:graph-td} illustrates a graph~$G$ and a TD~$\mathcal{T}$ of~$G$ of width~$2$, which is also the treewidth of~$G$,
  since~$G$ contains a complete graph on vertices $e$,$b$,$d$. In general, if a graph contains a complete graph among~$k+1$ vertices,
  the treewith of the graph is at least~$k$, cf.,~\cite{Kloks94a}.
\end{example}

\smallskip
\noindent\textbf{Dynamic Programming on TDs.}
Solvers based on \emph{dynamic programming (DP)} %
evaluate a given input instance~$\mathcal{I}$ in parts along a TD of a graph representation~$G$ of the instance.
Thereby, for each node~$t$ of the TD, intermediate results are %
stored in a \emph{table}~$\tab{t}$. %
This is achieved by running a \emph{table algorithm}, %
which is designed for~$G$, 
and stores in~$\tab{t}$ results of problem parts of~$\mathcal{I}$,
thereby considering tables~$\tab{t'}$ for child nodes~$t'$ of~$t$. %
DP works for \emph{many problems} as follows. %
\vspace{-.2em}\begin{enumerate}%
\item Construct a \emph{graph representation}~$G$ of~$\mathcal{I}$.\vspace{-.2em}
\item Compute a TD~$\TTT=(T,\chi)$ of~$G$, which is obtainable via heuristics, e.g.,~\cite{AbseherMusliuWoltran17}.
\vspace{-.2em}
\item\label{step:dp} Traverse the nodes of~$T$ in
  post-order (bottom-up tree traversal of~$T$).
  At every node~$t$ of $T$ during post-order traversal, execute a table algorithm
  that takes as input a bag $\chi(t)$, a certain \emph{bag instance}~$\mathcal{I}_t$ depending on the problem, as well as previously computed child tables of~$t$. Then, the results of this execution is stored in table~$\tab{t}$.\vspace{-.2em}
\item Finally, interpret table~$\tab{n}$ for the root node~$n$ of~$T$ in order to \emph{output the solution} to the problem for instance~$\mathcal{I}$.
\end{enumerate}

\begin{figure}[t]%
  \centering
  \shortversion{ %
    \begin{tikzpicture}[node distance=7mm,every node/.style={fill,circle,inner sep=2pt}]
\node (a) [label={[text height=1.5ex,yshift=0.0cm,xshift=0.05cm]left:$e$}] {};
\node (b) [right of=a,label={[text height=.85ex,xshift=0.25em]left:$a$}] {};
\node (e) [below of=a, label={[text height=1.5ex,xshift=-.34em,yshift=.52em]right:$d$}] {};
\node (d) [below of=b,label={[text height=1.5ex,yshift=0.09cm,xshift=-0.07cm]right:$b$}] {};
\node (c) [left of=e,label={[text height=1.5ex,yshift=0.09cm,xshift=0.05cm]left:$c$}] {};
\draw (a) to (c);
\draw (b) to (d);
\draw (c) to (e);
\draw (d) to (e);
\draw (e) to (a);
\draw (d) to (a);
\end{tikzpicture}%
    \includegraphics{graph0/td}
    \vspace{-.4em}
    \caption{Graph~$G$ (left) and a tree decomposition~$\mathcal{T}$ of~$G$ (right).}
  }%
  \longversion{%
    \begin{subfigure}[c]{0.47\textwidth}
      \centering%
      \begin{tikzpicture}[node distance=7mm,every node/.style={fill,circle,inner sep=2pt}]
\node (a) [label={[text height=1.5ex,yshift=0.0cm,xshift=0.05cm]left:$e$}] {};
\node (b) [right of=a,label={[text height=.85ex,xshift=0.25em]left:$a$}] {};
\node (e) [below of=a, label={[text height=1.5ex,xshift=-.34em,yshift=.52em]right:$d$}] {};
\node (d) [below of=b,label={[text height=1.5ex,yshift=0.09cm,xshift=-0.07cm]right:$b$}] {};
\node (c) [left of=e,label={[text height=1.5ex,yshift=0.09cm,xshift=0.05cm]left:$c$}] {};
\draw (a) to (c);
\draw (b) to (d);
\draw (c) to (e);
\draw (d) to (e);
\draw (e) to (a);
\draw (d) to (a);
\end{tikzpicture}%
      \input{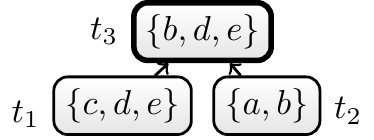}%
      \caption{Graph~$G_1$ and a tree decomposition of~$G_1$.}
      \label{fig:graph-td}
    \end{subfigure}
    \begin{subfigure}[c]{0.5\textwidth}
      \centering \input{graph0/graph_inc}%
      \input{graph0/td_inc}%
      \caption{Graph~$G_2$ and a tree decomposition of~$G_2$.}
      \label{fig:graph-td2}%
    \end{subfqigure}
    \caption{Graphs~$G_1, G_2$ and two corresponding tree
      decompositions.}
  }%
  \label{fig:graph-td}%
\end{figure}

\noindent In order to use TDs for ASP, we need
dedicated graph representations of programs~\cite{JaklPichlerWoltran09}.
The \emph{primal graph\footnote{Analogously, the primal graph~$\mathcal{G}_F$ of a propositional Formula~$F$ (in CNF) %
uses variables of~$F$ as vertices and adjoins two vertices~$a,b$ by an edge, if there is a clause in~$F$ %
containing~$a,b$.%
}}~$\mathcal{G}_\prog$
of program~$\prog$ has the atoms of~$\prog$ as vertices and an
edge~$\{a,b\}$ if there exists a rule~$r \in \prog$ and $a,b \in \at(r)$.
\longversion{The \emph{incidence graph}~$I_\prog$ of $\prog$ is the bipartite
graph that has the atoms and rules of~$\prog$ as vertices and an
edge~$a\, r$ if $a \in \at(r)$ for some rule~$r \in \prog$.}
\noindent Let ${\cal T} = (T, \chi)$ be a TD of primal
graph~$\mathcal{G}_\prog$ of a program $\prog$, and let~$t$ be a node of~$T$. %
The \emph{bag program}~$\Pi_t$ contains rules entirely covered by
the bag~$\chi(t)$. Formally, $\prog_t \eqdef \SB r \SM r \in \prog, \at(r) \subseteq \chi(t)\}$.
\begin{example}
  Recall program~$\prog$ from Example~\ref{ex:running1}. Observe
  that graph~$G$ of Figure~\ref{fig:graph-td} is the primal graph
  of~$\Pi$.
  Further, we have $\Pi_{t_1}=\{r_2\}$, $\Pi_{t_2}=\{r_1\}$,
  and $\Pi_{t_3}=\{r_3, r_4, r_5, r_6\}$. Note that in general a rule might appear in several bag programs.
  \longversion{%
    Further, graph~$G_2$ of Figure~\ref{fig:graph-td2} is the
    incidence graph of~$\prog$.
  }
\end{example}

Now, the missing ingredient for solving problems via dynamic programming along a given TD, is a suitable table algorithm.
Such table algorithms have been already presented for~\SAT~\cite{SamerSzeider10b} and ASP~\cite{JaklPichlerWoltran09,FichteEtAl17a,FichteHecher19}. 
We only briefly sketch the ideas of a table algorithm using the primal graph that computes models %
of a given program~$\Pi$. %
Each table~$\tab{t}$ consists of rows storing interpretations
over atoms in the bag~$\chi(t)$.
Then, the table~$\tab{t}$ for a leaf node~$t$ %
consists of the empty interpretation.
For a node~$t$ with introduced variable $a\in\chi(t)$, 
we store in~$\tab{t}$ 
interpretations of the child table, but %
for each such interpretation %
we decide whether~$a$ is in the interpretation or not, 
and ensure that $\Pi_t$ is satisfied.
When an atom~$b$ is forgotten in a forget node~$t$, we store interpretations of the child table, but projected to~$\chi(t)$.
By the properties of a TD, it is then guaranteed that all rules containing~$b$ 
have been processed so far.
For a join node~$t$, we store in~$\tab{t}$ interpretations 
that are in both child tables of~$t$.

\futuresketch{
\subsection{Reduction of SAT to Normal ASP}

Note that~$w!/w^w=e^{-\mathcal{O}(k)}$ via Sterling's formula~\cite{LokshtanovMarxSaurabh11}.

Given a SAT formula~$F$ and a tree decomposition~$\mathcal{T}=(T,\chi)$ of the primal graph of~$F$ such that~$T=(N,A,n)$.
We reduce~$F$ to a normal program~$\Pi$ such that
$F$ is satisfiable if and only if~$\Pi$ has an answer set.
The reduction itself is guided by~$\mathcal{T}$ and the width of~$\mathcal{T}'$ is 
$\bigO{w/\log(w)}=\bigO{w/\log(w\cdot\log(w))}=\bigO{w/[\log(w)+\log(\log(w))]}$, 
where~$w$ is the width of~$\mathcal{T}$.
In total, with ASP, we can assign up to~$2^{\bigO{s\cdot\log(s)}}$ many states
for each bag~$\chi(t')$ of~$\mathcal{T}'$ of cardinality at most~$s$.
Concretely, we have exactly~$\Sigma_{I \subseteq \chi(t')} \Card{I}!$ many states.
We will use these states to simulate SAT, but thereby ``save'' in the treewidth.

Given any two bags~$\chi(t_1),\chi(t_2)$ of~$\mathcal{T}$ such that~$t_1$ is a children of~$t_2$, we construct the following ASP rules in~$\Pi$.
We assume an arbitrary total ordering~$\prec_{t_1}$ of the assignments~$I\in 2^{\chi(t_1)}$
in bag~$t_1$.
Further, we assume an arbitrary total ordering~$\prec_{t'_1}$ among $2^{\chi(t'_1)} \times ord(\chi(t'_1))$.
We use a mapping~$m: 2^{\chi(t_1)} \rightarrow [2^{\chi(t'_1)} \times ord(\chi(t'_1))]$,
which is a bijection between an assignment~$I$ in~$\chi(t_1)$ and an assignment~$I'$ over~$\chi(t'_1)$ together with
an ordering among those atoms in~$I'$.
The mapping~$m$ is naturally defined by assigning the elements in~$\prec_{t_1}$ to~$\prec_{t'_1}$ according to
the ordinal of the element.
Then, we add for each element~$I\in2^{\chi(t_1)}$ the rules~$0 { a_I } 1.$, but ensure an ``at most one'' behavior.
This can be done later on top of the TD. Further, we add~$v_j \leftarrow v_i, a_I.$
For each element~$I\in2^{\chi(t_1)}$ we add a parent~$t'_{1.i}$ to~$t'_1$.

\subsection{Reduction of normal ASP to PASP using trick of exponential compression as used in the hierarchy}

TBD: Just reduce normal ASP to InvPASP problem, using the same ideas as in the hierarchy, exponential compression and such, then easily reduce InvPASP to PASP.
We have now a lower bound for PASP using normal ASP.
}

\section{Treewidth-Aware Reductions to \SAT}

Having the basic concept of dynamic programming in mind,
we use this idea to design a reduction of an HCF program~$\Pi$ to a \SAT formula~$F$,
which is treewidth-aware.
The reduction is inspired by ideas of a DP algorithm for consistency of \HCF programs~\cite{FichteHecher19} and the idea of level mappings~\cite{Janhunen06}.
Intuitively, \emph{global} orderings can cause already huge blow-up in the treewidth,
e.g., reductions, where all atoms are ordered at once, often cause long rules, whose number of atoms vastly exceeds treewidth. %
\FIXR{This is indeed not surprising, since for a given program with~$n$ many atoms, we have that 
global level mappings use~$\mathcal{O}(\log(n))$ additional auxiliary atoms. Consequently,
already one rule that actually utilizes these mappings causes a treewidth blow-up factor~$\mathcal{O}(\log(n))$,
which obviously is unbounded by the treewidth of the given program.}

As a result, we apply these orderings only locally within the bags of a TD.
\FIX{Note that while this approach might look similar to existing techniques
that are applied on a component-by-component basis~\cite{Janhunen06,GebserJanhunenRitanen14,BomansonEtAl16},
the approach is slightly different, as different components of the positive dependency graph~$D_\Pi$ might be spread among different bags of a TD and a bag might only contain parts of components.
If components are required to be spread across any TD of primal graph~$\mathcal{G}_\prog$  whose width coincides with the treewidth, only parts of cycles of~$D_\prog$ can be analyzed by a table algorithm in a bag.
This is the reason why already the consistency problem of \ASP remains slightly harder than the decision problem \SAT (under ETH).
Consequently, instead of global orderings or orderings per component, we use local orderings, which only order within bags. However, these orderings need to be ``synchronized'' between bags and come at the price of loosing bijective preservation of answer sets in general. 
The reason for that will be discussed in Section~\ref{sec:bijective}.
}

\begin{figure}%
\centering
\begin{tikzpicture}[node distance=1mm, scale=0.16]%
\def\nodedist{0.7em}
\tikzset{every path/.style=thick}
\node (ableft) [tdnode,label={[yshift=-0.25em,xshift=0.25em] left:$t_3$}] {$\textcolor{blue}{\chi(t_3)}$};
\node (leaf1) [below=\nodedist of ableft,xshift=-2em, tdnode,label={[yshift=-0.25em,xshift=0.1em]left:$t_1$}] {$\textcolor{blue}{\chi(t_1)}$};
\node (leaf12) [below=\nodedist of ableft,xshift=1.5em, tdnode,label={[yshift=-0.25em,xshift=-0.1em]right:$t_2$}] {$\textcolor{blue}{\chi(t_2)}$};
\node (leaf2) [tdnode,label={[xshift=-1.0em, yshift=-0.15em]above right:$t_4$}, right = 0.5em of ableft]  {$\textcolor{blue}{\chi(t_4)}$};
\coordinate (middle) at ($ (ableft.north east)!.5!(leaf2.north west) $);
\node (root) [tdnode,ultra thick,label={[]left:$t_5$}, above = \nodedist of middle] {$\textcolor{blue}{\chi(t_5)}$};
\node (llabel) [left=of root,xshift=-1.5em] {$\mathcal{T}$:};
\coordinate (top) at ($ (root.north east)+(3.5em,0) $);
\coordinate (bot) at ($ (top)+(0,-4em) $);
\draw [stealth'-] (leaf1) to (ableft);
\draw [stealth'-] ($(leaf12.north)+(0.2em,-0.1em)$) to ($(ableft.south)+(-0.25em,0.0em)$);
\draw [-stealth'] (root) to (ableft);
\draw [-stealth'] (root) to (leaf2);
\node (rleaf1) [right=5em of ableft,tdnode,label={[yshift=-0.25em,xshift=0.25em] left:$ $}] {$\qquad\qquad\quad\qquad\qquad\qquad\quad$};
\node (rleaf1p) [right=5em of ableft,xshift=0.35em,inner sep=0.5] {$f(t_3,\textcolor{blue}{\chi(t_3)}, \textcolor{red}{\{\chi'(t_1), \chi'(t_2)\}})$};
\node (rem1) [below=1em of rleaf1,xshift=-2em, tdnode,label={[yshift=-0.25em,xshift=0.3em]left:$ $}] {$\qquad\qquad\qquad$};
\node (rem1p) [below=\nodedist of rleaf1,yshift=-0.5em,inner sep=0.5,xshift=-2.05em] {$f(t_1,\textcolor{blue}{\chi(t_1),\textcolor{red}{\emptyset}})$};
\node (remab) [below=\nodedist of rleaf1,yshift=-0.3em,xshift=5.5em, tdnode,label={[yshift=-0.25em,xshift=-0.1em]right:$ $}] {$\quad\qquad\qquad\quad$};
\node (remabp) [below=\nodedist of rleaf1,yshift=-0.5em,inner sep=0.5,xshift=5.5em] {$f(t_2,\textcolor{blue}{\chi(t_2)},\textcolor{red}{\emptyset})$};
\node (rleaf2) [tdnode,label={[xshift=-0.0em, yshift=-0.15em]above right:$ $}, right = 0.5em of rleaf1]  {$\qquad\qquad\qquad$};
\node (rleaf2p) [right=0.5em of rleaf1,yshift=-0.4em,inner sep=0.5,xshift=.5em,yshift=0.4em,] {$f(t_4,\textcolor{blue}{\chi(t_4)},\textcolor{red}{\emptyset})$};
\coordinate (middle) at ($ (rleaf1.north east)!.5!(rleaf2.north west) $);
\node (join) [tdnode,ultra thick,label={[xshift=-0.3em]right:$ $}, above  = \nodedist of middle] {\qquad\qquad\qquad\qquad\qquad\qquad\qquad};
\node (llabel) [left=of join,xshift= -3em] {$\mathcal{T}'$:};
\node (joinp) [above = \nodedist of middle,yshift=0.25em,xshift=-.05em,inner sep=0.5] {$f(t_5,\textcolor{blue}{\chi(t_5)}, \textcolor{red}{\{\chi'(t_3), \chi'(t_4)\}})$};
\coordinate (top) at ($ (join.north east)+(3.5em,0) $);
\coordinate (bot) at ($ (top)+(0,-4em) $);
\draw [stealth'-] (rem1) to (rleaf1);
\draw [stealth'-] ($(remab.north)+(0.2em,-0.1em)$) to ( $(rleaf1.south)+(-0.0em,0.0em)$);
\draw [-stealth'] (join) to (rleaf1);
\draw [-stealth'] ($(join.south)+(0em,0em)$) to (rleaf2);
\draw[dashedarrow,out=-170,in=-50,blue] (rem1) to (leaf1);
\draw[dashedarrow,out=-168,in=-40,blue] (remab) to (leaf12);
\draw[dashedarrow,out=-165,in=-27,blue] (rleaf1) to ($(ableft.south east)+(-2.0em,0.0em)$);
\draw[dashedarrow,out=-191,in=14,blue] (rleaf2) to (leaf2);
\draw[dashedarrow,out=-180,in=30,blue] (join) to ($(root.north)$);
\end{tikzpicture}%
\caption{\FIXR{High-level illustration of the treewidth-awareness of our reduction from \ASP to \SAT. We assume a given program~$\Pi$ and a tree decomposition~$\mathcal{T}=(T,\chi)$ of~$G_\Pi$. Then, our reduction is constructed for each node~$t$ of~$T$ and it immediately gives rise to a tree decomposition~$\mathcal{T}'=(T,\chi')$ of the resulting \SAT formula. %
Thereby, each resulting bag~$\chi'(t)$ %
functionally depends on~$\chi(t)$ (as well as on child or parent bags).}}\label{fig:decompguided2}
\end{figure}
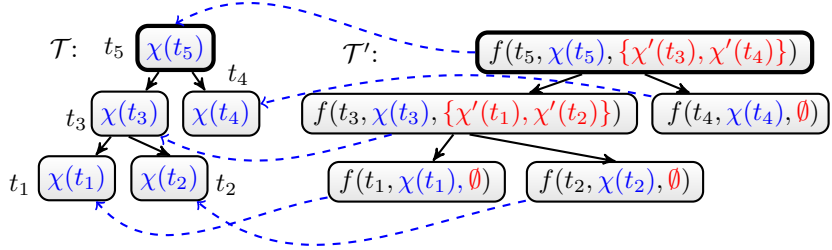

\FIXR{More concretely, our reduction is \emph{guided} by a TD~$\mathcal{T}=(T,\chi)$ of primal graph~$\mathcal{G}_\Pi$
and uses core ideas of dynamic programming along TD~$\mathcal{T}$
to ensure only a slight increase in treewidth of the resulting \SAT formula.
This increase of treewidth is by construction, since our reduction essentially yields a tree decomposition of the primal graph of the resulting \SAT formula, which functionally depends on~$\mathcal{T}$, as sketched in Figure~\ref{fig:decompguided2}.}
Intuitively, thereby the aforementioned reduction takes care to keep the increase of width
local, i.e., the increase of width happens \emph{within} the bags of~$\mathcal{T}$.
Concretely, if~$\width(\mathcal{T})$ is bounded by some value~$\mathcal{O}(k)$,
the treewidth of the resulting formula~$F$ is at most~$\mathcal{O}(k\cdot\log(k))$.

For encoding orderings along a TD, we need the following notation.
Let us consider a TD~$\mathcal{T}=(T,\chi)$ of~$\mathcal{G}_\Pi$, and a node~$t$ of~$T$.
We refer to an ordering over~$\chi(t)$ by \emph{$t$-local ordering}. 
\begin{definition}
A \emph{$\mathcal{T}$-local ordering} is a set containing one~$t$-local ordering~$\varphi_t$ for every~$t$ of~$T$
such that there is an interpretation~$I$ with 
(1) \emph{satisfiability}: $I\models\Pi_t$ for every node~$t$ of~$T$, 
(2) \emph{provability}: for every $a\in I$, there is a node~$t$ of~$T$ and a rule~$r\in\Pi_{t}$ proving~$a$, and
(3) \emph{compatibility}: for every nodes~$t,t'$ of~$T$ and every~$a,b\in\chi(t)\cap\chi(t')$, %
whenever $\varphi_t(a) < \varphi_t(b)$ then~$\varphi_{t'}(a) < \varphi_{t'}(b)$. %
\end{definition}
For an ordering~$\varphi$, we use the \emph{canonical $t$-local ordering~$\hat\varphi_t$} for each~$t$ of~$T$ as follows. 
Intuitively, atoms~$a\in\chi(t)$ with smallest ordering position $\varphi(a)$
among all atoms in~$\chi(t)$ get~$\hat\varphi_t(a)=0$, second-smallest get value~$1$, and so on.
Formally, we define $\hat\varphi_t(a) \eqdef %
\rank_{t}(a,\varphi) - 1 \text{ for each } a\in \chi(t)$, %
where~$\rank_t(a,\varphi)$ is the ordinal number (rank) of $a$ according to smallest ordering position~$\varphi(a)$ among~$\chi(t)$.

\begin{example}\label{ex:localorderings}
Consider program~$\Pi$,
answer set $I=\{b, c, d\}$, and ordering
$\varphi =\{b\mapsto 0, d\mapsto 1, c\mapsto 2\}$
 of Example~\ref{ex:running}.
Ordering~$\varphi$ can easily be extended to ordering
$\varphi' \eqdef\{a\mapsto 0, e\mapsto 0,b\mapsto 0, d\mapsto 1, c\mapsto 2\}$
over~$\at(\Pi)$. 
Then, using TD~$\mathcal{T}$ of~${\mathcal{G}}_{\Pi}$,
we can construct~$\mathcal{T}$-local ordering $\mathcal{M}\eqdef\{\hat\varphi_{t_1}, \hat\varphi_{t_2}, \hat\varphi_{t_3}\}$ of~$\varphi'$, where $\hat\varphi_{t_1}= \{e\mapsto 0,d\mapsto 1, c\mapsto 2\}$, $\hat\varphi_{t_2}= \{a\mapsto 0, b\mapsto 0\}$, and $\hat\varphi_{t_3}= \{e\mapsto 0,b\mapsto 0, d\mapsto 1\}$.
Consider a TD~$\mathcal{T}'$ of~$\mathcal{G}_{\Pi}$,
which is similar to~$\mathcal{T}$, but $t_1$ has a child node~$t'$, whose bag is $\{c,e\}$. Then, $\mathcal{M}\cup\{\hat\varphi_{t'}\}$ with $\hat\varphi_{t'}=\{e\mapsto 0,c\mapsto 1\}$ is a $\mathcal{T}'$-local ordering.
\end{example}

In our reduction, we use the following propositional variables.
For each atom~$x\in\at(\Pi)$, we use~$x$ also as propositional variable.
For each atom~$x\in\chi(t)$ of each node~$t$ of~$T$, we use~$\ceil{\log(\Card{\chi(t)})}$
many variables of the form~$b^i_{x_t}$ forming the $i$-th bit of the $t$-local ordering position (in binary) of~$x$.
By the shortcut notation~$\bvali{x}{t}{j}$, we refer to the \emph{conjunction of literals over bits~$b^i_{x_t}$} for~$1\leq i\leq \ceil{\log(\Card{\chi(t)})}$ according to the representation of the number~$j$ in binary.
For atoms~$x,x'\in\chi(t)$ of node~$t$ of~$T$, we use
the following notation to indicate that atom~$x$ is ordered before atom~$x'$:
\vspace{-.25em}
$$x \prec_t x' \eqdef \hspace{-1.6em}\bigvee_{1\leq i \leq \ceil{\log(\Card{\chi(t)})}}\hspace{-1em}(b_{x'_t}^i \wedge \neg b_{x_t}^i \wedge \bigwedge_{i < j \leq \ceil{\log(\Card{\chi(t)})}}\hspace{-1.6em} (b_{x_t}^j \longrightarrow b_{x'_t}^j)).$$

\begin{example}
Consider Example~\ref{ex:localorderings}
and the~$\mathcal{T}$-local ordering~$\mathcal{M}=\{\varphi_{t_1}, \varphi_{t_2}, \varphi_{t_3}\}$.
One could encode ordering position~$\varphi_{t_1}(e)=0$ using two bit variables $b^1_{e_{t_1}},b^2_{e_{t_1}}$ and forcing it to false.
This results in formula~$\bvali{e}{t_1}{0}=\neg b^1_{e_{t_1}} \wedge \neg b^0_{e_{t_1}}$.
Then, we formulate~$\varphi_{t_1}(d)=1$ by~$\bvali{d}{t_1}{1}=\neg b^1_{d_{t_1}} \wedge b^0_{d_{t_1}}$, and~$\varphi_{t_1}(c)=2$ by~$\bvali{c}{t_1}{2}=b^1_{c_{t_1}} \wedge \neg b^0_{c_{t_1}}$. 
For the whole resulting formula, $(e\prec_{t_1}d)$, $(d\prec_{t_1}c)$ as well as $(e\prec_{t_1}c)$ hold.
\end{example}

\subsection{TD-guided Reduction to \SAT}\label{sec:tdguided}

\noindent 
For solving consistency, we require to construct the following Formulas~(\ref{red:checkrules})--(\ref{red:checkfirst}) below \emph{for each TD node~$t$} of~$T$
having child nodes~$\children(t)=\{t_1, \ldots, t_\ell\}$.
Thereby, these formulas aim at constructing $\mathcal{T}$-local orderings along the TD~$\mathcal{T}$, where Formulas~(\ref{red:checkrules}) ensure satisfiability,
Formulas~(\ref{red:prop}) take care of compatibility along the TD,
and Formulas~(\ref{red:checkfirst}) enforce provability within a node, which is then guided along the TD by Formulas~(\ref{red:checkremove})--(\ref{red:check}).

\noindent Concretely, Formulas~(\ref{red:checkrules}) ensure that
the variables of the constructed \SAT formula~$F$ are such that
all (bag) rules are satisfied.
Then, whenever in node~$t$ an atom~$x$ has a smaller ordering position than an atom~$y$ (using $\prec_{t}$), 
this must hold also for the parent node of~$t$ and vice versa,
cf., Formulas~(\ref{red:prop}).
Formulas~(\ref{red:checkremove}) guarantee, for nodes~$t$
removing bag atom~$x$, i.e., $x\in\chi(t)\setminus\chi(t')$, 
that~$x$ is proven if~$x$ is set to true.
Similarly, this is required for atoms~$x\in\chi(n)$ that are in the root node~$n=\rootOf(T)$
and therefore never forgotten, cf., Formulas~(\ref{red:checkremove2}).
At the same time we ensure by Formulas~(\ref{red:check}) that an atom~$x$ is proven up to node~$t$
if and only if it is proven up to some child node of~$t$ or freshly proven in node~$t$.
Finally, Formulas~(\ref{red:checkfirst}) make sure that an atom~$x$ is freshly proven in node~$t$
if and only if there is at least one rule~$r\in \Pi_t$ proving~$x$.
{
\begin{flalign}
	\label{red:checkrules}&\bigvee_{b\in B_r^+} \neg b \vee \bigvee_{a\in B_r^- \cup H_r} a &&{\text{for each } r\in \Pi_t}\\
	&(x \prec_{t'} y) \longleftrightarrow (x \prec_t y)&&{\text{for each }t'\in\children(t)\text{ and } %
		x,y\in\chi(t)\cap\chi(t')}\notag\\[-.8em]%
	\label{red:prop}&&& \text{with }x\neq y\\
	\label{red:checkremove}&x\longrightarrow p^{x}_{<t'}&&{\text{for each }{t'\in\children(t)}\text{ and } %
		x\in\chi(t')\setminus\chi(t)}\\
	\label{red:checkremove2}&x \longrightarrow p^{x}_{<n}&&{\text{for each }x\in\chi(n)\text{ with }%
		n=\rootOf(T)}\\ %
	\label{red:check}&p^x_{<t} \longleftrightarrow p^x_t \vee (\hspace{-1em}\bigvee_{t' \in \children(t), x\in\chi(t')} \hspace{-2em} p^x_{<t'})&&{\text{for each } x\in \chi(t)}\\
	\label{red:checkfirst}&p_{t}^x \longleftrightarrow \bigvee_{r\in\Pi_t, x \in H_r} \hspace{-0.05em}(\hspace{-0.15em}\bigwedge_{b\in B_r^+}\hspace{-0.5em}b \wedge x \wedge &&{\text{for each } x\in \chi(t)}\\[-.45em]
	&\qquad\qquad (b \prec_t x) \wedge\hspace{-1.5em}\bigwedge_{a\in B_r^- \cup (H_r \setminus \{x\})}\hspace{-1.5em} \neg a)\notag%
\end{flalign}
}

\begin{example}\label{ex:reduction}
Recall program~$\Pi$ from Example~\ref{ex:running1}, and TD~$\mathcal{T}$ of $\mathcal{G}_\Pi$ given in Figure~\ref{fig:graph-td}.
We briefly show Formula~$F$ for node~$t_3$. 

\smallskip
\noindent\begin{tabular}{@{\hspace{0.15em}}l@{\hspace{0.15em}}|@{\hspace{0.15em}}l@{\hspace{0.0em}}}
Formulas & Formula $F$\\
\hline
(\ref{red:checkrules})& $\neg b \vee e \vee d$; $\neg b \vee d \vee e$; $\neg e \vee d \vee b$; $b \vee d$\\
(\ref{red:prop})& $(d \prec_{t_1} e) \leftrightarrow (d \prec_{t_3} e)$; $(e \prec_{t_1} d) \leftrightarrow (e \prec_{t_3} d)$\\
(\ref{red:checkremove})& $c \rightarrow p^c_{<t_1}$; $a \rightarrow p^a_{<t_2}$\\
(\ref{red:checkremove2}) & $b \rightarrow p^b_{<t_3}$; $d \rightarrow p^d_{<t_3}$; $e \rightarrow p^e_{<t_3}$\\ %
(\ref{red:check}) & $p^b_{<t_3} \leftrightarrow (p^b_{t_3} \vee p^b_{<t_2})$; %
 		  $p^d_{<t_3} \leftrightarrow (p^d_{t_3} \vee p^d_{<t_1})$; $p^e_{<t_3} \leftrightarrow (p^e_{t_3} \vee p^e_{<t_1})$\\ %
(\ref{red:checkfirst})& $p^b_{t_3} \leftrightarrow [e \wedge b \wedge (e \prec_{t_3} b) \wedge \neg d]$; %
		      $p^e_{t_3} \leftrightarrow [b \wedge e \wedge (b \prec_{t_3} e) \wedge \neg d]$; \\
			 & $p^d_{t_3} \leftrightarrow [(b \wedge d \wedge (b \prec_{t_3} d) \wedge \neg e) \vee (d \wedge \neg b)]$ %
\end{tabular}%
\end{example}

Next, we show that the reduction is indeed aware of the treewidth and
that the treewidth is slightly increased.

\begin{theorem}[Treewidth-awareness]\label{thm:runtime1}
The reduction from an HCF program~$\Pi$ and a nice TD~$\mathcal{T}=(T,\chi)$ of~$\mathcal{G}_\Pi$ to SAT formula~$F$ consisting of Formulas~(\ref{red:checkrules})--(\ref{red:checkfirst}) ensures that %
if~$k$ is the width of $\mathcal{T}$,
then the treewidth of~$\mathcal{G}_F$ is at most~$\mathcal{O}(k\cdot\log(k))$.
\end{theorem}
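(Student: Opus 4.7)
The plan is to construct a tree decomposition $\mathcal{T}'=(T,\chi')$ of the primal graph $\mathcal{G}_F$ of the produced formula by reusing the same underlying tree $T$ as $\mathcal{T}$. For each node $t$ of $T$, I would set
\[
  \chi'(t) \eqdef \chi(t)\ \cup\ B_t\ \cup\ P_t\ \cup\!\!\!\bigcup_{t'\in\children(t)}\!(B_{t,t'}\cup P_{t,t'}),
\]
where $B_t \eqdef \{b^i_{x_t} \mid x\in\chi(t),\ 1\le i\le \lceil\log|\chi(t)|\rceil\}$ and $P_t \eqdef \{p^x_t,\, p^x_{<t} \mid x\in\chi(t)\}$ collect the ``local'' auxiliary variables, while $B_{t,t'}$ and $P_{t,t'}$ are the analogous sets indexed by the child $t'$ but restricted to shared atoms $\chi(t)\cap\chi(t')$. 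The rationale is that every formula among (\ref{red:checkrules})--(\ref{red:checkfirst}) either uses only variables indexed by $t$ and atoms from $\chi(t)$, or additionally invokes --- via Formulas~(\ref{red:prop}) and (\ref{red:check}) --- the corresponding child-side variables on shared atoms, which are exactly the sets $B_{t,t'}$ and $P_{t,t'}$ that I explicitly adjoin.

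Next I would bound the width. With $|\chi(t)|\le k+1$, one has $|B_t|\le(k+1)\lceil\log(k+1)\rceil$ and $|P_t|\le 2(k+1)$. Since a nice TD has at most two children per node, each $|B_{t,t'}|$ and $|P_{t,t'}|$ is of the same order, yielding $|\chi'(t)|=\mathcal{O}(k\log k)$ and hence $\width(\mathcal{T}')=\mathcal{O}(k\log k)$.

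I would then verify the three TD conditions for $\mathcal{G}_F$. Coverage is immediate since every variable of $F$ is indexed by some node $t$ and sits in $\chi'(t)$. Edge coverage proceeds by a case analysis over (\ref{red:checkrules})--(\ref{red:checkfirst}): the ``single-node'' formulas (\ref{red:checkrules}), (\ref{red:checkremove2}), (\ref{red:checkfirst}) fit inside $\chi'(t)$ by construction, using that the atoms of any rule $r\in\Pi_t$ form a clique in $\mathcal{G}_\Pi$ and so appear together in $\chi(t)$; Formula~(\ref{red:checkremove}) lives inside $\chi'(t')$ since both $x\in\chi(t')$ and $p^x_{<t'}$ are in that bag; and Formulas~(\ref{red:prop}) and (\ref{red:check}) are covered precisely by the child-side variables I added.

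The main obstacle I anticipate is the connectedness check. Each local variable $b^i_{x_t}$, $p^x_t$, or $p^x_{<t}$ occurs in at most two bags of $\mathcal{T}'$, namely $\chi'(t)$ and possibly $\chi'(\text{parent}(t))$ when it reappears there as a shared child-side variable; these two bags are adjacent in $T$, so the set of occurrences is connected. Each original atom $x\in\at(\Pi)$ sits in $\chi'(t)$ iff $x\in\chi(t)$, inheriting connectedness from $\mathcal{T}$. Making this case analysis airtight --- in particular ruling out that any child-side insertion propagates a variable to a non-adjacent ancestor --- is the most error-prone step, but once done the $\mathcal{O}(k\log k)$ width bound follows immediately from the counting above.
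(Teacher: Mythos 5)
Your proposal is correct and follows essentially the same route as the paper: reuse the tree $T$, enlarge each bag by the local bit variables and provability variables plus the corresponding variables of adjacent nodes on shared atoms, and count to get $\mathcal{O}(k\log k)$ per bag. The only (immaterial) difference is the orientation of the bit variables — the paper adds the \emph{parent's} bits $B(\hat t,x)$ to $\chi'(t)$ so that the edges of Formulas~(\ref{red:prop}) are covered in the child's bag, while you add the \emph{child's} bits to the parent's bag; both satisfy the TD conditions, and your connectedness case analysis (each auxiliary variable occurring in at most two adjacent bags) is sound.
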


\begin{proof}
We construct a TD~$\mathcal{T}'=(T,\chi')$ of~$\mathcal{G}_F$ to show that the width of~$\mathcal{T}'$ increases from~$k$ to~$\mathcal{O}(k\cdot\log(k))$.
To this end, let~$t$ be a node of~$T$ with~$\children(t)=\langle t_1, \ldots, t_\ell \rangle$
and let~$\hat t$ be the parent of~$t$ (if exists).
We define~$B(t,x) \eqdef \{b_{x_{t}}^j \mid x\in\chi(t), 1\leq j \leq \ceil{\log(\Card{\chi(t)})}\}$.
We inductively define~$\chi'(t)\eqdef \chi(t) \cup (\bigcup_{x\in\chi(t)} B(t,x) \cup B(\hat t,x)) \cup \{p^y_{<{t'}}, p^x_{{t}} \mid t'\in\{t,t_1,\ldots,t_\ell\} , x\in\chi(t), y\in\chi(t')\}$.
Observe that indeed~$\mathcal{T}'$ is a TD of~$\mathcal{G}_F$.
Further, $\Card{\chi'(t)}\leq k + k \cdot \ceil{\log({k})} \cdot 2 + k \cdot (\ell + 2)$. Thus, the width of TD~$\mathcal{T}'$ is in~$\mathcal{O}(k \cdot \log(k))$,
since for nice TDs we have~$\ell=2$.
\end{proof}

\FIXR{Note that the result above can be also lifted to non-nice TDs, as long as the number of child nodes is limited by~$\mathcal{O}(\log(k))$.}

Later we will see the lower bound for consistency of normal \ASP,
which indicates that one cannot expect to significantly improve this increase of treewidth. %
Next, we present consequences for auxiliary variables and runtime.

\begin{corollary}[Runtime]\label{thm:runtime}
The reduction from an HCF program~$\Pi$ and a nice TD~$\mathcal{T}$ 
of~$\mathcal{G}_\Pi$ to SAT formula~$F$ 
consisting of Formulas~(\ref{red:checkrules})--(\ref{red:checkfirst}) 
uses at most~$\mathcal{O}(k\cdot\log(k)\cdot h)$ many variables
and runs in time~$\mathcal{O}(k^2\cdot \log(k)\cdot h + \Card{\Pi})$,
where~$k$ and $h$ are the width and the number of nodes of~$\mathcal{T}$, respectively.
\end{corollary}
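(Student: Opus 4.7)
The argument splits naturally into counting the fresh propositional variables and bounding the total size of Formulas~(\ref{red:checkrules})--(\ref{red:checkfirst}); the runtime follows because each gate of the output can be written down in constant time once its inputs are in place.

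For the variable count I would process the TD node by node. At a single node~$t$ the reduction introduces the atoms of~$\chi(t)$, contributing at most~$k+1$; the bit variables~$b^i_{x_t}$, contributing $\Card{\chi(t)} \cdot \ceil{\log(\Card{\chi(t)})} = \mathcal{O}(k \log k)$; and the proof-status variables~$p^x_t$ and~$p^x_{<t}$, contributing~$2(k+1)$. Multiplying by the~$h$ nodes yields~$\mathcal{O}(k \log(k) \cdot h)$, matching the first claim.

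For the size (and hence runtime) bound I would exploit that the comparison shorthand~$x \prec_t y$ unfolds to a boolean expression of size~$\mathcal{O}(\log k)$ over the bit variables, which is where the~$\log(k)$ factor enters everywhere. Formula~(\ref{red:prop}) iterates over~$\mathcal{O}(k^2)$ ordered pairs in each of the~$h$ nodes, producing biimplications of size~$\mathcal{O}(\log k)$ each, for a total of~$\mathcal{O}(k^2 \log(k) \cdot h)$. Formulas~(\ref{red:checkremove})--(\ref{red:check}) add only~$\mathcal{O}(k)$ short clauses per node, since a nice TD has at most two children and hence the right-hand side of~(\ref{red:check}) has bounded arity, summing to~$\mathcal{O}(k \cdot h)$. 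Formulas~(\ref{red:checkrules}) and the rule-dependent disjunct inside~(\ref{red:checkfirst}) are charged to a single TD node whose bag covers~$\at(r)$, which exists by property~(ii) of the TD; together they contribute~$\mathcal{O}(\Card{\Pi})$ for the clause bodies plus an order-check contribution absorbed in the~$\mathcal{O}(k^2 \log(k) \cdot h)$ term. Summation gives the stated runtime~$\mathcal{O}(k^2 \log(k) \cdot h + \Card{\Pi})$.

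The step I expect to be the main obstacle is precisely this bookkeeping for rules lying in several bag programs~$\Pi_t$ (as already warned in the running example): without the charging scheme that fixes a single covering node per rule, an extra factor of~$h$ would creep into the~$\Card{\Pi}$ term and ruin the bound. Everything else is a routine accumulation of node-local contributions using Theorem~\ref{thm:runtime1} for the structural side.
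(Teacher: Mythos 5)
Your proposal is correct and follows essentially the same route as the paper's proof: the variable count is read off the bag construction from Theorem~\ref{thm:runtime1}, the $\mathcal{O}(k^2\log(k))$ per-node cost is attributed to the unfolded $\prec_t$ comparisons in Formulas~(\ref{red:prop}) and~(\ref{red:checkfirst}), and—crucially—you identify the same fix the paper states explicitly, namely modifying Formulas~(\ref{red:checkrules}) and~(\ref{red:checkfirst}) so that each rule is emitted at exactly one node, which is what keeps the rule contribution at $\mathcal{O}(\Card{\Pi})$ instead of $\mathcal{O}(\Card{\Pi}\cdot h)$. The only cosmetic difference is the choice of canonical node per rule (you take any covering bag; the paper takes the node where the rule is ``forgotten'', i.e., $r\in\Pi_{t'}$ for a child $t'$ but $r\notin\Pi_t$), and both you and the paper share the same slight looseness in treating the expanded $x\prec_t y$ as size $\mathcal{O}(\log k)$ rather than the literal $\mathcal{O}(\log^2 k)$ of the displayed definition.
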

\begin{proof}
The result follows from Theorem~\ref{thm:runtime1}.
Linear time in the size of~$\Pi$ can be obtained 
by slightly modifying Formulas~(\ref{red:checkrules}) and~(\ref{red:checkfirst})
such that each rule~$r\in \Pi$ is used in only one node~$t$,
where~$r\in\Pi_{t'}$, but~$r\notin\Pi_t$, for some~$t'\in\children(t)$.
Runtime~$\mathcal{O}(k^2\log(k))$ in~$k$ is due to the definition of~$(x\prec_t y)$ as 
used in Formulas~(\ref{red:prop}) and~(\ref{red:checkfirst}).
\end{proof}
Note that a nice TD of~$\mathcal{G}_\Pi$ of width~$k=\tw{\mathcal{G}_\Pi}$, 
having only~$h=\mathcal{O}(\Card{\at(\Pi)})$ many nodes~\cite{Kloks94a}[Lem.\ 13.1.2] always exists. 
\FIXR{%
Further, since $k\cdot\log(k)$ might be much smaller than~$\log(\Card{\at(\Pi)})$,
for some programs this reduction could be a benefit %
compared to global or component-based
orderings used in tools like lp2sat~\cite{Janhunen06,%
GebserJanhunenRitanen14,BomansonEtAl16}.
An empirical study addressing this is given in Section~\ref{sec:emp}.}

\smallskip
\noindent\textbf{%
Correctness of the Reduction.}
\futuresketch{
Before we show correctness, we need to formally define the concepts of level mappings over a set of atoms.
Given a program~$\Pi$, and a set~$A\subseteq\at(\Pi)$ of atoms. Then, a function~$\varphi: A \rightarrow \{0,\ldots,\Card{A}\}$ is an \emph{ordering} for~$\Pi$ over~$A$.
\begin{definition}\label{def:provinglvmapping}
Given a level mapping~$\varphi$ for~$\Pi$ over~$\at(A)$.
Then, $\varphi$ is \emph{proving}
if there is an answer set~$M$ of~$\Pi$, called \emph{proven} answer set, such that $a\in M$ if and only if
there is a rule~$r\in \Pi$, called \emph{justifying rule for~$a$ of~$\varphi$}, with: (1) $a\in H_r$, (2) $M\cap (H_r\setminus\{a\}\cup B_r^-)=\emptyset$, and (3) for each~$b\in B_r^+$ we have~$\varphi(b)<\varphi(a)$.
\end{definition}

For each answer set of a program, there exists a level mapping.

\begin{proposition}[cf.,~\cite{Janhunen06}]\label{prop:mapping}
Given a program~$\Pi$. Then, for each answer set~$M$ of~$\Pi$ there is a level mapping~$\varphi$ over~$\at(\Pi)$ proving~$M$.
\end{proposition}}
Now, we discuss the correctness of our reduction, which 
establishes that~$\mathcal{T}$-local orderings
encoded by Formulas~(\ref{red:checkrules})--(\ref{red:checkfirst})   follow ideas of the characterization of answer sets for HCF programs.

\begin{theorem}[Correctness]\label{thm:corr1}
The reduction from an HCF program~$\Pi$ and a TD~$\mathcal{T}=(T,\chi)$ of~$\mathcal{G}_\Pi$ to SAT formula~$F$ consisting of Formulas~(\ref{red:checkrules})--(\ref{red:checkfirst}) is correct.
Precisely, for each answer set of~$\Pi$ there is a model of~$F$ and vice versa.
\end{theorem}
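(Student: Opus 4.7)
The plan is to establish both directions by connecting models of~$F$ with $\mathcal{T}$-local orderings and invoking the characterization of answer sets for HCF programs recalled after Example~\ref{ex:running1}.

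For the forward direction, suppose~$I$ is an answer set of~$\Pi$. By the HCF characterization there is an ordering~$\varphi$ over~$I$ such that every~$a\in I$ has a suitable proving rule~$r\in\Pi$ with $\varphi(b)<\varphi(a)$ for all $b\in B_r^+$. I would first extend~$\varphi$ arbitrarily to $\at(\Pi)$ (assigning unused atoms rank~$0$) and then, for every node~$t$ of~$T$, take the canonical $t$-local ordering~$\hat\varphi_t$. I set the propositional variables of~$F$ as follows: atoms~$x\in \at(\Pi)$ are true iff $x\in I$; each bit block~$\bvali{x}{t}{\hat\varphi_t(x)}$ is encoded according to~$\hat\varphi_t$; $p^x_t$ is true iff some rule~$r\in\Pi_t$ suitable for proving~$x$ witnesses $\hat\varphi_t(b)<\hat\varphi_t(x)$ for all $b\in B_r^+$; and $p^x_{<t}$ is true iff $x$ is proven in some node in the subtree rooted at~$t$. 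Formulas~(\ref{red:checkrules}) hold since~$I\models\Pi$. Formulas~(\ref{red:prop}) hold because canonical local orderings induced by the same global~$\varphi$ agree on the relative order of shared atoms. Formulas~(\ref{red:check}) and~(\ref{red:checkfirst}) follow by the very definitions of $p^x_{<t}$ and $p^x_t$. Finally, for any $x\in I$, the proving rule of~$\varphi$ lies in some bag program~$\Pi_t$ (by the bag-covering property of~$\mathcal{T}$ applied to $\at(r)$), so $x$ gets proven somewhere below the root, giving Formulas~(\ref{red:checkremove}) and~(\ref{red:checkremove2}).

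For the converse, let $M\models F$ and set $I\eqdef M\cap \at(\Pi)$. Formulas~(\ref{red:checkrules}) immediately give $I\models\Pi$. The crucial step is to build a single ordering~$\varphi$ over~$I$ from the bits~$b^i_{x_t}$ that witnesses provability in the HCF sense. Formulas~(\ref{red:prop}) ensure that the relations~$\prec_t$ induced on each bag agree on all shared atoms; together with the connectedness property of tree decompositions (every $x\in \at(\Pi)$ occurring in $I$ appears in a connected subtree of~$T$), this lets me define a partial order $\prec$ on~$I$ by taking the union of all $\prec_t$ and showing it is acyclic. I then linearize~$\prec$ to an ordering~$\varphi$. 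To verify provability of every $a\in I$, I use Formulas~(\ref{red:checkremove}) and~(\ref{red:checkremove2}): each true atom~$a$ forces $p^a_{<t^\ast}$ to be true at the node~$t^\ast$ where it is forgotten (or at the root). Unrolling Formulas~(\ref{red:check}) recursively along the subtree yields a node~$t$ where $p^a_t$ holds, and Formula~(\ref{red:checkfirst}) then supplies a rule $r\in\Pi_t$ suitable for proving~$a$ with $b\prec_t a$ for each $b\in B_r^+$. By construction of~$\varphi$ we get $\varphi(b)<\varphi(a)$, so $a$ is proven, and the HCF characterization concludes that~$I$ is an answer set of~$\Pi$.

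The main obstacle is precisely this reassembly step in the converse direction: showing that local orderings, glued only through the compatibility Formulas~(\ref{red:prop}) along child/parent edges, give rise to a global acyclic order. Acyclicity follows because a directed cycle would project onto a cycle within some bag (using connectedness and compatibility), but within a bag the relation $\prec_t$ is a strict total order encoded by binary bits. Similarly, one must check that the inductive unfolding of~$p^x_{<t}$ via Formulas~(\ref{red:check}) always terminates at a node~$t$ containing a proving rule; this is guaranteed since $p^x_{<t}$ at a leaf is equivalent (by Formulas~(\ref{red:check}) with empty child set) to $p^x_t$, forcing Formula~(\ref{red:checkfirst}) to fire at some node on the way down. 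Putting both directions together yields the claimed correspondence, where bijectivity of the mapping is not claimed here and is postponed to Section~\ref{sec:bijective}.
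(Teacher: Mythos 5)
Your proposal is correct and follows essentially the same route as the paper's proof in~\ref{sec:appendix1}: the forward direction builds the model from the canonical $t$-local orderings of a proving ordering exactly as in the paper, and the converse direction reassembles a global ordering from the bag-local bit encodings using compatibility (Formulas~(\ref{red:prop})) and TD connectedness before unrolling Formulas~(\ref{red:checkremove})--(\ref{red:checkfirst}) to extract a proving rule. The only (cosmetic) difference is that the paper assembles the global ordering by iterative rank assignment while you take the union of the local orders, argue acyclicity, and linearize; both treatments are equally terse on ruling out directed cycles that span several bags, which your Helly-style projection argument handles at least as explicitly as the paper does.
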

\begin{proof}
The proof is given in~\ref{sec:appendix1}.
\end{proof}

\FIXR{The statement above can be strengthened to derive the following corollary, which concludes that the reduction above only weakly preserves answer sets (and not bijectively).}

\begin{corollary}[Preservation of Answer Sets]\FIX{
The reduction from an HCF program~$\Pi$ and a TD~$\mathcal{T}=(T,\chi)$ of~$\mathcal{G}_\Pi$ to SAT formula~$F$ consisting of Formulas~(\ref{red:checkrules})--(\ref{red:checkfirst}) preserves answer sets with respect to~$\at(\Pi)$.
Concretely, for each answer set~$M$ of~$\Pi$ there is exactly one model of~$F$ that when restricted to the variables in~$\at(\Pi)$, coincides with~$M$. Conversely, for each model of~$F$ there is exactly one answer set of~$\Pi$.}
\end{corollary}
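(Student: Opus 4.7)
My plan is to strengthen Theorem~4 by analysing the restriction map $\rho\colon M^+\mapsto M^+\cap\at(\Pi)$ from models of~$F$ to subsets of~$\at(\Pi)$. The reverse direction of the corollary is essentially immediate: for any model~$M^+$ of~$F$, Theorem~4 already gives that~$\rho(M^+)$ is an answer set of~$\Pi$, and~$\rho$ is functional by definition, so each model restricts to exactly one answer set.

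For the forward direction, I would show that for each answer set~$M$ of~$\Pi$, there is a unique model~$M^+$ of~$F$ with $\rho(M^+)=M$. The key structural observation is that the non-atom variables of~$F$ split into ordering bits~$b^i_{x_t}$ and proof flags~$p^x_t,p^x_{<t}$, and that Formulas~(\ref{red:check}) and~(\ref{red:checkfirst}) are biconditionals; hence the proof flags are forced Boolean functions of the atoms and the ordering bits. Existence of an extension then follows from Theorem~4: one can build~$M^+$ by extracting a canonical $\mathcal{T}$-local ordering~$\{\hat\varphi_t\}_t$ from a proving ordering~$\varphi$ of~$M$, encoding each rank~$\hat\varphi_t(x)$ via the binary representation~$\bvali{x}{t}{\cdot}$, and finally evaluating the right-hand sides of~(\ref{red:check}) and~(\ref{red:checkfirst}) to fill in the proof flags.

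Uniqueness then reduces to showing that any two models~$M^+, N^+$ of~$F$ with $\rho(M^+)=\rho(N^+)=M$ agree on the ordering bits. I would prove this by a bottom-up induction over the tree~$T$: at each node~$t$, the demands~(\ref{red:checkremove})--(\ref{red:checkremove2}) force every true atom of~$M$ to be proven somewhere below, Formulas~(\ref{red:checkfirst}) then pin the $t$-local ranks of the participating atoms, and the compatibility constraints~(\ref{red:prop}) propagate this information between parent and child bags. The main obstacle I expect is handling atoms that never carry a proof load in a particular bag, since their ranks are a priori unconstrained; I would address this by observing that the canonical encoding~$\hat\varphi_t$ packs ranks into $\{0,\ldots,|\chi(t)|-1\}$ without gaps and that~$\bvali{x}{t}{\cdot}$ prescribes a unique bit pattern per rank, so the compatibility links effectively rigidify all ordering bits once the proof-carrying ranks are fixed. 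Combined with existence, this yields the claimed bijection between answer sets of~$\Pi$ and models of~$F$ modulo restriction to~$\at(\Pi)$.
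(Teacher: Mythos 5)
The paper gives no standalone proof of this corollary; it is presented as a direct strengthening of Theorem~\ref{thm:corr1}, and the reverse direction plus the existence half of the forward direction are exactly what you argue: restriction to~$\at(\Pi)$ is functional, and the model constructed in the proof of Theorem~\ref{thm:corr1} witnesses existence. Those parts of your proposal are fine.

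The genuine gap is your uniqueness claim in the forward direction. You assert that any two models $M^+,N^+$ of~$F$ with $\rho(M^+)=\rho(N^+)=M$ must agree on the ordering bits, because the compatibility constraints~(\ref{red:prop}) together with the canonical rank encoding ``rigidify'' them. But nothing in Formulas~(\ref{red:checkrules})--(\ref{red:checkfirst}) forces the bit patterns $b^i_{x_t}$ to be the canonical encoding~$\hat\varphi_t$: Formulas~(\ref{red:checkfirst}) only require \emph{some} witnessing precedence for \emph{some} proving rule, Formulas~(\ref{red:prop}) only synchronize the relative order of pairs shared between adjacent bags, and the bits of atoms that are false in~$M$ (or of true atoms proven by rules with empty positive body, where no $\prec_t$ constraint is generated at all) are essentially unconstrained. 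Hence two models can restrict to the same~$M$ while differing on ordering bits, so the biconditional structure of~(\ref{red:check}) and~(\ref{red:checkfirst}) determining the proof flags does not rescue uniqueness. The paper itself states immediately after the corollary that ``for an answer set of~$\Pi$, there might be several models of the propositional formula,'' introduces the extra Formulas in~\ref{sec:bijective} precisely to prune (some of) these duplicate $\mathcal{T}$-local orderings, and Example~\ref{ex:separatedcycles} exhibits a concrete program and answer set with two distinct satisfying assignments. In short, the obstacle you flagged (atoms carrying no proof load in a bag) is real, and your proposed resolution does not close it; the corollary's content, read in context, is weak (non-bijective) preservation --- surjectivity of~$\rho$ onto the answer sets and well-definedness of~$\rho$ --- not a one-to-one correspondence between models and answer sets.
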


However, in general we have that for an answer set of~$\Pi$, 
there might be several models of the propositional formula obtained by the reduction above.
\FIXR{Overall, one can strenghten the reduction in order to remove some models, which is
presented in~\ref{sec:bijective}.}

\FIXR{
In the next subsection, we present another reduction from HCF \ASP to \SAT that 
bijectively preserves all the answer sets at least for uniquely provable programs,
at the cost of a higher increase of the treewidth from~$k$ to~$k^2$.
The difference to the reduction above is that the increase from~$k$ to~$k^2$
explicitly allows us to verify whether the (up to~$k^2$ many) relations per tree decomposition node in terms of provability 
are applicable when proving an answer set.%
}
\futuresketch{
However, if for each answer set~$M$ of~$\Pi$, and every~$a\in M$, there can be only one rule~$r\in \Pi$, where~$a\in H_r$ and~$M\cap (B_r^- \cup [H_r\setminus\{a\}])=\emptyset$,
then there is a bijective correspondence between answer sets of~$\Pi$ and models of Formulas~(\ref{red:checkrules})--(\ref{red:cnt:checkfirst}). 

\begin{theorem}[Condition for bijective reduction]
\FIX{For any given HCF program~$\Pi$, where for each answer set~$M$ of~$\Pi$
and~$a\in M$, it is guaranteed that there is only one rule~$r\in\Pi$ with~$a\in H_r$,
$B_r^+\subseteq M$, and~$M\cap (B_r^- \cup [H_r\setminus\{a\}])=\emptyset$.
Then, for the reduction from~$\Pi$ and a TD~$\mathcal{T}=(T,\chi)$ of~$\mathcal{G}_\Pi$ to SAT formula~$F$ consisting of Formulas~(\ref{red:checkrules})--(\ref{red:checkfirst}), the following holds: For each answer set of~$\Pi$ there is exactly one unique model of~$F$ and vice versa.
}
\end{theorem}
\begin{proof}

\end{proof}

\FIX{One example of such programs, is the program that is constructed in the reduction of the next section.}}

\FIX{\subsection{Bijective and Treewidth-Aware Reduction to \SAT}\label{sec:ext}

This section deals with a different approach that is inspired by an early attempt~\cite{LinZhao03}, combined with the ideas of Clark's completion~\cite{Clark77} that is guided along a tree decomposition.
To this end, consider an HCF program~$\prog$ and a TD~$\mathcal{T}=(T,\chi)$ of~$\mathcal{G}_\prog$.  
Then, instead of orderings, we use fresh auxiliary variables of the form~$(x\prec y)$ (and~$(y\prec x$)) to indicate that an atom~$x$ has to precede an other atom~$y$ (and vice versa). %
This is done locally for each tree decomposition bag and consequently results in a quadratic number of additional auxiliary variables per bag and causes a quadratic increase of the treewidth in the worst-case.
Further, we also require auxiliary variables~$p^x_t$, $p^{x\prec y}_t$, and $p^{y\prec x}_t$ for a node~$t$ of~$T$ and $x,y\in\chi(t)$ in order to indicate whether there is a rule proving atom~$x$, auxiliary variable~$(x\prec y)$, and auxiliary variable~$(y\prec x)$, respectively.
Finally, this approach also uses auxiliary variable~$p^x_{t,r}$ to indicate that an atom~$x\in\chi(t)$ is proven by a rule~$r\in\Pi_t$ in a node~$t$ of~$T$.

The reduction constructs a formula~$F'$ that consists of Formulas~(\ref{red2:checkrules}), (\ref{red2:checkremove})--(\ref{red2:check}) as well as Formulas~(\ref{red2:prove})--(\ref{red2:check2}), as given below, for each node~$t$ of~$T$. %
As before, Formulas~(\ref{red2:checkrules}) ensure that each rule~$r\in\prog$ is satisfied.
Then, Formulas~(\ref{red2:prove}) guarantee that we have provability~$p^x_{t,r}$ for an atom~$x\in\chi(t)$ using a rule~$r\in\prog_t$, if~$r$ proves~$x$ with any ordering, i.e., if~$x$ does not precede any atom of positive body~$B_r^+$.
Formulas~(\ref{red2:propsmaller}) make sure that if we have~$p^x_{t,r}$, indeed all positive body atoms~$b\in B_r^+$ precede~$x$,
i.e., whenever a rule is suitable for proving an atom, it actually has to be applied (greedy application).
We also ensure by Formulas~(\ref{red2:proptrans}) that the precedence is transitive.
Formulas~(\ref{red2:exclusion}) take care of not allowing cycles over~$\prec$, i.e., we cannot have both~$(x\prec y)$ and~$(y\prec x)$ at the same time.
}

\FIX{{
\vspace{-1.5em}
\begin{flalign}
	\label{red2:checkrules}&\bigvee_{b\in B_r^+} \neg b \vee \bigvee_{a\in B_r^- \cup H_r} a &&{\text{for each } r\in \Pi_t}\tag{\ref{red:checkrules}}\\
\label{red2:prove}&p_{t,r}^{x} \longleftrightarrow (\hspace{-0.15em}\bigwedge_{b\in B_r^+}\hspace{-0.5em}b \wedge x \wedge \neg (x \prec b) &&{\text{for each } r\in\Pi_t\text{ and }x\in H_r}\\[-.65em]
	&\qquad\qquad \wedge\hspace{-1.5em}\bigwedge_{a\in B_r^- \cup (H_r \setminus \{x\})}\hspace{-1.5em} \neg a)\notag\\
	\label{red2:propsmaller} &p_{t,r}^x \longrightarrow (b\prec x) && \text{for each }r\in\Pi_t, x\in H_r,\text{ and }b\in B_r^+\end{flalign}\begin{flalign}%
	 &(x \prec y) \wedge (y\prec z) \longrightarrow (x\prec z) && \text{for each }x,y,z\in\chi(t)\text{ with }\notag\\[-.8em]
\label{red2:proptrans}&&& x\neq y, x\neq z,\text{ and } y\neq z\\
\label{red2:exclusion}& \neg(x\prec y) \vee \neg(y\prec x) && \text{for each }x,y\in\chi(t)\text{ with }x\neq y\\
\label{red2:checkfirst}&p_{t}^x \longleftrightarrow \bigvee_{r\in\Pi_t, x \in H_r} p^x_{t,r} &&{\text{for each } x\in \chi(t)}\\
		&p_{t}^{y \prec x} \longleftrightarrow \hspace{-2.5em}\bigvee_{r\in\Pi_t, x\in H_r,y \in B_r^+} \hspace{-1.5em}p^x_{t,r}\quad \vee  &&{\text{for each } x,y\in \chi(t)\text{ with }}x\neq y\notag\\[-.5em]%
		\label{red2:checkfirst2}&\qquad\bigvee_{z\in\chi(t),z\neq x,z\neq y}\hspace{-1em}[(y\prec z) \wedge (z\prec x)]\hspace{-5em}\\
	\label{red2:checkremove}&x\longrightarrow p^{x}_{<t'}&&{\text{for each }{t'\in\children(t)},x\in\chi(t')\setminus\chi(t)}\tag{\ref{red:checkremove}}\\
	\label{red2:checkremove2}&x \longrightarrow p^{x}_{<n}&&{\text{for each }x\in\chi(n)\text{ with }n=\rootOf(T)}\tag{\ref{red:checkremove2}}\\
	\label{red2:check}&p^x_{<t} \longleftrightarrow p^x_t \vee (\hspace{-1em}\bigvee_{t' \in \children(t), x\in\chi(t')} \hspace{-2em} p^x_{<t'})&&{\text{for each } x\in \chi(t)}\tag{\ref{red:check}}\\
	&(x\prec y)\longrightarrow p^{x\prec y}_{<t'}&&{\text{for each }{t'\in\children(t)}\text{ and }x,y\in\chi(t')}\text{ with}\notag\\[-.8em]
	\label{red2:checkremove3}&&&x\neq y\text{ and }\{x,y\} \cap (\chi(t')\setminus\chi(t))\neq \emptyset\\
	&(x\prec y) \longrightarrow p^{x\prec y}_{<n}&&{\text{for each }x,y\in\chi(n)}\text{ with }x\neq y\text{ and }\notag\\[-.8em]
\label{red2:checkremove5}&&&n = \rootOf(T)\\
	\label{red2:check2}&p^{x\prec y}_{<t} \longleftrightarrow p^{x\prec y}_t \vee (\hspace{-2em}\bigvee_{t' \in \children(t), x,y\in\chi(t')} \hspace{-2em} p^{x\prec y}_{<t'})&&{\text{for each } x,y\in \chi(t)\text{ with }x\neq y} %
\end{flalign}
}
} 

\FIX{
\noindent We define~$p^x_t$ by Formulas~(\ref{red2:checkfirst}), similarly to Formulas~(\ref{red:checkfirst}), but here we are able to use auxiliary variables~$p^x_{t,r}$ for every~$r\in\prog_t$.
Further, we also define provability~$p^{y\prec x}_t$ for auxiliary variables~$(y\prec x)$ in node~$t$, which is the case if we either derive~$(y\prec x)$ due to Formulas~(\ref{red2:propsmaller}), where we also need~$p^x_{t,r}$, or we obtain~$(y\prec x)$ due to Formulas~(\ref{red2:proptrans}), where we require~$(y\prec z)$  and~$(z\prec x)$ for some~$z\in\chi(t)$.
For ensuring provability for an atom~$x\in\chi(t)$ and guiding it along the TD, we use Formulas~(\ref{red2:checkremove}), (\ref{red2:checkremove2}), and~(\ref{red2:check}) as before.
Analogously, we require Formulas~(\ref{red2:checkremove3}), (\ref{red2:checkremove5}), and~(\ref{red2:check2}) for guiding provability of an auxiliary variable~$(x\prec y)$ along the TD.

\begin{example}
Recall program~$\Pi$ from Example~\ref{ex:running1}, TD~$\mathcal{T}=(T,\chi)$ of $\mathcal{G}_\Pi$ given in Figure~\ref{fig:graph-td},
as well as Formulas~(\ref{red:checkrules}) and~(\ref{red2:checkremove})--(\ref{red2:check}) from Example~\ref{ex:reduction}.
Next, we briefly show Formulas~(\ref{red2:prove})--(\ref{red2:check2}) for node~$t_3$ of~$T$. 

\smallskip
\noindent\begin{tabular}{@{\hspace{0.15em}}l@{\hspace{0.15em}}|@{\hspace{0.15em}}l@{\hspace{0.0em}}}
Formulas & Formula $F'$\\
\hline
(\ref{red2:prove})& $p_{t_3, r_3}^d \leftrightarrow b \wedge \neg (d \prec b) \wedge \neg e$; $p_{t_3, r_4}^e \leftrightarrow b \wedge \neg (e \prec b) \wedge \neg d$; \\
& $p_{t_3, r_5}^b \leftrightarrow e \wedge \neg (b \prec e) \wedge \neg d$; $p_{t_3, r_6}^d \leftrightarrow \neg b$\\
(\ref{red2:propsmaller})& $p_{t_3, r_3}^d \rightarrow (b\prec d)$; $p_{t_3, r_4}^e \rightarrow (b\prec e)$; $p_{t_3, r_5}^b \rightarrow (e\prec b)$\\
(\ref{red2:proptrans})& $(e\prec b) \wedge (b\prec d) \rightarrow (e\prec d)$; $(d\prec b) \wedge (b\prec e) \rightarrow (d\prec e)$;\\
& $(d\prec e) \wedge (e\prec b) \rightarrow (d\prec b)$; $(b\prec d) \wedge (d\prec e) \rightarrow (b\prec e)$\\
(\ref{red2:exclusion})& $\neg(b\prec d) \vee \neg(d\prec b)$; $\neg(b\prec e) \vee \neg(e\prec b)$; $\neg(d\prec e) \vee \neg(e\prec d)$\\
(\ref{red2:checkfirst}) & $p^d_{t_3} \leftrightarrow p^d_{t_3,r_3} \vee p^d_{t_3,r_6}$; $p^e_{t_3} \leftrightarrow p^e_{t_3,r_4}$; $p^b_{t_3} \leftrightarrow p^b_{t_3,r_5}$\\
(\ref{red2:checkfirst2}) & $p^{b\prec d}_{t_3} \leftrightarrow p^d_{t_3,r_3}$; $p^{d\prec b}_{t_3} \leftrightarrow p^b_{t_3,r_5} \vee  [(d \prec e) \wedge (e\prec b)]$; \\
& $p^{e\prec b}_{t_3} \leftrightarrow p^b_{t_3,r_5}$;  $p^{b\prec e}_{t_3} \leftrightarrow p^e_{t_3,r_4} \vee [(b\prec d) \wedge (d\prec e)]$;\\
& $p^{d\prec e}_{t_3} \leftrightarrow p^e_{t_3,r_4} \vee [(d \prec b) \wedge (b\prec e)]$; $p^{e\prec d}_{t_3} \leftrightarrow p^d_{t_3,r_3} \vee [(e \prec b) \wedge (b \prec d)]$\\
(\ref{red2:checkremove3}) & $(c\prec d) \rightarrow p^{c\prec d}_{<t_1}$; $(d\prec c) \rightarrow p^{d\prec c}_{<t_1}$; $(c\prec e) \rightarrow p^{c\prec e}_{<t_1}$; $(e\prec c) \rightarrow p^{e\prec c}_{<t_1}$;\\ %
& $(a\prec b) \rightarrow p^{a\prec b}_{<t_2}$; $(b\prec a) \rightarrow p^{b\prec a}_{<t_2}$\\
(\ref{red2:checkremove5})& $(b\prec d) \rightarrow p^{b\prec d}_{<t_3}$; $(d\prec b) \rightarrow p^{d\prec b}_{<t_3}$; $(b\prec e) \rightarrow p^{b\prec e}_{<t_3}$; $(e\prec b) \rightarrow p^{e\prec b}_{<t_3}$;\\ %
& $(d\prec e) \rightarrow p^{d\prec e}_{<t_3}$; $(e\prec d) \rightarrow p^{e\prec d}_{<t_3}$ \\ %
(\ref{red2:check2})& $p_{<t_3}^{b\prec d} \leftrightarrow p_{t_3}^{b\prec d}$; $p_{<t_3}^{d\prec b} \leftrightarrow p_{t_3}^{d\prec b}$; $p_{<t_3}^{b\prec e} \leftrightarrow p_{t_3}^{b\prec e}$; $p_{<t_3}^{e\prec b} \leftrightarrow p_{t_3}^{e\prec b}$\\
& $p_{<t_3}^{d\prec e} \leftrightarrow p_{t_3}^{d\prec e} \vee  p_{<t_1}^{d\prec e}$; $p_{<t_3}^{e\prec d} \leftrightarrow p_{t_3}^{e\prec d} \vee  p_{<t_1}^{e\prec d}$\\
\end{tabular}%

\smallskip
\noindent Note that for practical implementations there is, of course, potential for optimizations.
To demonstrate this, the table above does not contain every useless instance of Formulas~(\ref{red2:proptrans}). As an example, $(b\prec e) \wedge (e\prec d) \rightarrow (b\prec d)$ is not needed since the only way to prove~$(e\prec d)$ is via the first instance of Formulas~(\ref{red2:proptrans}) in the table, which requires~$(e\prec b)$.
However, having both~$(b\prec e)$ and~$(e\prec b)$ is not possible anyway due to Formulas~(\ref{red2:exclusion}).
\end{example}}

\smallskip
\noindent\FIX{\textbf{%
Treewidth-Awareness and Runtime.}}
\FIX{Next, we discuss consequences of the reduction consisting of Formulas~(\ref{red2:checkrules}), (\ref{red2:checkremove})--(\ref{red2:check}) as well as Formulas~(\ref{red2:prove})--(\ref{red2:check2}) for each node~$t$ of~$T$.
Thereby, we show results for treewidth-awareness and runtime. %

\begin{theorem}[Treewidth-awareness]\label{thm:runtime2}
The reduction from an HCF program~$\Pi$ and a nice TD~$\mathcal{T}=(T,\chi)$ of~$\mathcal{G}_\Pi$ to \SAT formula~$F'$ consisting of Formulas~(\ref{red2:checkrules}), (\ref{red2:checkremove})--(\ref{red2:check}) as well as Formulas~(\ref{red2:prove})--(\ref{red2:check2}) ensure that %
if~$k$ is the width of $\mathcal{T}$,
then the treewidth of~$\mathcal{G}_{F'}$ is at most~$\mathcal{O}(k^2)$.
\end{theorem}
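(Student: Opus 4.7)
The plan is to mirror the proof of Theorem~\ref{thm:runtime1} by constructing an explicit tree decomposition $\mathcal{T}'=(T',\chi')$ of $\mathcal{G}_{F'}$ whose tree $T'$ is obtained from $T$ by appending a few fresh leaves below each node, and whose bags are obtained by enlarging each $\chi(t)$ so as to absorb all the new auxiliary variables introduced by Formulas~(\ref{red2:prove})--(\ref{red2:check2}) at node $t$. The key observation is that every new variable is indexed either by a single atom from $\chi(t)$ or by a pair of atoms from $\chi(t)$, possibly together with a neighbouring-node label, so enlarging $\chi(t)$ accordingly costs only $\mathcal{O}(k^2)$ variables per bag.

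Concretely, for each node $t$ of $T$ with parent $\hat t$ and children $\{t_1,\ldots,t_\ell\}$, I would set
\[
\chi'(t)\eqdef\chi(t)\cup A(t)\cup B(t)\cup C(t),
\]
where $A(t)\eqdef\{(x\prec y)\mid x,y\in\chi(t)\cup\chi(\hat t),\,x\neq y\}$ collects precedence variables, $B(t)\eqdef\{p^x_t,\,p^x_{<t'}\mid x\in\chi(t),\,t'\in\{t,t_1,\ldots,t_\ell\}\}$ collects atomic provability witnesses, and $C(t)\eqdef\{p^{x\prec y}_t,\,p^{x\prec y}_{<t'}\mid x,y\in\chi(t),\,x\neq y,\,t'\in\{t,t_1,\ldots,t_\ell\}\}$ the precedence-provability witnesses. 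As in Corollary~\ref{thm:runtime} I would also assign each rule $r\in\Pi$ to exactly one node $t$ and, below that node, append a fresh leaf $t_r$ with bag $\chi(t)\cup\{p^x_{t,r}\mid x\in H_r\}$ hosting the clauses coming from Formulas~(\ref{red2:prove}) and~(\ref{red2:propsmaller}) for that rule.

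I would then verify that $\mathcal{T}'$ is a TD of $\mathcal{G}_{F'}$ by going through each family of Formulas~(\ref{red2:checkrules}), (\ref{red2:checkremove})--(\ref{red2:check}) and~(\ref{red2:prove})--(\ref{red2:check2}) and showing that every generated clause has all its variables inside some single bag (this is direct: the pair, atom and provability variables needed for each clause are precisely the ones I have placed into $\chi'(t)$ or $\chi'(t_r)$), and checking connectedness, which holds because each auxiliary variable is only referenced at the node, or at its immediate neighbours, it indexes. Counting contributions gives $|A(t)|\leq(2k+2)(2k+1)$, $|B(t)|\in\mathcal{O}(k)$ and $|C(t)|\in\mathcal{O}(k^2)$ (using $\ell=2$ for nice TDs), while the rule leaves have bags of size $\mathcal{O}(k)$, so overall $\width(\mathcal{T}')\in\mathcal{O}(k^2)$.

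The main obstacle will be the large biconditionals of Formulas~(\ref{red2:checkfirst}) and~(\ref{red2:checkfirst2}): since $\Pi_t$ may in principle contain very many rules with $x$ in the head, one cannot afford to put all witnesses $p^x_{t,r}$ into the single bag $\chi'(t)$. The fix is to Tseitin-ise these disjunctions along a chain, replacing, e.g., $p^x_t\leftrightarrow\bigvee_{r} p^x_{t,r}$ by a chain of fresh auxiliaries $q^x_{t,i}$, one per rule, each occurring in only two consecutive clauses of the form $q^x_{t,i}\leftrightarrow q^x_{t,i-1}\vee p^x_{t,r_i}$, and distributing these small clauses along the leaves $t_{r_i}$. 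An analogous trick handles Formula~(\ref{red2:checkfirst2}). With this bookkeeping in place, every bag stays within $\mathcal{O}(k^2)$, and the rest of the argument is a direct adaptation of the proof of Theorem~\ref{thm:runtime1}.
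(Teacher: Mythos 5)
Your overall strategy---an explicit tree decomposition of $\mathcal{G}_{F'}$ over (essentially) the same tree, with bags of size $\mathcal{O}(k^2)$ built from the pair variables $(x\prec y)$ and the provability variables---is the same as the paper's, and you correctly identify the one real obstacle: $\Card{\Pi_t}$ is not bounded in $k$, so the witnesses $p^x_{t,r}$ and the large disjunctions of Formulas~(\ref{red2:checkfirst}) and~(\ref{red2:checkfirst2}) cannot all be placed in a single bag. However, your repair does not work as stated. You hang the per-rule nodes $t_{r_1},\ldots,t_{r_m}$ as \emph{sibling leaves} below $t$ and then chain the Tseitin variables $q^x_{t,i}$ across consecutive leaves. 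A variable $q^x_{t,i}$ then occurs in clauses placed at $t_{r_i}$ and at $t_{r_{i+1}}$; since the only tree path between two sibling leaves passes through $t$, the connectedness condition of tree decompositions forces every $q^x_{t,i}$ into $\chi'(t)$. That is $\Theta(k\cdot\Card{\Pi_t})$ extra variables in the bag of $t$, which destroys the $\mathcal{O}(k^2)$ bound precisely in the case you were trying to handle. The fix is to arrange the per-rule nodes as a \emph{path} (each $t_{r_{i+1}}$ a child of $t_{r_i}$, all bags containing $\chi(t)$), so that consecutive chain links sit in adjacent nodes. This is in essence what the paper does: it inserts intermediate copies $t^1,\ldots,t^o$ of $t$ between $t$ and its parent, assigns each copy a constant-size piece $\Pi'_{t^i}$ of $\Pi_t$, and lets the already-present guiding variables $p^x_{<t^i}$ of Formulas~(\ref{red2:check}) play the role of your Tseitin chain, so no new auxiliary variables are needed at all.

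A second, smaller slip: the leaf bag $\chi'(t_r)=\chi(t)\cup\{p^x_{t,r}\mid x\in H_r\}$ does not contain the precedence variables $(x\prec b)$ for $x\in H_r$, $b\in B_r^+$, yet Formulas~(\ref{red2:prove}) and~(\ref{red2:propsmaller}) for $r$ put $p^x_{t,r}$ and $(x\prec b)$ into a common clause, hence into a primal-graph edge that no bag of your decomposition covers ($\chi'(t)$ has the pair variables but not $p^x_{t,r}$, while $\chi'(t_r)$ has $p^x_{t,r}$ but not the pair variables). Adding $A(t)$ (or just the $\mathcal{O}(k)$ relevant pairs) to each per-rule bag repairs this and keeps every bag within $\mathcal{O}(k^2)$. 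With these two corrections your argument coincides with the paper's proof.
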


\begin{proof}
We construct a TD~$\mathcal{T}'=(T,\chi')$ of~$\mathcal{G}_{F'}$ to show that the 
width of~$\mathcal{T}'$ increases from~$k$ to~$\mathcal{O}(k^2)$. %
To this end, let~$t$ be a node of~$T$ with~$\children(t)=\langle t_1, \ldots, t_\ell \rangle$
and let~$\hat t$ be the parent node of~$t$ (if it exists).
Note that the number~$\Card{\prog_t}$ of rules might be larger than~$\mathcal{O}(k^2)$.
However, one can easily modify TD~$\mathcal{T}$ %
by adding intermediate nodes~$t^1, \ldots, t^o$ between~$t$ and~$\hat t$, where for each 
node~$t^i$ with~$1\leq i\leq o$, we have~$\chi(t_i)=\chi(t)$.
Then, instead of the actual bag program~$\prog_{t^i}=\prog_t$ we only apply 
a small subset~$\prog'_{t^i}\subseteq\prog_{t}$ of~$t^i$ such that~$\bigcup_{1\leq i\leq o}\prog'_{t^i}=\prog_t$.
So, intuitively instead of applying~$\prog_t$ in one node~$t$, this allows us to partition~$\prog_t$ and apply the parts separately by using intermediate nodes. 
Therefore, in the following we assume for the ease of notation and without loss of generality that~$\Card{\prog_t}$ is bounded by a constant~$c$,
i.e., we sloppily refer to~$\prog'_t$ by~$\prog_t$.
We inductively define~$\chi'(t)\eqdef \chi(t) \cup \{(x\prec y) \mid \{x,y\}\subseteq\chi(t), x\neq y\}\cup \{p^y_{<{t'}}, p^x_{{t}}\mid t'\in\{t,t_1,\ldots,t_\ell\} , x\in\chi(t), y\in\chi(t')\} \cup \{p^x_{t,r}\mid x\in\chi(t), r\in\prog_t\} \cup  \{p^{y\prec y'}_{<{t'}}, p^{x\prec x'}_{{t}}  
\mid t'\in\{t,t_1,\ldots,t_\ell\}, \{x,x'\}\subseteq\chi(t), \{y,y'\}\subseteq\chi(t'), x\neq x', y\neq y'\}$.
Observe that indeed~$\mathcal{T}'$ is a TD of~$\mathcal{G}_{F'}$.
Further, $\Card{\chi'(t)}\leq k + k^2 + 2k \cdot (\ell + 1) + ck + 2k^2\cdot(\ell+1)$. 
Thus, since~$\ell\leq 2$ for nice TDs, the width of~$\mathcal{T}'$ is in~$\mathcal{O}(k^2)$.
\end{proof}

This increase of treewidth is also reflected in the runtime of the reduction.

\begin{corollary}[Runtime]
The reduction from an HCF program~$\Pi$ and a nice TD~$\mathcal{T}$ 
of~$\mathcal{G}_\Pi$ to SAT formula~$F'$ consisting of Formulas~(\ref{red2:checkrules}), (\ref{red2:checkremove})--(\ref{red2:check}) as well as Formulas~(\ref{red2:prove})--(\ref{red2:check2}) 
uses at most~$\mathcal{O}(k^2\cdot h+\Card{\Pi})$ many variables
and runs in time~$\mathcal{O}(k^3\cdot h+\Card{\Pi})$,
where~$k$ and $h$ are the width and the number of nodes of~$\mathcal{T}$, respectively.
\end{corollary}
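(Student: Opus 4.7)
The plan is to lift the bag-size bound from Theorem~\ref{thm:runtime2} into counts for variables and clauses, while carefully reusing the rule-distribution trick introduced in the proof of Corollary~\ref{thm:runtime} for the first reduction. First, I would decompose the variable set of $F'$ into three parts: (i) the atoms $\at(\Pi)$, which contribute $\mathcal{O}(|\Pi|)$; (ii) the local precedence atoms $(x\prec y)$ and the provability markers $p^x_t, p^x_{<t}, p^{x\prec y}_t, p^{x\prec y}_{<t}$, each of which sits inside a bag $\chi'(t)$ of the TD $\mathcal{T}'$ constructed in the proof of Theorem~\ref{thm:runtime2}; and (iii) the rule-indexed atoms $p^x_{t,r}$. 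For part (ii), Theorem~\ref{thm:runtime2} bounds $|\chi'(t)|$ by $\mathcal{O}(k^2)$, so summing over the $h$ nodes yields $\mathcal{O}(k^2\cdot h)$ variables of that kind. For part (iii), the partitioning of $\Pi_t$ across intermediate chain nodes (as in the proof of Theorem~\ref{thm:runtime2}) guarantees that each rule is handled in exactly one node, so the total number of $p^x_{t,r}$ variables is at most $\sum_{r\in\Pi}|H_r| = \mathcal{O}(|\Pi|)$. Adding up these contributions gives the $\mathcal{O}(k^2\cdot h + |\Pi|)$ bound on variables.

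For the runtime, I would bound the size of each formula block per node. Formulas~(\ref{red2:checkrules}), (\ref{red2:prove}), (\ref{red2:propsmaller}) and (\ref{red2:checkfirst}) are, via the rule-partitioning, proportional to $\sum_{r\in\Pi}|\at(r)| = \mathcal{O}(|\Pi|)$ in total. Formulas~(\ref{red2:exclusion}), (\ref{red2:checkremove})--(\ref{red2:check}), as well as (\ref{red2:checkremove3})--(\ref{red2:check2}) contribute $\mathcal{O}(k^2)$ per node. The dominant cost comes from Formulas~(\ref{red2:proptrans}), which enumerate ordered triples $(x,y,z)\in\chi(t)^3$ and thus take $\mathcal{O}(k^3)$ per node, and from Formulas~(\ref{red2:checkfirst2}), which for each pair $(x,y)\in\chi(t)^2$ iterate over a third bag atom $z$ and again cost $\mathcal{O}(k^3)$ per node. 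Summing these over the $h$ nodes together with the $\mathcal{O}(|\Pi|)$ cost for rule-indexed formulas yields the claimed bound $\mathcal{O}(k^3\cdot h + |\Pi|)$.

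The main obstacle I expect is the accounting for the rule-indexed auxiliary variables $p^x_{t,r}$: after the intermediate-node construction, each such node hosts only a constant number of rules, so one must explicitly argue that the total number of these atoms across the modified tree is $\mathcal{O}(|\Pi|)$ rather than $\mathcal{O}(|\Pi|\cdot h)$, and that the corresponding Formulas~(\ref{red2:prove})--(\ref{red2:propsmaller}) and (\ref{red2:checkfirst})--(\ref{red2:checkfirst2}) do not incur their cost repeatedly at every node. Once that bookkeeping is pinned down, the corollary follows directly from Theorem~\ref{thm:runtime2} and a straightforward enumeration of the formula blocks above.
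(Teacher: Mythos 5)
Your proposal is correct and follows essentially the same route as the paper's own (much terser) proof: it invokes the $\mathcal{O}(k^2)$ bag bound from Theorem~\ref{thm:runtime2}, uses the modification that each rule $r\in\Pi$ is charged to only one node to obtain the additive $\mathcal{O}(\Card{\Pi})$ term, and attributes the cubic factor in $k$ to the transitivity Formulas~(\ref{red2:proptrans}) and~(\ref{red2:checkfirst2}). The bookkeeping concern you flag about the rule-indexed variables $p^x_{t,r}$ is exactly the point the paper handles with that one-rule-one-node convention, so your resolution is the intended one.
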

\begin{proof}
The result follows from Theorem~\ref{thm:runtime2}.
Linear time in the size of~$\Pi$ can be obtained 
by slightly modifying Formulas~(\ref{red2:checkrules}), (\ref{red2:prove}), (\ref{red2:propsmaller}),  (\ref{red2:checkfirst}), and (\ref{red2:checkfirst2})
such that each rule~$r\in \Pi$ is used in only one node~$t$,
where~$r\in\Pi_{t'}$, but~$r\notin\Pi_t$, for some~$t'\in\children(t)$.
The cubic runtime in~$k$ is due to transitivity by Formulas~(\ref{red2:proptrans}) and~(\ref{red2:checkfirst2}).
\end{proof}}

\FIXR{For detailed discussions on correctness and further consequences, we refer to~\ref{sec:appendix}.}

\medskip
\FIX{
Compared to the reduction of Section~\ref{sec:tdguided}, the worst-case increase of treewidth from~$k$ to~$k^2$
explicitly allows us to verify whether every single $\prec$-relation per tree decomposition node
is indeed applicable when proving an answer set.
However, this reduction does not bijectively preserve answer sets for HCF programs in general,
and we believe that a different approach is needed in order to design a reduction that is both
 treewidth-aware and bijective at the same time.

Next, we show that indeed already deciding the consistency of a normal (uniquely provable) program is expected to be slightly harder than deciding the satisfiability of a propositional formula.}

\futuresketch{
\begin{definition}
Given a level mapping~$\varphi$ for~$\Pi$ over~$A$.
Then, $\varphi$ is \emph{proving}
if there is an answer set~$M$, called \emph{proving} answer set, of~$\Pi$ such that $a\in M$ if and only if
there is a rule~$r\in \Pi$, called \emph{justifying rule for~$a$ of~$\varphi$}, with: (1) $a\in H_r$, (2) $M\cap (H_r\setminus\{a\}\cup B_r^-)=\emptyset$, and (3) for each~$b\in B_r^+$ we have~$\varphi(b)<\varphi(a)$.
\end{definition}

\begin{definition}
Given a level mapping~$\varphi$ for~$\Pi$ over~$A$.
Then, $\varphi$ is \emph{canonical} if there exists an answer set~$M$ of~$\Pi$ such that
(1) $a\in M$ if and only if $\varphi(a)>0$, (2) for every~$a\in A$ with~$\varphi(a) > 0$ there exists~$a'\in A$ with $\varphi(a') - \varphi(a) = 1$, and (3) for every~$a\in M$, and every justifying rule~$r\in\Pi$ for~$a$ of~$\varphi$, there is some~$b\in B_r^+$ with~$\varphi(b) - \varphi(a) = 1$.
\end{definition}

\begin{proposition}\label{prop:mappingunique}
Given a program~$\Pi$. Then, for each proving, canonical level mapping~$\varphi$ for~$\Pi$ over~$\at(\Pi)$ there is one unique proving answer set~$M$ of~$\Pi$ for~$\varphi$ and vice versa.
\end{proposition}
\begin{proof}
$\Rightarrow$: Assume towards a contradiction that there are two different answer sets~$M,M'$ of~$\Pi$ for~$\varphi$. Since~$M, M'$ are both subset-minimal,
there is some~$a\in M$, such that $a\notin M'$. 
Then, by canonicity, $\varphi(a)=0$ since $a\notin M'$, but~$\varphi(a)\neq 0$ due to~$M$.

$\Leftarrow$: Assume that for an answer set~$M$ there are two canonical level mappings~$\varphi,\varphi'$ such that~$M$ is proving for~$\varphi,\varphi'$.
By (1) of canonicity, for~$a\in M$, we have $\varphi(a)>0$.
Assume towards a contradiction that there is some~$a\in M$ with~$\varphi(a) < \varphi'(a)$.
But then $\varphi'$ cannot be canonical, as it violates (3) of the canonicity definition, as at least one justifying rule exists since~$\varphi$ is proving.
\end{proof}

Given a TD~$\mathcal{T}=(T,\chi)$ of~$\mathcal{G}_\Pi$, and a node~$t$ of~$T$.
We refer to a canonical level mapping of~$\Pi_t$ over~$\chi(t)$ by \emph{$t$-local level mapping~$\varphi_t$}. 
Then, we refer by \emph{$\mathcal{T}$-local level mappings} to the set~$\mathcal{M}$ consisting of one~$t$-local level mapping~$\varphi_t$ for each~$t$ of~$T$
such that we have \emph{compatibility} as follows: For each nodes~$t,t'$ of~$T$ and every~$a,b\in\chi(t)\cap\chi(t')$, we have (1) $\varphi_t(a) = 0$ if and only if $\varphi_{t'}(a)=0$, (2) whenever $\varphi_t(a) < \varphi_t(b)$  then~$\varphi_{t'}(a) < \varphi_{t'}(b)$, and (3) for each justifying rule~$r\in\Pi_{t}$ for~$a$ respecting~$\varphi_{t}$, we require that~$\varphi_t$ gives the maximum level for~$a$ in~$T$, i.e., $\varphi_t(a)=\max_{t''\text{ of } T, a\in\chi(t'')}(\varphi_{t''}(a))$.

Further, $\mathcal{M}$ requires \emph{provability}, where for each $a\in\at(\Pi)$, there exists a node~$t$ of~$T$ s.t.\ either~$\varphi_t(a)=0$, or there is a justifying rule~$r\in\Pi_{t}$ for~$a$ respecting~$\varphi_{t}$.
\begin{lemma}[Equivalence of level mappings]
Given a program~$\Pi$, and a TD~$\mathcal{T}=(T,\chi)$ of~$\mathcal{G}_\Pi$.
Then, %
there is a bijective correspondence between a set~$\mathcal{M}$ of~$
\mathcal{T}$-local level mappings of~$\varphi$ and a proving, minimal level mapping~$\varphi$.
Concretely, for each such set~$\mathcal{M}$ there is a unique mapping~$\varphi$ (and vice versa) assuming that for each~$a\in\at(\Pi)$, $\varphi(a) = 0$ if and only if there is $\varphi_t\in\mathcal{M}$ with~$\varphi_t(a) = 0$.
\end{lemma}
\begin{proof}
$\Rightarrow$: Given the set~$\mathcal{M}$ of~$\mathcal{T}$-local level mappings. 
Then, we construct a proving, canonical level mapping~$\varphi$ as follows.
We set~$\varphi(a)\eqdef 0$ for each~$a\in \at(\Pi)$, where
there exists a node~$t$ of~$T$ with~$\varphi_t(a)=0$.
Then, we set~$\varphi(a)\eqdef 1$ for each~$a\in \at(\Pi)$,
where there is no node~$t$ of~$T$ with~$\varphi_t(b) < \varphi_t(a)$ for some~$b$
with~$\varphi(b)=0$, and so on.
In turn, we construct~$\varphi$ in rounds, where each round assigns an increasing value of~$2,3,\ldots$.
Observe that $\varphi$ is well-defined, i.e., each atom~$a\in\at(\Pi)$ gets a unique value since~$\mathcal{M}$ only contains canonical mappings and by compatibility of~$\mathcal{M}$.

Next, we show that~$\varphi$ is a proving, canonical level mapping of~$\Pi$.
Indeed, $\varphi$ is canonical by canonicity of each~$t$-local level mapping within~$\mathcal{M}$ and compatibility of~$\mathcal{M}$.
In particular, by (3) of compatibility of~$\mathcal{M}$,
for each atom~$a$, every nodes~$t',t''$ of~$T$,
and every justifying rule for~$a$ respecting~$\varphi_{t'}, \varphi_{t''}$, respectively, we have~$\varphi_{t'}(a)=\varphi_{t''}(a)$.
Further, $\varphi$ is proving by provability of~$\mathcal{M}$ and since the properties of TDs ensure that for TD~$\mathcal{T}$ for each rule~$r\in\Pi$ there is a node~$t$ of~$T$ with~$r\in\Pi_t$.

Towards a contradiction assume that there is a proving, canonical level mapping~$\varphi'$ 
for~$\Pi$ that is incomparable to~$\varphi$ with~$\varphi(a)=0$ if and only if~$\varphi'(a)=0$ for each~$a\in\Pi_t$.
However, both~$\varphi,\varphi'$ lead by construction to the same proving answer set~$M$. 
Consequently, by Proposition~\ref{prop:mappingunique}, $\varphi=\varphi'$, which leads to a contradiction.

$\Leftarrow$:
Assuming proving, canonical level mapping $\varphi$, and the proving answer set~$M$ of~$\varphi$.
Then, we define the \emph{canonical $t$-local level mapping~$\hat\varphi_t$} for each~$t$ of~$T$ as follows.
\noindent$\hat\varphi_t(a) {\eqdef} \begin{cases}0 & \text{for each } a\in \chi(t) \text{ with } \varphi(a) = 0,\\
\rank_{\chi(t)}(a,\varphi) &\text{for each } a\in \chi(t) \text{ with } \varphi(a) \neq 0,\end{cases}$
where~$\rank_A(a,\varphi)$ is the ordinal number of $a$ according to~$\varphi(a)$ among all elements in~$A$.
The resulting set~$\mathcal{M}$ of~$\mathcal{T}$-local level mappings consisting of~$\hat\varphi_{t}$ for each~$t$ of~$T$ is well-defined.
Observe that any~$\hat\varphi_t$ is canonical, as for~$M$, (1) and (3) of canonicity remains satisfied since the relation between level numbers remains as in~$\varphi$ (by construction of~$\hat\varphi_t$). Also Condition (2) is satisfied by construction of~$\hat\varphi_t$, since the absolute numbers are within~$1,\ldots, \Card{\chi(t)}$ and~$\varphi$ is canonical. Further, compatibility of~$\mathcal{M}$ holds, as well as provability, as by properties of a TD any rule~$r\in\Pi$ is in some node~$t$, i.e., $r\in\Pi_t$.

Next, we show that the set~$\mathcal{M}$ of~$\mathcal{T}$-local level mappings consisting of~$\hat\varphi_{t}$ for each~$t$ of~$T$ is unique.
Assume towards a contradiction that there is an alternative set~$\mathcal{M}'$ of~$\mathcal{T}$-local level mappings~$\psi_t$ for each~$t$ of~$T$. 
Then, by compatibility of~$\mathcal{M}'$ and assumptions of this lemma, $\psi_t(a)=0$ if and only if~$\hat\varphi_t(a)=0$.
Consequently, there must be an atom~$a\in\at(\Pi_t)$ with~$\psi_t(a) \neq \varphi_t(a)$ such that~$\psi_t(a) > 0$ and~$\varphi_t(a) > 0$.
We distinguish the cases (a) $\psi_t(a) > \varphi_t(a)$ and (b) $\varphi_t(a) > \psi_t(a)$. 

\noindent Case (a):~$\psi_t(a) > \varphi_t(a)$. if~$\psi_t(a) > \max_{t'\text{ of }T, a\in\chi(t')}\psi_{t'}(a)$,
then (3) of compatibility of~$\mathcal{M}'$ is dissatisfied.
Then, there is by canonicity~$b\in\chi_t$ where~$\psi(a)>\psi(b)$, but~$\varphi_t(a)\leq\varphi_t(b)$.
There cannot be a node~$t'$ of~$T$, with~$t\neq t'$ where~$b\in\chi_{t'}$,
otherwise~$\mathcal{M}'$ does not have compatibility. Consequently, since~$b$ was arbitrarily chosen, this has to hold for every such atom~$b$.
Therefore

\noindent Case (b):~$\varphi_t(a) > \psi_t(a)$. Then, there cannot be a rule~$r\in \Pi$
\end{proof}

\begin{theorem}
The reduction~$R$ from an HCF program~$\Pi$ and a TD~$\mathcal{T}=(T,\chi)$ of~$\mathcal{G}_\Pi$ to SAT formula~$F$ consisting of Formulas~(\ref{red:checkrules})--(\ref{red:cnt:checkfirst}) is correct.
Concretely, for each answer set of~$\Pi$ there exists exactly one model of~$F$ and vice versa. TODO: make it precise!
\end{theorem}

\begin{proof}
Assume any answer set~$M$ of~$\Pi$. 
Then, by Proposition~\ref{prop:mappingunique} there has to be a unique, minimal level mapping~$\varphi: \at(\Pi) \rightarrow \Nat$, proven by~$M$.
We construct a model~$I$ of~$F$ as follows.
For each~$x\in\at(\Pi)$ we let $I(x)\eqdef 1$ if $x\in M$ and~$I(x)\eqdef 0$ otherwise.
For each node~$t$ of~$T$, %
and~$x\in\chi(t)$ we assign the following variables:
(1) For every~$l\in \bvali{x}{t}{i}$ where~$i=\rank_{\chi(t)}(x,\varphi)$, we set $I(b)\eqdef 1$ if~$l$ is an atom and~$I(b)\eqdef 0$ if~$l$ is not an atom. 
(2) If there is a justifying rule~$r$ for~$x$ respecting~$\varphi$, we set~$I(p_{x_t})\eqdef I(pf_{x_t})\eqdef 1$.
(3) If~$I(p_{x_{t'}})=1$ for~$t'\in\children(t)$, then we set~$I(p_{x_t})\eqdef 1$.
\end{proof}

\subsubsection{Parameterized Algorithm for Counting and Enumeration.}
The reduction presented in the previous section immediately gives rise to an algorithm for counting and enumerating answer sets of an \ASP program.

\begin{proposition}
Runtime for counting and enumerating with linear delay.
\end{proposition}

\begin{corollary}
Counting by reduction
\end{corollary}

\begin{corollary}
Enumeration by reduction, mention linear delay
\end{corollary}
}

\section{Why \ASP Consistency is Harder than \SAT}\label{sec:hardness}

This section concerns the hardness of ASP consistency when
considering treewidth.
The high-level reason for \ASP being harder than \SAT when
assuming bounded treewidth, lies in the issue
that a TD, while capturing the structural dependencies
of a program, might force an evaluation that is
completely different from the orderings proving answer sets. %
Consequently, during dynamic programming for \ASP, one needs to store in each table~$\tab{t}$ for each node~$t$ during post-order traversal, in addition to an interpretation (candidate answer set), also an ordering among the atoms in those interpretations.
We show %
that under reasonable assumptions in complexity theory,
this worst-case cannot be avoided. Then, the resulting runtime consequences cause \ASP to be slightly harder than \SAT, where 
in contrast to \ASP
storing a table~$\tab{t}$ of only assignments for each node~$t$~suffices.

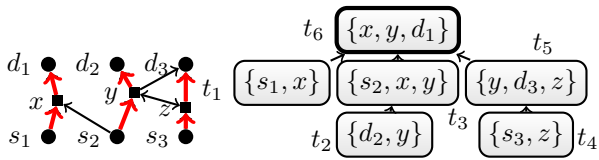
\begin{figure}
	\centering%
	\begin{tikzpicture}[node distance=7mm,every node/.style={fill,circle,inner sep=2pt}]%
		\node (s1) [label={[text height=1.5ex,yshift=0.0cm,xshift=0.05cm]left:$s_1$}] {};
		\node (s2) [right=of s1,label={[text height=1.5ex,yshift=0.0cm,xshift=0.05cm]left:$s_2$}] {};
		\node (s3) [right=of s2,label={[text height=1.5ex,yshift=0.0cm,xshift=0.05cm]left:$s_3$}] {};
		\node (d1) [above of=s1,yshift=.75em,label={[text height=1.5ex,xshift=0.05cm]left:$d_1$}] {};
		\node (d2) [above of=s2,yshift=.75em,label={[text height=1.5ex,xshift=0.05cm]left:$d_2$}] {};
		\node (d3) [above of=s3,yshift=.75em,label={[text height=1.5ex,xshift=0.05cm]left:$d_3$}] {};
		\node (x1) [rectangle,above=of s1,yshift=-1.15em,xshift=.35em,label={[text height=1.5ex,yshift=0.0cm,xshift=0.05cm]left:$x$}] {};
		\node (x2) [rectangle,above=of s2,yshift=-.85em,xshift=.65em,label={[text height=1.5ex,yshift=0.0cm,xshift=0.05cm]left:$y$}] {};
		\node (x4) [rectangle,above=of s3,yshift=-1.4em,label={[text height=1.5ex,yshift=0.0cm,xshift=0.05cm]left:$z$}] {};
		\draw [->,thick] (s1) to (x1);
		\draw [->,ultra thick, red] (s1) to (x1);
		\draw [->,thick] (s2) to (x1);
		\draw [->,thick] (x1) to (d1);
		\draw [->,ultra thick,red] (x1) to (d1);
		\draw [->,thick] (x2) to (d2);
		\draw [->,ultra thick, red] (x2) to (d2);
		\draw [->,thick] (s2) to (x2);
		\draw [->,ultra thick, red] (s2) to (x2);
		\draw [->,thick] (s3) to (x4);
		\draw [->,ultra thick, red] (s3) to (x4);
		\draw [->,thick] (x4) to (d3);
		\draw [->,ultra thick, red] (x4) to (d3);
		\draw [->,thick] (x2) to (d3);
		\draw [<->,thick] (x4) to (x2);
	\end{tikzpicture}%
	\begin{tikzpicture}[node distance=0.75mm]
\tikzset{every path/.style=thick}

\node (leaf1) [tdnode,label={[yshift=-0.25em,xshift=0.0em]left:$t_1$}] {$\{s_1,x\}$};
\node (leaf1b) [tdnode, right=of leaf1,label={[yshift=0.25em,xshift=0.65em]below right:$t_3$}] {$\{s_2,x,y\}$};
\node (leaf1c) [tdnode,below=of leaf1b,xshift=-.5em,label={[yshift=-0.25em,xshift=0.25em]left:$t_2$}] {$\{d_2,y\}$};
\node (leaf2) [tdnode,label={[xshift=-1.0em, yshift=-0.15em]above right:$t_5$}, right = of leaf1b]  {$\{y,d_3,z\}$};
\node (leaf2b) [tdnode,label={[xshift=-.25em, yshift=-0.15em] right:$t_4$}, below = of leaf2]  {$\{s_3,z\}$};
\coordinate (middle) at ($ (leaf1.north east)!.5!(leaf2.north west) $);
\node (join) [tdnode,ultra thick,label={[]left:$t_6$}, above  = .75mm of middle] {$\{x,y,d_1\}$};

\draw [<-] (join) to (leaf1);
\draw [<-] (join) to (leaf1b);
\draw [<-] (join) to (leaf2);
\draw [<-] (leaf1b) to (leaf1c);
\draw [<-] (leaf2) to (leaf2b);
\end{tikzpicture}
	\vspace{-.45em}
	\caption{An instance $I=(G,P)$ (left) of the \problemFont{Disjoint Paths Problem} and a TD of~$G$ (right).}
	\label{fig:disjpaths}
\end{figure}

We show our novel hardness result by reducing from the \problemFont{(directed) Disjoint Paths Problem}, which is a graph problem defined as follows.
Let us consider a directed graph~$G=(V,E)$, and a set~$P\subseteq V\times V$ 
of disjoint pairs of the form $(s_i, d_i)$ consisting of 
\emph{source}~$s_i$ and \emph{destination}~$d_i$,
where~$s_i, d_i\in V$ such that each vertex occurs at most once in~$P$, 
i.e., $\Card{\bigcup_{(s_i,d_i)\in P}\{s_i,d_i\}}=2\cdot\Card{P}$.
Then, $(G,P)$ %
is an instance of the \problemFont{Disjoint Paths Problem},
asking whether there exist $\Card{P}$ many (vertex-disjoint) paths
from~$s_i$ to~$d_i$ for~$1\leq i\leq \Card{P}$.
Concretely, each vertex of~$G$ is allowed to appear in at most one of these paths.
For the ease of presentation, we assume without loss of generality~\cite{LokshtanovMarxSaurabh11}
that sources~$s_i$ have no incoming edge $(x,s_i)$,
and destinations~$d_i$ have no outgoing edge~$(d_i,x)$.

\begin{example}
Figure~\ref{fig:disjpaths} (left) shows an instance~$I=(G,P)$ of the \problemFont{Disjoint Paths Problem},
where $P$ consists of pairs of the form $(s_i,d_i)$.
The only solution to~$I$ is both emphasized and colored in red.
Figure~\ref{fig:disjpaths} (right) depicts a TD of~$G$.
\end{example}

While under ETH, \SAT cannot be solved in time~$2^{o(k)}\cdot\poly(\Card{\var(F)})$,
where~$k$ is the treewidth of the primal graph of a given propositional formula~$F$,
the  \problemFont{Disjoint Paths Problem} is considered to be even harder.
Concretely, the problem has been shown to be slightly superexponential
as stated in the following proposition.
\begin{proposition}[\cite{LokshtanovMarxSaurabh11}]\label{prop:slightlysuper}
Under ETH, there is no algorithm solving the~\problemFont{Disjoint Paths Problem} %
in time~$2^{o(k\cdot\log(k))}\cdot \poly({\Card{V}})$, where $(G,P)$ is any instance with $k=\tw{G}$.
\end{proposition}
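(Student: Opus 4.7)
The plan is to establish the lower bound via a polynomial-time many-to-one reduction that preserves treewidth linearly, starting from a problem already known to be slightly super-exponentially hard under ETH. A canonical choice is the $k \times k$ \textsc{Permutation CSP}: given $k$ variables with domain $[k]$ and a set of binary constraints on pairs of variables, decide whether there is a satisfying assignment. A certificate is essentially $k \log k$ bits long, and by a standard reduction from $3$-\textsc{SAT} (group the $n$ Boolean variables into blocks of $\log k$, so that one block becomes one $[k]$-valued variable, and translate each clause into binary constraints over the blocks it touches), one obtains that no $2^{o(k \log k)} \cdot \poly(n)$-time algorithm exists for this CSP unless ETH fails.

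The first reduction step is to encode each CSP-variable $i \in [k]$ by a source--destination pair $(s_i, d_i)$ equipped with a \emph{value gadget}: $k$ parallel internally-disjoint $s_i$-to-$d_i$ paths, labeled by the $k$ possible values of variable~$i$. Choosing which of these $k$ paths carries the solution path from $s_i$ to $d_i$ corresponds to the value assigned to variable~$i$. A binary constraint between variables $i$ and $j$ is enforced by identifying, for every forbidden combination $(a,b)$, one interior vertex of the $a$-path of variable~$i$ with one interior vertex of the $b$-path of variable~$j$. Vertex-disjointness of the paths then rules out exactly the forbidden combinations, yielding a bijective correspondence between solutions of the Disjoint Paths instance and satisfying assignments of the CSP. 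The conditions that sources have no incoming and destinations no outgoing edges are easy to arrange by attaching degree-one pendants.

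The crucial step is bounding the treewidth of the constructed graph by $\mathcal{O}(k)$. I would exhibit an explicit path decomposition that sweeps along a linear ordering $v_1, \ldots, v_k$ of the CSP-variables: each bag contains the $k$ parallel ``current-frontier'' vertices of the value gadget of the variable being processed, together with an $\mathcal{O}(k)$-size interface that stores one representative vertex per already-opened but not-yet-closed variable. Since a variable is forgotten once all its binary constraints have been introduced, at any point only $\mathcal{O}(k)$ variables are simultaneously active, giving bags of size $\mathcal{O}(k)$.

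The main obstacle I anticipate is precisely this treewidth analysis: a naive placement of the shared constraint vertices can create dense interactions and blow the treewidth up to $\Omega(k^2)$. The remedy is to route the constraint-sharing vertices along a common ``backbone'' consistent with the chosen linear ordering of variables, so that the decomposition can forget the gadget of a variable as soon as all of its constraints have been processed; this requires that the CSP's constraint graph be arranged to have bounded pathwidth, which can be assumed without loss of generality (or enforced by a cheap preprocessing that at most multiplies $k$ by a constant). Once this is in place, any algorithm for \textsc{Disjoint Paths} running in $2^{o(k \log k)} \cdot \poly(|V|)$ time with $k = \tw{G}$ would, composed with the reduction, solve the source $k \times k$ \textsc{Permutation CSP} in $2^{o(k \log k)} \cdot \poly(n)$ time, contradicting ETH.
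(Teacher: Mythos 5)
First, note that the paper does not actually prove this proposition: it is imported verbatim from \cite{LokshtanovMarxSaurabh11}, so your attempt must be judged against that paper's argument, which reduces from $k \times k$ \emph{Hitting Set with thin sets} via a sequential, backbone-style construction of pathwidth $O(k)$.

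Your overall strategy --- encoding each of $k$ domain-$[k]$ variables by a terminal pair whose $k$ parallel value-paths each carry $\log k$ bits of information --- is in the right spirit, but the constraint-checking mechanism breaks the treewidth bound, and the escape hatch you propose is self-defeating. Concretely: if you identify one interior vertex of the $a$-path of variable $i$ with one interior vertex of the $b$-path of variable $j$ for every forbidden pair $(a,b)$ of every constraint $\{i,j\}$, then contracting each value-path to a single vertex exhibits the CSP's microstructure graph (on $k^2$ vertices, with an edge per forbidden pair) as a minor of $G$. For a hard instance the constraint graph is essentially a clique on the $k$ variables and the forbidden-pair sets are dense, so this minor has $\Theta(k^4)$ edges and hence treewidth $\widetilde{\Omega}(k^2)$; your sweep decomposition with an ``$O(k)$-size interface of one representative per open variable'' cannot exist, because no variable's gadget can be forgotten before all $k-1$ constraints incident to it have been introduced. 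Your proposed fix --- assume WLOG that the CSP's constraint graph has bounded pathwidth --- is not WLOG but fatal: a binary CSP over domain $[k]$ whose constraint graph has pathwidth $p$ is solvable by standard dynamic programming in time $k^{p+1}\cdot\poly(n)$, i.e.\ in time $2^{O(\log k)}\cdot\poly(n)$ when $p = O(1)$, so no such restricted family can carry a $2^{\Omega(k\log k)}$ lower bound. (A smaller issue: grouping $3$-SAT variables into blocks yields ternary, not binary, constraints; the standard route starts from $3$-Coloring or from the $k\times k$ Clique and Hitting Set problems of \cite{LokshtanovMarxSaurabh11}.) The known proof avoids exactly this trap by making the hard source problem sequential rather than pairwise dense: the $m$ thin sets of a $k\times k$ Hitting Set instance are processed one after another along a backbone through which all $k$ paths travel, each set contributing a gadget that interacts only with the $O(k)$ vertices currently on the frontier, so the density of the constraint structure is paid for in the \emph{length} of the graph rather than in its \emph{width}.
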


It turns out that the %
\problemFont{Disjoint Paths Problem}
is a suitable problem candidate for showing the hardness of \ASP.
In our reduction, we use the following notation of open pairs, %
which leads to the result below. %
Let~$(G,P)$ be an instance of the~\problemFont{Disjoint Paths Problem}, $\mathcal{T}=(T,\chi)$ be a TD of~$G$, and~$t$ be a node of~$T$.
Then, a pair~$(s,d)\in P$ is \emph{open in node~$t$}, if either~$s\in \chi_{\leq t}$ (``\emph{open due to source $s$'}') or~$d\in\chi_{\leq t}$ (``\emph{open due to destination $d$}''), but not both.

\begin{proposition}[\cite{Scheffler94}]\label{prop:earlyout}
An instance~$(G,P)$ of the~\problemFont{Disjoint Paths Problem} does not have a solution if there is a TD~$\mathcal{T}=(T,\chi)$ of~$G$ and a bag~$\chi(t)$ with more than~$\Card{\chi(t)}$ many pairs in~$P$ that are open in a node~$t$ of~$T$.
\end{proposition}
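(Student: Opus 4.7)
The plan is to exploit the well-known separator property of tree decompositions: for any node~$t$ of~$T$, the bag~$\chi(t)$ separates the vertices in $\chi_{\leq t}\setminus\chi(t)$ from those in $V\setminus\chi_{\leq t}$. I would first state this property precisely (any edge, and hence any path, from a vertex strictly below~$t$ to a vertex outside~$\chi_{\leq t}$ must traverse a vertex of~$\chi(t)$), which follows directly from conditions~(ii) and~(iii) of the TD definition.

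Next, assume for contradiction that~$(G,P)$ has a solution, i.e., a family $\mathcal{P}$ of vertex-disjoint paths, one from $s_i$ to $d_i$ for each $(s_i,d_i)\in P$. Pick any pair $(s_i,d_i)\in P$ that is open in~$t$. By definition, exactly one of~$s_i,d_i$ lies in~$\chi_{\leq t}$; say~$s_i\in\chi_{\leq t}$ and~$d_i\notin\chi_{\leq t}$ (the other case is symmetric). Since $s_i$ has no incoming edges and $d_i$ has no outgoing edges by the WLOG assumption, the path from $s_i$ to $d_i$ starts strictly inside $\chi_{\leq t}$ (or in $\chi(t)$) and ends outside $\chi_{\leq t}$. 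Hence, by the separator property, this path must contain at least one vertex of $\chi(t)$.

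Now I would invoke vertex-disjointness: since the paths in $\mathcal{P}$ share no vertices, the bag vertices used by distinct open pairs are pairwise distinct. Consequently, the number of vertices of~$\chi(t)$ consumed by paths associated with open pairs is at least the number of pairs open in~$t$. If this number exceeds~$\Card{\chi(t)}$, we obtain a contradiction by the pigeonhole principle, since we would need strictly more than $\Card{\chi(t)}$ distinct vertices from~$\chi(t)$.

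The only mild subtlety, and the main thing to handle carefully, is the edge case where~$s_i\in\chi(t)$ itself (or symmetrically $d_i\in\chi(t)$): in that case the path already touches $\chi(t)$ at its endpoint, and we must ensure we count this vertex as used by pair~$i$, so that the disjointness argument still yields distinct bag vertices per open pair. A short case distinction based on whether the endpoint lying in $\chi_{\leq t}$ is in $\chi(t)$ or strictly below it takes care of this, and the pigeonhole bound follows uniformly.
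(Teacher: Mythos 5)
Your proposal is correct and follows essentially the same route as the paper: the paper's (very terse) proof likewise rests on the fact that $\chi(t)$ is a separator between $\chi_{\leq t}$ and the rest of $G$, so at most $\Card{\chi(t)}$ vertex-disjoint paths can cross it, and every open pair forces one such crossing. Your write-up merely fills in the pigeonhole/disjointness details and the endpoint-in-bag edge case that the paper leaves implicit.
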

\begin{proof}
The result, cf.,~\cite{Scheffler94}, boils down to the fact that each bag~$\chi(t)$, when removed from~$G$, results in a disconnected graph consisting of two components.
Between these components can be at most~$\Card{\chi(t)}$ different paths. %
\end{proof}

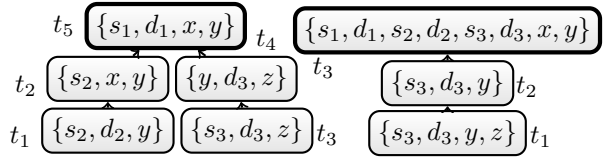
\begin{figure}
\centering%
	\begin{tikzpicture}[node distance=0.75mm]
\tikzset{every path/.style=thick}

\node (leaf1b) [tdnode,label={[yshift=-0.25em,xshift=0.0em]left:$t_2$}] {$\{s_2,x,y\}$};
\node (leaf1c) [tdnode,below=of leaf1b,label={[yshift=-0.25em,xshift=0.0em]left:$t_1$}] {$\{s_2,d_2,y\}$};
\node (leaf2) [tdnode,label={[xshift=-.5em, yshift=-0.15em]above right:$t_4$}, right = of leaf1b]  {$\{y,d_3,z\}$};
\node (leaf2b) [tdnode,xshift=.4em,label={[xshift=-.25em, yshift=-0.15em] right:$t_3$}, below = of leaf2]  {$\{s_3,d_3,z\}$};
\coordinate (middle) at ($ (leaf1.north east)!.5!(leaf2.north west) $);
\node (join) [tdnode,ultra thick,label={[]left:$t_5$}, above  = .75mm of middle] {$\{s_1,d_1,x,y\}$};

\draw [<-] (join) to (leaf1);
\draw [<-] (join) to (leaf2);
\draw [<-] (leaf2) to (leaf2b);
\draw [<-] (leaf1b) to (leaf1c);
\end{tikzpicture}%
\hspace{-4em}\begin{tikzpicture}[node distance=0.5mm]%
\tikzset{every path/.style=thick}

\node (abstand) [white] {};
\node (leaf2) [tdnode,label={[xshift=-.25em, yshift=-0.15em] right:$t_1$}, right = 4.1em of abstand]  {$\{s_3,d_3,y,z\}$};
\node (leaf3) [tdnode,above=of leaf2, label={[xshift=-.30em, yshift=-0.15em] right:$t_2$}]  {$\{s_3,d_3,y\}$};
\coordinate (middle) at ($ (leaf3.north east)!.5!(leaf3.north west) $);
\node (join) [tdnode,ultra thick,label={[xshift=-3em,yshift=.25em]below left:$t_3$}, above  = .75mm of middle] {$\{s_1,d_1,s_2,d_2,s_3,d_3,x,y\}$};

\draw [<-] (join) to (leaf3);
\draw [<-] (leaf3) to (leaf2);
\end{tikzpicture}
	\vspace{-.45em}
	\caption{A pair-respecting TD (left), and a pair-connected TD~$\mathcal{T}$ (right) of $(G,P)$ of Figure~\ref{fig:disjpaths}.}
	\label{fig:tds}
\end{figure}

\noindent%
\textbf{Preparing pair-connected TDs.}
Before we present the actual reduction, we need to define a \emph{pair-respecting} tree decomposition of an instance~$(G,P)$ of the~\problemFont{Disjoint Paths Problem}.
Intuitively, such a TD of~$G$ additionally ensures 
that each pair in~$P$ is encountered together in some TD bag.

\begin{definition}
A TD~$\mathcal{T}=(T,\chi)$ of~$G$ is a
\emph{pair-respecting TD} of~$(G,P)$ if for every pair~$p=(s,d)$ with~$p\in P$, (1) whenever~$p$
is open in a node~$t$ due to~$s$, or due to~$d$, then~$s\in\chi(t)$, or $d\in\chi(t)$, respectively.
Further, (2) whenever $p$ is open in a node~$t$, 
but not open in the parent~$t'$ of $t$ (``$p$ is \emph{closed in~$t'$}''), both~$s,d\in\chi(t')$. %
\end{definition}

We observe that such a pair-respecting TD can be computed with
only a linear increase in the (tree)width in the worst-case.
Concretely, we can turn any TD~$\mathcal{T}=(T,\chi)$ of~$G$ into
a pair-respecting TD~$\mathcal{T}'=(T,\chi')$ of~$(G,P)$.
Thereby, the tree~$T$ is traversed for each~$t$ of~$T$ in post-order, and vertices of~$P$ are added to~$\chi(t)$
accordingly, resulting in~$\chi'(t)$, such that conditions (1) and (2) of pair-respecting TDs are met.
Observe that this %
doubles the sizes of the bags in the worst-case,
since by Proposition~\ref{prop:earlyout}
there can be at most bag-size many open pairs.

\begin{example}
Figure~\ref{fig:tds} (left) shows a pair-respecting TD of~$(G,P)$ of Figure~\ref{fig:disjpaths},
which can be obtained by transforming the TD of Figure~\ref{fig:disjpaths} (right), followed by simplifications.
\end{example}

For a given sequence~$\sigma$ %
of
pairs of~$P$ in the order of closure
with respect to the post-order of~$T$,
we refer to~$\sigma$ by the \emph{closure sequence} of~$\mathcal{T}$.
We denote by~$p\in_i\sigma$ that pair~$p$ is the \emph{pair %
closed $i$-th} in the order of~$\sigma$.
Intuitively, e.g., the first pair~$p \in_1\sigma$
indicates that pair $p\in P$ %
is the first 
to be closed %
when traversing~${T}$ in post-order.

\begin{definition}
A \emph{pair-connected TD}~$\mathcal{T}{=}(T,\chi)$ of $(G,P)$ is a 
pair-respecting TD of $(G,P)$, if,
whenever a pair~$p\in_i\sigma$ with~$i{>}1$
is closed in a node~$t$ of~$T$,
also for the pair $(s,d)\in_{i-1}\sigma$
closed directly %
before $p$ in $\sigma$, both~$s,d\in\chi(t)$.
\end{definition}

We can turn any pair-respecting, \emph{nice} TD~$\mathcal{T}'{=}(T,\chi')$ of width~$k$ into a pair-connected TD~$\mathcal{T}''{=}(T,\chi'')$ with constant increase in the width.
Let therefore pair~$p\in_i\sigma$ be closed ($i{>}1$) in a node~$t$,
and pair~$(s,d)\in_{i-1}$ be closed before~$p$ in node~$t'$.
Intuitively, we need to add~$s,d$ to all bags~$\chi'(t'), \ldots, \chi'(t)$ of nodes encountered
after node~$t'$ and before node~$t$ of
the post-order tree traversal, resulting in~$\chi''$.
However, %
the width of~$\mathcal{T}''$ %
is at most~$k+3\cdot \Card{\{s,d\}} = k+6$,
since in the tree traversal each node of~$T$ is passed at most $3$ times, 
namely when traversing down, when going from the left branch to the right branch, 
and then also when going upwards.
Indeed, to ensure~$\mathcal{T}''$ is a TD (connectedness condition),
we add at most $6$ additional atoms to every bag.

\begin{example}
Figure~\ref{fig:tds} (right) depicts a pair-connected TD of~$(G,P)$ of Figure~\ref{fig:disjpaths},
obtainable by transforming the pair-respecting TD of Figure~\ref{fig:tds} (left), followed by simplifications.
\end{example}

\subsection*{Reducing from \problemFont{Disjoint Paths} to \ASP}

In this section, we show the main reduction~$R$ of this work,
assuming any instance~$I=(G,P)$ of the \problemFont{Disjoint Paths Problem}.
Before we construct our program~$\Pi$, we require a  nice,
pair-connected TD~$\mathcal{T}=(T,\chi)$ of~$G$,
whose width is~$k$ and a corresponding closure sequence~$\sigma$.
By Proposition~\ref{prop:earlyout}, for each node~$t$ of~$\mathcal{T}$,
there can be at most~$k$ many open pairs of~$P$, which we assume in the following.
If this was not the case, we can immediately output, %
e.g., $\{a\leftarrow \neg a\}$.

Then, we use the following atoms in our reduction.
Atoms $e_{u,v}$, or $ne_{u,v}$ indicate that edge~$(u,v)\in E$ is used, or unused,
respectively.
Further, atom $r_u$ for any vertex~$u\in V$ indicates that~$u$ is reached via used edges. %
Finally, we also need atom~$f^u_t$ for a node~$t$ of~$T$, and vertex~$u\in\chi(t)$,
to indicate that vertex~$u$ is already finished in node~$t$,
i.e., $u$ has one used, outgoing edge.
The presence of this atom~$f^u_t$ in an answer set prohibits to take additional 
edges of~$u$ in parent nodes of~$t$, which is needed due to the need of disjoint paths
of the~\problemFont{Disjoint Paths Problem}.

The instance~$\Pi=R(I,\mathcal{T})$ constructed by reduction~$R$ consists of three program parts,
namely \emph{reachability}~$\Pi_\mathcal{R}$, %
\emph{linking}~$\Pi_\mathcal{L}$ of two pairs in~$P$, as well as \emph{checking}~$\Pi_\mathcal{C}$ of disjointness of constructed paths.
Consequently, $\Pi=\Pi_\mathcal{R}\cup\Pi_\mathcal{L}\cup\Pi_\mathcal{C}$.
All three programs~$\Pi_{\mathcal{R}}$, $\Pi_\mathcal{L}$, and~$\Pi_\mathcal{C}$ are guided along TD~$\mathcal{T}$,
which ensures that the width of~$\Pi$ is only linearly increased.
Note that this has to be carried out carefully, since, e.g., the number of atoms of the form~$e_{u,v}$ using 
only vertices~$u,v$ that appear in one bag, can be already quadratic in the bag size.
The goal of this reduction, however, admits only a linear overhead in the bag size.
Consequently, we are, e.g., not allowed to construct rules in~$\Pi$ that require more than~$\mathcal{O}(k)$ edges in one bag of a TD of~$\mathcal{G}_\Pi$.

To this end, %
let the \emph{ready edges~$E^{\text{re}}_{t}$  in node~$t$}
be the set of edges~$(u,v)\in E$  %
not present in~$t$ anymore, i.e., 
$\{u,v\}\subseteq \chi(t')\setminus \chi(t)$ 
for any child node~$t'\in\children(t)$. %
Further, let $E^{\text{re}}_{n}$ for the root node~$n=\rootOf(T)$ %
additionally contain also all edges of~$n$, i.e., $E\cap (\chi(n) \times \chi(n))$.
Intuitively, ready edges for~$t$ will be processed in node~$t$.
Note that each edge occurs in exactly one set of ready edges.
Further, for nice TDs~$\mathcal{T}$, we always have~$\Card{E^{\text{re}}_{t}}\leq k$, i.e., 
ready edges are linear in~$k$.
\begin{example}
Recall instance~$I{=}(G,P)$ with $G{=}(V,E)$ of Figure~\ref{fig:disjpaths}, and pair-connected TD~$\mathcal{T}{=}(T,\chi)$ of~$I$ of Figure~\ref{fig:tds} (right).
Then, $E_{t_1}^{\text{re}}{=}\emptyset$,
$E_{t_2}^{\text{re}}{=}\{(y,z), (z,\allowbreak y), (z, d_3), (s_3, z)\}$, since~$z\notin\chi(t_2)$, and
$E_{t_3}^{\text{re}}{=}E\setminus E_{t_2}^{\text{re}}$ for root~$t_3$ of~$\mathcal{T}$.
\end{example}

\smallskip
\FIX{\noindent\textbf{Reachability~$\Pi_\mathcal{R}$.} 
Program~$\Pi_\mathcal{R}$ is constructed as follows.
{
\vspace{-1.5em}
\begin{align}
	\label{red:edgeguess1}&e_{u,v}\leftarrow r_u, \neg ne_{u,v}\hspace{-.4em}&&{\text{for each }(u,v)\in E^{\text{re}}_t}\\
	\label{red:edgeguess2}&ne_{u,v}\leftarrow \neg e_{u,v}&&{\text{for each }(u,v)\in E^{\text{re}}_t}\\
	\label{red:reach}&r_{v}\leftarrow e_{u,v}&&{\text{for each }(u,v)\in E^{\text{re}}_t}%
\end{align}%
}}

\vspace{-1em}
\noindent Rules~(\ref{red:edgeguess1}) and~(\ref{red:edgeguess2}) ensure that
there is a partition of edges in used edges~$e_{u,v}$ and unused edges~$ne_{u,v}$.
Additionally, Rules~(\ref{red:edgeguess1}) make sure that only edges of adjacent, reachable vertices are used.
Naturally, this requires that initially at least one vertex is reachable (constructed below).
Rules~(\ref{red:reach}) %
ensure reachability~$r_v$ %
over used edges~$e_{u,v}$ for %
a vertex~$v$. %

\medskip
\noindent%
\FIX{\textbf{Linking of pairs~$\Pi_\mathcal{L}$.} 
Program~$\Pi_\mathcal{L}$ is constructed as follows.\hspace{-1em}
{
\vspace{-.35em}
\begin{align}
	\label{red:pair1}&\hspace{-.5em}\leftarrow \neg r_{d}&&{\text{for each }(s,d)\in P}\\
	\label{red:pair1st}&\hspace{-.5em}r_{s_1}\leftarrow &&{\text{for }(s_1,d)\in_1 \sigma}\\
	\label{red:paircycles}&\hspace{-.5em}r_{s_i}\leftarrow r_{s_{i-1}}, r_{d_{i-1}}&&{\text{for each }(s_i,d)\in_i\hspace{-.1em} \sigma, (s,d_{i-1})\hspace{-.1em}\in_{i-1}\hspace{-.1em}\sigma} %
\end{align}%
\vspace{-1em}
}}

\noindent Rules~(\ref{red:pair1}) %
make sure that, ultimately, destination vertices of all pairs are reached.
As an initial, reachable vertex, Rule~(\ref{red:pair1st}) sets the source vertex~$s$ reachable, whose pair is closed first.
Then, the linking of pairs is carried out along the TD in the order of closure, as given by~$\sigma$.
Thereby, Rules~(\ref{red:paircycles}) conceptually construct auxiliary links (similar to edges) between different pairs, in the order of~$\sigma$,
which is guided along the TD %
to ensure only a linear increase in treewidth of~$\mathcal{G}_\Pi$ of the resulting program~$\Pi$. %
Interestingly, these additional dependencies, since guided along the TD, do not increase the treewidth by much as we will see in the next subsection.

Then, it is \emph{crucial} that we prevent a source vertex~$s_i$ of a pair~$(s_i,d_i)\in_i \sigma$ 
from reaching a destination vertex~$d_j$ of a pair~$(s_j,d_j)\in_j\sigma$ preceding~$(s_i,d_i)$ in~$\sigma$, i.e., $j<i$.
To this end, we need to construct parts of %
cycles that prevent this.
Concretely, if some source~$s_i$ reaches~$d_j$, i.e., $d_j$ is reachable via~$s_i$, 
the goal is to have a cyclic reachability from~$d_j$ to~$s_i$, with no provability for corresponding reachability atoms of the cycle. %
Actually, Rules~(\ref{red:paircycles}) %
also have the purpose of aiding in the construction of these potential positive cycles.
Thereby we achieve that if~$d_j$ is reachable, this cannot be due to~$s_i$, since reachability of~$d_{j}, s_{j+1}, \ldots, s_i$ (therefore~$s_i$ itself) is required for reachability of~$s_{i}$. %
Consequently, assuming that there is no further rule proving any of these reachability atoms,
which we will ensure in the construction of program~$\Pi_{\mathcal{C}}$ below,
we end up with cyclic reachability if~$s_i$ is reached by $d_j$,
such that none of the atoms of the cycle are proven.
Figure~\ref{fig:cycles} shows the positive dependency graph~$D_{R_{\mathcal{L}}}$
of Rules~(\ref{red:paircycles}), where pairs~$(s_i,d_i)\in_i\sigma$, as discussed in the following example.

\begin{example}\label{ex:cycle}
Consider the dependency graph~$D_{R_{\mathcal{L}}}$ of Rules~(\ref{red:paircycles}),
as depicted in Figure~\ref{fig:cycles}.
Observe that whenever $s_i$ reaches some $d_j$ with~$j<i$,
this causes a cycle~$C{=}r_{s_i},\ldots, r_{d_j}, r_{s_{j+1}}, \ldots, r_{s_{i-1}}, r_{s_i}$ over reachability atoms in~$D_{R_{\mathcal{L}}}$ (cyclic dependency).
If each vertex~$u$ of~$G$ can have at most one outgoing edge, i.e., 
only one atom~$e_{u,v}$ in an answer set of~$\Pi=R(I,\mathcal{T})$,
no atom of $C$ can be proven (no further rule allows provability).
\FIX{Note that~$C$ could also be constructed by causing in the positive dependency graph
$\mathcal{O}(\Card{P}^2)$ many edges from~$r_{d_j}$ to~$r_{s_i}$ for~$j<i$.
This could be achieved, e.g., by constructing large rules, where reachability~$r_{d_{j}}$ of every preceding destination
vertex is required in the positive body in order to reach a certain source vertex~$s_i$, i.e., in order to obtain reachability~$r_{s_i}$.}
However, this would cause an increase of structural dependency, %
and in fact, the treewidth increase would be beyond linear. %
\end{example}

\FIX{
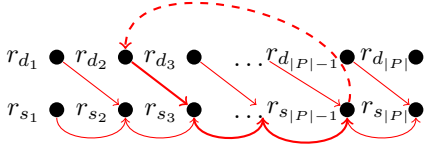
\begin{figure}[t]%
  \centering%
  \vspace{-1em}
	\begin{tikzpicture}[node distance=7mm,every node/.style={fill,circle,inner sep=2pt}]%
		\node (s0) [white] {};
		\node (s1) [right=of s0,label={[text height=1.5ex,yshift=0.0cm,xshift=0.05cm]left:$r_{s_1}$}] {};
		\node (s2) [right=of s1,label={[text height=1.5ex,yshift=0.0cm,xshift=0.05cm]left:$r_{s_2}$}] {};
		\node (s5) [right=of s2,label={[text height=1.5ex,yshift=0.0cm,xshift=0.05cm]left:$r_{s_3}$}] {};
		\node (sdots) [white,right=of s5,label={[text height=1.5ex,yshift=0.0cm,xshift=0.05cm]left:$ $}] {};
		\node (sdots2) [white,right=of sdots,xshift=-2em,label={[text height=1.5ex,yshift=0.0cm,xshift=0.05cm]left:$\ldots$}] {};
		\node (sn1) [right=of sdots2,label={[text height=1.5ex,yshift=0.0cm,xshift=0.15cm]left:$r_{s_{\Card{P}-1}}$}] {};
		\node (sn) [right=of sn1,label={[text height=1.5ex,yshift=0.0cm,xshift=0.2cm]left:$r_{s_{\Card{P}}}$}] {};

		\node (d1) [above of=s1,yshift=.0em,label={[text height=1.5ex,xshift=0.05cm]left:$r_{d_1}$}] {};
		\node (d2) [above of=s2,yshift=.0em,label={[text height=1.5ex,xshift=0.05cm]left:$r_{d_2}$}] {};
		\node (d5) [above of=s5,yshift=.0em,label={[text height=1.5ex,xshift=0.05cm]left:$r_{d_3}$}] {};
		\node (ddots) [white,above of=sdots,yshift=.0em,label={[text height=1.5ex,xshift=0.05cm]left:$ $}] {};
		\node (ddots2) [white,above of=sdots2,yshift=.0em,label={[text height=1.5ex,xshift=0.05cm]left:$\ldots$}] {};
		\node (dn1) [above of=sn1,yshift=.0em,label={[text height=1.5ex,xshift=0.15cm]left:$r_{d_{\Card{P}{-1}}}$}] {};
		\node (dn) [above of=sn,yshift=.0em,label={[text height=1.5ex,xshift=0.2cm]left:$r_{d_{\Card{P}}}$}] {};
		\draw [solid,->,red] (d1) to (s2);
		\draw [solid,->,thick,red] (d2) to (s5);
		\draw [solid,->,red] (d5) to (sdots);
		\draw [solid,->,red] (ddots) to (sn1);
		\draw [solid,->,red] (dn1) to (sn);
		\draw [dashed,thick,->,red,out=77,in=90] (sn1) to (d2);
		\draw [out=-90,in=-90,solid,->,red] (s1) to (s2);
		\draw [out=-90,in=-90,solid,->,red] (s2) to (s5);
		\draw [out=-90,in=-90,solid,thick,->,red] (s5) to (sdots);
		\draw [out=-90,in=-90,solid,thick,->,red] (sdots) to (sn1);
		\draw [out=-90,in=-90,solid,->,red] (sn1) to (sn);
	\end{tikzpicture}%
    \vspace{-.95em}
    \caption{\FIX{Positive dependency graph~$D_{R_{\mathcal{L}}}$ (indicated by solid red edges) of Rules~(\ref{red:paircycles}) constructed for any closure sequence~$\sigma$ such that~$(s_i,d_i)\in_i\sigma$.
If a source~$s_i$ reaches a destination~$d_j$ of a preceding pair, i.e., $j<i$, (depicted via the dashed red edge), this results in a cycle (consisting of all bold-faced edges) such that none of the atoms of the cycle can be proven.}
}
    \label{fig:cycles}
\end{figure}}

\smallskip
\noindent\textbf{Checking of disjointness~$\Pi_\mathcal{C}$.} 
Finally, we create rules in $\Pi$ that enforce at most one outgoing, used edge per vertex.
This is required to ensure that we do not use a vertex twice, as required by the~\problemFont{Disjoint Paths Problem}.
We do this by guiding the information, whether the corresponding outgoing edge was used,
via atoms~$f^u_t$ along the TD to ensure that the treewidth is not increased significantly.
Having at most one outgoing, used edge per vertex of~$G$ further ensures 
that when a source of a pair~$p$ reaches a destination of a pair preceding~$p$ in~$\sigma$, 
then no atom of the resulting cycle as constructed in $\Pi_{\mathcal{L}}$ %
will be provable. %
Consequently, in the end every source of~$p$ has to reach the destination of~$p$ by the pigeon hole principle.
Program~$\Pi_\mathcal{C}$ is constructed for every node~$t$ with~$t',t''{\in}\children(t)$, if~$t$ has child nodes, as follows.
{
\vspace{-.3em}
\begin{flalign}
	\label{red:setf}&f_{t}^u\leftarrow e_{u,v}&&{\text{for each }(u,v)\in E^{\text{re}}_{t}, u\in\chi(t)}\\
	\label{red:propf}&f_t^u\leftarrow f_{t'}^u&&{\text{for each }u\in\chi(t)\cap\chi(t')}\\
	\label{red:prohibitf}&\leftarrow f_{t'}^u, f_{t''}^u&&{\text{for each }u\in\chi(t')\cap\chi(t''), t'\neq t''}\\
	\label{red:tddegree}&\leftarrow f_{t'}^u, e_{u,v} &&{\text{for each }(u,v)\in E^{\text{re}}_{t}, u\in\chi(t')}\\
	\label{red:localdegree}&\leftarrow e_{u,v}, e_{u,w}&&{\text{for each }(u,v),(u,w)\in E^{\text{re}}_{t}\hspace{-.3em},\ v{\neq}w}
\end{flalign}%
\vspace{-1em}
}

\noindent Rules~(\ref{red:setf}) ensure that the finished flag~$f^u_t$ is set for used edges~$e_{u,v}$.
Then, this information of~$f^u_{t'}$ is guided along the TD from child node~$t'$ to parent node~$t$ by Rules~(\ref{red:propf}).
If for a vertex~$u\in V$ we have~$f^u_{t'}$ and~$f^u_{t''}$ for two different child nodes~$t', t''\in\children(t)$,
this indicates that two different edges were encountered both below~$t'$ and below~$t''$. Consequently,
this situation is avoided by Rules~(\ref{red:prohibitf}).
Rules~(\ref{red:tddegree}) make sure to disallow additional edges for vertex~$u$ in a TD node~$t$,
if the flag~$f^u_{t'}$ of child node~$t'$ is set.
Finally, Rules~(\ref{red:localdegree}) prohibit %
two different edges 
for the same vertex~$u$ within a TD node.

\begin{example}
Recall instance~$I=(G,P)$ with $G=(V,E)$ of Figure~\ref{fig:disjpaths}, pair-connected TD~$\mathcal{T}=(T,\chi)$ of~$I$ of Figure~\ref{fig:tds} (right),
and $E_{t_2}^{\text{re}}=\{(y,z), (z,y), (z, d_3),\allowbreak (s_3, z)\}$.
We briefly present the construction of~$\Pi_{\mathcal{C}}$ for node~$t_2$.
\noindent\begin{tabular}{@{\hspace{0.15em}}l@{\hspace{0.15em}}|@{\hspace{0.15em}}l@{\hspace{0.0em}}}
Rules & $\Pi_{\mathcal{L}}$\\
\hline
(\ref{red:setf})& $f^y_{t_2} \leftarrow e_{y,z}$; $f^{s_3}_{t_2} \leftarrow e_{s_3,z}$\\
(\ref{red:propf}) & $f^{s_3}_{t_2} \leftarrow f^{s_3}_{t_1}$; $f^{d_3}_{t_2} \leftarrow f^{d_3}_{t_1}$; $f^y_{t_2} \leftarrow f^y_{t_1}$\\
(\ref{red:tddegree}) & $\leftarrow f^y_{t_1}, e_{y,z}$; $\leftarrow f^z_{t_1}, e_{z,y}$; $\leftarrow f^z_{t_1}, e_{z,d_3}$; $\leftarrow f^{s_3}_{t_1}, e_{s_3,z}$\\
(\ref{red:localdegree})& $\leftarrow e_{z,y}, e_{z,d_3}$
\end{tabular}

\end{example}

\futuresketch{\paragraph{Modification for non-nice TDs.}
Note that the reduction as presented above, conceptually works for a \emph{non-nice}, pair-connected TD~$\mathcal{T}'=(T',\chi')$ of~$(G,P)$.
However, one needs to take care of two issues.
First, the reduction lies on~$\mathcal{T}'$ being pair-connected as defined above.
If there is a node in~$\mathcal{T}'$ with~$c$ many child nodes, pair-connectedness could cause an 
increase of~$3\cdot(c+1)$ in the treewidth, depending on how distributed the pairs of~$P$ in~$\mathcal{T}'$ are.
Then, one needs to guarantee that~$\Card{E^{\text{re}}_t}$ is still manageable for a node~$t$ of~$T'$, since already a number quadratic
in the width of~$\mathcal{T}'$ might be insufficient. %
However, one can keep~$\Card{E^{\text{re}}_t}$ small by adding intermediate nodes between~$t$ and the parent of~$t$ to the TD.
}

\smallskip
\noindent
\textbf{Correctness and Runtime Analysis.}
First, we show that the reduction is correct,
followed by a result stating that the treewidth of the reduction is at most linearly worsened,
which is crucial for the runtime lower bound to hold.
Then, we present the runtime and the (combined) main result of this work.

\begin{theorem}[Correctness]\label{thm:corr}
Reduction~$R$ as proposed in this section is correct.
Let us consider an instance~$I=(G,P)$ of the \problemFont{Disjoint Paths Problem},
and a pair-connected TD~$\mathcal{T}=(T,\chi)$ of~$G$.
Then, $I$ has a solution if and only if the program~$R(I,\mathcal{T})$ admits an answer set.
\end{theorem}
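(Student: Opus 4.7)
The plan is to prove the two implications separately, relying throughout on the bookkeeping provided by the pair-connected TD~$\mathcal{T}$ and its closure sequence~$\sigma$. The forward direction constructs a concrete answer set from a given family of disjoint paths, whereas the backward direction decodes paths from an arbitrary answer set, and must rule out ``crossed'' paths that would connect the wrong source to the wrong destination. Throughout, I would exploit that by Proposition~\ref{prop:earlyout} at most $\Card{\chi(t)}$ pairs are open at any node~$t$, so all the per-node side conditions of~$\Pi$ are well-posed.

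For ($\Rightarrow$), given vertex-disjoint paths $\pi_1,\dots,\pi_{\Card{P}}$ with $\pi_i$ from $s_i$ to $d_i$, I would define a candidate interpretation $M$ containing $e_{u,v}$ for each edge on some $\pi_i$, $ne_{u,v}$ for every other edge in~$E$, $r_v$ for every vertex lying on some $\pi_i$, and $f^u_t$ exactly for those nodes~$t$ with $u\in\chi(t)$ whose subtree contains the (unique) used outgoing edge of~$u$ in some $E^{\text{re}}_{t''}$. Verifying $M\models\Pi$ is routine: disjointness of the $\pi_i$ yields at most one outgoing used edge per vertex, validating Rules~(\ref{red:prohibitf})--(\ref{red:localdegree}); the definition of $f^u_t$ is compatible with the propagation Rule~(\ref{red:propf}); and Rule~(\ref{red:pair1}) holds because $d_i$ lies on~$\pi_i$. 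Minimality with respect to the reduct~$\Pi^M$ follows by justifying reached atoms in $\sigma$-order: $r_{s_1}$ is supported by Rule~(\ref{red:pair1st}), interior vertices of $\pi_1$ by Rules~(\ref{red:reach}) traced along the path, and each subsequent $r_{s_i}$ by Rule~(\ref{red:paircycles}) using the already-justified $r_{s_{i-1}}$ and $r_{d_{i-1}}$, after which the interior of $\pi_i$ is justified by Rules~(\ref{red:reach}).

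For ($\Leftarrow$), let $M$ be an answer set of~$\Pi$ and let $E_M=\{(u,v)\mid e_{u,v}\in M\}$. The disjointness machinery built around the $f^u_t$ atoms guarantees that every vertex has at most one outgoing edge in~$E_M$: two such edges would either sit in the same bag (blocked by Rule~(\ref{red:localdegree})) or have their finish flags meet in some ancestor (blocked by Rules~(\ref{red:propf}) and~(\ref{red:prohibitf})), or be disallowed in combination by Rule~(\ref{red:tddegree}). The main obstacle, and the conceptual heart of the proof, is to show that in $E_M$ no source~$s_i$ can reach a destination~$d_j$ of a pair preceding $(s_i,d_i)$ in~$\sigma$, i.e., with $j<i$. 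Suppose for contradiction such a path $s_i\leadsto d_j$ exists in~$E_M$. Together with Rules~(\ref{red:paircycles}) this yields a positive cycle $C$ over reachability atoms of the form pictured in Figure~\ref{fig:cycles} and described in Example~\ref{ex:cycle}, which passes through $r_{d_j},r_{s_{j+1}},r_{d_{j+1}},\dots,r_{s_i}$ and back to $r_{d_j}$ via $\Pi_{\mathcal{R}}$. I would then argue that no atom of $C$ is externally supported in $\Pi^M$: the only alternative derivation for $r_{s_k}$ via $\Pi_{\mathcal{L}}$ is Rule~(\ref{red:pair1st}), valid only for $k=1$; and Rules~(\ref{red:reach}) cannot bring in an independent derivation of any $r_v$ on $C$, because the at-most-one-outgoing-edge property just proved forces every incoming used edge of a cycle vertex to originate from within $C$ itself. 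Hence removing all atoms of $C$ from $M$ yields a strictly smaller model of~$\Pi^M$, contradicting that $M$ is an answer set.

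Once no source can reach an earlier destination, Rule~(\ref{red:pair1}) forces every $d_i$ to be reached in~$M$, and the at-most-one-outgoing property together with the pigeon-hole principle along~$\sigma$ implies that $s_i$ reaches precisely~$d_i$ for each~$i$. The corresponding edges of~$E_M$ decompose into $\Card{P}$ vertex-disjoint paths, which is the desired solution of~$I$. The bulk of the technical work therefore lies in the cycle-support argument of the third paragraph; the remaining checks are a careful but mechanical traversal of~$\mathcal{T}$ matching the structure of the reduction.
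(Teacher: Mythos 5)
Your proposal is correct and follows essentially the same route as the paper's proof: the forward direction builds the same interpretation from the disjoint paths and justifies minimality of the reduct along the closure sequence, and the backward direction rests on the same two pillars, namely the at-most-one-outgoing-edge property enforced via the $f^u_t$ flags (the paper isolates this as Lemma~\ref{lem:degree}) and the argument that a source reaching an earlier destination creates a positive reachability cycle with no external support, contradicting answer-set minimality.
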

\begin{proof}
The proof is given in \ref{sec:corrhard}.
\end{proof}

\begin{lemma}[Treewidth-awareness]\label{lem:treewidthaware}
Let~$I=(G,P)$ be any instance of the \problemFont{Disjoint Paths Problem},
and~$\mathcal{T}$ be a nice, pair-connected TD %
of~$I$ of width~$k$. 
Then, the treewidth of~$\mathcal{G}_\Pi$,
where~$\Pi=R(I,\mathcal{T})$ is obtained by~$R$, is
at most~$\mathcal{O}(k)$.
\end{lemma}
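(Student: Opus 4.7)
The plan is to build, directly from the given nice pair-connected TD~$\mathcal{T}=(T,\chi)$ of~$G$, a TD~$\mathcal{T}'=(T,\chi')$ of~$\mathcal{G}_\Pi$ by \emph{augmenting} each bag with the auxiliary atoms introduced at that node by the reduction. Concretely, for every node~$t$ of $T$ with~$\children(t)=\{t_1,\ldots,t_\ell\}$, I would set
\[
\chi'(t)\;\eqdef\;\chi(t)\;\cup\; A^e(t)\;\cup\; A^r(t)\;\cup\; A^f(t),
\]
where $A^e(t)\eqdef\{e_{u,v},ne_{u,v}\mid (u,v)\in E^{\mathrm{re}}_t\}$ collects the edge-atoms for the ready edges processed at~$t$, $A^r(t)\eqdef\{r_u\mid u\in\chi(t)\}$ collects the reachability atoms on current bag vertices, and $A^f(t)\eqdef\{f^u_{t'}\mid t'\in\{t,t_1,\ldots,t_\ell\},\ u\in\chi(t')\cap\chi(t)\}\cup\{f^u_{t'}\mid t'\in\children(t),u\in\chi(t')\}$ captures the finished-flag atoms required by Rules~(\ref{red:setf})--(\ref{red:tddegree}). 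Since $\mathcal{T}$ is nice ($\ell\le 2$) and $|E^{\mathrm{re}}_t|\le k$, each summand is of size $\mathcal{O}(k)$, so $|\chi'(t)|\in\mathcal{O}(k)$.

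Next, I would verify the two defining properties of a TD for~$\mathcal{G}_\Pi$. For the \emph{edge condition}, walk through the reduction rule-by-rule and show that every rule~$r$ of $\Pi$ has $\at(r)\subseteq\chi'(t)$ for some node~$t$. Rules~(\ref{red:edgeguess1})--(\ref{red:reach}) and~(\ref{red:setf}) are placed at the node~$t$ where their ready edge $(u,v)\in E^{\mathrm{re}}_t$ lives; both endpoints then sit in $\chi(t)$ or in a child-bag of $t$ (by definition of ready edges), and the auxiliary atoms $e_{u,v},ne_{u,v},r_u,r_v,f^u_t$ are in $A^e(t)\cup A^r(t)\cup A^f(t)$. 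Rules~(\ref{red:propf}), (\ref{red:prohibitf}) and~(\ref{red:tddegree}) only mention atoms of the form $f^u_{t'},f^u_t,e_{u,v}$ with $u\in\chi(t')\cap\chi(t)$, all of which are in $A^f(t)\cup A^e(t)$. Rule~(\ref{red:pair1}) mentions the single atom $r_d$ and can be placed at the node where pair $(s,d)$ closes (there $d\in\chi(t)$, so $r_d\in A^r(t)$). Rule~(\ref{red:pair1st}) is trivial.

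The delicate case is Rule~(\ref{red:paircycles}): $r_{s_i}\leftarrow r_{s_{i-1}},r_{d_{i-1}}$. This is precisely where \emph{pair-connectedness} is used. By definition of a pair-connected TD, when pair $(s_i,d_i)\in_i\sigma$ closes in node~$t$, both endpoints of the preceding pair $(s_{i-1},d_{i-1})\in_{i-1}\sigma$ are present in $\chi(t)$, so $\{s_{i-1},d_{i-1},s_i\}\subseteq\chi(t)$, and hence $\{r_{s_{i-1}},r_{d_{i-1}},r_{s_i}\}\subseteq A^r(t)\subseteq\chi'(t)$. Thus the rule fits into a single bag of~$\mathcal{T}'$. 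For the \emph{connectedness condition}, I would argue that each auxiliary atom has the same ``home'' subtree as a base-graph vertex it is attached to: $e_{u,v},ne_{u,v}$ live only at the unique node where $(u,v)\in E^{\mathrm{re}}_t$; $r_u$ and $f^u_{t'}$ live only on the connected subtree where the corresponding vertex~$u$ occurs in bags of $\mathcal{T}$ (plus at most the parent node, which keeps the region connected).

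The main obstacle will indeed be Rule~(\ref{red:paircycles}): one must check that pair-connectedness is strong enough, and that the linear blow-up incurred in making $\mathcal{T}$ pair-connected (already paid for before the reduction) suffices. Once those bookkeeping details are in place, each bag of~$\mathcal{T}'$ has size $\mathcal{O}(k)$ and $\mathcal{T}'$ is a valid TD of~$\mathcal{G}_\Pi$, giving $\tw{\mathcal{G}_\Pi}\in\mathcal{O}(k)$ as claimed.
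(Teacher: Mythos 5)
Your construction is essentially the paper's own proof: the paper likewise reuses the tree $T$, sets $\chi'(t)$ to the union of $\{r_u,f^u_t\mid u\in\chi(t)\}$, the atoms attached to the ready edges $E^{\text{re}}_t$, and the flag atoms $f^u_{t'}$ for children $t'$, and then bounds $\Card{\chi'(t)}$ by $\mathcal{O}(k)$ using $\Card{E^{\text{re}}_t}\leq k$ and $\ell\leq 2$. One small slip: since $(u,v)\in E^{\text{re}}_t$ implies $u,v\in\chi(t')\setminus\chi(t)$, your $A^r(t)=\{r_u\mid u\in\chi(t)\}$ does \emph{not} contain the atoms $r_u,r_v$ needed for Rules~(\ref{red:edgeguess1}) and~(\ref{red:reach}) at $t$, contrary to what your edge-condition paragraph asserts; you must add $\{r_u,r_v\mid (u,v)\in E^{\text{re}}_t\}$ explicitly (as the paper does), which costs only $2k$ further atoms and, as you already note in the connectedness discussion, extends each $r_u$-subtree by only the parent node, so the bound and the TD conditions survive.
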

\begin{proof}
Assume any pair-connected, nice TD~$\mathcal{T}=(T,\chi)$ of~$I=(G,P)$. Since~$\mathcal{T}$ is nice, a node in~$T$ has at most~$\ell=2$ many child nodes.
From~$\mathcal{T}$ we construct a TD~$\mathcal{T}'=(T,\chi')$ of~$\mathcal{G}_\Pi$.
Thereby we set for every node~$t$ of~$T$, $\chi'(t)\eqdef \{r_u, f^u_t \mid u \in \chi(t)\} \cup \{e_{u,v}, ne_{u,v}, r_u, r_v, f^u_{t'} \mid (u,v)\in E^{\text{re}}_t, t'\in\children(t), u\in\chi(t')\} \cup \{f^u_{t'}, f^u_t \mid t' \in\children(t), u\in\chi(t)\cap\chi(t')\}$. %
Observe that~$\mathcal{T}'$ is a valid TD of~$\mathcal{G}_\Pi$.
Further, by construction we have~$\Card{\chi'(t)} \leq 2\cdot \Card{\chi(t)}  + (4 + \ell) \cdot k + (\ell + 1) \cdot \Card{\chi(t)}$,
since~$\Card{E_t^{\text{re}}}\leq k$.
The claim sustains for nice TDs ($\ell=2$).
\end{proof}

\begin{corollary}[Runtime]
Reduction~$R$ as proposed in this section runs for a given
instance~$I=(G,P)$ of the \problemFont{Disjoint Paths Problem} with~$G=(V,E)$,
and a pair-connected, nice TD~$\mathcal{T}$ of~$I$ of width~$k$
and~$h$ many nodes, in time $\mathcal{O}(k\cdot h)$.
\end{corollary}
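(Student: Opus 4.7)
The plan is to bound the work per TD node by $\mathcal{O}(k)$ and sum over the $h$ nodes of $\mathcal{T}$, relying on the structural bounds already established (in particular $\Card{E^{\text{re}}_t} \leq k$ and $\Card{\chi(t)} \leq k+1$ for nice TDs, and at most two children per node). Each individual rule produced by Rules~(\ref{red:edgeguess1})--(\ref{red:localdegree}) contains only a constant number of literals, so it is enough to count rules.

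First I would handle $\Pi_\mathcal{R}$ node by node: Rules~(\ref{red:edgeguess1})--(\ref{red:reach}) are indexed by $(u,v)\in E^{\text{re}}_t$, giving at most $3k$ rules in node~$t$, each of constant size and constructible in constant time. Next, for $\Pi_\mathcal{C}$, Rules~(\ref{red:setf}) and~(\ref{red:tddegree}) are indexed by $E^{\text{re}}_t$ (hence $\mathcal{O}(k)$ rules); Rules~(\ref{red:propf}) and~(\ref{red:prohibitf}) are indexed by $\chi(t)\cap\chi(t')$ (resp.\ $\chi(t')\cap\chi(t'')$) for child nodes $t', t''$, hence $\mathcal{O}(k)$ rules each since a nice TD has at most two children; and Rules~(\ref{red:localdegree}) are $\mathcal{O}(k)$ as well, because by niceness there is at most one vertex $u\in\chi(t)$ whose ready outgoing edges all share tail $u$, or one can simply observe that at most $\Card{E^{\text{re}}_t}^2/2$ pairs arise but for nice TDs $E^{\text{re}}_t$ has a single fresh edge-endpoint structure limiting this to $\mathcal{O}(k)$.

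The more delicate part, which I expect to be the main obstacle, is accounting for $\Pi_\mathcal{L}$, because its rules are indexed globally by the closure sequence $\sigma$ rather than by a bag. The fix is to attach each rule to a specific node so that the per-node budget remains $\mathcal{O}(k)$. Rule~(\ref{red:pair1st}) is a single fact, placed once. For each constraint of Rule~(\ref{red:pair1}), I attach the constraint $\leftarrow\neg r_d$ to the (unique) node where $d$ is introduced, charging $\mathcal{O}(1)$ to that node. For Rule~(\ref{red:paircycles}), pair-connectedness is exactly what is needed: the rule $r_{s_i}\leftarrow r_{s_{i-1}}, r_{d_{i-1}}$ is attached to the node where the pair $(s_i,d_i)\in_i\sigma$ is closed, at which point the definition of pair-connectedness guarantees $s_{i-1},d_{i-1}\in\chi(t)$ as well, so the rule is actually constructible when visiting $t$ and contributes $\mathcal{O}(1)$ there. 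Summed over all pairs, this adds $\mathcal{O}(\Card{P})$ rules in total, and since the nice TD $\mathcal{T}$ can be taken with $h=\mathcal{O}(\Card{V})$ nodes~\cite{Kloks94a} and $\Card{P}\leq\Card{V}/2$, we have $\Card{P}=\mathcal{O}(h)$.

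Summing the per-node contributions yields $\mathcal{O}(k\cdot h)$ rules, each of constant size, so the total construction time is $\mathcal{O}(k\cdot h)$. A single post-order traversal of $T$ that at each node iterates over $\chi(t)$, $E^{\text{re}}_t$, and the $\mathcal{O}(1)$ pair events closing in $t$ realizes this bound, completing the argument.
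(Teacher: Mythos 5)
The paper itself states this corollary without a proof, treating it as immediate from the construction, so there is no ``official'' argument to compare against; your per-node accounting is certainly the intended route, and your handling of $\Pi_\mathcal{R}$ and especially of $\Pi_\mathcal{L}$ (attaching each instance of Rules~(\ref{red:pair1}) and~(\ref{red:paircycles}) to the introduction node of $d$ resp.\ the closure node of the pair, where pair-connectedness guarantees the required atoms are in the bag) is correct and well argued.

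There is, however, a genuine gap in your treatment of Rules~(\ref{red:localdegree}). You claim these contribute only $\mathcal{O}(k)$ rules per node because ``$E^{\text{re}}_t$ has a single fresh edge-endpoint structure,'' but that structure does not help: at a forget node $t$ whose child forgets vertex $x$, every ready edge contains $x$, so the pairs $(u,v),(u,w)$ with common tail $u$ and $v\neq w$ are exactly the pairs of \emph{outgoing} ready edges of $x$, and there are $\binom{d}{2}$ of them where $d\leq k$ is the number of out-neighbours of $x$ in the child bag. A single node can therefore generate $\Theta(k^2)$ instances of Rules~(\ref{red:localdegree}), and summing $\binom{d_x}{2}$ over all forgotten vertices only gives $\mathcal{O}(k\cdot\Card{E})=\mathcal{O}(k^2\cdot h)$ in general, not $\mathcal{O}(k\cdot h)$. (A similar issue arises at the root, where $E^{\text{re}}_n$ additionally contains all of $E\cap(\chi(n)\times\chi(n))$, i.e.\ up to $\Theta(k^2)$ edges.) To actually reach $\mathcal{O}(k\cdot h)$ you would need either to replace the pairwise at-most-one constraint of Rules~(\ref{red:localdegree}) by a linear-size encoding (e.g.\ reusing the $f^u_t$ flags sequentially), or to settle for the weaker but safe bound $\mathcal{O}(k^2\cdot h)$; as the rules are literally written, the $\mathcal{O}(k)$ per-node budget you assert for this family does not hold.
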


Next, we are in the position of showing the main result, namely the normal ASP lower bound.

\begin{theorem}[Lower bound]\label{thm:lowerbound}
Consider an arbitrary normal or HCF program~$\Pi$, where~$k$ is the treewidth
of the primal graph of~$\Pi$. 
Then, unless ETH fails, the consistency problem for~$\Pi$ %
cannot be solved in time~$2^{o(k\cdot \log(k))}\cdot \poly(\Card{\at(\Pi)})$.
\end{theorem}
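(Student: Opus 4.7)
The plan is to prove the lower bound by contraposition, lifting the slightly super-exponential lower bound for the \problemFont{Disjoint Paths Problem} (Proposition~\ref{prop:slightlysuper}) through the reduction~$R$ constructed in this section. Suppose, for contradiction, that there is an algorithm $\mathcal{A}$ deciding consistency of every normal (or HCF) program $\Pi$ in time $2^{o(k\cdot\log k)}\cdot\poly(|\at(\Pi)|)$, where $k=\tw{\mathcal{G}_\Pi}$. I will show that $\mathcal{A}$ then solves \problemFont{Disjoint Paths} in time $2^{o(k'\cdot\log k')}\cdot\poly(|V|)$ on any instance $(G,P)$ with $k'=\tw{G}$, contradicting ETH.

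First, given $(G,P)$, I would preprocess the graph $G$ to obtain a nice, pair-connected TD $\mathcal{T}=(T,\chi)$ of $G$: compute a TD of width at most $5\cdot\tw{G}$ by the single-exponential $5$-approximation; nicify it in linear time without increasing the width; then turn it into a pair-respecting TD (at most doubling the width, by Proposition~\ref{prop:earlyout} each bag has at most bag-size many open pairs); and finally into a pair-connected one at an additive cost of~$6$ per bag, as discussed above. The resulting $\mathcal{T}$ has width $\mathcal{O}(k')$ and $h=\mathcal{O}(|V|)$ nodes. Next, apply the reduction to form $\Pi=R((G,P),\mathcal{T})$. By construction~$\Pi$ is normal (every rule has at most one head atom), by Lemma~\ref{lem:treewidthaware} the primal graph $\mathcal{G}_\Pi$ has treewidth $k=\mathcal{O}(k')$, and by the runtime corollary $\Pi$ itself is produced in time $\mathcal{O}(k'\cdot h)$, so $|\at(\Pi)|=\poly(|V|)$.

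Invoking Theorem~\ref{thm:corr}, $(G,P)$ is a \textsc{yes}-instance iff $\Pi$ has an answer set. Running $\mathcal{A}$ on $\Pi$ therefore decides $(G,P)$ in total time $2^{o(k\cdot\log k)}\cdot\poly(|\at(\Pi)|)$. The only subtle point left is to verify that the asymptotic notation transfers cleanly across the linear blow-up: since $k\le c\cdot k'$ for a fixed constant $c$, we have $k\cdot\log k\le c\cdot k'\cdot(\log c+\log k')=\mathcal{O}(k'\cdot\log k')$, and hence $2^{o(k\cdot\log k)}=2^{o(k'\cdot\log k')}$. Combined with the polynomial preprocessing and reduction overhead, this yields a $2^{o(k'\cdot\log k')}\cdot\poly(|V|)$ algorithm for \problemFont{Disjoint Paths}, contradicting Proposition~\ref{prop:slightlysuper} and thus ETH. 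I expect the main point of care to be precisely this chaining of width bounds and the little-$o$ translation; the correctness side is already delivered by Theorem~\ref{thm:corr}, and normality of~$\Pi$ (which ensures the bound holds already for the normal fragment, not just HCF) is immediate from inspection of Rules~(\ref{red:edgeguess1})--(\ref{red:localdegree}).
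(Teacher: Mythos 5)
Your proposal is correct and follows essentially the same route as the paper's own proof: reduce from the \problemFont{Disjoint Paths Problem} via $R$ after building a nice, pair-connected TD, invoke Lemma~\ref{lem:treewidthaware} for the linear width bound and Theorem~\ref{thm:corr} for correctness, and contradict Proposition~\ref{prop:slightlysuper}; your explicit check that $2^{o(k\cdot\log k)}=2^{o(k'\cdot\log k')}$ under a linear width blow-up is a welcome added detail. One tiny phrasing slip: the decomposition step is single exponential in the treewidth rather than polynomial, but this is harmless since $2^{\mathcal{O}(k')}\subseteq 2^{o(k'\cdot\log k')}$ and is absorbed into the final bound.
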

\begin{proof}

Let~$(G,P)$ be an instance of the~\problemFont{Disjoint Paths Problem}.
First, we construct~\cite{BodlaenderEtAl13} a nice TD~$\mathcal{T}$ of~$G=(V,E)$ of treewidth~$k$ in time~$c^k\cdot \Card{V}$
for some constant~$c$ such that the width of~$\mathcal{T}$ is at most~$5k+4$.
Then, we turn the result into a pair-connected TD~$\mathcal{T}'=(T',\chi')$, thereby having width at most~$k'= 2\cdot(5k+4)+ 6$.
Then, we construct program~$\Pi=R(I,\mathcal{T}')$.
By Lemma~\ref{lem:treewidthaware}, the treewidth of~$\mathcal{G}_\Pi$ is in~$\mathcal{O}(k')$, which is in~$\mathcal{O}(k)$.
Assume towards a contradiction that consistency of $\Pi$ can be decided in time~$2^{o(k\cdot \log(k))}\cdot \poly(\Card{\at(\Pi)})$.
By correctness of~$R$ (Theorem~\ref{thm:corr}), this solves~$(G,P)$, contradicting Proposition~\ref{prop:slightlysuper}.
\end{proof}

Our reduction works by construction for any pair-connected TD.
\FIXR{Consequently, this immediately yields a lower bound for the larger parameter \emph{pathwidth},
which is similar to treewidth, as defined in the preliminaries.} %

\begin{corollary}[Pathwidth lower bound]
Consider any normal or HCF program~$\Pi$, where~$k$ is the pathwidth
of the primal graph of~$\Pi$. 
Then, unless ETH fails, the consistency problem for~$\Pi$ %
cannot be solved in time~$2^{o(k\cdot \log(k))}\cdot \poly(\Card{\at(\Pi)})$.
\end{corollary}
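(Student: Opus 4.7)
The plan is to mirror the proof of Theorem~\ref{thm:lowerbound} but start from a pathwidth-based hardness assumption for the \problemFont{Disjoint Paths Problem} and then verify that every transformation used in the reduction preserves the ``path'' shape of the underlying decomposition. Concretely, the result of \citex{LokshtanovMarxSaurabh11} that we invoked in Proposition~\ref{prop:slightlysuper} is in fact stated for pathwidth: under ETH, there is no algorithm solving \problemFont{Disjoint Paths} in time $2^{o(k\cdot\log k)}\cdot \poly(\Card{V})$ even when a path decomposition of~$G$ of width~$k$ is given. I would take this as the starting point.

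Given an instance~$I=(G,P)$ together with a nice path decomposition~$\mathcal{T}=(T,\chi)$ of~$G$ with $T$ a path of width~$\mathcal{O}(k)$, I would first show that both pre-processing steps from Section~\ref{sec:hardness} keep $T$ a path. Turning~$\mathcal{T}$ into a pair-respecting decomposition only inserts endpoints of open pairs into bags along a post-order walk, which is exactly a walk along the path; by Proposition~\ref{prop:earlyout} this at most doubles the bag sizes. Turning the result into a pair-connected decomposition only adds the two endpoints of the previously closed pair to the bags between two positions encountered consecutively in the traversal, and on a path such positions form a contiguous subpath, so the resulting decomposition is still on the same path~$T$ and widens every bag by at most~$6$.

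Next, I would apply the reduction~$R$ from Section~\ref{sec:hardness} to~$I$ and the pair-connected path decomposition obtained above, producing program~$\Pi=R(I,\mathcal{T})$. The construction of~$\chi'$ in Lemma~\ref{lem:treewidthaware} is purely local: for every node~$t$ of~$T$ it augments~$\chi(t)$ with atoms that depend only on~$\chi(t)$, on bags of children of~$t$, and on the ready edges~$E^{\text{re}}_t$. Since the underlying tree of the decomposition of~$\mathcal{G}_\Pi$ built in that lemma is exactly~$T$, if~$T$ is a path then so is the decomposition of~$\mathcal{G}_\Pi$; and the width bound $\mathcal{O}(k)$ derived there applies verbatim. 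Hence the pathwidth of~$\mathcal{G}_\Pi$ is in~$\mathcal{O}(k)$.

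Finally, I would close the argument exactly as in Theorem~\ref{thm:lowerbound}: a hypothetical algorithm deciding consistency of~$\Pi$ in time $2^{o(k\cdot\log k)}\cdot \poly(\Card{\at(\Pi)})$ would, by correctness of~$R$ (Theorem~\ref{thm:corr}) and the linear blow-up of~$R$ together with the linear pathwidth increase just established, yield an algorithm for \problemFont{Disjoint Paths} of the same asymptotic running time in the input pathwidth, contradicting the pathwidth version of the \citex{LokshtanovMarxSaurabh11} lower bound. The only point that deserves care, and which I view as the main obstacle, is to double-check that neither the pair-respecting nor the pair-connected transformation introduces branching in~$T$; once this is verified, the rest of the argument is a direct reuse of the treewidth proof.
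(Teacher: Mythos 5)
Your proposal is correct and follows essentially the same route as the paper, which disposes of this corollary with the one-line observation that the reduction works for any pair-connected TD and therefore transfers to pathwidth; you simply spell out the details the paper leaves implicit (that the source lower bound of Lokshtanov, Marx and Saurabh already holds for pathwidth, and that the pair-respecting, pair-connected, and final decomposition constructions only modify bags, never the underlying tree~$T$, so a path stays a path). Your identification of the branching-preservation check as the only point needing care matches what the paper's terse remark is relying on.
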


From Theorem~\ref{thm:lowerbound}, we infer that %
a general reduction from normal or HCF programs to \SAT formulas
cannot (uner ETH) avoid the treewidth (pathwidth) overhead, 
which renders our reduction from the previous section ETH-tight.

\begin{corollary}[ETH-tightness of the Reduction to \SAT]
Under ETH, the increase of treewidth of the reduction using Formulas~(\ref{red:checkrules})--(\ref{red:checkfirst}) cannot be significantly improved.
\end{corollary}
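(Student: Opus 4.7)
The plan is to derive the corollary directly from Theorem~\ref{thm:lowerbound} by contraposition, composing any putative better reduction with the known single-exponential algorithm for \SAT parameterized by treewidth. Suppose, for contradiction, that there is a reduction~$R'$ from normal (or HCF) programs~$\Pi$ to \SAT formulas~$F'$ such that~$F'$ can be computed in time~$f(k) \cdot \poly(\Card{\at(\Pi)})$ and satisfies $\tw{\mathcal{G}_{F'}} \in o(k \cdot \log(k))$, where~$k$ is the treewidth of~$\mathcal{G}_\Pi$. The goal is to show this would contradict ETH via Theorem~\ref{thm:lowerbound}.

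First I would recall the standard single-exponential algorithm for \SAT on bounded treewidth instances, namely that \SAT on a formula~$F'$ with primal-graph treewidth~$k'$ can be decided in time~$2^{\mathcal{O}(k')} \cdot \Card{\var(F')}$; this is stated in the preliminaries of the paper and in the introductory discussion of ETH-tight bounds for \SAT (e.g., via \cite{SamerSzeider10b}). Second, I would apply~$R'$ to obtain~$F'$ together with (if needed) a tree decomposition of~$\mathcal{G}_{F'}$ of width~$\mathcal{O}(k')$ via the $5$-approximation~\cite{BodlaenderEtAl13} in single-exponential time in~$k'$. Third, I would run the \SAT algorithm on~$F'$; by correctness of~$R'$, the answer decides consistency of~$\Pi$.

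The composed algorithm then runs in time $f(k)\cdot \poly(\Card{\at(\Pi)}) + 2^{\mathcal{O}(k')}\cdot \poly(\Card{\var(F')})$. Since $k' \in o(k\cdot\log(k))$, and since~$\Card{\var(F')}$ is polynomial in $\Card{\at(\Pi)}$ (as~$R'$ runs in fpt time), the overall runtime is $2^{o(k\cdot\log(k))}\cdot \poly(\Card{\at(\Pi)})$. This contradicts Theorem~\ref{thm:lowerbound} under ETH, yielding the desired conclusion: the $\mathcal{O}(k\cdot\log(k))$ treewidth bound of Formulas~(\ref{red:checkrules})--(\ref{red:checkfirst}) cannot be improved to~$o(k\cdot\log(k))$.

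The only subtle point, and what I would treat carefully, is specifying precisely what ``significantly improved'' means and ensuring that the hypothesised reduction~$R'$ is genuinely weaker than our construction in a way that matters for ETH. In particular, one must insist that~$R'$ runs in fpt time (so the size of~$F'$ remains polynomial in $\Card{\at(\Pi)}$ after fixing~$k$) and produces a formula whose primal treewidth is $o(k\cdot\log(k))$; otherwise the composition argument does not yield the required~$2^{o(k\cdot\log(k))}\cdot\poly(\Card{\at(\Pi)})$ bound. I do not expect any other obstacle: the argument is a clean black-box reduction once these parameters are pinned down.
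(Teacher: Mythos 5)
Your proposal is correct and matches the paper's own argument: both proceed by contraposition, composing a hypothetical reduction with treewidth increase $o(k\cdot\log(k))$ with the single-exponential \SAT algorithm of~\cite{SamerSzeider10b} to contradict Theorem~\ref{thm:lowerbound}. The extra care you take about the reduction running in fpt time and computing a tree decomposition via the $5$-approximation is a reasonable elaboration of details the paper leaves implicit, but it does not change the substance of the argument.
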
\vspace{-.5em}
\begin{proof}
Assume towards a contradiction that one can 
reduce from an arbitrary normal \ASP
program~$\Pi$, where~$k$ is the treewidth of~$\mathcal{G}_\Pi$ 
to a \SAT formula, whose treewidth is in~$o(k\cdot\log(k))$.
Then, this contradicts Theorem~\ref{thm:lowerbound},
as we can use an algorithm~\cite{SamerSzeider10b,FichteHecherZisser19} for \SAT being single exponential in the treewidth,
thereby deciding consistency of~$\Pi$ in time~$2^{o(k\cdot\log(k))}\cdot\poly(\Card{\at(\Pi)})$.
\end{proof}

\FIXR{Further consequences of our construction are discussed in~\ref{sec:cons}.}

\FIXR{\section{An Empirical Study of Treewidth-Aware Reductions}\label{sec:emp}

Despite the lower bound of the previous section, we show that the reductions of this work could still have practical impact.
Recall the formalization of our treewidth-aware reduction of Section~\ref{sec:tdguided}, which is guided along a tree decomposition.
We implemented this translation in Python, resulting in the prototypical tool \aspmc\footnote{Our translator \aspmc is open source and readily available at \href{https://github.com/hmarkus/asp2sat_translator}{github.com/hmarkus/asp2sat\_translator}.}, in order to compare and study the effect on the (tree)width in practice.

So the overall goal of this section is to provide a neutral comparison, without explicitly focusing on beating existing ASP solvers. 
The reason for this goal lies in two reasons.
First of all, there is a recent observation that modern (\SAT) solvers have been highly optimized for current hardware for many years, cf.,~\cite{FichteHecherSzeider20}, which also takes time for treewidth-based solvers.
Further, finding ways to efficiently utilize treewidth seems to be still an ongoing process. 
While over the last couple of years, efficient solvers emerged, a clear set of techniques has not yet been fully settled.
This can be witnessed by a couple of solvers that adhere to parameterized complexity and in particular to treewidth.
Some techniques for solvers on Boolean formulas at least seem to be particular well-suited for counting problems, e.g., based on dynamic programming~\cite{FichteHecherZisser19,FichteEtAl20}, hybrid solving~\cite{HecherThierWoltran20,BesinHecherWoltran21}, which also includes the winner~\cite{tuukka,KorhonenMatti21,mc21} of two tracks of the most recent model counting competition~\cite{FichteHecherHamiti20}, as well as compact knowledge compilation~\cite{DudekPhanVardi20,DudekPhanVardi20b}.
However, treewidth also allows to improve solving hard decision problems with the help of knowledge compilation~\cite{CharwatWoltran19}.

Despite this still incomplete picture on potential applications for treewidth, we present the usefulness and potential impact of our reduction in the form of a neutral comparison based on treewidth upper bounds. 
Note that computing treewidth itself is NP-complete in general~\cite{Arnborg87}, so in this section we only focus on approximating treewidth upper bounds.
So, in order to show that the reduction of Section~\ref{sec:tdguided} indeed bounds the treewidth, we compare treewidth upper bounds on our reduction with treewidth upper bounds on an established tool of the literature~\cite{Janhunen06}.
These upper bounds are obtained with an efficient decomposer, called \emph{htd}, which aims at obtaining decent tree decompositions fast~\cite{AbseherMusliuWoltran17}.
However, especially if the instances are of high (tree)widths, the obtained upper bounds via the heuristics are oftentimes imprecise, cf.~\cite{Dell17a,FichteLodhaSzeider17}. 
Note that compared to treewidth, htd only provides upper bounds, where sometimes causing additional edges in the graph representation might strangely yield smaller widths.
Nevertheless, for initial estimations and for studying trends, such decomposers still proved valueable in many treewidth-based solvers.

Our implementation \aspmc works as follows:
\begin{enumerate}[itemsep=0pt]
    \item First, we parse the input program $\Pi$, using the Python API of clingo~\cite{GebserEtAl14}. 
    \item Then, we compute an initial treewidth upper bound for the primal graph of $\Pi$ using htd~\cite{AbseherMusliuWoltran17}, which heuristically provides us a tree decomposition. %
    \item Next, we perform our TD-guided reduction of Section~\ref{sec:tdguided} along the lines of the TD of the previous step.
    \item Finally, we use again htd in order to obtain a treewidth upper bound on the result.
\end{enumerate}

\paragraph{Compared Translations}
In our experiments, we mainly compare treewidth upper bounds  of the following
translations.
\begin{itemize}
	\item \aspmc: we use version 1.2 of htd as well as version 4.5 of clingo in order to translate from \ASP to \SAT. 
	\item lp2sat: instances are translated~\cite{Bomanson17} to CNFs by lp2normal 2.18 in combination with lp2atomic 1.17 and lp2sat 1.24. This allows us to compare the effect of global level mappings with the local level mappings of \aspmc.
\end{itemize}

\paragraph{Benchmark Scenarios}
In order to evaluate \aspmc, we considered the following scenarios.
\begin{enumerate}[align=left]
	\item[S1] Acyclic Instances: Comparing treewidth overhead on instances, whose positive dependency graph is acyclic. For \aspmc this still leaves the guidance of proofs along a tree decomposition. %
	\item[S2] Cyclic Instances: Evaluating the treewidth overhead on instances with cycles, which means for \aspmc that local level mappings are used.
	\item[S2b] Cyclic Instances: Studying the treewidth overhead on cyclic instances as in S2, but here we use global level mappings for \aspmc and still guide proofs along decompositions.
\end{enumerate}

For Scenarios S1 and S2, \aspmc uses local level mappings as defined in Section~\ref{sec:tdguided}, whereas for S2b the proofs are still guided along decompositions, which is then, however, mixed with global level mappings similar as in lp2sat. 

\paragraph{Benchmark Instances}
For Scenario S1, we used instances from recent applications that where heavily trained in competitions.
In fact there are still active competitions that utilize answer set programming, which is in contrast to
the ASP competition (ASPCOMP), whose last edition dates back to 2017.
One such active competitition is the international competition on computational models of argumentation~\cite{ICCMA21},
which has a long tradition of applying ASP.
So, for Scenario S1 our instances stem from the abstract argumentation competition~\cite{ICCMA21} from 2019,
since at the time of writing the instances for 2021 were not available. 
Then, on top we used ASP encodings for three canonical problems in that area~\cite{ICCMA21}: Computing admissible, complete, and stable extensions, whose ASP encodings were taken from the ASPARTIX suite~\cite{DvorakEtAl20}.
The obtained programs of those instances with the resulting encodings are acyclic.

For Scenario S2 (S2b) we use real-world graphs, more specifically, public transport networks of several transport agencies over the world~\cite{Fichte16} as instances.
They also have been used in the so-called PACE challenge competitions 2016 and 2017~\cite{Dell17a}. In total these instances amount to 561 graph networks and 2553 subgraphs with a focus on different transportation modes.
On top of these networks, we encoded reachability, where for each instances 
we assume the station with the smallest and largest index to be the start and end stations, respectively.

Note that all our instances
including the corresponding ASP encoding %
and raw data of our benchmark runs are open source and are publicly available on github at \href{https://github.com/hmarkus/asp2sat_translator/tree/benchmarks/results}{github.com/hmarkus/asp2sat\_translator/tree/benchmarks/results}.

\begin{figure}[t]
\centering
\includegraphics[scale=0.43]{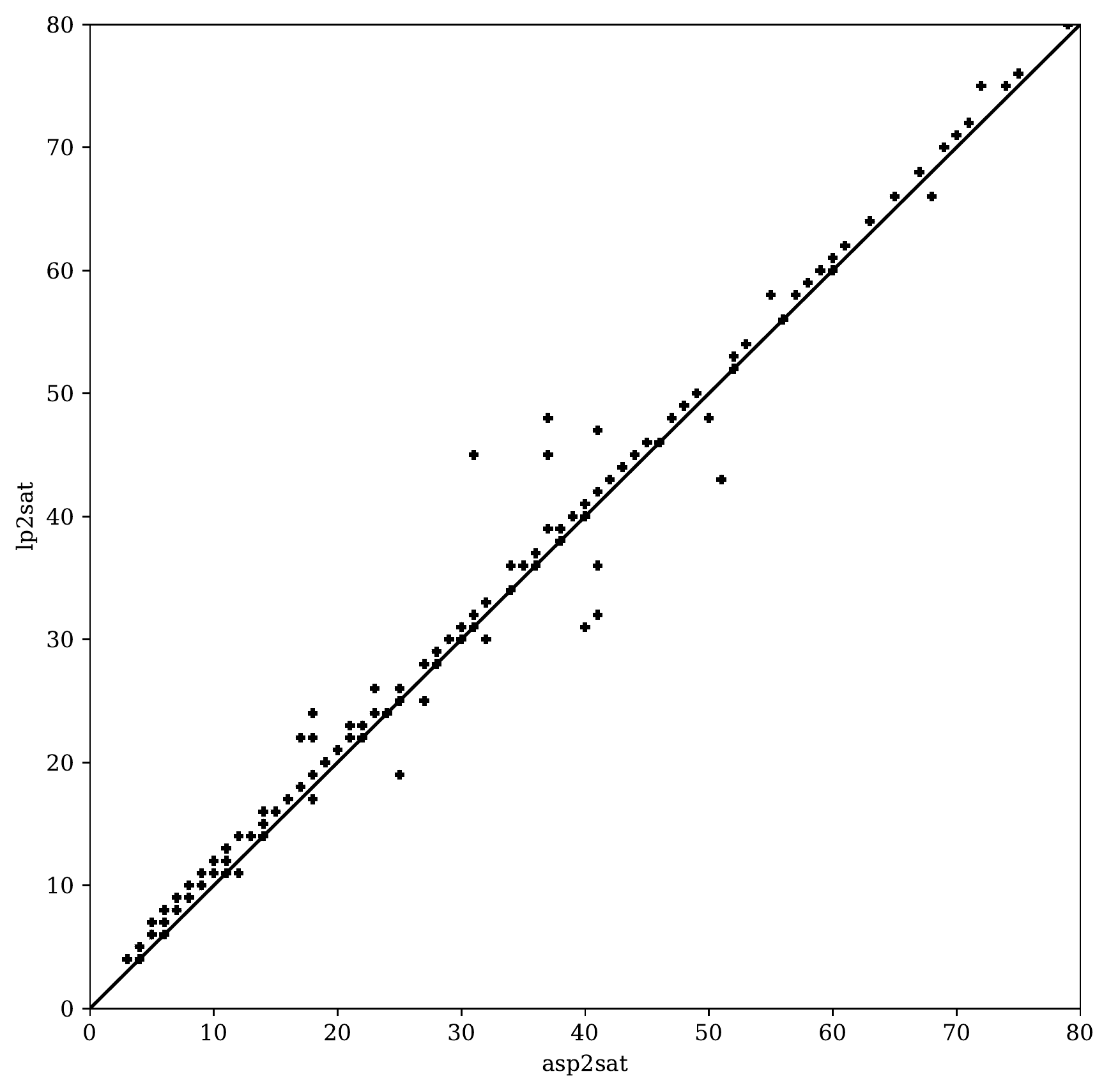}%
\caption{Scatter plot of \aspmc and lp2sat for Scenario S1, which compares the obtained treewidth upper bounds after translation. Observe that dots above the diagonal indicate instances, where \aspmc yields better treewidth upper bounds.}
\label{fig:acyclic}
\end{figure}

\begin{table}[t]
  \centering
\resizebox{.74\columnwidth}{!}{%
  \begin{tabular}{{lllrrrrr}}
    \toprule \multirow{2}{*}{Scenario} & \multirow{2}{*}{$\sum$} & \multirow{2}{*}{Translation}  & \multicolumn{5}{c}{Treewidth upper bounds}\\\cmidrule{4-8}
            && & $\min$ & $\max$ & mean & median & stddev  \\
    \midrule
    \multirow{3}{*}{S1} & \multirow{3}{*}{478} & - & 2 & 82 & 19.7 & 14 & 18.5  \\
    && \aspmc & \textbf{3} & \textbf{79} & \textbf{24.5} & \textbf{17} & \textbf{19.9} \\
    && lp2sat & 4 & 80 & 25.5 & 18 & \textbf{19.9} \\\midrule
    \multirow{3}{*}{S2} & \multirow{3}{*}{861} & - & 1 & 8 & 3.7 & 4 & 1.5  \\
    && \aspmc & \textbf{3} & 80 & \textbf{25.6} & \textbf{19} & {19.2} \\
    && lp2sat & \textbf{3} & \textbf{79} & 32.4 & 29 & \textbf{17.6} \\\midrule
\multirow{3}{*}{S2b} & \multirow{3}{*}{1053} & - & 1 & 35 & 5.0 & 4 & 3.9  \\
    && \aspmc & \textbf{3} & \textbf{80} & \textbf{32.0} & \textbf{29} & \textbf{17.0} \\
    && lp2sat & \textbf{3} & \textbf{80} & 39.1 & 37 & {21.7}\\
    \bottomrule
  \end{tabular}
 }
  \caption{Detailed results over Scenarios S1, S2, and S2b, where we depict statistical data ($\min$, $\max$, mean, median, stddev) among the number of instances ($\sum$) solved by both lp2sat and \aspmc such that the treewith upper bound is below~$80$. Note that translation ``-'' refers to initial treewidth upper bounds, i.e., bounds obtained by using no translation. Overall, \aspmc yields significantly lower bounds than lp2sat.
  }
  \label{tab:transit}
\end{table}

\begin{figure}[t]
\centering
\includegraphics[scale=0.43]{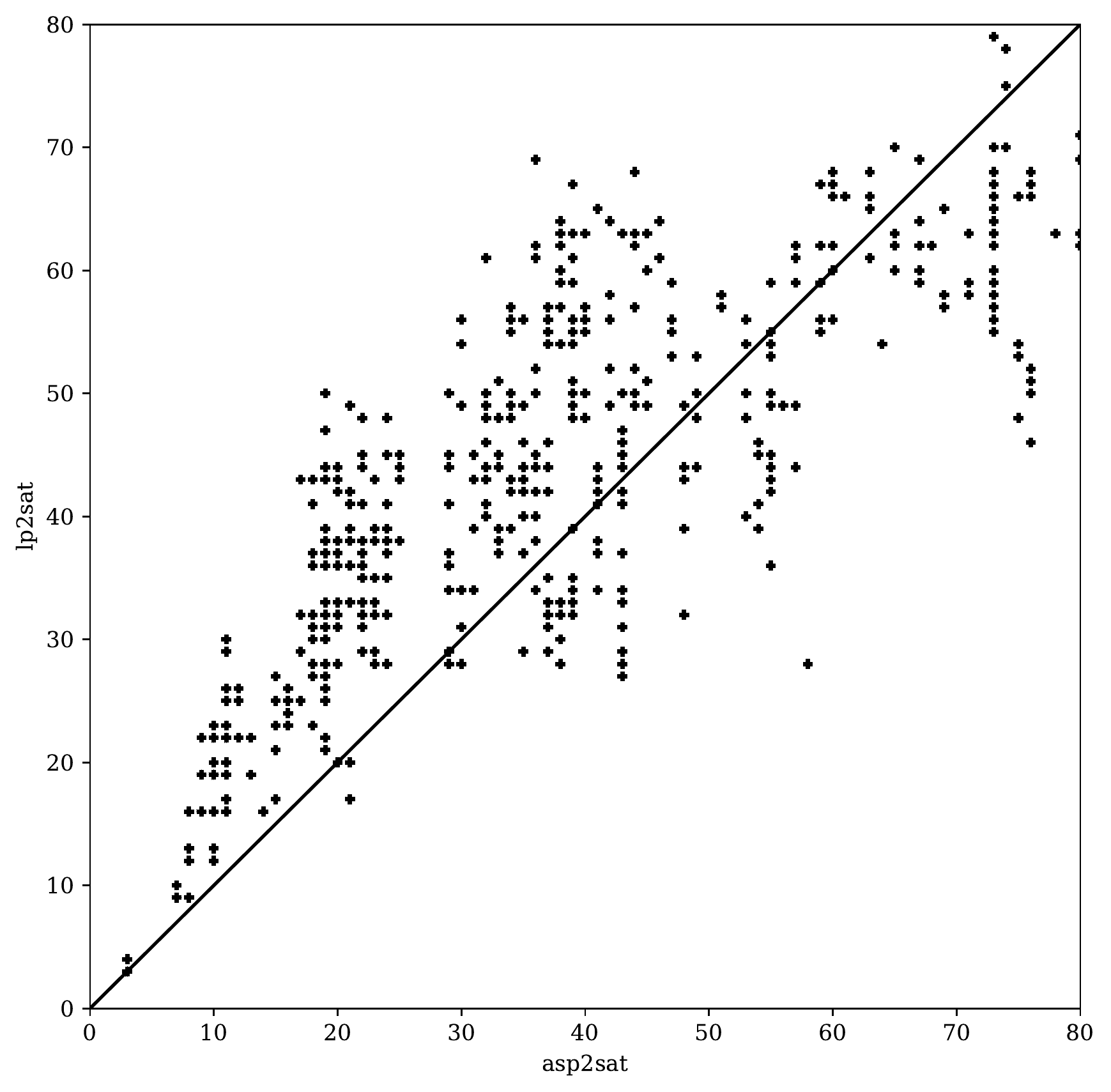}%
\caption{Scatter plot of \aspmc and lp2sat for Scenario S2, comparing treewidth upper bounds, similar to Figure~\ref{fig:acyclic}. Observe that \aspmc performs best up to about~$50$.}
\label{fig:cyclic}
\end{figure}

\paragraph{Benchmark Hardware}
All our solvers ran on a cluster consisting of 12 nodes. 
Each node of the cluster is equipped
with two Intel Xeon E5-2650 CPUs, where each of these 12 physical cores runs
at 2.2 GHz clock speed and has access to 256 GB shared RAM. %
Results are gathered on Ubuntu~16.04.1 LTS powered on kernel~4.4.0-139 with hyperthreading disabled using version 3.7.6 of Python3.
Since we consider these translation to be used in a preprocessing step, we  allow for each instance and translation up to  200 seconds (timeout) and 8 GB of main memory.
We compare the quality of the obtained treewidth upper bounds in the range from~$0$ to~$80$, which is backed up by the maximal width that is reasonable for being utilized by state-of-the-art solvers, e.g.,~\cite{BannachBerndt19,CharwatWoltran19,FichteHecherZisser19,DudekPhanVardi20,DudekPhanVardi20b,HecherThierWoltran20}.

\paragraph{Benchmark Results}

First, we discuss Scenario S1, for which we present in Figure~\ref{fig:acyclic} a scatter plot of the treewidth upper bounds of \aspmc in comparison with lp2sat.
It is easy to see that, despite the need for level mappings in S1, the usage of \aspmc yields better upper bounds in almost all the cases. 
Of course there are some outliers in Figure~\ref{fig:acyclic} below the diagonal, which could, however, also be due to the heuristical decomposer.
Detailed statistical measures are given in Table~\ref{tab:transit}, which confirms our observations that \aspmc slightly decreases also treewidth upper bounds on acyclic instances like the ones used in Scenario S1. 
Indeed, the overall increase compared to the initial treewidth upper bounds is not large for S1.

For Scenario S2, we first observe in Figure~\ref{fig:cyclic} that \aspmc yields significantly smaller treewidth upper bounds compared to lpsat in the range from 0 to 35. Then, up to values of 50, \aspmc still delivers good performance. Finally for larger bounds of \aspmc, the heuristics of htd
probably suffer from larger primal graphs due to auxiliary variables, revealing that the treewidth approximation does not scale.
Table~\ref{tab:transit} shows that compared to lp2sat, our translation \aspmc significantly decreases the mean and median among treewidth upper bounds for a majority of the benchmark instances. %

To mitigate this disadvantage of estimating treewidth in case of many auxiliary variables, 
we slightly adapted \aspmc such that it uses global level mappings as in lp2sat.
Comparing the scatter plots of Figure~\ref{fig:cyclicproven} with Figure~\ref{fig:cyclic} reveals that for treewidth bounds up to about~30 local level mappings pay off. Consequently, 
in these cases the heuristics of htd still efficiently utilize local level mappings.
However, these heuristics are still rather limited: Beyond treewidth upper bounds of 30, Figure~\ref{fig:cyclicproven} reveals that global level mappings work better with state-of-the-art decomposers. This still leaves room for improvement for future decomposition heuristics that improve on large instances.
In any case, the treewidth upper bound improvement of \aspmc compared to lp2sat is still significat in Scenario S2b, as indicated by statistical measures of Table~\ref{tab:transit}.

\begin{figure}[t]
\centering
\includegraphics[scale=0.43]{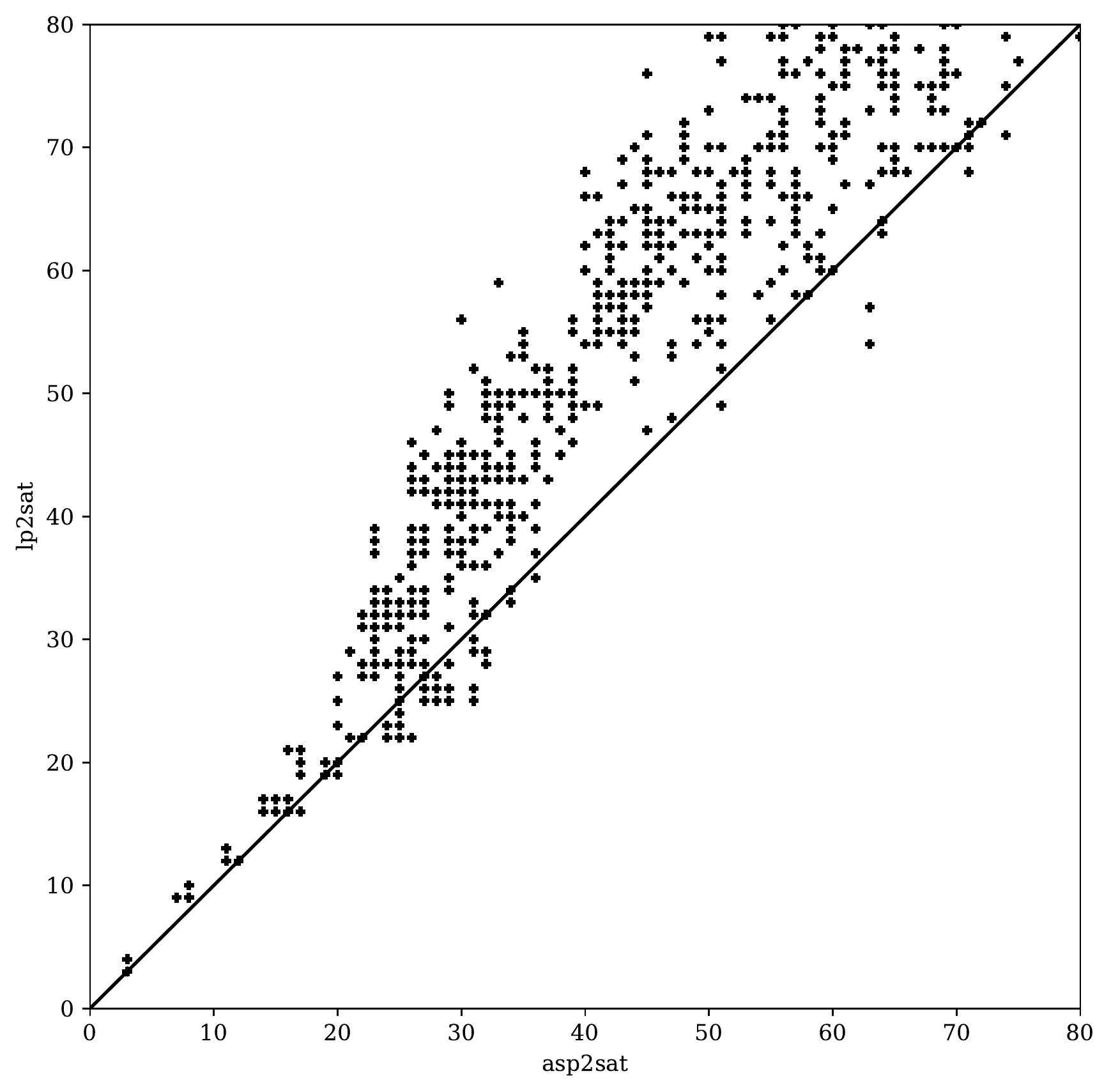}%
\caption{Scatter plot of \aspmc and lp2sat for Scenario S2b, which is similar to S2, but without local level mappings. Observe that compared to Figure~\ref{fig:cyclic}, \aspmc improves the result in almost all the cases.}
\label{fig:cyclicproven}
\end{figure}

\begin{table}[t]
  \centering
\resizebox{1.01\columnwidth}{!}{%
  \begin{tabular}{{@{\hspace{-.1em}}HHlrrrrr@{\hspace{-.1em}}H}}
    \toprule \multirow{2}{*}{Scenario} & \multirow{2}{*}{$\sum$} & \multirow{2}{*}{Translation}  & \multicolumn{5}{c}{Measures}\\\cmidrule{4-7}
            && & Number of Clauses & Number of Atoms & Treewidth & Program Size & Runtime & Source\\ %
    \midrule
    \multirow{3}{*}{S1} & \multirow{3}{*}{478} & - & $m$ & $n$ & $k$ & $s\in \mathcal{O}(nm)$ & - & -\\
    && \aspmc & \textbf{$\mathcal{O}(k^2\log(k)\cdot (n{+}m))$} %
    & $\mathcal{O}(k\log(k)\cdot (n{+}m))$ %
    & \textbf{$\mathcal{O}(k\log(k))$} %
    & $\mathcal{O}(k^2\log(k)\cdot (n{+}m))$
    & $\mathcal{O}(k^2\log(k)\cdot (n{+}m))$\\
    && lp2sat~\cite{Janhunen06} & $\mathcal{O}(s\log(n))$ & $\mathcal{O}(n\log(n))$ & $\mathcal{O}(k\log(n))$ & $\mathcal{O}(s\log(n))$ & 
    $\mathcal{O}(s\log(n))$\\ %
    \bottomrule
  \end{tabular}
 }
  \caption{\FIXR{Comparison of worst-case measures in terms of number of clauses (atoms), treewidth, program size, and runtime for both \aspmc and lp2sat. 
  Note that translation ``-'' refers to initial values of the given input program. %
  Overall, there are cases, where \aspmc yields smaller values than lp2sat,
  which however is not true in general.
  }}
  \label{tab:hardfacts}
\end{table}

To provide some basic decision guideline, whether for some specific setting
\aspmc or lp2sat might be more appropriate, we briefly summarize the hard facts
of both approaches in Table~\ref{tab:hardfacts}.
Note that there are also further flavors and extensions of both approaches, 
cf.~, e.g.,~\cite{GebserJanhunenRitanen14,BomansonEtAl16,FandinnoHecher21}.
}

\section{Discussion, Conclusion, and Future Work}

Understanding the hardness of \ASP and the underlying reasons has attracted the attention of the KR community for a long time.
This paper discusses this question from a different angle,
which hopefully will provide new insights into the hardness of \ASP and foster follow-up work.
The results in this paper indicate that, at least from a structural point of view, deciding the consistency of \ASP is already harder than %
\SAT, since \ASP programs might compactly represent structural dependencies within the formalism.
More concretely, compiling the hidden structural dependencies of a program to a \SAT formula, measured in terms of the well-studied parameter treewidth, causes a blow-up of the treewidth of the resulting formula.
In the light of a known result~\cite{AtseriasFichteThurley11} on the correspondence of treewidth and the resolution width applied in \SAT solving, this reveals that \ASP consistency might be indeed harder than solving \SAT.
We further presented a reduction from \ASP to \SAT that is aware of the treewidth in the sense that the reduction causes not more than this inevitable blow-up of the treewidth in the worst-case. \FIXR{Finally, we present an empirical evaluation of treewidth upper bounds obtained via standard heuristics for treewidth,
where we compare our reduction against existing translations.
In both cyclic as well as acyclic scenarios, our reduction and adaptions thereof seems promising and one might
consider treewidth-aware reductions as a preprocessing tool in a portfolio setting.
Interestingly, the reduction developed in this work already lead to follow-up studies~\cite{FandinnoHecher21}
and treewidth-aware cycle breaking~\cite{EiterHecherKiesel21}.
}

Our paper raises several questions for future work. 
\FIXR{While in this work, we consider treewidth-aware reductions 
for head-cycle-free programs, the construction can be also extended
to cover the SModels intermediate format~\cite{ElkabaniEtAl05}, which can be obtained
via typical grounders that are capable of the ASP-Core-2 format~\cite{CalimeriEtAl20} including, e.g.,
aggregates and choice rules.
Intuitively, the treatment of choice rules is rather similar to existing approaches for treewidth~\cite{FichteEtAl17a} and 
the SModels format does not allow aggregates (so it needs to be removed during grounding).
Still, we think it is an interesting question for future work, how more advanced formats like aspif~\cite{GebserEtAl19} that also considers hybrid solving 
can be treated for treewidth. %
However, for weight rules, treewidth alone is intractable, i.e., it is not expected
that such a reduction exists~\cite{PichlerEtAl14}. Further, for disjunctive rules one might expect
a double exponential runtime, cf.~\cite{JaklPichlerWoltran09,FichteHecher19}.
In terms of lower bounds, our results %
naturally carry over for more expressive fragments of logic programs.
It might be still interesting to generalize our results to extended formalisms like
epistemic logic programs, for which only certain lower bound results are known~\cite{FichteHecherPfandler20,HecherMorakWoltran20}.
}

Currently, we are working on %
different treewidth-aware reductions to \SAT and further variants thereof, 
and how these variants perform in different settings (consistency vs.~counting).
Moreover, we are curious about treewidth-aware reductions to \SAT, which preserve answer sets bijectively or are modular~\cite{Janhunen06}.
We hope this work might reopen the quest to study the correspondence of treewidth and \ASP solving
similarly to~\cite{AtseriasFichteThurley11} for \SAT.
Also investigating further structural parameters ``between'' treewidth and directed variants of treewidth could lead to new insights,
since for \ASP, directed measures~\cite{BliemWoltranOrdyniak16} often do not yield efficient algorithms.
\FIX{Given the fine-grained expressiveness results for different (sub-)classes of normal programs and the resulting expressive power hierarchy~\cite{Janhunen06}, we are curious to see also studies in this direction and to which extent results might differ, when further restricting to treewidth-aware reductions. Of particular interest might be the question of whether one can devise a different hardness proof for normal \ASP and treewidth (cf., Section~\ref{sec:hardness}), such that only unary rules are used.} %

\futuresketch{
\subsection{Consistency of Head-Cycle-Free Programs}%
We can use the algorithm~$\dpa_\PRIM$ to decide the consistency
problem for head-cycle-free programs and simply specify our new local
algorithm (\PRIM) that ``transforms'' tables from one node to another.
As graph representation we use the primal graph.  The idea is to
implicitly apply along the tree decomposition the characterization of
answer sets by~\citex{LinZhao03} extended to head-cycle-free
programs~\cite{Ben-EliyahuDechter94}.
To this end, we store in table~$o(t)$ at each node~$t$ rows of the
form~$\langle I, \mathcal{P}, \sigma\rangle$.
The first position consists of an interpretation~$I$ restricted to the
bag~$\chi(t)$.  For a sequence~$\vec \tabval$, we
write~$\mathcal{I}(\vec \tabval)\eqdef \vec u_{(1)}$ to address the
\emph{interpretation part}.
The second position consists of a set~$\mathcal{P} \subseteq I$ that
represents atoms in~$I$ for which we know that they have already been
proven.
The third position~$\sigma$ is a sequence of the atoms~$I$ such that
there is a super-sequence~$\sigma'$ of~$\sigma$, which induces an ordering~$<_{\sigma'}$.
Our local algorithm~\PRIM stores interpretation parts always
restricted to bag~$\chi(t)$ and ensures that an interpretation can be extended
to a model of sub-program~$\prog_{\leq t}$.
More precisely, it guarantees that interpretation~$I$ can be extended
to a model~$I'\supseteq I$ of~$\prog_{\leq t}$ and that the atoms
in~$I'\setminus I$ (and the atoms in $\mathcal{P}\subseteq I$) have
already been \emph{proven}, using some induced ordering~$<_{\sigma'}$
where $\sigma$ is a sub-sequence of~$\sigma'$.
In the end, an interpretation~$\mathcal{I}(\vec u)$ of a row~$\vec u$
of the table~$o(n)$ at the root~$n$ proves that there is a
superset~$I' \supseteq \mathcal{I}(\vec u)$ that is an answer set
of~$\prog = \progt{n}$.

Listing~\ref{fig:prim} presents the algorithm~\PRIM.  Intuitively,
whenever an atom~$a$ is introduced ($\intr$), we decide whether we
include~$a$ in the interpretation, determine bag atoms that can be
proven in consequence of this decision, and update the
sequence~$\sigma$ accordingly.
To this end, we define 
for a given interpretation~$I$ and a sequence~$\sigma$ the set
$\gatherproof(I, \sigma, \prog_t) \eqdef \bigcup_{r\in \prog_t, a\in
  H_r}\SB a \mid B_r^+ \subseteq I, I \cap B^-_r = \emptyset, I \cap
(H_r\setminus \{a\}) = \emptyset, B^+_r <_\sigma a \SE$ where
$B^+_r <_\sigma a$ holds if $b <_\sigma a$ is true for every
$b \in B^+_r$.  Moreover, given a
level mapping~$\sigma$ and a
set~$A$ of atoms, we compute the potential level mappings
involving~$A$. Therefore, we let
$\possord(\sigma, a, J) \eqdef \SB \MAI{\sigma}{a\mapsto 0} \SM a\notin J \SE \cup
\{\{a\mapsto i, x \mapsto j \mid \sigma(x) = j, \neg sh \text{ or } j < i\} \cup \{x \mapsto j + 1 \mid \sigma(x) = j, j\geq i, sh\} \mid a\in J, 1\leq i \leq k + 1, sh \in \{1, 0\}\SE$.
When removing ($\rem$) an atom~$a$, we only keep those rows where~$a$ has
been proven (contained in~$\mathcal{P}$) and then restrict remaining rows
to the bag (not containing~$a$). In case the node is of
type~$\join$, we combine two rows in two different child tables,
intuitively, we are enforced to agree on the interpretations~$I$ and
sequences~$\sigma$. However, concerning the individual
proofs~$\mathcal{P}$, it suffices that an atom is proven in one of the
rows.

\input{algorithms/prim}%

\setlength{\tabcolsep}{0.25pt}
\renewcommand{\arraystretch}{0.75}
\begin{figure}[t]
\hspace{-0.75em}
\includegraphics[width=0.5\textwidth]{figure-phc-tables}
\caption{Selected tables of~$\tau$ obtained by~$\dpa_{\PRIM}$ on
  TD~${\cal T}$.} %
\label{fig:running2}
\end{figure}

\begin{example}\label{ex:sat}%
  Recall program~$\prog$ from
  Example~\ref{ex:running}. Figure~\ref{fig:running2} depicts a
  TD~$\TTT=(T,\chi)$ of the primal graph~$G_1$ of $\prog$. Further,
  the figure illustrates a snippet of tables of the
  TTD~$(T,\chi,\tau)$, which we obtain when running $\dpa_{\PRIM}$ on
  program~$\prog$ and TD~$\TTT$ according to Listing~\ref{fig:prim}.
  In the following, we briefly discuss some selected rows of those
  tables.
  Note that for simplicity and space reasons, we write $\tau_q$
  instead of $\tau(t_q)$ and identify rows by their node and
  identifier~$i$ in the figure. For example, the row
  $\vec\tabval_{13.3}=\langle I_{13.3}, \mathcal{P}_{13.3},
  \sigma_{13.3}\rangle\in\tab{13}$ refers to the third row of
  table~$\tab{13}$ for node~$t_{13}$. 
  Node~$t_1$ is of type~$\leaf$. Table~$\tab{1}$ has only one row,
  which consists of the empty interpretation, empty set of proven
  atoms, and the empty sequence (Line~\ref{line:primleaf}).
  Node~$t_2$ is of type~$\intr$ and introduces atom~$a$. Executing
  Line~\ref{line:primintr} results in
  $\tab{2}=\{\langle \emptyset, \emptyset,\langle \rangle \rangle,
  \langle \{a\}, \emptyset, \langle a\rangle\rangle\}$.
  Node~$t_3$ is of type~$\intr$ and introduces~$b$. Then, bag-program
  at node~$t_3$ is $\prog_{t_3}=\{a \vee b \hsep\}$.
  By construction (Line~\ref{line:primintr}) we ensure that
  interpretation~$I_{3.i}$ is a model of~$\prog_{t_3}$ for every
  row~$\langle I_{3.i}, \mathcal{P}_{3.i}, \sigma_{3.i}\rangle$
  in~$\tab{3}$.
  Node~$t_{4}$ is of type~$\rem$.  Here, we restrict the rows such
  that they contain only atoms occurring in bag~$\chi(t_4)=\{b\}$.  To
  this end, Line~\ref{line:primrem} takes only
  rows~$\vec\tabval_{3.i}$ of table~$\tab{3}$ where atoms in~$I_{3.i}$
  are also proven,~i.e., contained in~$\mathcal{P}_{3.i}$.
  In particular, every row in table~$\tab{4}$ originates from at least
  one row in~$\tab{3}$ that either proves~$a\in \mathcal{P}_{3.i}$ or
  where~$a\not\in I_{3.i}$. 
  Basic conditions of a TD ensure that once an atom is removed, it
  will not occur in any bag at an ancestor node. Hence, we also
  encountered all rules where atom~$a$ occurs.
  Nodes~$t_5, t_6, t_7$, and~$t_8$ are symmetric to
  nodes~$t_1, t_2, t_3$, and~$t_4$.
  Nodes~$t_9$ and~$t_{10}$ again introduce atoms. 
  Observe that $\mathcal{P}_{10.4} = \{e\}$ since
  $\sigma_{10.4}$ does not allow to prove~$b$ using atom~$e$.
  However, $\mathcal{P}_{10.5}=\{b,e\}$ as the sequence~$\sigma_{10.5}$
  allows to prove~$b$.
  In particular, in row~$\vec\tabval_{10.5}$ atom~$e$ is used to
  derive~$b$.  As a result, atom~$b$ can be proven, whereas
  ordering~$\sigma_{10.4}=\langle b,e\rangle$ does not serve in
  proving~$b$.
  We proceed similar for nodes~$t_{11}$ and $t_{12}$.
  At node~$t_{13}$ we join tables~$\tab{4}$ and~$\tab{12}$ according
  to Line~\ref{line:primjoin}.
  Finally, we have $\tab{14}\neq \emptyset$. Hence, $\prog$ has an
  answer set. We can construct the answer set~$\{b,e\}$ by combining
  the interpretation parts~$I$ of the yellow marked rows of
  Figure~\ref{fig:running2}.
\end{example}

Next, we provide a notion to reconstruct answer sets from a computed
TTD, which allows for computing for a given row its predecessor rows
in the corresponding child tables,~c.f., \cite{FichteEtAl18}.
\newcommand{\llangle}{\ensuremath{\langle\hspace{-2pt}\{\hspace{-0.2pt}}}
\newcommand{\rrangle}{\ensuremath{\}\hspace{-2pt}\rangle}}
\newcommand{\STab}{\ensuremath{\ATab{\AlgA}}}%
Let $\prog$ be a program, $\TTT=(T, \chi, \tau)$ be an~$\AlgA$-TTD
of~$\mathcal{G}_\prog$, and $t$ be a node of~$T$ where
$\children(t)=\langle t_1, \ldots, t_{\ell}\rangle$. %
Given a sequence~$\vec s=\langle s_1, \ldots, s_{\ell} \rangle$, we
let
$\llangle \vec s\rrangle \eqdef \langle \{s_1\}, \ldots, \{s_{\ell}\}
\rangle$. %
For a given $\AlgA$-row~$\vec u$, we define the originating
$\AlgA$-rows of~$\vec u$ in node~$t$ by
$\orig(t,\vec \tabval) \eqdef \SB \vec s \SM \vec s \in \tau(t_1)
\times \cdots \times \tau({t_\ell}), \vec u \in {\AlgA}(t,\chi(t),
\cdot,(\prog_t,\cdot), \llangle \vec s\rrangle) \SE.$ %
We extend this to an $\AlgA$-table~$\rho$ by
$\origs(t,\rho) \eqdef \bigcup_{\vec u \in \rho}\orig(t,\vec u)$.
\begin{example}\label{ex:origins} %
  Consider program~$\prog$ and $\PRIM$-tabled tree
  decomposition~$(T,\chi,\tab{})$ from Example~\ref{ex:sat}.  We focus
  on~$\vec{\tabval_{1.1}} =\langle\emptyset, \emptyset, \langle
  \rangle\rangle$ of table~$\tab{1}$ of leaf~$t_1$. The
  row~$\vec{\tabval_{1.1}}$ has no preceding row,
  since~$\type(t_1)=\leaf$. Hence, we have
  $\origse{\PRIM}(t_1,\vec{\tabval_{1.1}})=\{\langle \rangle\}$.
  The origins of row~$\vec{\tabval_{11.1}}$ of
  table~$\tab{11}$ are given by
  $\origse{\PRIM}(t_{11},\vec{\tabval_{11.1}})$, which correspond to
  the preceding rows in
  table~$\tab{10}$ that lead to
  row~$\vec{\tabval_{11.1}}$ of
  table~$\tab{11}$ when running
  algorithm~$\PRIM$,~i.e.,
  $\origse{\PRIM}(t_{11},\vec{\tabval_{11.1}}) = \{\langle
  \vec{\tabval_{10.1}} \rangle, \langle \vec{\tabval_{10.6}} \rangle,
  \langle \vec{\tabval_{10.7}}
  \rangle\}$. Origins of
  row~$\vec{\tabval}_{12.2}$ are given by
  $\origse{\PRIM}(t_{12},\vec{\tabval_{12.2}}) = \{\langle
  \vec{\tabval_{11.2}} \rangle, \langle
  \vec{\tabval_{11.6}} \rangle\}$. Note
  that~$\vec{\tabval_{11.4}}$
  and~$\vec{\tabval_{11.5}}$ are not among those origins, since
  $d$ is not proven.  Observe that
  $\origse{\PRIM}(t_j,\vec\tabval)=\emptyset$ for any
  row~$\vec\tabval\not\in\tab{j}$.
  For node~$t_{13}$ of type~$\join$ and row~$\vec{\tabval_{13.2}}$, we
  obtain
  $\origse{\PRIM}(t_{13},\vec{\tabval_{13.2}}) =
  \{\langle\vec{\tabval_{4.2}},$ $\vec{\tabval_{12.2}} \rangle, \langle\vec{\tabval_{4.2}},$ $\vec{\tabval_{12.3}} \rangle\}$.
\end{example}

\longversion{\paragraph{Table Algorithm for the Incidence Graph}

\input{algorithms/sinc}%

With the general algorithm in mind (see Figure~\ref{fig:dp-approach}), we are now ready to propose $\INC$, a
new table algorithm for solving \ASP on the semi-incidence graph (see 
Listing~\ref{fig:sinc}). 
As in the general approach, \INC computes and stores witness sets, and their
corresponding counter-witness sets. However, in addition, for each witness set
and counter-witness set, respectively, 
we need to store so-called \emph{satisfiability states}
(or \emph{sat-states}, for short), since the atoms of a rule may no longer be
contained in one single bag of the tree decomposition of the semi-incidence graph. Therefore, we
need to remember in each tree decomposition node, ``how much'' of a rule is already satisfied. The
following describes this in more detail.

By definition of tree decompositions and the semi-incidence graph, for
every atom~$a$ and every rule~$r$ of a program, it is true that if atom~$a$ occurs
in rule~$r$, then $a$ and $r$ occur together in at least one bag of
the tree decomposition. As a consequence, the table algorithm encounters every
occurrence of an atom in any rule. In the end, on removal of~$r$, we
have to ensure that $r$ is among the rules that are already
satisfied. However, we need to keep track of whether a witness satisfies a rule, because not all atoms that occur in a rule
occur together in a bag. Hence, when our algorithm traverses
the tree decomposition and an atom is removed we still need to store this
sat-state, as setting the removed atom to a certain truth
value influences the satisfiability of the rule.
Since the semi-incidence graph contains a clique on every set~$A$ of
atoms that occur together in a weight rule body or
choice rule head, %
those atoms~$A$ occur together in a bag in every tree decomposition of the
semi-incidence graph. For that reason, we do {not} need to
incorporate weight or 
choice rules into the satisfiability state, in
contrast to the table algorithm for the incidence graph discussed later~(c.f. Section~\ref{sec:inc}).

In algorithm~\INC (detailed in Listing~\ref{fig:sinc}), a tuple in the
table~$\tab{t}$ is a triple~$\langle M, \sigma, \CCC \rangle$.  The
set~$M \subseteq \at(\prog)\cap\chi(t)$ represents a witness set. 
The family~$\CCC$ of sets represents counter-witnesses, which we will
discuss in more detail below.
The sat-state~$\sigma$ for $M$ represents rules of $\chi(t)$ satisfied
by a superset of~$M$.  Hence, $M$ witnesses a model~$M'\supseteq M$
where $M' \models \progtneq{t} \cup \sigma$.  
We use the binary operator~$\cup$ to combine sat-states, which ensures
that rules satisfied in at least one operand remain satisfied. For a node~$t$, our algorithm considers a local-program depending on the bag~$\chi(t)$. Intuitively, this provides a local view on the program.

For a node~$t$, our algorithm considers a local-program depending on the bag~$\chi(t)$. Intuitively, this provides a local view on the program.
\begin{definition}\label{def:bagprogram}%
  Let $\prog$ be a program, $\TTT=(\cdot,\chi)$ a tree decomposition of $S(\prog)$,
  $t$ a node of $\TTT$ and ${R} \subseteq \prog_t$.
  The \emph{local-program mapping}~${R}^{(t)}: R \rightarrow prog(\at(R)\cap\chi(t))$ assigns to each rule~$r\in R$ a rule obtained from~$r$
  by %
  removing all
    literals~$a$ and $\neg a$ where $a \not\in \chi(t)$.
  \shortversion{}%
\end{definition}%

\begin{example}
  Observe
  $\prog_{t_1}^{(t_1)} = \{(r_1, b \hsep \neg a), (r_2, a \hsep \neg b)\}$ and
  $\prog_{t_2}^{(t_2)} = \{(r_1, b \hsep \neg a), (r_2, a \hsep \neg b), (r_3, d\hsep)\}$ for $\prog_{t_1}$, $\prog_{t_2}$ of
  Figure~\ref{fig:graph-td2}. %
\end{example}

Since the local-program mapping~$\prog^{(t)}$ depends on the considered
node~$t$, we may have different local-program mappings for node~$t$ and
its child~$t'$. In particular, the mappings~$\{r\}^{(t)}$ and
$\{r\}^{(t')}$ might already differ for a
rule~$r \in \chi(t) \cap \chi(t')$. In consequence for satisfiability
with respect to sat-states, we need to keep track of a representative
of a rule. 

$\SP(\dot{R}, \sigma) \eqdef \{ a \mid (r, s) \in \dot{R}, a \in H_s, a >_\sigma r\}$

$\checkord(\dot{R}, \sigma, \phi, a) \eqdef \text{true iff } a >_\sigma r \implies r\not\in\phi \text{ for any } (r,s) \in\dot{R}$ where $a\in\at(s)$

$\checkmod(\dot{R}, J, \sigma) \eqdef \text{true iff } J \cap \at(s)  = B_s^+ \cup X$, $\Card{X}\leq 1$, and~$X\subseteq H_s$ where $X = \{a\mid a \in \at(s), a >_\sigma r\} \text{ for any } (r,s) \in\dot{R} \text{ with } \sigma = \langle \ldots, r, \ldots \rangle$

Note that in the end~$X \cap H_s\neq \emptyset$ in at least some bag,
since otherwise the rule is not satisfied anyway, but we also have to
enforce that no body atom can in~$X$!}

Next, we provide statements on correctness and a runtime analysis of
our algorithm.

\begin{theorem}[$\star$\footnote{Proofs of statements marked with ``$\star$'' can be found in the supplemental material.}]\label{thm:primcorrectness}
  The algorithm~$\dpa_\PRIM$ is correct. \\
  In other words, given a head-cycle-free program~$\prog$ and a
  TTD~${\cal T} = (T,\chi,\cdot)$ of~$\mathcal{G}_\prog$
  where~$T=(N,\cdot,n)$ with root~$n$. Then,
  algorithm~$\dpa_\PRIM((\prog,\cdot),\TTT)$ returns the
  $\PRIM$-TTD~$(T,\chi,\tau)$ such that $\prog$ has an answer set if
  and only if
  $\langle \emptyset, \emptyset, \langle
  \rangle\rangle\in\tau(n)$. Further, we can construct all the answer
  sets of~$\prog$ from transitively following the origins
  of~$\tau(n)$.\footnoteitext{\label{foot:nu}$\nu$ contains rows
    obtained by recursively following origins of~$\tau(n)$. Formal details are in Def.~\ref{def:extensions} (supplemental material).}\footnoteitext{\label{foot:phc}Later we use (among others)~\mdpa{\PRIM} where~$\AlgA=\PRIM$.}
\end{theorem}
\begin{proof}[Proof (Idea).]
  For soundness, we state an invariant and establish that this
  invariant holds for every node~$t\in N$.  For each
  row~$\vec\tabval=\langle I, \mathcal{P}, \sigma\rangle\in\tau(t)$,
  we have~$I\subseteq\chi(t), \mathcal{P}\subseteq I$, and~$\sigma$ is
  a sequence over atoms in~$I$. Intuitively, we ensure that
  $I\models\progt{t}$ and that exactly the atoms in~$\attneq{t}$
  and~$\mathcal{P}$ can be proven using a super-sequence~$\sigma'$
  of~$\sigma$.  By construction, we guarantee that we can decide
  consistency if
  row~$\langle \emptyset, \emptyset, \langle
  \rangle\rangle\in\tau(n)$. Further, we can even reconstruct answer
  sets, by following $\origa{\PRIM}$ of this single row back to the
  leaves.
  For completeness, we show that we obtain all the rows required to
  output all the answer sets of~$\prog$.
\end{proof}

\begin{theorem}%
  \label{thm:primruntime}
  Given a head-cycle-free program~$\prog$ and a tree
  decomposition~${\cal T} = (T,\chi)$ of~$\mathcal{G}_\prog$ of width~$k$ with $g$
  nodes. Algorithm~$\dpa_{\PRIM}$ runs in time
  $\mathcal{O} (3^{k}\cdot k !  \cdot g)$.
\end{theorem}
\begin{proof}[Proof (Sketch).]
  Let~$d = k+1$ be maximum bag size of the tree
  decomposition~$\TTT$. %
  The table~$\tau(t)$ has at most
  $3^{d} \cdot d!$ rows, since for a row~$\langle I, \mathcal{P}, \sigma\rangle$ we have~$d!$ many sequences~$\sigma$, and by construction of algorithm~$\PRIM$, an atom can be either in~$I$, both in~$I$ and~$\mathcal{P}$, or neither in~$I$ nor in~$\mathcal{P}$.
  In total, with the help of efficient data structures, e.g., for nodes~$t$ with~$\type(t)=\join$, one can establish a runtime bound of~$\bigO{{3^{d}\cdot d!}}$.
  Then, we apply this to every node~$t$ of the tree decomposition,
  which results in running
  time~$\bigO{{3^{d}\cdot d!} \cdot g}\subseteq \bigO{3^{k}\cdot k!\cdot g}$.
  Consequently, the theorem holds.
\end{proof}

In order to obtain an upper bound on factorial, we can simply take
$k! \leq 2^k$ for any fixed~$k\geq 4$. However, more precisely the
factorial is asymptotically bounded as follows.

\begin{proposition}[$\star$]\label{prop:kfact}
  Given any positive integer~$i \geq 1$ and
  functions~$f(k)\eqdef k!, g(k) \eqdef 2^{k^{(i+1)/i}}$. Then,
  $f \in \bigO{g}$.
\end{proposition}

A natural question is whether we can significantly improve this
algorithm for fixed~$k$.  To this end, we take the \emph{exponential
  time hypothesis (ETH)} into account, which states that there is some
real~$s > 0$ such that we cannot decide satisfiability of a given
3-CNF formula~$F$ in
time~$2^{s\cdot\Card{F}}\cdot\CCard{F}^{\mathcal{O}(1)}$.

\begin{proposition}
  Unless ETH fails, consistency of head-cycle-free program~$\prog$
  cannot be decided in time~$2^{o(k)} \cdot \CCard{\prog}^{o(k)}$
  where~$k$ is the treewidth of the primal graph of~$\prog$.
\end{proposition}
\begin{proof}
  The result follows by reducing from SAT to ASP (head-cycle-free)
  similar to the proof of Proposition~\ref{prop:hcfproj}.
\end{proof}

In the construction above, we store an arbitrary but fixed ordering on
the involved atoms. We believe that we cannot avoid these orderings in
general, since we have to compensate arbitrarily ``bad'' orderings
induced by the decomposition, which leads us to the following
conjecture.

\begin{conjecture}
  Unless ETH fails, consistency of a head-cycle-free program~$\prog$
  cannot be decided in
  time~$2^{o(k\cdot \text{log}(k))} \cdot \CCard{\prog}^{o(k)}$
  where~$k$ is the treewidth of the primal graph of~$\prog$.
\end{conjecture}

In other words, we claim that consistency for head-cycle-free programs
is slightly superexponential. We would like to mention
that~\citex{LokshtanovMarxSaurabh11} argue that whenever we cannot
avoid an ordering the problem is expected to be slightly
superexponential.
If the conjecture holds, our algorithm is asymptotically worst-case
optimal, even for fixed treewidth~$k$ since~$\dpa_{\PRIM}$ runs in
time~$\mathcal{O}(2^{k\cdot \text{log}(k)}\cdot g)$, where number~$g$
of decomposition nodes is linear in the size of the
instance~\cite{Bodlaender96}.

\section{Dynamic Programming for~$\PASP$}%

\label{sec:projmodelcounting}

\begin{figure}[t]
\centering
\includegraphics[scale=0.8]{figure_projection.pdf}
\caption{Algorithm~$\mdpa{\AlgA}$ consists of~$\dpa_\AlgA$
  and~$\dpa_\PROJ$. %
}
\label{fig:multiarch}
\end{figure}%

In this section, we present our dynamic programming
algorithm$^{\ref{foot:phc}}$~\mdpa{\AlgA}, which allows for solving the projected answer
set counting problem (\PASP).
\mdpa{\AlgA} is based on an approach of projected counting for propositional
formulas~\cite{FichteEtAl18} where TDs are traversed multiple times.
We show that ideas from that approach can be fruitfully extended to
answer set programming.
Figure~\ref{fig:multiarch} illustrates the steps of \mdpa{\AlgA}.
First, we construct the primal graph~$\mathcal{G}_\prog$ of the input program~$\prog$
and compute a TD of $\prog$. Then, we traverse the TD a first time by
running $\dpa_\AlgA$ (Step~3a), which outputs a
TTD~$\TTT_{\text{cons}}=(T,\chi,\tau)$.
Afterwards, we traverse $\TTT_{\text{cons}}$ in pre-order and remove
all rows from the tables %
that cannot be extended to an answer set (\emph{``Purge
  non-solutions''}).
In other words, we keep only rows~$\vec u$ of table~$\tau(t)$ at
node~$t$, if~$\vec u$ is involved in those rows that are used to
construct an answer set of~$\prog$, and let the resulting TTD
be~$\TTT_{\text{purged}}=(T,\chi,\nu)^{\ref{foot:nu}}$. We refer to $\nu$
as~\emph{purged table mapping}.
In Step~3b ($\dpa_\PROJ$), we traverse $\TTT_{\text{purged}}$ to count
interpretations with respect to the projection atoms and obtain
$\TTT_{\text{proj}}=(T,\chi,\pi)$. From the table~$\pi(n)$ at the root~$n$ of
$T$, we can then read the projected answer sets count of the input instance.
In the following, we only describe the local algorithm ($\PROJ$),
since the traversal in $\dpa_\PROJ$ is the same as before.
For \PROJ, %
a row at a node~$t$ is a pair $\langle\rho, c \rangle\in\pi(t)$ where
$\rho \subseteq \nu(t)$ is an $\AlgA$-table and $c$ is a non-negative
integer.
In fact, integer~$c$ stores the number of intersecting solutions
($\ipmc$). However, we aim for the projected answer sets count
($\pmc$), whose computation requires a few additional
definitions. Therefore, we can simply widen definitions from very
recent work~\cite{FichteEtAl18}.

In the remainder, %
we assume~$(\prog, P)$ to be an instance of~\PASP, $(T, \chi, \tau)$
to be an~$\AlgA$-TTD of~$\mathcal{G}_\prog$ and the mappings~$\tau$, $\nu$, and
$\pi$ as used above. 
Further, let~$t$ be a node of~$T$ with~$\children(t)=\langle t_1, \ldots, t_\ell\rangle$ and let $\rho \subseteq \nu(t)$.
\newcommand{\RRR}{\ensuremath{\mathcal{R}}}
  The relation~$\bucket \subseteq \rho \times \rho$ considers
  equivalent rows with respect to the projection of its
  interpretations by %
  $\bucket \eqdef \SB (\vec u,\vec v) \SM \vec u, \vec v \in \rho,
  \restrict{\mathcal{I}(\vec u)}{P} = \restrict{\mathcal{I}(\vec
    v)}{P}\SE.$
  Let $\buckets_P(\rho)$ be the set of equivalence classes induced
  by~$\bucket$ on~$\rho$,~i.e.,
  $\buckets_P(\rho) \eqdef\, (\rho / \bucket) = \SB [\vec u]_P \SM
  \vec u \in \rho\SE$, where
  $[\vec u]_P = \SB \vec v \SM \vec v \bucket \vec u,\vec v \in
  \rho\}$~\cite{Wilder12a}.
  Further, 
  $\subbuckets_P(\rho) \eqdef \SB S \SM \emptyset \subsetneq S
  \subseteq B, B \in \buckets_P(\rho)\SE$.

\begin{example}\label{ex:equiv} %
  Consider program~$\prog$, set~$P$ of projection atoms,
  TTD~$(T,\chi, \tau)$, and table~$\tab{10}$ from
  Example~\ref{ex:running0} and Figure~\ref{fig:running2}.
  Note that during purging rows~$\vec {u_{10.2}}$ and
  $\vec {u_{10.8}}, \ldots, \vec {u_{10.13}}$ are removed (highlighted gray),
  since they are not involved in any answer set, resulting in table~$\nu_{10}$.
  Then, $\vec{ u_{10.4}} =_P \vec{ u_{10.5}}$ and
  $\vec{ u_{10.6}} =_P \vec{ u_{10.7}}$.  The set~$\nu_{10}/\bucket$
  of equivalence classes of $\nu_{10}$
  is~$\buckets_P(\nu_{10})=\SB \{\vec{ u_{10.1}}\}, \{\vec{ u_{10.3}}\}, \{\vec{ u_{10.4}}, \vec{ u_{10.5}}\}, \{\vec{
    u_{10.6}}, \vec{ u_{10.7}}\}\SE$.
\end{example}

Later, we require to construct already computed projected counts for
tables of children of a given node~$t$. Therefore, we define the
\emph{stored $\ipmc$} of a table~$\rho \subseteq \nu(t)$ in
table~$\pi(t)$ by
$\sipmc(\pi(t), \rho) \eqdef \sum_{\langle \rho, c\rangle \in \pi(t)}
c.$ 
We extend this to a
sequence~$s=\langle \pi(t_1), \ldots, \pi(t_\ell)\rangle$ of tables of
length $\ell$ and a
set~$O = \{\langle \rho_1, \ldots, \rho_\ell\rangle, \langle \rho_1',
\ldots, \rho_\ell'\rangle, \ldots\}$ of sequences of~$\ell$ tables by
$\sipmc(s, O)=\prod_{i \in \{1, \ldots,
  \ell\}}\sipmc(s_{(i)},O_{(i)}).$
In other words, we select the $i$-th position of the sequence together
with sets of the $i$-th positions from the set of sequences.

Intuitively, when we are at a node~$t$ in algorithm~$\dpa_\PROJ$ we
have already computed~$\pi(t')$ of $\TTT_{\text{proj}}$ for every node~$t'$
below~$t$.
Then, we compute the projected answer sets count
of~$\rho \subseteq \nu(t)$. Therefore, we apply the
inclusion-exclusion principle to the stored projected answer sets count
of origins.
We define $\pcnt(t,\rho, \langle\pi(t_1),\ldots\rangle) \eqdef %
\sum_{\emptyset \subsetneq O \subseteq {\origs(t,\rho)}}
(-1)^{(\Card{O} - 1)} \cdot \sipmc(\langle \pi(t_1), \ldots\rangle, O)$. %
Vaguely speaking, $\pcnt$ determines the $\AlgA$-origins of table~$\rho$, 
goes over all subsets of these origins and looks up the
stored counts ($\sipmc$) in the \PROJ-tables of the children~$t_i$ of~$t$.

\begin{example}\label{ex:pcnt} %
  Consider again program~$\prog$ and TD~$\TTT$ from
  Example~\ref{ex:running1} and Figure~\ref{fig:running2}. First, we
  compute the projected count $\pcnt(t_{4},\{\vec{ u_{4.1}}\}, \langle\pi(t_{3})\rangle)$
  for row~$\vec{ u_{4.1}}$ of table~$\nu(t_{4})$ where
  $\pi(t_3) \eqdef\allowdisplaybreaks[4] \big\SB
  \langle \{\vec{ u_{3.1}}\}, 1\rangle,$
  $\langle \{\vec{ u_{3.2}}\},1\rangle, \langle \{\vec{u_{3.1}}, \vec{
    u_{3.2}}\},1\rangle\big\SE$ with
  $\vec{u_{3.1}}=\langle \emptyset, \emptyset, \langle\rangle \rangle$
  and~$\vec{u_{3.2}}=\langle \{a\}, \emptyset, \langle a\rangle
  \rangle$.
  Note that~$t_5$ has only the child~$t_4$ and therefore the product in~$\sipmc$
  consists of only one factor. 
  Since
  $\origse{\PRIM}(t_4, \vec{ u_{4.1}}) = \{\langle\vec{
    u_{3.1}}\rangle\}$, only the value of~$\sipmc$ for
  set~$\{\langle\vec{ u_{3.1}}\rangle\}$ is non-zero. Hence, we obtain
  $\pcnt(t_4,\{\vec{ u_{4.1}}\}, \langle\pi(t_3)\rangle)=1$. 
  Next, we compute
  $\pcnt(t_{4},\{\vec{ u_{4.1}}, \vec{u_{4.2}}\}, \langle\pi(t_3)\rangle)$. Observe that
  $\origse{\PRIM}(t_4, \{\vec{ u_{4.1}}, \vec{ u_{4.2}}\}) =
  \{\langle\vec{ u_{3.1}}\rangle, \langle\vec{ u_{3.2}}\rangle\}$. We
  sum up the values of~$\sipmc$ for sets~$\{\vec{ u_{4.1}}\}$
  and~$\{\vec{ u_{4.2}}\}$ and subtract the one for
  set~$\{\vec{ u_{4.1}}, \vec{ u_{4.2}}\}$.  Hence, we obtain
  $\pcnt(t_4,\{\vec{ u_{4.1}}, \vec{ u_{4.2}}\}, \langle\pi(t_3)\rangle)=1+1-1=1$.
\end{example}

Next, we provide a definition to compute $\ipmc$, which can be
computed at a node~$t$ for given table~$\rho\subseteq \nu(t)$ by
computing the $\pmc$ for children~$t_i$ of~$t$ using stored $\ipmc$
values from tables~$\pi(t_i)$, subtracting and adding~$\ipmc$ values
for subsets~$\emptyset\subsetneq\varphi\subsetneq\rho$ accordingly.
Formally, $\icnt(t,\rho,s)\eqdef 1$ if $\type(t) = \leaf$ and
otherwise
$\icnt(t,\rho,s)\eqdef \big|\pcnt(t,\rho, s)\;+
\quad\sum_{\emptyset\subsetneq\varphi\subsetneq\rho}(-1)^{\Card{\varphi}}
\cdot \ipmc(t,\varphi, s)\big|$ where
$s = \langle \pi(t_1), \ldots\rangle$.
In other words, if a node is of type~$\leaf$ the $\ipmc$ is one, since
bags of leaf nodes are empty.
Otherwise, we compute the ``non-overlapping'' count of given
table~$\rho\subseteq\nu(t)$ with respect to~$P$, by exploiting the
inclusion-exclusion principle on $\AlgA$-origins of~$\rho$ such that
we count every projected answer set only once. Then we have to %
subtract and add $\ipmc$ values (``all-overlapping'' counts) for
strict subsets~$\varphi$ of~$\rho$, accordingly.

Finally, Listing~\ref{fig:dpontd3} presents the local algorithm~\PROJ,
which stores~$\pi(t)$ consisting of every sub-bucket of the given
table~$\nu(t)$ together with its $\ipmc$.

\input{algorithms/dponpass3}%

\setlength{\tabcolsep}{0.25pt}
\renewcommand{\arraystretch}{0.75}
\begin{figure*}[t]
\centering
\begin{tikzpicture}[node distance=0.5mm]
\tikzset{every path/.style=thick}

\node (l1) [stdnode,label={[tdlabel, xshift=0em,yshift=+0em]right:${t_1}$}]{$\emptyset$};
\node (i1) [stdnode, above=of l1, label={[tdlabel, xshift=0em,yshift=+0em]left:${t_2}$}]{$\{a\}$};
\node (i12) [stdnode, above=of i1, label={[tdlabel, xshift=0em,yshift=+0.35em]left:${t_3}$}]{$\{a,b\}$};
\node (i13) [stdnode, above=of i12, label={[tdlabel, xshift=+0.05em,yshift=+0em]right:${t_4}$}]{$\{b\}$};
\node (lrx) [stdnode, right=2.5em of l1, yshift=-6.9em, label={[tdlabel, xshift=0em,yshift=+0em]left:${t_5}$}]{$\emptyset$};
\node (cc) [stdnode, above=of lrx, label={[tdlabel, xshift=0em,yshift=+0em]left:${t_6}$}]{$\{c\}$};
\node (bc) [stdnode, above=of cc, label={[tdlabel, xshift=0em,yshift=+0em]left:${t_7}$}]{$\{c,e\}$};
\node (l2) [stdnode, above=of bc, label={[tdlabel, xshift=0em,yshift=+0em]left:${t_8}$}]{$\{e\}$};
\node (i2) [stdnode, above=of l2, label={[tdlabel, xshift=0em,yshift=+0em]right:${t_{9}}$}]{$\{d, e\}$};
\node (i22) [stdnode, above=of i2, label={[tdlabel, xshift=0em,yshift=+0em]left:${t_{10}}$}]{$\{b,d,e\}$};
\node (r2) [stdnode, above=of i22, label={[tdlabel, xshift=0em,yshift=+0em]left:${t_{11}}$}]{$\{b, d\}$};
\node (r22) [stdnode, above=of r2, label={[tdlabel, xshift=0.05em,yshift=+0em]left:${t_{12}}$}]{$\{b\}$};
\node (j) [stdnode, above left=of r22, xshift=0.0em, yshift=-0.0em, label={[tdlabel, xshift=0em,yshift=+0.3em]right:${t_{13}}$}]{$\{b\}$};
\node (rt) [stdnode,ultra thick, above=of j, label={[tdlabel, xshift=0em,yshift=+0em]right:${t_{14}}$}]{$\emptyset$};
\node (label) [font=\scriptsize,left=of rt,xshift=0.45em]{${\cal T}$:};
\node (leaf1) [stdnodetable, left=3em of i1, yshift=2em, label={[tdlabel, xshift=2em,yshift=1em]below right:$\pi_{3}$}]{%
	\begin{tabular}{l@{\hspace{0.0em}}l@{\hspace{0.0em}}r}%
		\multicolumn{1}{l}{$\langle \tuplecolor{black}{\nu_{3.i}}, $}&\multicolumn{1}{r}{$\tuplecolor{\specialPredColor}{c_{3.i}} \rangle$}\\
		\hline\hline
		$\langle\tuplecolor{black}{\{\langle\tuplecolor{\inputPredColor}{\{a\}}, \tuplecolor{\outputPredColor}{\{a\}}, \tuplecolor{\statePredColor}{\langle a\rangle}\rangle\}}, $&$\tuplecolor{\specialPredColor}{1}\rangle$ \\\hline
		$\langle\tuplecolor{black}{\{\langle\tuplecolor{\inputPredColor}{\{b\}}, \tuplecolor{\outputPredColor}{\{b\}}, \tuplecolor{\statePredColor}{\langle b\rangle}\rangle\}}, $&$\tuplecolor{\specialPredColor}{1}\rangle$ \\\hline
		$\langle\tuplecolor{black}{\{\langle\tuplecolor{\inputPredColor}{\{a\}}, \tuplecolor{\outputPredColor}{\{a\}}, \tuplecolor{\statePredColor}{\langle a\rangle}\rangle}, $& \multirow{2}{*}{$\tuplecolor{\specialPredColor}{1}\rangle$} \\ %
		$\hspace{1.05em}\tuplecolor{black}{\langle\tuplecolor{\inputPredColor}{\{b\}}, \tuplecolor{\outputPredColor}{\{b\}}, \tuplecolor{\statePredColor}{\langle b\rangle}\rangle\},}$ %
	\end{tabular}%
};
\node (leaf1b) [stdnodenum,left=of leaf1,xshift=0.6em,yshift=+0.0em]{%
	\begin{tabular}{c}%
		\multirow{1}{*}{$i$}\\ %
		\hline\hline
		$1$ \\\hline
		$2$ \\\hline
		\multirow{2}{*}{$3$}\\\\ %
	\end{tabular}%
};
\node (leaf0x) [stdnodetable, left=-3.5em of leaf1b, yshift=4.5em, label={[tdlabel, xshift=2em,yshift=-0.15em]below left:$\pi_{4}$}]{%
	\begin{tabular}{l@{\hspace{0.0em}}l@{\hspace{0.0em}}r}%
		\multicolumn{1}{l}{$\langle \tuplecolor{black}{\nu_{4.i}}, $}&\multicolumn{1}{r}{$\tuplecolor{\specialPredColor}{c_{4.i}} \rangle$}\\
		\hline\hline
		$\langle\tuplecolor{black}{\{\langle\tuplecolor{\inputPredColor}{\emptyset}, \tuplecolor{\outputPredColor}{\emptyset}, \tuplecolor{\statePredColor}{\langle \rangle}\rangle\}}, $&$\tuplecolor{\specialPredColor}{1}\rangle$ \\\hline
		$\langle\tuplecolor{black}{\{\langle\tuplecolor{\inputPredColor}{\{b\}}, \tuplecolor{\outputPredColor}{\{b\}}, \tuplecolor{\statePredColor}{\langle b\rangle}\rangle\}}, $&$\tuplecolor{\specialPredColor}{1}\rangle$ \\\hline
		$\langle\tuplecolor{black}{\{\langle\tuplecolor{\inputPredColor}{\emptyset}, \tuplecolor{\outputPredColor}{\emptyset}, \tuplecolor{\statePredColor}{\langle \rangle}\rangle}, $& \multirow{2}{*}{$\tuplecolor{\specialPredColor}{1}\rangle$} \\ %
		$\hspace{1.05em}\tuplecolor{black}{\langle\tuplecolor{\inputPredColor}{\{b\}}, \tuplecolor{\outputPredColor}{\{b\}}, \tuplecolor{\statePredColor}{\langle b\rangle}\rangle\},}$ %
	\end{tabular}%
};
\node (leaf0b) [stdnodenum,left=of leaf0x,xshift=0.6em,yshift=0pt]{%
	\begin{tabular}{c}%
		\multirow{1}{*}{$i$}\\ %
		\hline\hline
		$1$ \\\hline
		$2$ \\\hline
		\multirow{2}{*}{$3$}\\\\ %
	\end{tabular}%
};
\node (leaf2b) [stdnodenum,left=6.5em of j,xshift=-0.25em,yshift=-11.5em]  {%
	\begin{tabular}{c}%
		\multirow{1}{*}{$i$}\\ %
		\hline\hline
		$1$\\\specialrule{.1em}{.05em}{.05em}	%
		$2$\\\specialrule{.1em}{.05em}{.05em}	%
		$3$\\\hline
		$4$\\\hline
		\multirow{2}{*}{$5$}\\\\ %
	\end{tabular}%
};
\node (leaf2) [stdnodetable,right=-3em of leaf2b, label={[tdlabel, xshift=0.1em,yshift=-0.25em]right:$\pi_{9}$}]  {%
	\begin{tabular}{l@{\hspace{0.0em}}l@{\hspace{0.0em}}r}%
		\multicolumn{1}{l}{$\langle \tuplecolor{black}{\nu_{9.i}}, $}&\multicolumn{1}{r}{$\tuplecolor{\specialPredColor}{c_{9.i}} \rangle$}\\
		\hline\hline
		$\langle\tuplecolor{black}{\{\langle\tuplecolor{\inputPredColor}{\{d\}}, \tuplecolor{\outputPredColor}{\emptyset}, \tuplecolor{\statePredColor}{\langle \rangle}\rangle\}}, $&$\tuplecolor{\specialPredColor}{1}\rangle$ \\\specialrule{.1em}{.05em}{.05em}	
		$\langle\tuplecolor{black}{\{\langle\tuplecolor{\inputPredColor}{\{e\}}, \tuplecolor{\outputPredColor}{\{e\}}, \tuplecolor{\statePredColor}{\langle e\rangle}\rangle\}}, $&$\tuplecolor{\specialPredColor}{1}\rangle$ \\\specialrule{.1em}{.05em}{.05em}	
		$\langle\tuplecolor{black}{\{\langle\tuplecolor{\inputPredColor}{\{d,e\}}, \tuplecolor{\outputPredColor}{\{e\}}, \tuplecolor{\statePredColor}{\langle d,e\rangle}\rangle\}}, $&$\tuplecolor{\specialPredColor}{1}\rangle$ \\\hline
		$\langle\tuplecolor{black}{\{\langle\tuplecolor{\inputPredColor}{\{d,e\}}, \tuplecolor{\outputPredColor}{\{e\}}, \tuplecolor{\statePredColor}{\langle e,d\rangle}\rangle\}}, $&$\tuplecolor{\specialPredColor}{1}\rangle$ \\\hline
		$\langle\tuplecolor{black}{\{\langle\tuplecolor{\inputPredColor}{\{d,e\}}, \tuplecolor{\outputPredColor}{\{e\}}, \tuplecolor{\statePredColor}{\langle d,e\rangle}\rangle}, $& \multirow{2}{*}{$\tuplecolor{\specialPredColor}{1}\rangle$} \\ %
		$\hspace{1.05em}\tuplecolor{black}{\langle\tuplecolor{\inputPredColor}{\{d,e\}}, \tuplecolor{\outputPredColor}{\{e\}}, \tuplecolor{\statePredColor}{\langle e,d\rangle}\rangle\},}$ %
	\end{tabular}%
};
\node (joinrb2) [stdnodenum,left=-0.45em of leaf2] {%
	\begin{tabular}{c}
		\multirow{1}{*}{$i$}\\
		\hline\hline
		$1$ \\\specialrule{.1em}{.05em}{.05em}	
		$2$ \\\specialrule{.1em}{.05em}{.05em}	
		$3$ \\\hline
		$4$ \\\hline
		\multirow{2}{*}{$5$} \\\\
	\end{tabular}%
};
\coordinate (middle) at ($ (leaf1.north east)!.5!(leaf2.north west) $);
\node (join) [stdnodetable,left=-0.2em of i13, yshift=4.5em, label={[tdlabel, xshift=0.1em,yshift=-0.15em]below right:$\pi_{{13}}$}] {%
	\begin{tabular}{l@{\hspace{0.0em}}l@{\hspace{0.0em}}r}%
		\multicolumn{1}{l}{$\langle \tuplecolor{black}{\nu_{13.i}}, $}&\multicolumn{1}{r}{$\tuplecolor{\specialPredColor}{c_{13.i}} \rangle$}\\
		\hline\hline
		$\langle\tuplecolor{black}{\{\langle\tuplecolor{\inputPredColor}{\emptyset}, \tuplecolor{\outputPredColor}{\emptyset}, \tuplecolor{\statePredColor}{\langle \rangle}\rangle\}}, $&$\tuplecolor{\specialPredColor}{2}\rangle$ \\\hline
		$\langle\tuplecolor{black}{\{\langle\tuplecolor{\inputPredColor}{\{b\}}, \tuplecolor{\outputPredColor}{\{b\}}, \tuplecolor{\statePredColor}{\langle b\rangle}\rangle\}}, $&$\tuplecolor{\specialPredColor}{2}\rangle$ \\\hline
		$\langle\tuplecolor{black}{\{\langle\tuplecolor{\inputPredColor}{\emptyset}, \tuplecolor{\outputPredColor}{\emptyset}, \tuplecolor{\statePredColor}{\langle \rangle}\rangle}, $& \multirow{2}{*}{$\tuplecolor{\specialPredColor}{1}\rangle$} \\ %
		$\hspace{1.05em}\tuplecolor{black}{\langle\tuplecolor{\inputPredColor}{\{b\}}, \tuplecolor{\outputPredColor}{\{b\}}, \tuplecolor{\statePredColor}{\langle b\rangle}\rangle\},}$ %
	\end{tabular}%
};
\node (joinb) [stdnodenum,left=-0.45em of join] {%
	\begin{tabular}{c}
		\multirow{1}{*}{$i$}\\
		\hline\hline
		$1$ \\\hline
		$2$ \\\hline
		\multirow{2}{*}{$3$}\\\\
	\end{tabular}%
};
\node (leaf0) [stdnodetable,below=-1.5em of l1, xshift=-5em, label={[tdlabel, xshift=0.1em,yshift=0.15em]right:$\pi_{1}$}] {%
	\begin{tabular}{l@{\hspace{0.0em}}l@{\hspace{0.0em}}r}%
		\multicolumn{1}{l}{$\langle \tuplecolor{black}{\nu_{1.i}}, $}&\multicolumn{1}{r}{$\tuplecolor{\specialPredColor}{c_{1.i}} \rangle$}\\
		\hline\hline
		$\langle\tuplecolor{black}{\{\langle\tuplecolor{\inputPredColor}{\emptyset}, \tuplecolor{\outputPredColor}{\emptyset}, \tuplecolor{\statePredColor}{\langle\rangle}\rangle\}}, $&$\tuplecolor{\specialPredColor}{1}\rangle$
	\end{tabular}%
};
\node (leaf0r) [stdnodetable,above=0.5em of rt, xshift=1.1em, label={[tdlabel, xshift=0.1em,yshift=-0.1em]below right:$\pi_{14}$}] {%
	\begin{tabular}{l@{\hspace{0.0em}}l@{\hspace{0.0em}}r}%
		\multicolumn{1}{l}{$\langle \tuplecolor{black}{\nu_{14.i}}, $}&\multicolumn{1}{r}{$\tuplecolor{\specialPredColor}{c_{14.i}} \rangle$}\\
		\hline\hline
		$\langle\tuplecolor{black}{\{\langle\tuplecolor{\inputPredColor}{\emptyset}, \tuplecolor{\outputPredColor}{\emptyset}, \tuplecolor{\statePredColor}{\langle\rangle}\rangle\}}, $&$\tuplecolor{\specialPredColor}{3}\rangle$
	\end{tabular}%
};\node (leaf0nr) [stdnodenum,yshift=0.0em, left=-0.5em of leaf0r] {%
	\begin{tabular}{c}%
		\multirow{1}{*}{$i$}\\ %
		\hline\hline
		$1$
	\end{tabular}%
};
\node (leaf0n) [stdnodenum,yshift=0.0em, left=-0.5em of leaf0] {%
	\begin{tabular}{c}%
		\multirow{1}{*}{$i$}\\ %
		\hline\hline
		$1$
	\end{tabular}%
};
\node (joinrrt) [stdnodetable,right=6.5em of i13, yshift=3em, label={[tdlabel, xshift=0.1em,yshift=+0.25em]right:$\pi_{{12}}$}] {%
		\begin{tabular}{l@{\hspace{0.0em}}l@{\hspace{0.0em}}r}%
		\multicolumn{1}{l}{$\langle \tuplecolor{black}{\nu_{12.i}}, $}&\multicolumn{1}{r}{$\tuplecolor{\specialPredColor}{c_{12.i}} \rangle$}\\
		\hline\hline
		$\langle\tuplecolor{black}{\{\langle\tuplecolor{\inputPredColor}{\emptyset}, \tuplecolor{\outputPredColor}{\emptyset}, \tuplecolor{\statePredColor}{\langle \rangle}\rangle\}}, $&$\tuplecolor{\specialPredColor}{2}\rangle$ \\\hline
		$\langle\tuplecolor{black}{\{\langle\tuplecolor{\inputPredColor}{\{b\}}, \tuplecolor{\outputPredColor}{\emptyset}, \tuplecolor{\statePredColor}{\langle b \rangle}\rangle\}}, $&$\tuplecolor{\specialPredColor}{2}\rangle$ \\\hline
		$\langle\tuplecolor{black}{\{\langle\tuplecolor{\inputPredColor}{\{b\}}, \tuplecolor{\outputPredColor}{\{b\}}, \tuplecolor{\statePredColor}{\langle b \rangle}\rangle\}}, $&$\tuplecolor{\specialPredColor}{1}\rangle$ \\\hline
		$\langle\tuplecolor{black}{\{\langle\tuplecolor{\inputPredColor}{\emptyset}, \tuplecolor{\outputPredColor}{\emptyset}, \tuplecolor{\statePredColor}{\langle \rangle}\rangle}, \langle\tuplecolor{\inputPredColor}{\{b\}}, \tuplecolor{\outputPredColor}{\emptyset}, \tuplecolor{\statePredColor}{\langle b\rangle}\rangle\},$&{$\tuplecolor{\specialPredColor}{1}\rangle$} \\\hline
		$\langle\tuplecolor{black}{\{\langle\tuplecolor{\inputPredColor}{\emptyset}, \tuplecolor{\outputPredColor}{\emptyset}, \tuplecolor{\statePredColor}{\langle \rangle}\rangle}, \langle\tuplecolor{\inputPredColor}{\{b\}}, \tuplecolor{\outputPredColor}{\{b\}}, \tuplecolor{\statePredColor}{\langle b\rangle}\rangle\}, $&{$ \tuplecolor{\specialPredColor}{0}\rangle$} \\\hline
		$\langle\tuplecolor{black}{\{\langle\tuplecolor{\inputPredColor}{\{b\}}, \tuplecolor{\outputPredColor}{\emptyset}, \tuplecolor{\statePredColor}{\langle b \rangle}\rangle}, \langle\tuplecolor{\inputPredColor}{\{b\}}, \tuplecolor{\outputPredColor}{\{b\}}, \tuplecolor{\statePredColor}{\langle b\rangle}\rangle\}, $& {$\tuplecolor{\specialPredColor}{1}\rangle$} \\\hline
		$\langle\tuplecolor{black}{\{\langle\tuplecolor{\inputPredColor}{\emptyset}, \tuplecolor{\outputPredColor}{\emptyset}, \tuplecolor{\statePredColor}{\langle \rangle}\rangle, \langle\tuplecolor{\inputPredColor}{\{b\}}, \tuplecolor{\outputPredColor}{\emptyset}, \tuplecolor{\statePredColor}{\langle b \rangle}\rangle}, $&\multirow{2}{*}{$\tuplecolor{\specialPredColor}{0}\rangle$} \\
		\hspace{1.05em}$\tuplecolor{black}{\langle\tuplecolor{\inputPredColor}{\{b\}}, \tuplecolor{\outputPredColor}{\{b\}}, \tuplecolor{\statePredColor}{\langle b\rangle}\rangle\}}, $ %
	\end{tabular}%
};
\node (joinrbrt) [stdnodenum,left=-0.45em of joinrrt] {%
	\begin{tabular}{c}
		\multirow{1}{*}{$i$}\\
		\hline\hline
		$1$ \\\hline
		$2$ \\\hline
		$3$ \\\hline
		{$4$}\\\hline
		{$5$}\\\hline
		{$6$}\\\hline
		\multirow{2}{*}{$7$}\\\\
	\end{tabular}%
};
\node (joinr) [stdnodetable,right=6.5em of i13, yshift=-7em, label={[tdlabel, xshift=-1.1em,yshift=+0.1em]above right:$\pi_{{10}}$}] {%
	\begin{tabular}{l@{\hspace{0.0em}}l@{\hspace{0.0em}}r}%
		\multicolumn{1}{l}{$\langle \tuplecolor{black}{\nu_{10.i}}, $}&\multicolumn{1}{r}{$\tuplecolor{\specialPredColor}{c_{10.i}} \rangle$}\\
		\hline\hline
		$\langle\tuplecolor{black}{\{\langle\tuplecolor{\inputPredColor}{\{d\}}, \tuplecolor{\outputPredColor}{\{d\}}, \tuplecolor{\statePredColor}{\langle d \rangle}\rangle\}}, $&$\tuplecolor{\specialPredColor}{1}\rangle$ \\\hline
		$\langle\tuplecolor{black}{\{\langle\tuplecolor{\inputPredColor}{\{b,d\}}, \tuplecolor{\outputPredColor}{\{d\}}, \tuplecolor{\statePredColor}{\langle b,d \rangle}\rangle\}}, $&$\tuplecolor{\specialPredColor}{1}\rangle$ \\\hline
		$\langle\tuplecolor{black}{\{\langle\tuplecolor{\inputPredColor}{\{d\}}, \tuplecolor{\outputPredColor}{\{d\}}, \tuplecolor{\statePredColor}{\langle d\rangle}\rangle}, $& \multirow{2}{*}{$\tuplecolor{\specialPredColor}{1}\rangle$} \\ %
		$\hspace{1.05em}\tuplecolor{black}{\langle\tuplecolor{\inputPredColor}{\{b,d\}}, \tuplecolor{\outputPredColor}{\{d\}}, \tuplecolor{\statePredColor}{\langle b,d\rangle}\rangle\},}$\\\specialrule{.1em}{.05em}{.05em}	
		$\langle\tuplecolor{black}{\{\langle\tuplecolor{\inputPredColor}{\{b,e\}}, \tuplecolor{\outputPredColor}{\{e\}}, \tuplecolor{\statePredColor}{\langle b,e\rangle}\rangle\}}, $&$\tuplecolor{\specialPredColor}{1}\rangle$ \\\hline
		$\langle\tuplecolor{black}{\{\langle\tuplecolor{\inputPredColor}{\{b,e\}}, \tuplecolor{\outputPredColor}{\{b,e\}}, \tuplecolor{\statePredColor}{\langle e,b\rangle}\rangle\}}, $&$\tuplecolor{\specialPredColor}{1}\rangle$ \\\hline
		$\langle\tuplecolor{black}{\{\langle\tuplecolor{\inputPredColor}{\{b,e\}}, \tuplecolor{\outputPredColor}{\{e\}}, \tuplecolor{\statePredColor}{\langle b,e\rangle}\rangle}, $& \multirow{2}{*}{$\tuplecolor{\specialPredColor}{1}\rangle$} \\ %
		$\hspace{1.05em}\tuplecolor{black}{\langle\tuplecolor{\inputPredColor}{\{b,e\}}, \tuplecolor{\outputPredColor}{\{b,e\}}, \tuplecolor{\statePredColor}{\langle e,b\rangle}\rangle\},}$\\\specialrule{.1em}{.05em}{.05em}	
		$\langle\tuplecolor{black}{\{\langle\tuplecolor{\inputPredColor}{\{d,e\}}, \tuplecolor{\outputPredColor}{\{d,e\}}, \tuplecolor{\statePredColor}{\langle d,e\rangle}\rangle\}}, $&$\tuplecolor{\specialPredColor}{1}\rangle$ \\\hline
		$\langle\tuplecolor{black}{\{\langle\tuplecolor{\inputPredColor}{\{d,e\}}, \tuplecolor{\outputPredColor}{\{d,e\}}, \tuplecolor{\statePredColor}{\langle e,d\rangle}\rangle\}}, $&$\tuplecolor{\specialPredColor}{1}\rangle$ \\\hline
		$\langle\tuplecolor{black}{\{\langle\tuplecolor{\inputPredColor}{\{d,e\}}, \tuplecolor{\outputPredColor}{\{d,e\}}, \tuplecolor{\statePredColor}{\langle d,e\rangle}\rangle}, $& \multirow{2}{*}{$\tuplecolor{\specialPredColor}{1}\rangle$} \\ %
		$\hspace{1.05em}\tuplecolor{black}{\langle\tuplecolor{\inputPredColor}{\{d,e\}}, \tuplecolor{\outputPredColor}{\{d,e\}}, \tuplecolor{\statePredColor}{\langle e,d\rangle}\rangle\},}$ %
	\end{tabular}%
};
\node (joinr2) [stdnodetable,right=0.5em of joinr, yshift=0em, label={[tdlabel, xshift=-1.1em,yshift=+0.1em]above right:$\pi_{{11}}$}] {%
	\begin{tabular}{l@{\hspace{0.0em}}l@{\hspace{0.0em}}r}%
		\multicolumn{1}{l}{$\langle \tuplecolor{black}{\nu_{11.i}}, $}&\multicolumn{1}{r}{$\tuplecolor{\specialPredColor}{c_{11.i}} \rangle$}\\
		\hline\hline
		$\langle\tuplecolor{black}{\{\langle\tuplecolor{\inputPredColor}{\{d\}}, \tuplecolor{\outputPredColor}{\{d\}}, \tuplecolor{\statePredColor}{\langle d \rangle}\rangle\}}, $&$\tuplecolor{\specialPredColor}{2}\rangle$ \\\hline
		$\langle\tuplecolor{black}{\{\langle\tuplecolor{\inputPredColor}{\{b,d\}}, \tuplecolor{\outputPredColor}{\{d\}}, \tuplecolor{\statePredColor}{\langle b,d \rangle}\rangle\}}, $&$\tuplecolor{\specialPredColor}{1}\rangle$ \\\hline
		$\langle\tuplecolor{black}{\{\langle\tuplecolor{\inputPredColor}{\{d\}}, \tuplecolor{\outputPredColor}{\{d\}}, \tuplecolor{\statePredColor}{\langle d\rangle}\rangle}, $& \multirow{2}{*}{$\tuplecolor{\specialPredColor}{1}\rangle$} \\ %
		$\hspace{1.05em}\tuplecolor{black}{\langle\tuplecolor{\inputPredColor}{\{b,d\}}, \tuplecolor{\outputPredColor}{\{d\}}, \tuplecolor{\statePredColor}{\langle b,d\rangle}\rangle\},}$\\\specialrule{.1em}{.05em}{.05em}	
		$\langle\tuplecolor{black}{\{\langle\tuplecolor{\inputPredColor}{\{b\}}, \tuplecolor{\outputPredColor}{\emptyset}, \tuplecolor{\statePredColor}{\langle b\rangle}\rangle\}}, $&$\tuplecolor{\specialPredColor}{1}\rangle$ \\\hline
		$\langle\tuplecolor{black}{\{\langle\tuplecolor{\inputPredColor}{\{b\}}, \tuplecolor{\outputPredColor}{\{b\}}, \tuplecolor{\statePredColor}{\langle b\rangle}\rangle\}}, $&$\tuplecolor{\specialPredColor}{1}\rangle$ \\\hline
		$\langle\tuplecolor{black}{\{\langle\tuplecolor{\inputPredColor}{\{b\}}, \tuplecolor{\outputPredColor}{\emptyset}, \tuplecolor{\statePredColor}{\langle b\rangle}\rangle}, $& \multirow{2}{*}{$\tuplecolor{\specialPredColor}{1}\rangle$} \\ %
		$\hspace{1.05em}\tuplecolor{black}{\langle\tuplecolor{\inputPredColor}{\{b\}}, \tuplecolor{\outputPredColor}{\{b\}}, \tuplecolor{\statePredColor}{\langle b\rangle}\rangle\},}$\\\specialrule{.1em}{.05em}{.05em}	
		$\langle\tuplecolor{black}{\{\langle\tuplecolor{\inputPredColor}{\{d,e\}}, \tuplecolor{\outputPredColor}{\{d,e\}}, \tuplecolor{\statePredColor}{\langle d,e\rangle}\rangle\}}, $&$\tuplecolor{\specialPredColor}{1}\rangle$
	\end{tabular}%
};
\node (joinrb2) [stdnodenum,right=-0.45em of joinr2] {%
	\begin{tabular}{c}
		\multirow{1}{*}{$i$}\\
		\hline\hline
		$1$ \\\hline
		$2$ \\\hline
		\multirow{2}{*}{$3$} \\\\\specialrule{.1em}{.05em}{.05em}	
		$4$ \\\hline
		$5$ \\\hline
		\multirow{2}{*}{$6$} \\\\\specialrule{.1em}{.05em}{.05em}	
		$7$
	\end{tabular}%
};
\node (joinrb) [stdnodenum,left=-0.45em of joinr] {%
	\begin{tabular}{c}
		\multirow{1}{*}{$i$}\\
		\hline\hline
		$1$ \\\hline
		$2$ \\\hline
		\multirow{2}{*}{$3$} \\\\\specialrule{.1em}{.05em}{.05em}	
		$4$ \\\hline
		$5$ \\\hline
		\multirow{2}{*}{$6$} \\\\\specialrule{.1em}{.05em}{.05em}	
		$7$ \\\hline
		$8$ \\\hline
		\multirow{2}{*}{$9$} \\\\
	\end{tabular}%
};
\coordinate (top) at ($ (leaf2.north east)+(0.6em,-0.5em) $);
\coordinate (bot) at ($ (top)+(0,-12.9em) $);

\draw [<-] (j) to (rt);
\draw [->] (j) to ($ (i13.north)$);
\draw [->] (j) to ($ (r22.north)$);
\draw [->](r2) to (i22);
\draw [->](r22) to (r2);
\draw [<-](i2) to (i22);
\draw [<-](l2) to (i2);
\draw [<-](l1) to (i1);
\draw [->](i12) to (i1);
\draw [->](i13) to (i12);
\draw [<-](bc) to (l2);
\draw [<-](cc) to (bc);
\draw [<-](lrx) to (cc);

\draw [dashed, bend right=40] (leaf0r) to (rt);
\draw [dashed, bend left=15] (joinrrt) to (r22);
\draw [dashed] (j) to (join);
\draw [dashed, bend right=15] (i2) to (leaf2);
\draw [dashed, bend left=5] (i12) to (leaf1);
\draw [dashed, bend left=22] (leaf0) to (l1);
\draw [dashed, bend right=1] (leaf0x) to (i13);
\draw [dashed, bend right=45] (joinr) to (i22);
\end{tikzpicture}
\caption{Selected tables of~$\pi$ obtained by~$\dpa_{\algo{PROJ}}$ on
  TD~${\cal T}$ and purged table mapping~$\nu$ (obtained by purging on~$\tau$, c.f, %
  Figure~\ref{fig:running2}).} %
\label{fig:running3}
\end{figure*}

\begin{example} %
  Recall instance~$(\prog,P)$, TD~$\TTT$, and tables~$\tab{1}$,
  $\ldots$, $\tab{14}$ from Examples~\ref{ex:running0}, \ref{ex:sat},
  and Figure~\ref{fig:running2}. Figure~\ref{fig:running3} depicts
  selected tables of~$\pi_1, \ldots, \pi_{14}$ obtained after
  running~$\dpa_\PROJ$ for counting projected answer sets.
  We assume that row $i$ in table $\pi_t$ corresponds to
  $\vec{v_{t.i}} = \langle \rho_{t.i}, c_{t.i} \rangle$
  where~$\rho_{t.i}\subseteq\nu(t)$.
  Recall that for some nodes~$t$, there are rows among
  different~$\PRIM$-tables that are removed (highlighted gray in Figure~\ref{fig:running2}) during purging. %
  By purging we avoid to correct stored counters (backtracking)
  whenever a row has no ``succeeding'' row in the
  parent table.
  
  Next, we discuss selected rows obtained by
  $\dpa_\PROJ((\prog,P),(T,\chi,\nu))$. Tables $\pi_1$, $\ldots$,
  $\pi_{14}$ are shown in Figure~\ref{fig:running3}.
  Since~$\type(t_1)= \leaf$, we have
  $\pi_1=\langle\{\langle \emptyset , \emptyset, \langle \rangle
  \rangle \}, 1\rangle$.  Intuitively, at~$t_1$ the
  row~$\langle\emptyset, \emptyset, \langle\rangle\rangle$ belongs to~$1$ bucket.
  Node~$t_2$ introduces atom~$a$, which results in
  table~$\pi_2\eqdef\big\SB\langle \{\vec{u_{2.1}}\},
  1\rangle, \langle \{\vec{u_{2.2}}\},
  1\rangle, \langle \{\vec{u_{2.1}}, \vec{u_{2.2}}\},
  1\rangle\big\SE$, where~$\vec{u_{2.1}}=\langle \emptyset, \emptyset, \langle \rangle\rangle$ and~$\vec{u_{2.2}}=\langle \{a\}, \emptyset, \langle a\rangle \rangle$ 
  (derived similarly to table~$\pi_{4}$ as in Example~\ref{ex:pcnt}). 
  Node~$t_{10}$ introduces projected atom~$e$, and
  node~$t_{11}$ removes~$e$.  
  For row~$\vec{v_{11.1}}$ we compute the
  count~$\ipmc(t_{11},\{\vec{\tabval_{11.1}}\},
  \langle\pi_{10}\rangle)$ by means of~$\pcnt$. Therefore, take
  for~$\varphi$ the singleton set~$\{\vec{\tabval_{11.1}}\}$.
  We simply have
  $\ipmc(t_{11},\{\vec{\tabval_{11.1}}\}, \langle\pi_{10}\rangle) =
  \pmc(t_{11},\{\vec{\tabval_{11.1}}\}, \langle\pi_{10}\rangle)$.  To
  compute
  $\pmc(t_{11},\{\vec{\tabval_{11.1}}\}, \langle\pi_{10}\rangle)$, we
  take for~$O$ the sets~$\{\vec{u_{10.1}}\}$, $\{\vec{u_{10.6}}\}$,
  $\{\vec{u_{10.7}}\}$, and~$\{\vec{u_{10.6}}, \vec{u_{10.7}}\}$ into
  account, since all other non-empty subsets of origins
  of~$\vec{\tabval_{11.1}}$ in~$\nu_{10}$ do not occur in~$\pi_{10}$.
  Then, we take the sum over the values
  $\sipmc(\langle \pi_{10}\rangle,\{\vec{\tabval_{10.1}}\})=1$,
  $\sipmc(\langle \pi_{10}\rangle,\{\vec{\tabval_{10.6}}\})=1$,
  $\sipmc(\langle \pi_{10}\rangle,\{\vec{\tabval_{10.7}}\})=1$ and
  subtract
  $\sipmc(\langle \pi_{10}\rangle,\{\vec{\tabval_{10.6}},
  \vec{\tabval_{10.7}}\})=1$. This results
  in~$\pmc(t_{11},\{\vec{\tabval_{11.1}}\}, \langle\pi_{10}\rangle) =
  c_{10.1} + c_{10.7}\; + $ $ c_{10.8} - c_{10.9} = 2$. We proceed similarly
  for row~$v_{11.2}$, resulting in~$c_{11.2}=1$.
  Then for row~$v_{11.3}$,
  $\ipmc(t_{11},\{\vec{\tabval_{11.1}},\vec{\tabval_{11.6}}\}, \langle\pi_{10}\rangle) = | %
  \pmc(t_{11},\{\vec{\tabval_{11.1}},\vec{\tabval_{11.6}}\}, \langle\pi_{10}\rangle) - \ipmc(t_{11},\{\vec{\tabval_{11.1}}\}, \langle\pi_{10}\rangle)$ $- \ipmc(t_{11},\{\vec{\tabval_{11.6}}\}, \langle\pi_{10}\rangle) | = \Card{2-c_{11.1}-c_{11.2}}= \Card{2 -2 - 1} =\Card{-1} = 1 = c_{11.3}$.
  Hence, $c_{11.3} = 1$ represents the number of projected answer sets,
  both rows~$\vec{u_{11.1}}$ and~$\vec{u_{11.6}}$ have in common. We
  then use it for table~$t_{12}$.  Node~$t_{12}$ removes projection
  atom~$d$.  For node~$t_{13}$ where $\type(t_{13}) = \join$ one
  multiplies stored $\sipmc$ values for \AlgA-rows in the two children
  of~$t_{13}$ accordingly.  In the end, the projected answer sets count
  of~$\prog$ corresponds to~$\sipmc(\langle\pi_{14}\rangle,\vec{u_{14.1}})=3$.
\end{example}

\subsection{Runtime Analysis and Correctness}
Next, we present asymptotic upper bounds on the runtime of our
Algorithm~$\dpa_{\PROJ}$.  
We assume~$\gamma(n)$ to be the number of operations that are required
to multiply two~$n$-bit integers, which can be achieved in time
$n\cdot log\, n \cdot log\, log\,n$~\cite{Knuth1998,Harvey2016}.  
Often even constant-time multiplication is assumed.

\begin{theorem}%
  \label{thm:runtime}
  Given a \PASP instance~$(\prog,P)$ and a tabled tree decomposition
  $\TTT_{\text{purged}} = (T,\chi,\nu)$ of~$\mathcal{G}_\prog$ of width~$k$ with $g$
  nodes. Then, $\dpa_{\PROJ}$ runs in time
  $\mathcal{O}(2^{4m}\cdot g \cdot \gamma(\CCard{\prog}))$
  where~$m\eqdef \max(\{\nu(t) \mid t\in N\})$.
\end{theorem}
\begin{proof}
  Let~$d = k+1$ be maximum bag size of the TD~$\TTT$. For each
  node~$t$ of $T$, we consider the table $\nu(t)$ of $\TTT_{\text{purged}}$.
  Let TDD~$(T,\chi,\pi)$ be the output of~$\dpa_\PROJ$. In the worst-case,
  we store in~$\pi(t)$ each subset~$\rho \subseteq \nu(t)$ together
  with exactly one counter. Hence, we have at most $2^{m}$ many rows
  in $\rho$.
  In order to compute $\ipmc$ for~$\rho$, we consider every
  subset~$\varphi \subseteq \rho$ and compute~$\pcnt$. Since
  $\Card{\rho}\leq m$, we have at most~$2^{m}$ many subsets $\varphi$
  of $\rho$. Finally, for computing $\pcnt$, we consider in the worst
  case each subset of the origins of~$\varphi$ for each child table,
  which are at most~$2^{m}\cdot 2^{m}$ because of nodes~$t$
  with~$\type(t)=\join$.
  In total, we obtain a runtime bound
  of~$\bigO{2^{m} \cdot 2^{m} \cdot 2^{m}\cdot 2^{m} \cdot
    \gamma(\CCard{\prog})} \subseteq \bigO{2^{4m} \cdot
    \gamma(\CCard{\prog}})$ due to multiplication of two $n$-bit
  integers for nodes~$t$ with~$\type(t)=\join$ at costs~$\gamma(n)$.
  Then, we apply this to every node of~$T$ %
  resulting in
  runtime~$\bigO{2^{4m} \cdot g \cdot \gamma(\CCard{\prog})}$.
\end{proof}

\begin{corollary}\label{cor:runtime}
  Given an instance $(\prog,P)$ of \PASP where $\prog$ is
  head-cycle-free and has treewidth~$k$. Then, $\mdpa{\PRIM}$ runs in
  time~$\mathcal{O}(2^{3^{k+1.27}\cdot k!}\cdot \CCard{\prog}\cdot
  \gamma(\CCard{\prog}))$.
\end{corollary}
\begin{proof}
  We can compute in time~$2^{\mathcal{O}(k^3)}\cdot\CCard{\mathcal{G}_\prog}$ a
  TD~${\cal T'}$ with~$g\leq \CCard{\prog}$ nodes of width at
  most~$k$~\cite{Bodlaender96}. Then, we can simply
  run~$\dpa_{\PRIM}$, which runs in
  time~$\mathcal{O}({3^{k}\cdot k!}\cdot \CCard{\prog})$ by
  Theorem~\ref{thm:primruntime} and since the number of nodes of a
  tree decomposition is linear in the size of the input
  instance~\cite{Bodlaender96}. %
  Then, we again traverse the TD for purging and output
  $\TTT_{\text{purged}}$, which runs in time single exponential of the
  treewidth and linear of the instance size. Finally, we run
  $\dpa_{\PROJ}$ and obtain by Theorem~\ref{thm:runtime} that the
  runtime bound
  $\mathcal{O}(2^{4\cdot3^{k}\cdot k!}\cdot \CCard{\prog}\cdot
  \gamma(\CCard{\prog})) \subseteq $
  $\mathcal{O}(2^{3^{k + 1.27}\cdot k!}\cdot \CCard{\prog}\cdot
  \gamma(\CCard{\prog}))$.  %
  Hence, the corollary holds.
\end{proof}

The next result establishes lower bounds. %

\begin{theorem}
  Unless ETH fails, $\PASP$ cannot be solved in
  time~$2^{2^{o(k)}}\cdot \CCard{\prog}^{o(k)}$ for a given instance
  $(\prog,P)$ where~$k$ is the treewidth of the primal graph of~$\prog$.
\end{theorem}
\begin{proof}
  Assume for proof by contradiction that there is such an algorithm.
  We show that this contradicts a very recent
  result~\cite{LampisMitsou17,FichteEtAl18}, which states that one
  cannot decide the validity of a QBF
  $\forall{V_1}.\exists V_2.E$ in
  time~$2^{2^{o(k)}}\cdot \CCard{E}^{o(k)}$,
  where 
  $E$ is in CNF.
  Let $(\forall{V_1}.\exists V_2.E,k)$ be an instance
  of~$\forall\exists$-\SAT parameterized by the treewidth~$k$. Then,
  we reduce to an instance~$((\prog,P),2k)$ of the decision
  version~$\PASP$-exactly-$2^{\Card{V_1}}$ when parameterized by
  treewidth of~$\mathcal{G}_\prog$ such that $P=V_1$, the number of solutions is
  exactly~$2^{\Card{V_1}}$, and~$\prog$ is as follows.  For
  each~$v\in V_1 \cup V_2$, program~$\prog$ contains %
  rule~$v \lor nv \hsep$.
  Each clause~$x_1, \ldots, x_i, \neg x_{i+1}, \ldots, \neg x_j$
  results in one additional
  rule~$\hsep \neg x_1,\ldots, \neg x_i, x_{i+1}, \ldots, x_{j}$.
  It is easy to see that the reduction is correct
  and therefore instance~$((\prog,P), 2k)$ is
  a yes instance of %
  $\PASP$-exactly-$2^{\Card{V_1}}$ 
  if and only if~$(\forall{V_1}.\exists V_2.E,k)$
  is a yes instance of %
  problem~$\forall\exists$-\SAT. %
  In fact, the reduction is also an fpl-reduction, since the treewidth
  of $\prog$ at most doubles due to duplication of atoms.
  Note that we require an \emph{fpl}-reduction here, as results do not
  carry over from simple fpt-reductions.
  This concludes the proof and establishes the theorem.
\end{proof}

\longversion{
\begin{corollary}
  Unless ETH fails, $\PASP$ cannot be solved in
  time~$2^{2^{o(k)}}\cdot \CCard{\prog}^{o(k)}$ for a given instance
  $(\prog,P)$ where~$k$ is the treewidth of the incidence graph of~$\prog$.
\end{corollary}
\begin{proof}
  Let $w_i$ and $w_p$ be the treewidth of the incidence graph and
  primal graph of~$\prog$, respectively. Then,
  $w_i \leq w_p +1$~\cite{SamerSzeider10b}, which establishes the
  claim.
\end{proof}

\begin{corollary}
  Given an instance $(\prog,P)$ of \PASP where $\prog$ has treewidth~$k$. Then,
  Algorithm~$\mdpa{\AlgA}$ runs in
  time~$2^{2^{\Theta(k)}} \cdot \CCard{\prog}^c$ for some positive
  integer~$c$.
\end{corollary}}

Finally, we state that indeed~$\mdpa{\PRIM}$ gives the projected answer sets count of a given head-cycle-free program~$\prog$.

\begin{proposition}[$\star$]\label{prop:phcworks}
  Algorithm $\mdpa{\PRIM}$ is correct and outputs for any instance
  of \PASP its projected answer sets count.
\end{proposition}
\begin{proof}
Soundness follows by establishing an invariant for any row of~$\pi(t)$ guaranteeing that the values of~$\ipmc$ indeed capture ``all-overlapping'' counts of~$\progt{t}$. One can show that the invariant is a consequence of the properties of~\PRIM and the additional ``purging'' step, which neither destroys soundness nor completeness of~$\dpa_\PRIM$. Further, completeness guarantees that indeed all the required rows are computed.
\end{proof}

\subsection{Solving \PDASP for Disjunctive Programs}

In this section, we extend our algorithm to solve the projected answer
set counting problem (\PDASP) for disjunctive programs. Therefore, we
simply use a local algorithm \algo{PRIM} for disjunctive ASP that was
introduced in the
literature~\cite{FichteEtAl17a,JaklPichlerWoltran09}.
Recall algorithm~\mdpa{\AlgA} illustrated in
Figure~\ref{fig:multiarch}.
First, we %
construct a graph representation and heuristically compute a tree
decomposition of this graph. Then, we run $\dpa_\algo{PRIM}$ as first
traversal resulting in TTD~$(T,\chi,\tau)$. Next, we purge rows
of~$\tau$, which cannot be extended to an answer set resulting in
TTD~$(T,\chi,\nu)$. Finally, we compute the projected answer sets count
by~$\dpa_{\PROJ}$ and obtain TTD~$(T,\chi,\pi)$.

\begin{proposition}[$\star$]\label{prop:disjworks}
  $\mdpa{\algo{PRIM}}$ is correct, i.e., it outputs the projected answer sets count for any instance
  of \PDASP.
\end{proposition}

The following corollary states the runtime results.

\begin{corollary}\label{cor:disjruntime}
  Given an instance $(\prog,P)$ of \PDASP where $\prog$ is a
  disjunctive program of treewidth~$k$. Then, $\mdpa{\algo{PRIM}}$
  runs in
  time~$\mathcal{O}(2^{2^{2^{k+3}}}\cdot \CCard{\prog}\cdot
  \gamma(\CCard{\prog}))$.
\end{corollary}
\begin{proof}
  The first two steps follow the proof of Corollary~\ref{cor:runtime}.
  However, $\dpa_{\algo{PRIM}}$ runs in
  time~$\mathcal{O}(2^{2^{k+2}}\cdot
  \CCard{\prog})$~\cite{FichteEtAl17a}. Finally, we run $\dpa_{\PROJ}$
  and obtain by Theorem~\ref{thm:runtime} that
  $\mathcal{O}(2^{4\cdot2^{2^{k+2}}}\cdot \CCard{\prog}\cdot
  \gamma(\CCard{\prog})) \subseteq $
  $\mathcal{O}(2^{2^{2^{k+3}}}\cdot \CCard{\prog}\cdot
  \gamma(\CCard{\prog}))$.  
\end{proof}

Again, we are interested in whether we can improve the algorithm
significantly. While we obtain lower bounds from the ETH for~$\SAT$
(single-exponential) and
for~$\forall\exists$-\SAT/$\exists\forall$-\SAT (double-exponential),
to our knowledge it is unproven whether this extends to
$\forall\exists\forall$-\SAT and~$\exists\forall\exists$-\SAT
(triple-exponential).
Since it was anticipated by~\citex{MarxMitsou16} that it follows
just by assuming ETH, we state this as hypothesis. In particular, they
claimed that alternating quantifier alternations are the reason for
large dependence on treewidth. However, the proofs can be quite
involved, trading an additional alternation for exponential
compression.

\begin{hypothesis}\label{hyp:lampis3}
  The $\forall\exists\forall$-\SAT problem for a QBF~$Q$ in DNF of
  treewidth~$k$ cannot be decided in
  time~${2^{2^{2^{o(k)}}}}\cdot \CCard{Q}^{o(k)}$.
\end{hypothesis}

\longversion{\begin{proposition}
  Unless ETH fails, QBFs of the form $\exists V_1.\forall V_2.\cdots\forall V_\ell. E$
where~$k$ is the treewidth of the primal graph of DNF formula~$E$, cannot be solved in time~$2^{2^{\dots^{2^{o(k)}}}}\cdot \CCard{\prog}^{o(k)}$,
where the height of the tower is~$\ell$.
\end{proposition}

\begin{proof}[Idea]
	Follows by construction defined in the proof of~\cite{LampisMitsou17} for QBFs of the form~$\exists V_1.\forall V_2. E$. \todo{provide rigorous proof?}
\end{proof}

\begin{corollary}
Unless ETH fails, QBFs of the form $\forall V_1.\exists V_2.\cdots\forall V_\ell. E$
where~$k$ is the treewidth of the primal graph of CNF formula~$E$, cannot be solved in time~$2^{2^{\dots^{2^{o(k)}}}}\cdot \CCard{\prog}^{o(k)}$,
where the height of the tower is~$\ell$.
\end{corollary}}

\begin{theorem}\label{thm:lowerbound_disj}
  Unless Hypothesis~\ref{hyp:lampis3} fails, \PDASP for disjunctive programs~$\prog$ cannot be
  solved in time~$2^{2^{2^{o(k)}}} \cdot \CCard{\prog}^{o(k)}$ for
  given instance~$(\prog, P)$ of treewidth~$k$.
\end{theorem}
\begin{proof}
  Assume for proof by contradiction that there is such an algorithm.
  We show that this contradicts Hypothesis~\ref{hyp:lampis3},~i.e.,
  we cannot decide the validity of a QBF %
  $Q=\forall{V_1}.\exists V_2.\forall V_3.E$ in
  time~$2^{2^{2^{o(k)}}}\cdot \CCard{E}^{o(k)}$
  where %
  $E$ is in DNF. 
  Assume we have
  given such an instance when parameterized by the treewidth~$k$.
  In the following, we employ a well-known
  reduction~$R$~\cite{EiterGottlob95}, which
  transforms~$\exists V_2.\forall V_3. E$
  into~$\prog=R(\exists V_2.\forall V_3. E)$ and gives a yes
  instance~$\prog$ of consistency if and only
  if~$\exists V_2. \forall V_3. E$ is a yes instance of
  $\exists\forall$-\SAT.
  Then, we reduce instance~$(Q,k)$ via a reduction~$S$ to an
  instance~$((\prog',V_1),2k+2)$, where $\prog'=R(\exists V_2'.\forall V_3. E)$, $V_2'\eqdef V_1\cup V_2$, of the decision
  version~$\PDASP$-exactly-$2^{\Card{V_1}}$ of~$\PDASP$ when
  parameterized by treewidth such that the number of projected answer
  sets is
  exactly~$2^{\Card{V_1}}$.
  It is easy to see that reduction~$S$
  gives a yes instance~$(\prog',V_1)$
  of~$\PDASP$-exactly-$2^{\Card{V_1}}$ if and only
  if~$\forall V_1.\exists V_2. \forall V_3. E$ is a yes instance
  of~$\forall\exists\forall$-\SAT.
  However, it remains to show that the reduction~$S$ indeed increases
  the treewidth only linearly.
  Therefore, let $\TTT=(T,\chi)$ be TD of~$E$. We transform~$\TTT$
  into a TD~$\TTT'=(T,\chi')$ of~$\mathcal{G}_{\prog'}$ as follows.  For each
  bag~$\chi(t)$ of~$\TTT$, we add vertices for the atoms~$w$ and $w'$
  (two additional atoms introduced in reduction~$R$) and in addition
  we duplicate each vertex~$v$ in~$\chi(t)$ (due to corresponding duplicate
  atoms introduced in reduction~$R$). Observe
  that~$\width(\TTT') \leq 2\cdot \width(\TTT) + 2$. By construction
  of~$R$, $\TTT'$ is then a TD of~$\mathcal{G}_{\prog'}$.
  Hence, $S$ is also an fpl-reduction.
\end{proof}

\longversion{
\begin{corollary}
Unless ETH fails, \PDASP for disjunctive programs~$\prog$ cannot be solved in time~$2^{2^{2^{o(k)}}} \cdot \CCard{\prog}^{o(k)}$ for given instance
~$(\prog, P)$ where~$k$ is the treewidth of the incidence graph of~$\prog$.
\end{corollary}}

Then, the runtime of algorithm~$\mdpa{\algo{PRIM}}$ is asymptotically
worst-case optimal, depending on multiplication costs~$\gamma(n)$. %
\longversion{
In fact, we can conclude the following
corollary, which renders algorithm~$\mdpa{\algo{PRIM}}$ asymptotically
worst-case optimal, depending on the costs~$\gamma(n)$ for multiplying
two~$n$-bit numbers.

\begin{corollary}
Unless ETH fails, \PDASP for disjunctive programs~$\prog$ runs in time~$2^{2^{2^{\Theta(k)}}} \cdot \CCard{\prog} \cdot \gamma(\CCard{\prog})$ for given instance
~$(\prog, P)$ where~$k$ is the treewidth of the primal graph of~$\prog$.
\end{corollary}
\begin{proof}
Lower bounds by Theorem~\ref{thm:lowerbound_disj}. Upper bound by Algorithm~$\dpa_{\algo{PRIM}}$ in the first pass (c.f., Corollary~\ref{cor:disjruntime}),
followed by purging and the projection algorithm~$\dpa_\PROJ$ in the second pass.
\end{proof}
}

\section{Conclusions}\label{sec:conclusions}

In the light of very recent works~\cite{EibenEtAl19,ijcai,HecherMorakWoltran20},
which provide methods on dealing with high treewidth for \SAT, QBF and other formalisms,
it seems that the full potential of treewidth has not been unleashed for ASP yet.
Towards making these advancements accessible for ASP, we present novel reductions for the important fragments of normal and HCF programs to \SAT.
We introduced novel algorithms to count the projected answer sets
(\PASP) of head-cycle-free or disjunctive programs. Our algorithms
employ dynamic programming and exploit small treewidth of the
primal graph of the input program. The second algorithm, which solves
arbitrary disjunctive programs, is expected asymptotically optimal
assuming the exponential time hypothesis (ETH).
More precisely, runtime is triple exponential in the treewidth and
polynomial in the size of the input instance. When we restrict the
input to head-cycle-free programs, the runtime drops to double
exponential.

Our results extend previous work to
answer set programming and we believe that it can be applicable to
other hard combinatorial problems, such as
circumscription~\cite{DurandHermannKolaitis05}, quantified Boolean
formulas (QBF)~\cite{CharwatWoltran16a}, or default
logic~\cite{FichteHecherSchindler18a}.
}

\section*{Acknowledgements}
We would like to thank the reviewers of the conference paper~\cite{Hecher20} as well as all involved reviewers of this work 
for their detailed, constructive and therefore extremely valuable feedback. 
Special appreciation goes to Andreas Pfandler for early discussions
as well as to Jorge Fandinno for spotting a technical issue in an earlier version of the conference paper~\cite{Hecher20}.
Finally, we acknowledge Rafael Kiesel for his contribution in an early
version of asp2sat.
This work has been supported by the Austrian Science Fund (FWF),
 Grants P32830 and Y698, as well as the Vienna Science and Technology Fund, Grant WWTF ICT19-065. 

\bibliographystyle{abbrv}
\bibliography{references}

\futuresketch{
\clearpage
\section{Additional Resources}

\subsection{Additional Examples}
\begin{example}[c.f.,\citey{FichteEtAl17a}]\label{ex:bagprog} 
  Intuitively, the tree decomposition of Figure~\ref{fig:graph-td}
  enables us to evaluate program $\prog$ by analyzing sub-programs
  $\{r_2\}$ and $\{r_3,r_4, r_5\}$, and combining results agreeing on
  $e$ followed by analyzing~$\{r_1\}$.  Indeed, for the given tree
  decomposition of Figure~\ref{fig:graph-td}, $\progt{t_1}=\{r_2\}$,
  $\progt{t_2}=\{r_3,r_4, r_5\}$ and
  $\prog=\progt{t_3}=\{r_1\} \cup \progtneq{t_3}$. Note that
  here~$\prog=\progt{t_3} \neq \progtneq{t_3}$ and the tree
  decomposition is not nice.  \longversion{For the tree decomposition
    of Figure~\ref{fig:graph-td2}, we have
    $\progt{t_1} = \{r_1,r_2\}$, %
    as well as $\progt{t_3} = \{r_3\}$.} %
\end{example}%

\subsection{Worst-Case Analysis of $\dpa_{\PRIM}$: Omitted proofs}

\begin{restateproposition}[prop:kfact]
\begin{proposition}
Given any positive integer~$i \geq 1$ and functions~$f(k)\eqdef k!, g(k) \eqdef 2^{k^{(i+1)/i}}$. Then, $f \in O(g)$.
\end{proposition}
\end{restateproposition}
\begin{proof}
We proceed by simultaneous induction.\\
Base case ($k=i=1$): Obviously, $1^{2} \geq 1!$.\\
Induction hypothesis: $k! \in O(2^{k^{(i+1)/i}})$\\
Induction step ($k \rightarrow k+1$): \\We have to show that for $k\geq k_0$ for some fixed $k_0$, the following equation holds.
\begin{align*}
  2^{(k+1)^{(i+1)/i}} \geq (k+1)\cdot k!\\
  2^{(k+1)^{1/i}\cdot(k+1)} \geq (k+1)\cdot k!\\
  2^{(k+1)^{1/i}+k\cdot(k+1)^{1/i}} \geq (k+1)\cdot k!\\
  2^{(k+1)^{1/i}}\cdot 2^{k\cdot(k+1)^{1/i}} \geq (k+1)\cdot k!\\
  2^{(k+1)^{1/i}} \cdot k! \geq^{IH} (k+1)\cdot k!\\
  2^{(k+1)^{1/i}} \geq (k+1)\\
  2^{(k+1)^{1/i}} \geq 2^{\text{log}_2(k+1)}\geq (k+1)\\
  \text{ where } k\geq k_0 \text{ for some fixed } k_0 \text{ since } \text{log}_2\in O(\text{exp}(1/i))
\end{align*}
Induction step ($k \rightarrow k+1, i \rightarrow i+1$): Analogous, previous step works for any~$i$.\\
Induction step ($i \rightarrow i+1$): Analogous.
\end{proof}

\subsection{Characterizing Extensions}

In the following, we assume~$(\prog,P)$ to be an instance of~$\PASP$. 
Further, let~$\mathcal{T}=(T,\chi,\tau)$
be an~$\AlgA$-TTD of~$\mathcal{G}_\prog$ where~$T=(N,\cdot,n)$, node~$t\in N$, and~$\rho\subseteq\tau(t)$.

\begin{definition}\label{def:extensions}
  Let $\vec u$ be a row of $\rho$.

  An \emph{extension below~$t$} is a set of pairs where a pair consist
  of a node~$t'$ of the \emph{induced sub-tree~$T[t]$ rooted at~$t$} and a row~$\vec v$ of $\tau(t')$
  and the cardinality of the set equals the number of nodes in the
  sub-tree~$T[t]$. 
  
  We define the family of \emph{extensions below~$t$}
  recursively as follows.  If $t$ is of type~\leaf, then
  $\Ext_{\leq t}(\vec u) \eqdef \{\{\langle t,\vec u\rangle\}\}$;
  otherwise
  $\Ext_{\leq t}(\vec u) \eqdef \bigcup_{\vec v \in \origs(t,\vec u)}
  \big\SB\{\langle t,\vec u\rangle\}\cup X_1 \cup \ldots \cup X_\ell
  \SM X_i\in\Ext_{\leq t_i}({\vec v}_{(i)})\big\SE$ %
  for the~$\ell$ children~$t_1, \ldots, t_\ell$ of~$t$.
  We extend this notation for an $\AlgS$-table~$\rho$ by
  $\Ext_{\leq t}(\rho)\eqdef \bigcup_{\vec u\in\rho} \Ext_{\leq
    t}(\vec u)$.  Further, we
  let~$\Exts \eqdef \Ext_{\leq n}(\tau(n))$ be the
  \emph{family of all extensions}. 
  
  Further, we define \emph{the local table for node}~$t$ and family~$E$ of extensions (below some node) as
  $\local(t,E)\eqdef \bigcup_{\hat\rho \in E}\{ \langle \vec{\tabval}\rangle \mid
  \langle t, \vec{\tabval}\rangle \in \hat{\rho}\}$.

\end{definition}

If we would construct all extensions below the root~$n$, it allows us
to also obtain all models of program~$\prog$.  To this end, we state the following definition.

\begin{definition}\label{def:satext}
  We define %
  the \emph{satisfiable
    extensions below~$t$} for~$\rho$ by
  \[\PExt_{\leq t}(\rho)\eqdef \bigcup_{\vec u\in\rho} \SB X \SM X
    \in \Ext_{\leq t}(\vec u), X \subseteq Y, Y \in \Exts\SE.\]
\end{definition}

\begin{observation}
$\PExt_{\leq n}(\tau(n)) = \Exts$.
\end{observation}

\begin{definition}
We define the \emph{purged table mapping~$\nu$ of~$\tau$} by
$\nu(t)\eqdef \local(t,\PExt_{\leq t}[\tau(t)])$ for every~$t\in N$.
\end{definition}

Next, we define an auxiliary notation that gives us a way to
reconstruct interpretations from families of extensions.

\begin{definition}\label{def:iextensions}
  Let $E$ be a family of extensions
  below~$t$. %
  We define the \emph{set~$\mathcal{I}(E)$ of interpretations} of~$E$
  by
  $\mathcal{I}(E) \eqdef \big\SB \bigcup_{\langle \cdot, \vec u
    \rangle \in X} \mathcal{I}(\vec u) \mid X \in E \big\SE$
  and the set~$\mathcal{I}_P(E)$ of \emph{projected interpretations} by
  $\mathcal{I}_P(E) \eqdef \big\SB \bigcup_{\langle \cdot, \vec u \rangle \in X}
  \mathcal{I}(\vec u) \cap P \mid X \in E \big\SE$.

\end{definition}

\begin{example} %
  Consider again program~$\prog$ and TTD~$(T,\chi,\tau)$ of~$\mathcal{G}_\prog$,
  where~$t_{14}$ is the root of~$T$, from Example~\ref{ex:sat}.
  Let~$X=\{\langle t_{13}, \langle\{b\}, \{b\}, \langle b\rangle\rangle\rangle, \langle t_{12},
  \langle\{b\}, \emptyset, \langle b\rangle\rangle\rangle,
  \langle t_{11},
  \langle\{b\}, \emptyset, \langle b\rangle\rangle\rangle,$
  $\langle t_{10},
  \langle\{b,e\}, \{e\}, \langle b,e\rangle\rangle\rangle,
  \langle t_{9},
  \langle\{e\}, \{e\}, \langle e\rangle\rangle\rangle,
  \langle t_{4},
  \langle\{b\}, \{b\},$ 
  $\langle b\rangle\rangle\rangle,
  \langle t_{3},
  \langle\{b\}, \{b\}, \langle b\rangle\rangle\rangle,
  \langle t_{1},
  \langle\emptyset, \emptyset, \langle \rangle\rangle\rangle\}$
  be an extension
  below~$t_{14}$.  Observe that~$X\in\Exts$ and that
  Figure~\ref{fig:running2} highlights those rows of tables for
  nodes~$t_{13}, t_{12}, t_{11}, t_{10}, t_{9}, t_4, t_3$ and~$t_1$ that also occur in~$X$
  (in yellow). Further, $\mathcal{I}(\{X\})=\{b,e\}$ computes the
  corresponding answer set of~$X$, and $\mathcal{I}_P(\{X\}) = \{e\}$ derives
  the projected answer sets of~$X$.  $\mathcal{I}(\Exts)$ refers to the set
  of answer sets of~$\prog$, whereas~$\mathcal{I}_P(\Exts)$ is the set
  of projected answer sets of~$\prog$.
\end{example}

\subsection{Correctness of~$\dpa_{\PRIM}$: Omitted proofs}

In the following, we assume~$\prog$ to be a head-cycle-free program. Further, let~$\mathcal{T}=(T,\chi,\tau)$
be an~$\AlgA$-TTD of~$\mathcal{G}_\prog$ where~$T=(N,\cdot,n)$ and~$t\in N$ is a node.

We state definitions required for the correctness
proofs of our algorithm \PRIM. In the end, we only store rows that
are restricted to the bag content to maintain runtime bounds. 
Similar to related work~\cite{FichteEtAl17a}, we define the
content of our tables in two steps. First, we define the properties of
so-called \emph{$\PRIM$-solutions up to~$t$}. Second, we restrict
these solutions to~\emph{$\PRIM$-row solutions} at~$t$.

\begin{definition}\label{def:globalhcf}
Let~$\hat I\subseteq\att{t}$ be an interpretation,
$\hat{\mathcal{P}}\subseteq \hat I$ be a set of atoms and~$\hat\sigma$ be an ordering over atoms~$\hat I$.
Then, $\langle \hat I, \hat{\mathcal{P}}, \hat\sigma\rangle$ is referred to as~\emph{$\PRIM$-solution up to~$t$} if the following holds.
  \begin{enumerate}
    \item~$\hat I\models\progt{t}$,
    \item for each~$a\in\hat I\cap\attneq{t}$, we have~$a\in\hat{\mathcal{P}}$, and
    \item $a\in\hat{\mathcal{P}}$ if and only if~$a$ is proven using program~$\progt{t}$ and ordering~$\hat\sigma$.
  \end{enumerate}
\end{definition}

Next, we observe that the $\PRIM$-solutions up to~$n$ suffice to capture all the answer sets.

\begin{proposition}\label{prop:hcfglobal}
The set of~$\PRIM$-solutions up to~$n$ characterizes the set of answer sets of~$\prog$.
In particular: $\{\hat I \mid  \langle \hat I, \hat{\mathcal{P}}, \hat\sigma \rangle \text{ is a } \PRIM\text{-solution up to }n\} = \{I \mid I \text{ is an answer set of }\prog\}$.
\end{proposition}
\begin{proof}
Observe that Definition~\ref{def:globalhcf} for root node~$t=n$ indeed suffices for~$\hat I$ to be a model of~$\progt{n}=\prog$,
and, moreover, every atom in~$\hat I=\hat P$ is proven in~$\prog$ by ordering~$\hat\sigma$.
\end{proof}

\begin{definition}\label{def:localhcf}
Let~$\langle \hat I, \hat{\mathcal{P}}, \hat\sigma\rangle$ be a~$\PRIM$-solution up to~$t$. Then, $\langle \hat I \cap \chi(t), \hat{\mathcal{P}} \cap \chi(t), \sigma \rangle$, where~$\sigma$ is the partial ordering of~$\hat\sigma$ only containing~$\chi(t)$, is referred to as~\emph{$\PRIM$-row solution at node~$t$}.
\end{definition}

Given a~$\PRIM$-solution~$\vec{\hat\tabval}$ up to~$t$ and a~$\PRIM$-row solution~$\vec\tabval$ at~$t$.
We say~$\vec{\hat\tabval}$ is a \emph{corresponding} $\PRIM$-solution up to~$t$ of~$\PRIM$-row solution at~$t$ if~$\vec{\hat\tabval}$ can be used
to construct~$\vec\tabval$ according to Definition~\ref{def:localhcf}.

In fact,~\emph{$\PRIM$-row solutions} at~$t$ suffice to capture all the answer sets of~$\prog$.
Before we show that, we need the following definition.

\begin{definition}
Let $t\in N$ be a node of~$\TTT$
  with~$\children(t) = \langle t_1, \ldots, t_\ell \rangle$.
Further, let~$\vec{\hat\tabval}=\langle \hat I, \hat{\mathcal{P}}, \hat\sigma\rangle$ be a~$\PRIM$-solution up to~$t$ 
and~$\vec{\hat v}=\langle \hat{I'}, \hat{\mathcal{P}'}, \hat{\sigma'} \rangle$ be a~$\PRIM$-solution up to
$t_i$. Then,~$\vec\tabval$ is \emph{compatible with~$\vec v$} (and vice-versa) if
	\begin{enumerate}
		\item $\hat{I'} = \hat{I}\cap \att{t_i}$
		\item $\hat{\mathcal{P}'} = \hat{\mathcal{P}}\cap \att{t_i}$
		\item $\hat{\sigma'}$ is a sub-sequence of~$\hat\sigma$ such that~$\hat\sigma$ may additionally contain atoms in~$\att{t}\setminus\att{t_i}$
	\end{enumerate}
\end{definition}

\begin{lemma}[Soundness]\label{lem:paspcorrect}
  Let $t\in N$ be a node of~$\TTT$
  with~$\children(t,T) = \langle t_1, \ldots, t_\ell \rangle$.
  Further, let $\vec v_i$ be a~$\PRIM$-row solution at~$t_i$ for~$1\leq i\leq \ell$.
  Then, each row~$\vec\tabval = \langle I, \mathcal{P}, \sigma \rangle$ in~$\tau(t)$ 
  with~$\langle \vec v_1, \ldots, \vec v_\ell \rangle \in \origa{\PRIM}(t, \vec\tabval)$ is also a~$\PRIM$-row solution at
  node~$t$. Moreover, for any corresponding~$\PRIM$-solution~${\vec{\hat\tabval}}$ up to~$t$ (of~$\vec\tabval$)
  there are corresponding \emph{compatible}~$\PRIM$-solutions~$\vec{\hat{v_i}}$ up to~$t_i$ (for~$\vec v_i$). %
\end{lemma}
\begin{proof}[Proof (Sketch)]
We proceed by case distinctions.
Assume case(i):~$\type(t)=\leaf$. Then, $\langle \emptyset, \emptyset, \langle \rangle \rangle$ is a~\PRIM-row solution at~$t$. This concludes case(i).

Assume case(ii):~$\type(t)=\intr$ and~$\chi(t)\setminus\chi(t')=\{a\}$. Let~$\vec v_1=\langle I, \mathcal{P}, \sigma\rangle$ be any \PRIM-row solution at child node~$t_1$,
and~$\vec{\hat{v_1}}=\langle \hat I, \hat{\mathcal{P}}, \hat\sigma\rangle$ be any corresponding \PRIM-solution up to~$t_1$, which exists by Definition~\ref{def:localhcf}. In the following, we show that the way~\PRIM transforms
\PRIM-row solution~$\vec v_1$ at~$t_1$ to a \PRIM-row solution~$\vec\tabval=\langle I', \mathcal{P}', \sigma'\rangle$ at~$t$ is sound.
We identify several sub-cases.

Case (a): Atom~$a\not\in I'$ is set to false. Then, \PRIM constructs~$\vec\tabval$ where~$I'=I, \sigma'=\sigma$ and~$\mathcal{P}'=\mathcal{P}\cup \gatherproof(I',\sigma', \prog_t)$. Note that by construction~$I'\models \prog_t$.
Towards showing soundness, we define how to transform~$\vec{\hat{v_1}}$ into~$\vec{\hat\tabval}$ such that~$\vec{\hat\tabval}$ is indeed the corresponding~$\PRIM$-solution up to~$t$ of row~$\vec{\tabval}$ constructed by~\PRIM. To this end, we define~$\vec{\hat\tabval}$ as follows: $\vec{\hat\tabval} = \langle \hat I, \hat{\mathcal{P}} \cup  \gatherproof(I',\sigma', \prog_t), \hat\sigma\rangle$. Observe that~$\vec{\hat\tabval}$ is a~\PRIM-solution up to~$t$ according to Definition~\ref{def:globalhcf}.
Moreover, by construction and Definition~\ref{def:localhcf}, $\vec{\hat\tabval}$ is a corresponding~$\PRIM$-solution up to~$t$ of~$\hat\tabval$. 
It remains to show, that indeed for any
corresponding~$\PRIM$-solution~${\vec{\hat\tabval}}=\langle \hat {I'},
\hat{\mathcal{P}'}, \hat{\sigma'} \rangle$ up to~$t$
(of~$\vec\tabval$, there is a
corresponding~$\PRIM$-solution~$\vec{\hat{\zeta_1}}$ up to~$t_1$
(of~$\vec{{v_1}}$). %
To this end, we
define~$\vec{\hat{\zeta_1}}=\langle \hat{I'}, \hat{\mathcal{P}'}
\setminus (\mathcal{P}'\setminus\mathcal{P}), \hat{\sigma'}\rangle$
that is by construction according to Definition~\ref{def:globalhcf}
indeed a corresponding~$\PRIM$-solution up to~$t_1$
of~$\vec{\hat{v_1}}$.  This concludes case (a).

Case (b): Atom~$a\in I'$ is set to true. Conceptually, the case works analogously. %
This concludes cases (b) and (ii). 

The remaining cases for nodes~$t$ with~$\type(t)=\rem$ (slightly easier) and nodes~$t$ with~$\type(t)=\join$, 
where we need to consider \PRIM-row solutions at two different child nodes of~$t$, go through similarly.
\end{proof}

\begin{lemma}[Completeness]\label{lem:primcomplete}
  Let~$t\in N$ be node of~$\TTT$ where
  $\type(t) \neq \leaf$ and~$\children(t,T) = \langle t_1, \ldots, t_\ell \rangle$. Given a
  $\PRIM$-row solution~$\vec\tabval=\langle I, \mathcal{P}, \sigma \rangle$ at node~$t$,
  and any corresponding~$\PRIM$-solution~$\vec{\hat\tabval}$ up to~$t$ (of~$\vec\tabval$).
  Then, there exists $\vec s=\langle {v_1}, \ldots, {v_\ell}\rangle$ where ${v_i}$ is a
  $\PRIM$-row solution at~$t_i$ %
  such that~$\vec s\in\origa{\PRIM}(t,\vec\tabval)$,
  and corresponding~$\PRIM$-solution~$\vec{\hat{v_i}}$ up to~$t_i$ (of~$v_i$) that is
  compatible with~$\vec{\hat\tabval}$.
\end{lemma}
\begin{proof}[Proof (Idea)]
Since~$\vec\tabval$ is a~\PRIM-row solution at~$t$, there is by Definition~\ref{def:localhcf} a corresponding~\PRIM-solution~$\vec{\hat\tabval}=\langle \hat I, \hat{\mathcal{P}}, \hat\sigma\rangle$ up to~$t$. 

We proceed again by case distinction. Assume that~$\type(t)=\intr$. Then we define~$\vec{\hat{v_1}}\eqdef \langle \hat I \setminus \{a\}, \hat{\mathcal{P}'}, \hat{\sigma'}\rangle$,
where~$\hat{\sigma'}$ is a sub-sequence of~$\hat\sigma$ that does not contain~$a$ and~$\hat{\mathcal P}'=\gatherproof(\hat I \setminus \{a\}, t_1, \progt{t_1})$. 
Observe that all the conditions of Definition~\ref{def:globalhcf} are met and that~$\hat{\mathcal P}'\subseteq \hat{\mathcal{P}'}$. Then, we can easily define \PRIM-row solution~$\vec{v_1}$ at~$t_1$ according to Definition~\ref{def:localhcf} by using~$\vec{\hat{v_1}}$. By construction of~$\vec{\hat{v_1}}$ and by the definition of~$\gatherproof$, we conclude that~$\vec\tabval$ can be constructed with~$\PRIM$
using~$\vec{v_1}$. Moreover, \PRIM-solution~$\vec{\hat{v_1}}$ up to~$t_1$ is indeed compatible with~$\vec{\hat\tabval}$.

Assume that~$\type(t)=\rem$. The case is slightly easier as the one above, and the remainder works similar.

Similarly, one can show the result for the remaining node with~$\type(t)=\join$, but define \PRIM-row solutions for two preceding child nodes of~$t$.
\end{proof}

We are now in the position to proof our theorem.

\begin{restatetheorem}[thm:primcorrectness]%
\begin{theorem}%
  The algorithm~$\dpa_\PRIM$ is correct. \\
  More precisely, %
  the algorithm~$\dpa_\PRIM((\prog,\cdot),\TTT)$ returns
  $\PRIM$-TTD~$(T,\chi,\tau)$ such that we can decide consistency of~$\prog$ and even reconstruct the answer sets of~$\prog$:
  \begin{align*}
	&\mathcal{I}(\Ext_{\leq n}[\tau(n)])=
	\{\hat I \mid  \langle \hat I, \hat{\mathcal{P}}, \hat\sigma \rangle \text{ is a } \PRIM\text{-solution up to }n\}\\
	&=\{I \mid I \in \ta{\at(\prog)}, I \text{ is an answer set of }\prog\}.
  \end{align*}
\end{theorem}
\end{restatetheorem}
\begin{proof}[Proof (Idea).]
  By Lemma~\ref{lem:paspcorrect} we have soundness for every
  node~$t \in N$ and hence only valid rows as output of table
  algorithm~$\PRIM$ when traversing the tree decomposition in
  post-order up to the root~$n$.
  By Proposition~\ref{prop:hcfglobal} we then know that we can reconstruct answer sets
  given~\PRIM-solutions up to~$n$.
  In more detail, we proceed by means of induction. 
  For the induction base we only store~\PRIM-row solutions~$\vec\tabval\in\tau(t)$ at a certain node~$t$ starting at the leaves.
  For nodes~$t$ with~$\type(t)=\leaf$, obviously there is only the following (one)~\PRIM-row solution at~$t$: $\vec\tabval=\langle \emptyset, \emptyset, \langle \rangle\rangle$.
  
  Then, by Lemma~\ref{lem:paspcorrect} we establish the induction step, since algorithm~\PRIM only creates~\PRIM-row solutions at every node~$t$,
  assuming that it gets~\PRIM-row solutions at~$t_i$ for every child node~$t_i$ of~$t$.
  As a result, if there is no answer set of~$\prog$, the table~$\tau(n)$ is empty.
  On the other hand, if there is an answer set of~$\prog$, we obtain a~\PRIM-row solution~$\vec\tabval$ at root node~$n$, 
  for which by Definition~\ref{def:localhcf} a corresponding~\PRIM-solution~$\vec{\hat\tabval}$ up to~$n$ exists.
  Further, in the induction step we ensured that~\PRIM-solutions up to~$t$ for every~\PRIM-row solution at~$t$ for every node~$t\in N$ can be found that are compatible to~$\vec{\hat\tabval}$. In other words, by keeping track of corresponding origin~\PRIM-row solutions of~$\vec\tabval$ we can combine interpretation positions~$\mathcal{I}(\cdot)$ of rows by following origin rows top-down in order to reconstruct only valid answer set.

  Next, we establish completeness by induction starting from the
  root~$n$. Let therefore,
  $\hat\rho=\langle \hat I, \hat{\mathcal{P}}, \hat\sigma \rangle$ be
  the~\PRIM-solution up to node~$n$. If~$\hat\rho$ does not exist for
  node~$n$, there is by definition no answer set
  of~$\prog$. Otherwise, by Definition~\ref{def:localhcf}, we know
  that for the root~$n$ we can construct \PROJ-row solutions at~$n$ of
  the form~$\rho=\langle\emptyset, \emptyset, \langle \rangle\rangle$
  for~$\hat\rho$.  We already established the induction step in
  Lemma~\ref{lem:primcomplete} using~$\rho$ and~$\hat\rho$. As a
  consequence, we can reconstruct exactly \emph{all the answer sets}
  of~$\prog$ by following origin rows (see
  Definition of~$\orig$) back to the leaves and combining
  interpretation parts~$\mathcal{I}(\cdot)$, accordingly.
  Hence, we obtain some (corresponding) rows for every
  node~$t$. Finally, we stop at the leaves.

  In consequence, we have shown both soundness and completeness. As a
  result, Theorem~\ref{thm:primcorrectness} is sustains.
\end{proof}

\begin{corollary}\label{cor:primcorrectness}
  Algorithm~$\dpa_\PRIM((\prog,\cdot),\TTT)$ returns
  $\PRIM$-TTD~$(T,\chi,\tau)$ such that:
  \begin{align*}
    &\mathcal{I}(\PExt_{\leq t}[\tau(t)])\\
    &=\{\hat I \mid  \langle \hat I, \hat{\mathcal{P}}, \hat\sigma \rangle \text{ is a } \PRIM\text{-solution up to }t, \text{ there is answer set }\\
    &\quad\;\;\, I' \supseteq \hat I \text{ of } \prog \text{ such that }
      I' \subseteq I \cup (\at(\prog) \setminus \att{t})\}\\
    &=\{I \mid I \in \ta{\att{t}}, I \models \prog_{\leq t}, \text{ there is an answer set }\\
    &\quad\;\;\, I' \supseteq I \text{ of } \prog \text{ such that } I'\subseteq I \cup (\at(\prog) \setminus \att{t})\}.
  \end{align*}
\end{corollary}
\begin{proof}
The corollary follows from the proof of Theorem~\ref{thm:primcorrectness} applied up to node~$t$ and by considering only rows that are involved in reconstructing answer sets (see Definition~\ref{def:satext}).
\end{proof}

\subsection{Correctness of~$\mdpa{\AlgA}$: Omitted proofs}

In the following, we assume~$(\prog, P)$ to be an instance of~$\PASP$. Further, let~$\mathcal{T}=(T,\chi,\tau)$
be an~$\AlgA$-TTD of~$\mathcal{G}_\prog$ where~$T=(N,\cdot,n)$, node~$t\in N$, and~$\rho\subseteq\tau(t)$.

\begin{definition}\label{def:asplocalsol}
Table algorithm $\AlgA$ is referred to as \emph{admissible}, if for each row $\vec{u_{t.i}}\in\tau(t)$ of any node~$t\in T$ the following holds:
  \begin{enumerate}
    \item $\mathcal{I}(\vec{\tabval_{t.i}}) \subseteq \chi(t)$
    \item For any $\vec v \in \tau(t')$, $\vec w \in \tau(t'')$ we have $\mathcal{I}(\vec v) \cap \chi(t') \cap \chi(t'') = \mathcal{I}(\vec w) \cap \chi(t') \cap \chi(t'')$
    \item $\mathcal{I}(\PExt_{\leq t}[\tau(t)]) = \{I \mid I \in
    \ta{\att{t}}, I \models \prog_{\leq t}, \text{ there is an answer set } I \cup (\at(\prog) \setminus \att{t}) \supseteq  I' \supseteq I \text{ of } \prog\}$
    \item If~$t=n$ or~$\type(t)=\leaf$: $\Card{\local(t,\PExt_{\leq t}[\tau(t)])} \leq 1$
  \end{enumerate}
\end{definition}

Note that the last condition is not a hard restriction, since the bags of the leaf and root nodes of a tree decomposition are defined to be empty anyway. However, it rather serves as technical trick simplifying proofs.

\begin{observation}
Table algorithms~$\PRIM$ and~$\algo{PRIM}$ are admissible.
\end{observation}
\begin{proof}
  Obviously, Conditions 1, 2, and 4 hold by construction of the table algorithms and by properties auf tree decompositions. For condition 3, we have to check for correctness and completeness, which has been shown~\cite{FichteEtAl17a} for algorithm~$\algo{PRIM}$. For~$\PRIM$, see Theorem~\ref{thm:primcorrectness} and Corollary~\ref{cor:primcorrectness}.
\end{proof}

In the following, we assume that whenever~$\AlgA$ occurs, $\AlgA$ is an admissible table algorithm.

\begin{proposition}\label{prop:sat}
$\mathcal{I}(\PExt_{\leq n}[\tau(n)]) = \mathcal{I}(\Exts) = \{I \mid I \in
    \ta{\at(\prog)}, I \text{ is an answer set of } \prog\}.$
\end{proposition}
\begin{proof}
  Fill in Definition~\ref{def:asplocalsol} with root~$n$ of $\AlgA$-TTD ${\cal T}$.
\end{proof}

The following definition is key for the correctness proof, since later we show that these are equivalent with the result of~$\dpa_\PROJ$ using purged table mapping~$\nu$.

\begin{definition}\label{def:pmc}
  The \emph{projected answer sets count} $\pmc_{\leq t}(\rho)$ of
  $\rho$ below~$t$ is the size of the union over projected
  interpretations of the satisfiable extensions of~$\rho$ below~$t$,
  formally,
  $\pmc_{\leq t}(\rho) \eqdef \Card{\bigcup_{\vec u\in\rho}
    \mathcal{I}_P(\PExt_{\leq t}(\{\vec u\}))}$.

  The \emph{intersection projected answer sets count}
  $\ipmc_{\leq t}(\rho)$ of $\rho$ below~$t$ is the size of the
  intersection over projected interpretations of the satisfiable
  extensions of~$\rho$ below~$t$,~i.e.,
  $\ipmc_{\leq t}(\rho) \eqdef \Card{\bigcap_{\vec u\in\rho}
    \mathcal{I}_P(\PExt_{\leq t}(\{\vec u\}))}$.
\end{definition}

In the following, we state definitions required for the correctness
proofs of our algorithm \PROJ.  In the end, we only store rows that
are restricted to the bag content to maintain runtime bounds. 
We define the
content of our tables in two steps. First, we define the properties of
so-called \emph{$\PROJ$-solutions up to~$t$}. Second, we restrict
these solutions to~\emph{$\PROJ$-row solutions} at~$t$.

\begin{definition}\label{def:globalsol}
  Let~$\emptyset \subsetneq \rho \subseteq \tau(t)$ be a
  table with $\rho \in \subbuckets_P(\tau(t))$.
  We define a \emph{${\PROJ}$-solution up to~$t$} to be the sequence
  $\langle \hat {\rho}\rangle = \langle\PExt_{\leq t}(\rho)\rangle$.
\end{definition}

Before we present equivalence results between~$\ipmc_{\leq t}(\ldots)$
and the recursive version~$\ipmc(t, \ldots)$
used during the computation of
$\dpa_\PROJ$, recall that~$\ipmc_{\leq t}$ and~$\pmc_{\leq t}$
(Definition~\ref{def:pmc}) are key to compute the projected answer sets
count. The following corollary states that computing $\ipmc_{\leq n}$
at the root~$n$ actually suffices to compute~$\pmc_{\leq n}$, which is
in fact the projected answer sets count of the input program.

\begin{corollary}\label{cor:psat}
  \begin{align*}
    &\ipmc_{\leq n}(\local(n,\PExt_{\leq n}[\tau(n)]))\\
 =& \pmc_{\leq n}(\local(n,\PExt_{\leq n}[\tau(n)]))\\
    =& \Card{\mathcal{I}_P(\PExt_{\leq n}[\tau(n)])}\\
    =& \Card{\mathcal{I}_P(\Exts)}\\
    =& \,|\{J \cap P
       \mid J \in \ta{\at(\prog)},\\
    & J \text{ is an answer set of } \prog\}|
  \end{align*}
\end{corollary}
\begin{proof}
  The corollary immediately follows from Proposition~\ref{prop:sat}
  and since the cardinality of $\local(n,\PExt_{\leq n}[\tau(n)])$ is at
  most one at root~$n$, by Definition~\ref{def:asplocalsol}.
\end{proof}

The following lemma establishes that the \PROJ-solutions up to
root~$n$ of a given tree decomposition solve the \PASP problem.

\begin{lemma}\label{lem:global}
  The
  value~$c = \sum_{\langle\hat{\rho}\rangle\text{ is a \PROJ-solution
      up to } n}\Card{\mathcal{I}_P(\hat{\rho})}$ corresponds to the
  projected answer sets count of~$\prog$ with respect to the set~$P$ of
  projection atoms.
\end{lemma}
\begin{proof}
  (``$\Longrightarrow$''): Assume
  that~$c = \sum_{\langle\hat{\rho}\rangle\text{ is a \PROJ-solution
      up to } n}\Card{\mathcal{I}_P(\hat {\rho})}$. Observe that there can be at
  most one projected solution up to~$n$ by Definition~\ref{def:asplocalsol}. %
  If~$c=0$, then $\tau(n)$ contains no rows. Hence, $\prog$ has no
  answer sets,~c.f., Proposition~\ref{prop:sat}, and obviously also no
  answer sets projected to~$P$. Consequently, $c$ is the projected answer sets
  count of~$\prog$.  
  If~$c>0$ we have by Corollary~\ref{cor:psat} that~$c$ is
  equivalent to the projected answer sets count of~$\prog$ with respect to~$P$.

  (``$\Longleftarrow$''): The proof proceeds similar to the only-if
  direction.
\end{proof}

\medskip %

In the following, we provide for a given node~$t$ and a given \PROJ-solution up to~$t$,
the definition of a \PROJ-row solution at~$t$.

\begin{definition}\label{def:loctab}~%
  Let %
  $\langle \hat{\rho} \rangle$ be a~$\PROJ$-solution up to~$t$. Then, we
  define the \emph{$\PROJ$-row solution at $t$} by
  $\langle \local(t,\hat{\rho}), \Card{\mathcal{I}_P(\hat{\rho})}\rangle$.
\end{definition}

\begin{observation}\label{obs:unique}
  Let $\langle \hat {\rho}\rangle$ be a \PROJ-solution up to a
  node~$t\in N$.  There is exactly one corresponding \PROJ-row
  solution
  $\langle \local(t,\hat{\rho}), \Card{\mathcal{I}_P(\hat{\rho})}\rangle$ at~$t$.

  Vice versa, let $\langle \rho, c\rangle$ at~$t$ be a \PROJ-row
  solution at~$t$ for some integer~$c$. Then, there is exactly one
  corresponding \PROJ-solution~$\langle\PExt_{\leq t}(\rho)\rangle$
  up to~$t$.
\end{observation}

We need to ensure that storing~$\PROJ$-row solutions at a
node~$t \in N$ suffices to solve the~\PASP problem, which is necessary
to obtain the runtime bounds as presented in
Corollary~\ref{cor:runtime}. For the root node~$n$, this is sufficient, shown in the following.

\begin{lemma}\label{lem:local}
  There is a
  \PROJ-row solution at the root~$n$ if and only if the projected
  answer sets count of~$\prog$ is larger than zero. Further, if there is a \PROJ-row solution~$\langle \rho, c\rangle$ at root~$n$, then~$c$ is the projected answer sets count of~$\prog$.
\end{lemma}
\begin{proof}%

  (``$\Longrightarrow$''): Let $\langle \rho, c\rangle$ be a
  \PROJ-row solution at root~$n$ where $\rho$ is an $\AlgA$-table and
  $c$ is a positive integer. Then, by Definition~\ref{def:loctab}
  there also exists a
  corresponding~$\PROJ$-solution~$\langle \hat{\rho} \rangle$ up
  to~$n$ such that $\rho = \local(n,\hat{\rho})$ and
  $c=\Card{\mathcal{I}_P(\hat{\rho})}$.
  Moreover, by Definition~\ref{def:asplocalsol}, we
  have~$\Card{\local(n,\PExt_{\leq n}[\tau(n)])}=1$.  
  Then, by Definition~\ref{def:globalsol},
  $\hat{\rho} = \PExt_{\leq n}[\tau(n)]$. By Corollary~\ref{cor:psat}, we
  have $c=\Card{\mathcal{I}_P(\PExt_{\leq n}[\tau(n)])}$ equals the projected answer sets count of~$\prog$.
  Finally, the claim follows.

  (``$\Longleftarrow$''): The proof proceeds similar to the only-if
  direction.
\end{proof}

Before we show that \PROJ-row solutions suffice, we require the following lemma.

\begin{observation}\label{obs:main_incl_excl}
  Let $n$ be a positive integer, $X = \{1, \ldots, n\}$, and $X_1$,
  $X_2$, $\ldots$, $X_n$ subsets of $X$.
  The number of elements in the intersection over all sets~$A_i$ is
    \[\Card{\bigcap_{i \in X} X_i} 
    = %
       \Bigg|\,\Card{\bigcup^n_{j = 1} X_j} %
                                         + \sum_{\emptyset \subsetneq I \subsetneq X} (-1)^{\Card{I}} 
                                              \Card{\bigcap_{i \in I} X_i}\,\Bigg|.\]
\end{observation}
\begin{proof}
  We take the well-known inclusion-exclusion
  principle~\cite{GrahamGrotschelLovasz95a} and rearrange the
  equation.
\end{proof}

\begin{lemma}\label{lem:main_incl_excl}
  Let $t\in N$ be a node of~$\TTT$
  with~$\children(t,T) = \langle t_1, \ldots, t_\ell \rangle$ and let
  $\langle\rho,\cdot\rangle$ be a~\PROJ-row solution at~$t$. Further, let~$\pi$ be a partial mapping of~$\pi'$ (finally returned by~$\dpa_\PROJ((\prog,P),\TTT)=(T,\chi,\pi')$), which maps nodes of the sub-tree~$T[t]$ rooted at~$t$ (excluding~$t$) to~$\PROJ$-tables.
  Then,
  \begin{enumerate}
  \item %
    $\ipmc(t,\rho,\langle\pi(t_1), \ldots,
    \pi(t_\ell)\rangle) = \ipmc_{\leq t}(\rho)$
  \item \smallskip%
    for $\type(t) \neq \leaf$:\\
    $\pmc(t,\rho,\langle\pi(t_1), \ldots,
    \pi(t_\ell)\rangle) = \pmc_{\leq t}(\rho)$.
  \end{enumerate}
\end{lemma}
\begin{proof}[Sketch]
  We prove the statement by simultaneous induction.
  
  (``Induction Hypothesis''): Lemma~\ref{lem:main_incl_excl} holds for the nodes in~$\children(t,T)$ and also for node~$t$, but on strict subsets~$\varphi\subsetneq\rho$.
  (``Base Cases''): Let $\type(t) = \leaf$.
  Then by definition,
  $\ipmc(t,\{\langle \emptyset, \ldots\rangle\}, \langle \rangle) = \ipmc_{\leq t}(\{\langle\emptyset,\ldots\rangle\}) =
  1$.  
  Recall that for $\pmc$ the equivalence does not hold for leaves, but we use a node
  that has a node~$t'\in N$ with~$\type(t') = \leaf$ as child for the
  base case. Observe that by definition of a tree decomposition
  such a node~$t$ can have exactly one child.
  Then, we have that
  $\pmc(t,\rho,\langle\pi(t')\rangle) = \sum_{\emptyset
    \subsetneq O \subseteq {\origs(t,\rho)}} (-1)^{(\Card{O} - 1)}
  \cdot \sipmc(\langle \tau(t')\rangle, O) =
  \Card{\bigcup_{\vec u\in\rho} \mathcal{I}_P(\PExt_{\leq t}(\{\vec u\}))} =
  \pmc_{\leq t}(\rho) = 1$ where $\langle\rho,\cdot\rangle$ is
  a~\PROJ-row solution at~$t$.

  (``Induction Step''): We proceed by case distinction.

  Assume that $\type(t) = \intr$.
  Let $a \in (\chi(t) \setminus \chi(t'))$ be an introduced
  atom. We have two cases. Case (i) $a$ also belongs to
  $(\at(\prog) \setminus P)$,~i.e., $a$ is not a projection atom; and
  Case (ii) $a$ also belongs to $P$,~i.e., $a$ is a projection
  atom.
  Assume that we have Case~(i).
  Let~$\langle \rho, c \rangle$ be a \PROJ-row solution at~$t$ for
  some integer~$c$. As a consequence of admissible algorithm~$\AlgA$ (see Definition~\ref{def:asplocalsol})
  there can be many rows in the table~$\tau(t)$ for one row in
  the table~$\tau(t')$, more precisely,
  $\Card{\buckets_P(\rho)} = 1$.
  As a result,
  $\pmc_{\leq t}(\rho) = \pmc_{\leq t'}(\orig(t,\rho))$ by
  applying Observation~\ref{obs:unique}.
  We apply the inclusion-exclusion principle on every subset~$\varphi$ of
  the origins of~$\rho$ in the definition of~$\pmc$ and by induction
  hypothesis we know that
  $\ipmc(t',\varphi,\langle\pi(t')\rangle) = \ipmc_{\leq
    t'}(\varphi)$, therefore,
  $\sipmc(\pi(t'), \varphi) = \ipmc_{\leq t'}(\varphi)$.  This
  concludes Case~(i) for $\pmc$. The induction step for $\ipmc$ works
  similar %
  by applying
  Observation~\ref{obs:main_incl_excl} and comparing the corresponding
  \PROJ-solutions up to~$t$ or $t'$, respectively. 
  Further, for showing the lemma for~$\ipmc$, one has to additionally apply the hypothesis for node~$t$, but on strict subsets~$\emptyset\subsetneq\varphi\subsetneq\rho$ of~$\rho$.
  Assume that we have Case~(ii). We proceed similar as in Case~(i),
  since Case~(ii) is just a special case here, more precisely, we also
  have $\Card{\buckets_P(\rho)} = 1$ here.

  Assume that $\type(t) = \rem$. Let
  $a \in (\chi(t') \setminus \chi(t))$ be a removed atom. We have
  two cases. Case (i) $a$ also belongs to
  $(\at(\prog) \setminus P)$,~i.e., $a$ is not a projection atom; and
  Case (ii) $a$ also belongs to $P$,~i.e., $a$ is a projection
  atom.
  Assume that we have Case~(i).  Let~$\langle \rho, c \rangle$ be a
  \PROJ-row solution at~$t$ for some integer~$c$.
  As a consequence of admissible table algorithms~$\AlgA$ (see Definition~\ref{def:asplocalsol}) there can be many rows
  in the table~$\tau(t)$ for one row in the
  table~$\tau(t')$ (and vice-versa). Nonetheless we still have
  $\pmc_{\leq t}(\rho) = \pmc_{\leq t'}(\orig(t,\rho))$, because
  $a \notin P$ by applying Observation~\ref{obs:unique}.
  We apply the inclusion-exclusion principle on every subset~$\varphi$ of
  the origins of~$\rho$ in the definition of~$\pmc$ and by induction
  hypothesis we know that
  $\ipmc(t',\varphi,\langle\pi(t')\rangle) = \ipmc_{\leq
    t'}(\varphi)$, therefore,
  $\sipmc(\pi(t'), \varphi) = \ipmc_{\leq t'}(\varphi)$.  This
  concludes Case~(i) for $\pmc$. Again, the induction step for $\ipmc$
  works similar, but swapped.
  Assume that we have Case~(ii).
  Let~$\langle \rho, c \rangle$ be a \PROJ-row solution at~$t$ for
  some integer~$c$.
  Here we cannot ensure
  $\pmc_{\leq t}(\rho) = \pmc_{\leq t'}(\orig(t,\rho))$, since
  buckets fall together.  However, by applying
  Observation~\ref{obs:unique} we have
  $\pmc_{\leq t}(\rho) = \sum_{\varphi \in
    \buckets_P(\origs(t,\rho)_{(1)})} \pmc(t', \varphi, C) $ where the
  sequence~$C$ consists of the tables~$\pi(t'_i)$ of the children~$t'_i$ of~$t'$.
  For every~$\varphi \in \subbuckets_P(\origs(t,\rho)_{(1)})$ by
  induction hypothesis we know that
  $\ipmc(t',\varphi,\langle\pi(t')\rangle) = \ipmc_{\leq
    t'}(\varphi)$.
  Hence, we apply the inclusion-exclusion principle over all
  subsets~$\zeta$ of~$\varphi$ for all~$\varphi$ independently.  By
  construction
  $\sipmc(\pi(t'), \zeta) = \ipmc_{\leq t'}(\zeta)$.  Then,
  by construction
  $\pcnt(t,\rho, C') = \sum_{\emptyset \subsetneq O \subseteq
    {\origs(t,\rho)}} (-1)^{(\Card{O} - 1)} \cdot \sipmc(C', O) =
  \pmc_{\leq t}(\rho)$ where
  $C' = \langle \pi(t') \rangle$, since for the remaining
  terms $\sipmc(C', O)$ is simply zero, including cases where
  different buckets are involved.
  This concludes Case~(ii) for $\pmc$. Again, the induction step for
  $\ipmc$ works similar, but swapped by again applying
  Observation~\ref{obs:main_incl_excl}.

  Assume that $\type(t) = \join$. We proceed similar to the introduce
  case. However, we have two \PROJ-tables for the children of~$t$.
  Hence, we have to consider both sides when computing $\sipmc$
  (see Definition of~$\sipmc$). %
  There we consider the
  cross-product of two \AlgA-tables and we can also correctly apply
  the inclusion-exclusion principle on subsets of this cross-product,
  which we can do by simply multiplying $\sipmc$-values
  accordingly. The multiplication is closely related to the join case
  in table algorithm~\AlgA. For $\ipmc$ this does not apply, since the
  inclusion-exclusion principle is carried out at the node~$t$ and not
  for its children.

  Since we outlined all cases that can occur for node~$t$, this
  concludes the proof sketch.
\end{proof}

\begin{lemma}[Soundness]\label{lem:correct}
  Let $t\in N$ be a node of~$\TTT$
  with~$\children(t,T) = \langle t_1, \ldots, t_\ell \rangle$.
  Then, each row~$\langle \rho, c \rangle$ at node~$t$ constructed
  by table algorithm~$\PROJ$ is also a~\PROJ-row solution for
  node~$t$.
\end{lemma}
\begin{proof}[Idea]
  Observe that Listing~\ref{fig:dpontd3} computes a row for each
  sub-bucket $\rho \in \subbuckets_P(\local(t,\PExt_{\leq t}[\tau(t)]))$. The
  resulting row~$\langle\rho, c \rangle$ obtained by~$\ipmc$ is
  indeed a \PROJ-row solution for~$t$ according to
  Lemma~\ref{lem:main_incl_excl}.
\end{proof}

\begin{lemma}[Completeness]\label{lem:complete}
  Let~$t\in N$ be node of~$\TTT$ where
  $\type(t) \neq \leaf$ and~$\children(t,T) = \langle t_1, \ldots, t_\ell \rangle$. Given a
  \PROJ-row solution~$\langle \rho, c \rangle$ at node~$t$.
  There exists $\langle C_1, \ldots, C_\ell\rangle$ where $C_i$ is set
  of \PROJ-row solutions at~$t_i$ %
  such that
  $\rho \in \PROJ(t, \cdot, \tau(t), \cdot, P, \langle C_1, \ldots,
  C_\ell\rangle)$.
\end{lemma}
\begin{proof}[Idea]
Since~$\langle\rho,c \rangle$ is a~\PROJ-row solution for~$t$, there is by Definition~\ref{def:loctab} a corresponding ~\PROJ-solution~$\langle\hat\rho\rangle$ up to~$t$ such that~$\local(t,\hat\rho) = \rho$. 

We proceed again by case distinction. Assume that~$\type(t)=\intr$. Then we define~$\hat{\rho'}\eqdef \{(t',\hat\varphi) \mid (t', \hat\varphi)\in \rho, t \neq t'\}$. Then, for each subset~$\emptyset\subsetneq\varphi\subseteq\local(t',\hat{\rho'})$, we define~$\langle \varphi, \Card{\mathcal{I}_P(\PExt_{\leq t}(\varphi))}\rangle$ in accordance with Definition~\ref{def:loctab}. By Observation~\ref{obs:unique}, we have that~$\langle \varphi, \Card{\mathcal{I}_P(\PExt_{\leq t}(\varphi))}\rangle$ is an \AlgA-row solution at node~$t'$. 
Since we defined the~\PROJ-row solutions for~$t'$ for all the respective \PROJ-solutions up to~$t'$, we encountered every~\PROJ-row solution for~$t'$ that is required for deriving~$\langle \rho, c\rangle$ via~\PROJ (c.f., Definitions of~$\ipmc$ and of~$\pmc$). %

Assume that~$\type(t)=\rem$. The case is slightly easier as the one
above. We do not need to define a~\PROJ-row solution for~$t'$ for all
subsets~$\varphi$, since we only have to consider subsets~$\varphi$ here,
with~$\Card{\buckets_P(\varphi)}=1$. The remainder works similar.

Similarly, one can show the result for the remaining node with~$\type(t)=\join$, but define \PROJ-row solutions for two preceding child nodes of~$t$.
\end{proof}

We are now in the position to proof our theorem.

\begin{theorem}\label{thm:correctness}
  The algorithm~$\dpa_\PROJ$ is correct. \\
  More precisely, %
  the algorithm~$\dpa_\PROJ((\prog,P),\TTT)$ returns
  $\PROJ$-TTD~$(T,\chi,\pi$) such that $c=\sipmc(\pi(n), \cdot)$
  is the projected answer sets count of~$\prog$ with respect to the set~$P$ of
  projection atoms.
\end{theorem}
\begin{proof}
  By Lemma~\ref{lem:correct} we have soundness for every
  node~$t \in N$ and hence only valid rows as output of table
  algorithm~$\PROJ$ when traversing the tree decomposition in
  post-order up to the root~$n$.
  By Lemma~\ref{lem:local} we know that the projected answer sets count~$c$
  of~$\prog$ is larger than zero if and only if there exists a
  certain~\PROJ-row solution for~$n$.
  This~\PROJ-row solution at node~$n$ is of the
  form~$\langle \{\langle\emptyset, \ldots\rangle\} ,c\rangle$. If
  there is no \PROJ-row solution at node~$n$,
  then~$\tau(n)=\emptyset$ since the table algorithm~$\AlgA$
  is admissible (c.f., Proposition~\ref{prop:sat}). Consequently, we have
  $c=0$. Therefore, $c=\sipmc(\pi(n), \cdot)$ is the
  projected answer sets count of~$\prog$ with respect to~$P$ in both cases.

  Next, we establish completeness by induction starting from the
  root~$n$. Let therefore, $\langle \hat\rho \rangle$ be
  the~\PROJ-solution up to node~$n$, where for each row
  in~$\vec u\in \hat\rho$, $\mathcal{I}(\vec u)$ corresponds to an
  answer set of~$\prog$.  By Definition~\ref{def:loctab}, we know that
  for the root~$n$ we can construct a \PROJ-row solution at~$n$ of the
  form~$\langle \{\langle\emptyset, \ldots\rangle\} ,c\rangle$
  for~$\hat\rho$.  We already established the induction step in
  Lemma~\ref{lem:complete}.
  Hence, we obtain some (corresponding) rows for every
  node~$t$. Finally, we stop at the leaves.

  In consequence, we have shown both soundness and completeness. As a
  result, Theorem~\ref{thm:correctness} is sustains.
\end{proof}

\begin{corollary}\label{cor:correctness}
  The algorithm $\mdpa{\AlgA}$ is correct and outputs for any instance
  of \PASP its projected answer sets count.
\end{corollary}
\begin{proof}
  The result follows immediately, since~$\mdpa{\AlgA}$ consists of two
  dynamic programming passes~$\dpa_\AlgA$, a purging step, and~$\dpa_\PROJ$. For the
  soundness and completeness of~$\dpa_\algo{PRIM}$ we refer to other
  sources~\cite{FichteEtAl17a}. By Proposition~\ref{prop:sat}, the
  ``purging'' step does neither destroy soundness nor completeness
  of~$\dpa_\AlgA$.
\end{proof}

\begin{restateproposition}[prop:phcworks]
\begin{proposition}
  The algorithm $\mdpa{\PRIM}$ is correct and outputs for any instance
  of \PASP its projected answer sets count.
\end{proposition}
\end{restateproposition}
\begin{proof}
This is a direct consequence of Corollary~\ref{cor:correctness}.
\end{proof}

\begin{restateproposition}[prop:disjworks]
\begin{proposition}
  The algorithm $\mdpa{\algo{PRIM}}$ is correct and outputs for any instance
  of \PDASP its projected answer sets count.
\end{proposition}
\end{restateproposition}
\begin{proof}
This is a direct consequence of Corollary~\ref{cor:correctness}.
\end{proof}

\longversion{
\section{Correctness of QBF lower bound}

\begin{definition}[\cite{MarxMitsou16}]
  Given graph~$G=(V,E)$, integers~$i,j,r$ where~$i\leq j$ and total list-capacity function~$f: V \rightarrow \{i,\ldots,j\}$. Then an instance~$(G,r,f)$ of~$(i,j)$-Choosability Deletion asks for the existence of a set of vertices~$V'\subseteq V$ with~$\Card{V'}\leq r$ and~$V_1\eqdef V\setminus V'$, such that for all assignments~$\mathcal{L}: V_1 \rightarrow 2^{\{i,\ldots,j\}}$ with $\Card{\mathcal{L}(v)} = f(v)$ for all~$v\in V_1$, there is a coloring~$c: V_1 \rightarrow \mathcal{L}(v)$ such that for every edge~$(u,v)\in E\setminus (V_1\times V_1): c(u) \neq c(v)$.
\end{definition}

\begin{proposition}[\cite{MarxMitsou16}]
  Instances~$(G,r,f)$ of~$(1,4)$-Choosability Deletion where~$k$
  is the treewidth of~$G$, cannot be
  solved in time~${2^{2^{2^{o(k)}}}}\cdot \CCard{G}^{o(k)}$.
\end{proposition}

\begin{restateproposition}[prop:lampis3]
\begin{proposition}
  QBFs of the form $\forall V_1.\exists V_2.\forall V_3. E$ where~$k$
  is the treewidth of the primal graph of DNF formula~$E$, cannot be
  solved in time~${2^{2^{2^{o(k)}}}}\cdot \CCard{E}^{o(k)}$.
\end{proposition}
\end{restateproposition}
\begin{proof}
We proof the result by reducing from the problem~$(1,4)$-Choosability Deletion,
\end{proof}}
}

\clearpage
\appendix
\section{Correctness of the Reduction of Section~\ref{sec:tdguided}}\label{sec:appendix1}
\begin{restatetheorem}[thm:corr1]
\begin{theorem}[Correctness]%
The reduction from an HCF program~$\Pi$ and a TD~$\mathcal{T}=(T,\chi)$ of~$\mathcal{G}_\Pi$ to SAT formula~$F$ consisting of Formulas~(\ref{red:checkrules})--(\ref{red:checkfirst}) is correct.
Precisely, for each answer set of~$\Pi$ there is a model of~$F$ and vice versa.
\end{theorem}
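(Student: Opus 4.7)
The plan is to prove both directions by leveraging the characterization of answer sets for HCF programs stated in the preliminaries: $M$ is an answer set of $\Pi$ if and only if $M\models\Pi$ and there is an ordering $\varphi$ over $M$ such that every $a\in M$ is proven by some rule $r\in\Pi$ suitable for proving $a$ with $\varphi(b)<\varphi(a)$ for all $b\in B_r^+$.

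\textbf{Direction ($\Rightarrow$).} Given an answer set $M$ of $\Pi$ with witnessing ordering $\varphi$ over $M$, I construct a model $I$ of $F$ as follows. First, I extend $\varphi$ to an ordering $\varphi'$ over all of $\at(\Pi)$ arbitrarily. For each node $t$ of $T$, I take the canonical $t$-local ordering $\hat{\varphi}'_t$ (as defined before Example~\ref{ex:localorderings}) and encode its positions in the bit variables $b^i_{x_t}$; this immediately satisfies Formulas~(\ref{red:prop}) since the rank-ordering is preserved across bags sharing atoms. Satisfaction of Formulas~(\ref{red:checkrules}) follows from $M\models\Pi$. For provability, I define $p^x_t$ true exactly when some $r\in\Pi_t$ with $x\in H_r$ satisfies the conditions on the right-hand side of Formulas~(\ref{red:checkfirst}) under $M$ and $\hat{\varphi}'_t$, and $p^x_{<t}$ true exactly when $x$ is proven in some node in the subtree rooted at $t$. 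The key point is that if $a\in M$ is proven by rule $r$ under $\varphi$, the TD-property guarantees a node $t$ with $\at(r)\subseteq\chi(t)$, and canonicity of $\hat{\varphi}'_t$ preserves the $<$-ordering on $B_r^+\cup\{a\}$, so $p^a_t$ holds, validating Formulas~(\ref{red:checkremove})--(\ref{red:checkfirst}).

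\textbf{Direction ($\Leftarrow$).} Given a model $I$ of $F$, let $M\eqdef I\cap\at(\Pi)$. Formulas~(\ref{red:checkrules}) directly give $M\models\Pi$. For the HCF characterization, I must extract a global ordering $\varphi$ over $M$ witnessing provability. I first read off from $I$ the $t$-local orderings $\varphi_t$ on each $\chi(t)\cap M$ via the bit variables; Formulas~(\ref{red:prop}) guarantee compatibility of these local orderings on shared atoms between parent and child. I then define a partial order $<$ on $M$ as the transitive closure of the union of the local $\prec_t$ restricted to atoms in $M$; compatibility forces this to remain acyclic on any pair of atoms sharing a bag, and by the connectedness property of TDs, chains extending across the tree pass through bags that enforce consistency. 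I linearise $<$ to obtain $\varphi$. Next, I show that every $a\in M$ is proven under $\varphi$: Formulas~(\ref{red:checkremove}) and~(\ref{red:checkremove2}) force $p^a_{<t}$ at some node $t$, Formulas~(\ref{red:check}) propagate this down to some $p^a_{t^*}$, and Formulas~(\ref{red:checkfirst}) then exhibit a concrete rule $r\in\Pi_{t^*}$ suitable for proving $a$ such that $b\prec_{t^*} a$ holds for every $b\in B_r^+$; by construction of $\varphi$, this gives $\varphi(b)<\varphi(a)$.

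\textbf{Main obstacle.} The delicate step is the ($\Leftarrow$) direction, specifically showing that the family of local orderings $\{\varphi_t\}$ stitched together via compatibility Formulas~(\ref{red:prop}) yields a \emph{single acyclic} global ordering on $M$. A priori, different branches of $T$ could induce incompatible local choices that, when unioned, create a cycle. The resolution exploits two TD facts: (i) if two atoms $a,b$ both occur in some bag, their relative order is fixed across all bags containing both by Formulas~(\ref{red:prop}); and (ii) any cycle in the transitive closure would have to be witnessable through bags along the tree path connecting the involved atoms, where compatibility again forces a consistent direction. For the HCF part, one additionally relies on the head-cycle-freeness so that the shifting argument~\cite{Ben-EliyahuDechter94,DixGottlobMarek96} makes the proving-via-single-head-atom characterization applicable, ensuring the ``suitable rule'' condition in Formulas~(\ref{red:checkfirst}) truly yields a proof in the sense of the preliminaries.
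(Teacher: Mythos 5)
Your proposal is correct and follows essentially the same route as the paper's proof: for ($\Rightarrow$) it encodes the canonical $t$-local orderings of an extension of $\varphi$ into the bit variables and sets the provability variables along the TD, and for ($\Leftarrow$) it reads off the local orderings, stitches them into a single global ordering using compatibility (Formulas~(\ref{red:prop})) together with TD connectedness, and traces provability back through Formulas~(\ref{red:checkremove})--(\ref{red:checkfirst}). The only cosmetic difference is that you obtain the global ordering by linearising the transitive closure of the local relations, whereas the paper builds it by iteratively assigning ranks $0,1,2,\dots$; both rest on the same acyclicity argument.
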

\end{restatetheorem}
\begin{proof}
``$\Rightarrow$'':
Assume an answer set~$M$ of~$\Pi$. 
Then, there is an ordering~$\varphi$ over~$\at(\Pi)$,
where every atom of~$M$ is proven. %
Next, we construct a model~$I$ of~$F$ as follows.
For each~$x\in\at(\Pi)$, we let (c1) $x\in I$ if $x\in M$.
For each node~$t$ of~$T$, %
and~$x\in\chi(t)$: %
(c2) For every~$l\in \bvali{x}{t}{i}$ with~$i=\hat\varphi_t(x)$, %
we set $l\in I$ if~$l$ is a variable. 
(c3) If there is a rule~$r\in\Pi_t$ proving~$x$, %
we let both~$p^x_{<t}, p^x_{t}\in I$.
Finally, (c4) we set $p^x_{<t} \in I$, if~$p^x_{<{t'}}\in I$ for~$t'\in\children(t)$. %

It remains to show that~$I$ is indeed a model of~$F$.
By (c1), Formulas~(\ref{red:checkrules}) are satisfied by~$I$.
Further, by (c2) of~$I$, the order of~$\varphi$ is preserved
among~$\chi(t)$ for each node~$t$ of~$T$, therefore Formulas~(\ref{red:prop}) are satisfied by~$I$.
Further, by definition of TDs, for each rule~$r\in \Pi$ there is a node~$t$ with~$r\in\Pi_t$.
Consequently, $M$ is proven with ordering~$\varphi$, %
for each~$x\in M$ there is a node~$t$ and a rule~$r\in\Pi_t$ proving~$x$.
Then, Formulas~(\ref{red:checkfirst}) are satisfied by~$I$ due to (c3), 
and Formulas~(\ref{red:check}) are satisfied by~$I$ due to (c4). 
Finally, by connectedness of TDs, also Formulas~(\ref{red:checkremove}) and~(\ref{red:checkremove2}) are satisfied.

``$\Leftarrow$'':
Assume any model~$I$ of~$F$.
Then, we construct an answer set~$M$ of~$\Pi$ as follows.
We set~$a\in M$ if~$a\in I$ for any~$a\in\at(\Pi)$.
We define for each node~$t$ a $t$-local ordering~$\varphi_t$,
where we set $\varphi_t(x)$ to~$j$ for each~$x\in\chi(t)$ such that~$j$ is the decimal number of the binary number for~$x$ in~$t$ given by~$I$.
Concretely, $\varphi_t(x)\eqdef j$, where~$j$ is such $I \models\bvali{x}{t}{j}$.
Then, we define an ordering~$\varphi$ iteratively as follows.
We set~$\varphi(a)\eqdef 0$ for each~$a\in \at(\Pi)$,
where there is no node~$t$ of~$T$ with~$\varphi_t(b) < \varphi_t(a)$.
Then, we set~$\varphi(a)\eqdef 1$ for each~$a\in \at(\Pi)$,
where there is no node~$t$ of~$T$ with~$\varphi_t(b) < \varphi_t(a)$ for some~$b\in\chi(t)$ not already assigned in the previous iteration, and so on. %
In turn, we construct~$\varphi$ iteratively by assigning increasing values to~$\varphi$.
Observe that $\varphi$ is well-defined, i.e., each atom~$a\in\at(\Pi)$ gets a unique value since it cannot be the case for two nodes~$t,t'$ and atoms $x,x'\in\chi(t)\cap\chi(t')$ that~$\varphi_t(x) < \varphi_t(x')$, but~$\varphi_{t'}(x) \geq \varphi_{t'}(x')$.
Indeed, this is prohibited by Formulas~(\ref{red:prop})
and connectedness of~$\mathcal{T}$ ensuring that $\mathcal{T}$ restricted to~$x$ is still 
connected.

It remains to show that~$\varphi$ is an ordering for~$\Pi$ proving~$M$.
Assume towards a contradiction that there is an atom~$a\in M$
that is not proven.
Observe that either~$a$ is in the bag~$\chi(n)$ of the root node~$n$ of~$T$, or it is forgotten below~$n$.
In both cases we require a node~$t$ such that~$p^x_{<t} \notin I$
by Formulas~(\ref{red:checkremove2}) and~(\ref{red:checkremove}), respectively.
Consequently, by connectedness of $\mathcal{T}$ and  Formulas~(\ref{red:check}) there is a node~$t'$,
where~$p_{t'}^x \in I$.
But then, since Formulas~(\ref{red:checkfirst}) are satisfied by~$I$,
there is a rule~$r\in\Pi_{t'}$ proving~$a$ with~$\varphi_{t'}$.
Therefore, since by construction of~$\varphi$, there cannot be a node~$t$ of~$T$ with~$x,x'\in\chi(t)$, $\varphi_t(x) < \varphi_t(x')$, but~$\varphi(x) \geq \varphi(x')$, 
$r$ is proving~$a$ with~$\varphi$. 
\end{proof}%

\section{
Strengthening the Reduction of Section~\ref{sec:tdguided}}\label{sec:bijective}
Next, we strengthen the reduction of Section~\ref{sec:tdguided} to remove %
some duplicate $\mathcal{T}$-local orderings
for a particular answer set of~$\Pi$.

\begin{figure}[t]%
  \centering{
  \shortversion{ %
    \begin{tikzpicture}[node distance=7mm,every node/.style={fill,circle,inner sep=2pt}]
\node (a) [label={[text height=1.5ex,yshift=0.0cm,xshift=-0.05cm]right:$d$}] {};
%
\node (d) [below of=a,label={[text height=1.5ex,yshift=0.09cm,xshift=-0.07cm]right:$a$}] {};
\node (e) [left of=d, label={[text height=1.5ex,xshift=-.34em,yshift=.42em]right:$c$}] {};
\node (c) [left of=e,label={[text height=1.5ex,yshift=0.09cm,xshift=0.05cm]left:$b$}] {};
\node (nb) [above of=c,label={[text height=1.5ex,yshift=0.09cm,xshift=0.05cm]left:$nb$}] {};
\node (nd) [above of=e,label={[text height=1.5ex,yshift=0.09cm,xshift=0.1cm]left:$nd$}] {};
\draw (c) to (e);
\draw (d) to (e);
\draw (nb) to (c);
\draw (nd) to (a);
\draw (d) to (a);
\end{tikzpicture}%
    \includegraphics{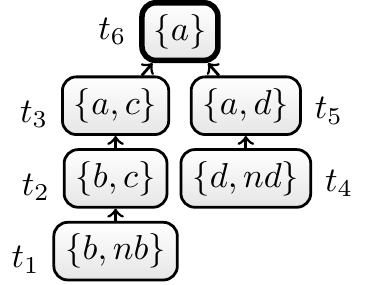}
    \vspace{-.4em}
    \caption{\FIX{Graph~$\mathcal{G}_{\Pi'}$ (left) and a tree decomposition~$\mathcal{T}'$ of~$\mathcal{G}_{\Pi'}$ (right), where program~$\prog'$ is given in Example~\ref{ex:separatedcycles}.}}
  }%
  \label{fig:tdtwo}}%
\end{figure}

In Formulas~(\ref{red:cnt:ineq}), we ensure that if a variable~$x\in\at(\Pi)$ is set to false, then its ordering position is zero.
Formulas~(\ref{red:cnt:ineq2}) make sure that if the position of~$x$ is set to~$i\geq 1$ in node~$t$, there has to be a bag atom~$y$ having position~$i-1$. 
Intuitively, if this is not the case we could shift the position of~$x$ from~$i$ to~$i-1$.
Finally, Formulas~(\ref{red:cnt:checkfirst}) ensure that whenever in a node~$t$ there is a rule~$r\in\Pi_t$ with~$x\in H_r$ and~$x$ has position~$i\geq 1$, 
either %
there is at least one atom~$y\in B_r^+$ having position~$i-1$, or $r$ is not proving~$x$.
{
\begin{flalign}
	\label{red:cnt:ineq}&\neg x \longrightarrow \bigwedge_{1 \leq j \leq \ceil{\log(\Card{\chi(t)})}}\hspace{-.75em}\neg b_{x_t}^j&&{\text{for each } x\in \chi(t)}\raisetag{2.5em}\\
	\label{red:cnt:ineq2}&\bvali{x}{t}{i} \longrightarrow\hspace{-1em}\bigvee_{y \in \chi(t)\setminus\{x\}}\hspace{-1em}\bvali{y}{t}{i-1}&&{\text{for each } x\in \chi(t),%
	1 \leq i < {\Card{\chi(t)}}}\\
	&\hspace{-1em}\bigwedge_{r\in\Pi_t, x\in H_r, 1 \leq i <\Card{\chi(t)}}\hspace{-2em}(\bvali{x}{t}{i}\longrightarrow %
	\label{red:cnt:checkfirst}\hspace{-.5em}\bigvee_{b\in B_r^+}\hspace{-.25em}\neg b \vee {(b \not\prec_t x)} \vee\hspace{-2.5em}\notag\\ 
	&%
	\hspace{5em}\bigvee_{a\in B_r^- \cup (H_r \setminus \{x\})}\hspace{-1.25em}a\, \vee\bigvee_{y\in B_r^+}\hspace{-.25em}\bvali{y}{t}{i-1})&&{\text{for each } x\in \chi(t)}\raisetag{2.5em} %
\end{flalign}%
}

In general, we do not expect to get rid of
all redundant $\mathcal{T}$-local orderings for an answer set, though.
The reason for this expectation lies in the fact that
the different (chains of) rules required for setting the position for an atom~$a$ that is part of cycles of~$D_\prog$ might be spread across the whole tree decomposition.
\FIX{Therefore, these local orderings might not provide the same information that we get from global orderings~\cite{Janhunen06}, where we have absolute values. Instead, these local orderings are insufficient to conclude absolute positions without further information. This is clarified in the following example.}
\begin{example}\label{ex:separatedcycles}
\FIX{Consider the program~$\Pi'\eqdef \{b \vee nb \leftarrow; c \leftarrow b; a \leftarrow c; d \vee nd  \leftarrow; a \leftarrow d\}$.
Observe that program~$\prog'$ has four answer sets~$\{nb, nd\}$, $\{a,d,nb\}$, $\{a,b,c,nd\}$, as well as~$\{a,b,c,d\}$.
Assume the TD~$\mathcal{T}'=(T',\chi')$ of Figure~\ref{fig:tdtwo}, whose width is~$1$ and equals the treewidth of~$\mathcal{G}_\Pi$.
This TD~$\mathcal{T}'$ is such that~$\Pi_{t_3}=\{a\leftarrow c\}$ and~$\Pi_{t_5}=\{a\leftarrow d\}$. 
The particular issue is that node~$t_3$ only considers atoms~$a,c$ and node~$t_5$ only considers atoms~$a,d$. 
Now, assume answer set~$M=\{a,b,c,d\}$.
Then, given only~$t_3$-local orderings and~$t_5$-local orderings, we cannot conclude a unique, canonical global ordering for~$\{a,c,d\}\subseteq M$.
In particular, one could prove~$a$ with either~$a\leftarrow c$ or with~$a\leftarrow d$ (or both).
From a global perspective the latter rule would be preferred to prove~$a$, since it allows an ordering with a smaller position for~$a$.
This is witnessed by the corresponding ordering~$\varphi\eqdef \{b\mapsto 0, c\mapsto 1, d\mapsto 0, a \mapsto 1\}$ for~$M$.
If instead we use the rule~$a\leftarrow c$ for proving~$a$, this would require ordering~$\varphi'\eqdef \{b\mapsto 0, c\mapsto 1, a\mapsto 2, d \mapsto 0\}$, i.e., $\varphi$ is preferred since~$\varphi(a) < \varphi'(a)$.
However, this information is ``lost'' due to the usage of local orderings, which makes it hard to define canonical orderings. 
Therefore our constructed \SAT formula yields two satisfying assignments for~$M$ in this case, corresponding to proving~$a$ either with~$a\leftarrow d$ or~$a\leftarrow c$.
In general, a TD similar to $\mathcal{T}'$ cannot be avoided.
In particular, one can construct programs, where similar situations have to occur in every TD of smallest width.
}
\end{example}

\FIX{
One can even devise further %
cases, where
without absolute orders it is hard to verify whether 
it is indeed required that an atom precedes an other atom. %
This is still the case, if %
for 
each answer set~$M$ of~$\Pi$, and every~$a\in M$, 
there can be only one rule~$r\in \Pi$ suitable for proving~$a$. %
From now on, we refer to such HCF programs~$\prog$ by \emph{uniquely provable}.
Note that even for uniquely provable programs,
there might be several cycles in its positive dependency graph. 
In fact, the program that will be used for the hardness result of normal \ASP and treewidth in Section~\ref{sec:hardness} is uniquely  provable.
However, even for uniquely provable programs and any TD of~$\mathcal{G}_\Pi$, 
there is in general no bijective correspondence 
between answer sets of~$\Pi$ and models of 
Formulas~(\ref{red:checkrules})--(\ref{red:cnt:checkfirst}). 
Consequently, one could compare different,
\emph{absolute} positions of orderings, cf.,~\cite{Janhunen06}, instead of the 
ordering positions relative to one TD node as presented here, 
which requires to store for each atom in the worst-case numbers up to~$\Card{\at(\Pi)}-1$.
Obviously, this number is then not bounded by the treewidth,
and one cannot encode it without increasing the treewidth in general.
Observe that even if one uses orderings on a component-by-component basis, similar to related work~\cite{Janhunen06}, this issue still persists in general since the whole program could be one large component.}
\section{Correctness of the Reduction of Section~\ref{sec:ext}}\label{sec:appendix}
\noindent In order to discuss correctness, we rely on the following lemma,
which establishes that the reduction actually ensures and preserves the transitive closure over~$\prec$
for each node of any tree decomposition.
\begin{lemma}[Transitive Closure]\label{lem:closure}
Let~$\Pi$ be an HCF program, $\mathcal{T}=(T,\chi)$ be a TD of~$\mathcal{G}_\Pi$, and let~$t$ be any node of~$T$.
Further, let $F_{\leq t}$ consist of all Formulas~(\ref{red2:proptrans}) constructed for node~$t$ and every node below~$t$ in~$T$,
and let $\chi_{\leq t}$ be the union of bag~$\chi(t)$ and all bag contents for every node below~$t$ in~$T$.
Then, for any model~$M$ of~$F_{\leq t}$ the following invariant holds:
	All \emph{transitive consequences} of~$\prec$ in~$F_{\leq t}$ are preserved for~$t$, i.e., we have $M\supseteq\{(x\prec y) \mid \{x,y\}\subseteq\chi(t),\text{ there is a path }x,p_1,\ldots,p_o,y\text{ from }x\text{ to }y\text{ in }\mathcal{G}_\prog\text{ with }\{p_1,\ldots,p_o\}\subseteq\chi_{\leq t},\allowbreak \{(x\prec p_1), (p_1 \prec p_2), \ldots, (p_o\prec y)\}\subseteq M\}$.  %
\end{lemma}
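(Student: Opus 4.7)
The plan is to prove this by structural induction on the subtree of~$T$ rooted at~$t$, with the induction hypothesis being the statement of the lemma applied to each child of~$t$. The case $o = 0$ is vacuous, since the hypothesis already includes $(x \prec y) \in M$.

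For the base case, $t$ is a leaf, so $\chi_{\leq t} = \chi(t)$ and every vertex on the path lies inside the bag $\chi(t)$. Given any such path $x, p_1, \ldots, p_o, y$ with the consecutive $\prec$-literals in~$M$, I iteratively instantiate Formulas~(\ref{red2:proptrans}) at node~$t$: first deriving $(x \prec p_2)$ from $(x \prec p_1) \wedge (p_1 \prec p_2)$, then $(x \prec p_3)$ from $(x \prec p_2) \wedge (p_2 \prec p_3)$, and so on, until reaching $(x \prec y) \in M$.

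For the inductive step, let $t$ have children among $\{t_1, \ldots, t_\ell\}$. Given a path $x, p_1, \ldots, p_o, y$ satisfying the hypothesis, I first partition it into maximal segments by marking as break-points those intermediate vertices that happen to lie in $\chi(t)$, obtaining a decomposition $x = q_0, \ldots, q_1, \ldots, q_2, \ldots, q_m = y$ where $q_0, \ldots, q_m \in \chi(t)$ and every vertex strictly between $q_i$ and $q_{i+1}$ lies in $\chi_{\leq t} \setminus \chi(t)$. The key step is to show that each such non-degenerate segment lives within the subtree of a single child $t' \in \{t_1, \ldots, t_\ell\}$: by connectedness of the TD, every vertex of $\chi_{\leq t} \setminus \chi(t)$ appears only inside one child subtree, and since each edge of the path has both endpoints in a common bag, that bag must lie in the same child's subtree; together with connectedness this also forces $q_i, q_{i+1} \in \chi(t')$.

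Applying the induction hypothesis to $t'$ and the sub-path from $q_i$ to $q_{i+1}$ then yields $(q_i \prec q_{i+1}) \in M$; for degenerate segments containing no intermediate vertex this holds directly from the assumption on the path. Finally, with $q_0, \ldots, q_m \in \chi(t)$ and each $(q_i \prec q_{i+1}) \in M$, the same iterative application of Formulas~(\ref{red2:proptrans}) at node~$t$ as in the base case yields $(x \prec y) \in M$. I expect the main obstacle to be the connectedness argument for the segments, which must carefully handle the fact that the bag witnessing an edge $(p_j, p_{j+1})$ may lie strictly deeper than~$t'$, so that $q_i, q_{i+1} \in \chi(t')$ cannot be read off directly and must instead be derived via the connectedness property of the TD.
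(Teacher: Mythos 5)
Your proof is correct, but it takes a different route from the paper's. The paper restricts attention to \emph{nice} TDs (invoking the standard fact that niceness can be assumed without loss of width) and then case-analyzes by node type: the leaf, forget and join cases are immediate because no new instance of Formulas~(\ref{red2:proptrans}) appears, and the only real work is the introduce case, where a missing consequence must involve the single freshly introduced atom~$a$, forcing $p_1\in\chi(t)$ so that one application of the induction hypothesis plus a single instance of Formulas~(\ref{red2:proptrans}) closes the argument by contradiction. You instead work directly with an arbitrary TD and decompose the path into maximal segments separated by break-points in~$\chi(t)$, push each segment into a single child subtree via connectedness, and then chain the resulting $\prec$-literals at~$t$. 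Your approach buys generality -- the lemma is stated for arbitrary TDs, and your argument covers them without the ``for non-nice TDs the cases just overlap'' handwave -- and it also sidesteps a subtlety the paper's introduce case glosses over (an intermediate vertex of the path could itself be the introduced atom when $x,y\neq a$, which your segmentation handles automatically since break-points in $\chi(t)$ are split out). The paper's version is shorter, at the price of the niceness assumption and a terser justification. One point worth making explicit in your write-up: the instances of Formulas~(\ref{red2:proptrans}) you chain at~$t$ require the three atoms to be pairwise distinct, which holds because the vertices of a path are distinct.
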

\begin{proof}
The proof proceeds by induction. Thereby we assume the invariant holds for every child node of~$t$ and show that then it also is ensured for~$t$.
For simplicity, we only show the ideas for nice TDs, as for non-nice TDs the following cases just overlap.
Recall that a nice TD of~$\mathcal{G}_\Pi$ of width~$k=\tw{\mathcal{G}_\Pi}$, 
having only~$h=\mathcal{O}(\Card{\at(\Pi)})$ many nodes~\cite{Kloks94a}[Lem.\ 13.1.2] always exists. 
We distinguish the following cases.

Case ($\type(t)=\leaf$): The invariant vacuously holds since~$\chi(t)=\emptyset$.

Case ($\type(t)=\rem$ or~$\type(t)=\join$): In these cases, we do not encounter any new auxiliary variable, nor do we add a new formula (that has not been constructed for a node below~$t$) of the form~(\ref{red2:proptrans}). Therefore, since the invariant holds for the child node of~$t$, it is also valid for~$t$.

Case ($\type(t)=\intr$): Let~$\{a\}=\chi(t)\setminus\chi(t')$ with~$\children(t)=\{t'\}$. Assume towards a contradiction that a transitive consequence of~$F_{\leq t}$ is missing, i.e., we have a path $x,p_1,\ldots,p_o,y$ from $x$ to $y$ in $\mathcal{G}_\prog$ with $\{p_1,\ldots,p_o\}\subseteq\chi_{\leq t}$, but~$(x\prec y)\notin M$.
Then, we have either $x= a$ or~$y= a$ since otherwise the consequence would have been already missing for~$t'$. We continue with the case~$x=a$ since the other case works analogously.
Consequently, $p_1\in\chi(t)$, since otherwise~$\mathcal{T}$ would not be a TD of~$\mathcal{G}_\prog$ due to the fact that the edge~$\{x,p_1\}$ would not occur in any TD node of~$\mathcal{T}$.
Then, by the induction hypothesis, we have~$(p_1\prec y)\in M$. 
As a result, we obtain~$(x\prec y)\in M$ since~$x,p_1,y\in\chi(t)$, which contradicts our assumption.
\end{proof}%
\FIX{\begin{corollary}[Acyclicity]\label{cor:acyclicity}
Let~$\Pi$ be an HCF program, $\mathcal{T}=(T,\chi)$ be a TD of~$\mathcal{G}_\Pi$, and let~$t$ be any node of~$T$.
Then, the formula~$F'_{\leq t}$ consisting of all Formulas~(\ref{red2:proptrans}) and~(\ref{red2:exclusion}) for~$t$ and every node below~$t$,
ensures that there cannot exist a model~$M$ of~$F'_{\leq t}$, where the transitive consequences of~$\prec$ in~$F'_{\leq t}$ as in Lemma~\ref{lem:closure} form a cycle.
\end{corollary}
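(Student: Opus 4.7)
The plan is to proceed by contradiction. Suppose that $M$ is a model of $F'_{\leq t}$ and that the set of transitive consequences of $\prec$ induced by $M$ (as given by Lemma~\ref{lem:closure}) contains a cycle. Select a cycle $x_1, x_2, \ldots, x_m, x_1$ (with $m \geq 2$) of \emph{minimum} length, where each link $(x_i \prec x_{i+1}) \in M$ is either directly present in $M$ or has been placed there by the closure property of Lemma~\ref{lem:closure}. Note that for each such link, the associated pair of atoms must co-occur in at least one bag $\chi(t'')$ of the subtree rooted at $t$: in the direct case the variable $(x_i \prec x_{i+1})$ is only constrained (and thus only "meaningfully" part of $M$) in bags containing both atoms, and in the closure case the atoms sit jointly in the witness bag supplied by Lemma~\ref{lem:closure}.

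For the base case $m=2$, we have $(x_1 \prec x_2), (x_2 \prec x_1) \in M$. By the observation above, each of these is witnessed by some bag below or equal to $t$. Using the connectedness condition of the TD on the subtree where $x_1$ and $x_2$ both appear, pick any bag $\chi(t'')$ containing both atoms. Then Formula~(\ref{red2:exclusion}) instantiated at $t''$ reads $\neg(x_1 \prec x_2) \vee \neg(x_2 \prec x_1)$, which is violated by $M$, giving the contradiction.

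For $m \geq 3$, the plan is to exhibit a strictly shorter cycle, contradicting minimality. Consider consecutive links $(x_i \prec x_{i+1})$ and $(x_{i+1} \prec x_{i+2})$ in $M$. Using Lemma~\ref{lem:closure} applied to $F_{\leq t}$ (viewed as $F'_{\leq t}$ without the exclusion formulas, which is fine for the closure argument), and the TD's connectedness for the atom $x_{i+1}$, one shows there is a bag $\chi(t^*)$ containing all three of $x_i, x_{i+1}, x_{i+2}$: either such a bag already exists directly, or one propagates the pertinent $\prec$-facts upward along the path connecting the witnessing bags of the two consecutive links, which is precisely what Lemma~\ref{lem:closure} affords. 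Applying Formula~(\ref{red2:proptrans}) at $t^*$ then yields $(x_i \prec x_{i+2}) \in M$, and the cycle shortens by one, contradicting minimality.

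The main obstacle is the step of bringing three consecutive cycle atoms into a common bag, or equivalently, matching the hypothesis of Lemma~\ref{lem:closure} along the "corner" $x_i, x_{i+1}, x_{i+2}$ of the cycle. This requires chasing $x_{i+1}$ through the subtree (which is connected because $\mathcal{T}$ is a TD) while making sure that each lifted $\prec$-relation is again a genuine transitive consequence so that Lemma~\ref{lem:closure} keeps applying. Once this propagation argument is nailed down, the shortening step is mechanical, and the base case $m=2$ gives the immediate contradiction via Formula~(\ref{red2:exclusion}).
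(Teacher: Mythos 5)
Your base case is exactly the right mechanism (two atoms with both $\prec$-directions in $M$ that share a bag violate Formulas~(\ref{red2:exclusion})), but the inductive shortening step has a genuine gap. To shorten the cycle at the corner $x_i,x_{i+1},x_{i+2}$ you need to derive $(x_i\prec x_{i+2})\in M$, and for that you need $x_i$ and $x_{i+2}$ to co-occur in some bag: Formulas~(\ref{red2:proptrans}) are only instantiated for triples inside a single bag, and the conclusion of Lemma~\ref{lem:closure} is likewise restricted to pairs $\{x,y\}\subseteq\chi(t)$. A tree decomposition only guarantees a common bag for pairs that form an edge of $\mathcal{G}_\Pi$; three consecutive cycle atoms, and in particular the pair $x_i,x_{i+2}$, need not share any bag, so the propositional variable $(x_i\prec x_{i+2})$ may not even exist in $F'_{\leq t}$. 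Your "propagate upward" remark does not repair this, because no amount of propagation creates a variable over a pair that never meets in a bag; the minimal-cycle induction is therefore the wrong scaffolding.

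The fix — and the paper's route — is to avoid shortening altogether. Take any two \emph{adjacent} atoms $y,y'$ on the cycle with $(y\prec y')\in M$; since consecutive cycle vertices are joined by an edge of $\mathcal{G}_\Pi$, they do share a bag. Now apply Lemma~\ref{lem:closure} a single time to the path that goes the long way around the cycle from $y'$ back to $y$: all intermediate vertices lie in $\chi_{\leq t}$ and all step relations are in $M$, so the lemma yields $(y'\prec y)\in M$. Together with $(y\prec y')\in M$ this violates the instance of Formulas~(\ref{red2:exclusion}) in the common bag of $y$ and $y'$, giving the contradiction directly. In other words, Lemma~\ref{lem:closure} already performs globally the transitive chaining you were trying to reconstruct locally bag by bag, and invoking it once on the whole cyclic path sidesteps the missing-bag problem entirely.
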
}
\begin{proof}
The proof is a direct consequence of Lemma~\ref{lem:closure} and the observation that if there is a path~$x,p_1,\ldots,p_o,x$
from~$x$ to~$x$ in~$\mathcal{G}_\Pi$ with~$(x\prec p_1), (p_1 \prec p_2), \ldots, (p_o \prec x)\in M$, one can reverse any~$(y\prec y')\in M$.
More concretely, for any~$(y\prec y')\in M$ with~$y,y'\in \chi(t)$, by Formulas~(\ref{red2:proptrans}), one can derive~$(y'\prec y)\in M$, which is prohibited
by Formulas~(\ref{red2:exclusion}).
Consequently, cyclic, transitive consequences over variables in~$\chi_{\leq t}$ cannot occur in any model of~$F'_{\leq t}$.
The claim follows since~$\chi_{\leq n}=\at(\prog)$ for root~$n=\rootOf(T)$.
\end{proof}

\begin{theorem}[Correctness and Bijectivity]\label{thm:bijective}
The reduction from a uniquely provable program~$\Pi$ and a TD~$\mathcal{T}=(T,\chi)$ of~$\mathcal{G}_\Pi$ to \SAT formula~$F'$ consisting of Formulas~(\ref{red2:checkrules}), (\ref{red2:checkremove})--(\ref{red2:check}), and (\ref{red2:prove})--(\ref{red2:check2}) is correct.
Concretely, for each answer set of~$\Pi$ there is \emph{exactly one} model of~$F'$ and vice versa.
\end{theorem}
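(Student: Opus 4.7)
My plan is to establish the theorem by proving the two directions of the claimed bijection separately, leveraging Lemma~\ref{lem:closure} (transitive closure) and Corollary~\ref{cor:acyclicity} (acyclicity of $\prec$) as the key structural tools. To set up the forward direction, I would take an arbitrary answer set $M$ of $\Pi$; because $\Pi$ is uniquely provable, for every $a\in M$ there is exactly one rule $r_a\in\Pi$ suitable for proving~$a$. This gives a canonical dependency relation $D\eqdef\{(b,a)\mid a\in M,\ b\in B_{r_a}^+\}$, whose transitive closure $\prec^*$ is acyclic by the standard argument that proven answer sets admit an ordering. I would then build a candidate assignment $I$ by setting each propositional atom $a$ of $\Pi$ true iff $a\in M$, each auxiliary variable $(x\prec y)$ true iff $x\prec^* y$, and then propagate all $p$-variables according to the iff-formulas~(\ref{red2:prove}), (\ref{red2:checkfirst}), (\ref{red2:checkfirst2}), (\ref{red2:check}), (\ref{red2:check2}) (those are deterministic given the atoms and the $\prec$-variables).

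The routine verification that $I\models F'$ proceeds formula-by-formula: satisfiability Formulas~(\ref{red2:checkrules}) hold because $M\models\Pi$; Formulas~(\ref{red2:propsmaller}) are immediate from the definition of $\prec^*$; Formulas~(\ref{red2:proptrans}) and~(\ref{red2:exclusion}) follow from transitivity and acyclicity of $\prec^*$; provability along the tree decomposition (Formulas~(\ref{red2:checkremove})--(\ref{red2:check2})) holds since for every $a\in M$ the unique proving rule $r_a$ lies in some bag program $\Pi_t$ by the TD properties, and for every pair with $x\prec^* y$ there is a chain grounded in some $r_y$ that can be witnessed in a suitable bag. Finally, Formulas~(\ref{red2:prove}) hold because the uniquely provable condition guarantees that $r_a$ is the only rule whose body/head conditions are simultaneously satisfied by $M$, so $p^a_{t,r}$ collapses to the correct truth value.

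For the backward direction, I would start from a model $I$ of $F'$, set $M\eqdef I\cap\at(\Pi)$, and use Formulas~(\ref{red2:checkrules}) to conclude $M\models\Pi$. By Corollary~\ref{cor:acyclicity}, the relation $\prec$ encoded in $I$ is acyclic on $\at(\Pi)$, hence extends to a total ordering $\varphi$. For each $a\in M$, connectedness of $I$ via Formulas~(\ref{red2:checkremove})--(\ref{red2:check}) forces some $p^a_{t,r}$ to be true, and unfolding its iff~(\ref{red2:prove}) together with~(\ref{red2:propsmaller}) yields a rule suitable for proving~$a$ with all positive body atoms preceding~$a$ under $\varphi$. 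Thus $M$ is an answer set, and unique provability implies that this rule is $r_a$.

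The main obstacle, and the crux of the bijectivity claim, is the uniqueness part: showing that for a given answer set $M$ there is \emph{exactly one} model of $F'$. The subtlety is that the $(x\prec y)$ variables are not directly constrained to be minimal, so one must rule out ``spurious'' orderings. I plan to handle this by an induction along the tree decomposition that traces any true literal $(x\prec y)$ back through Formulas~(\ref{red2:checkremove3})--(\ref{red2:check2}) to some freshly derived $p^{y\prec x}_t$, and then via~(\ref{red2:checkfirst2}) either to a proving rule (which must be $r_y$, forcing $x\in B_{r_y}^+$) or to a shorter transitive chain. Acyclicity ensures this recursion terminates in base cases coming from actual proving rules, and unique provability ensures no alternative rule can generate a different chain. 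This will establish that the set of true $(x\prec y)$ coincides with $\prec^*$, whence all $p$-variables are forced by the iff-definitions, yielding uniqueness.
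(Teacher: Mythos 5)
Your proposal is correct and follows essentially the same route as the paper: both directions are handled by constructing a canonical model from an answer set and reading an ordering off a model, both lean on Lemma~\ref{lem:closure} and Corollary~\ref{cor:acyclicity} to pin down the $\prec$-relation, and the uniqueness argument in both cases hinges on unique provability forcing the $p^x_{t,r}$ variables, after which all remaining auxiliary variables are determined by the iff-definitions. The one substantive difference is in the forward construction: the paper picks a \emph{minimal ordering} $\varphi$ and sets $(x\prec y)$ true whenever $\varphi(x)<\varphi(y)$, whereas you set $(x\prec y)$ true exactly on the transitive closure $\prec^*$ of the dependency relation induced by the unique proving rules. Your choice is the more robust one here, since Formulas~(\ref{red2:checkremove3})--(\ref{red2:check2}) together with~(\ref{red2:checkfirst2}) demand that every true $(x\prec y)$ be justified either by a proving rule or by a transitive chain; pairs that are merely comparable under a level-based ordering but not dependency-related would have no such justification, so restricting to $\prec^*$ is exactly what makes the provability constraints on the $\prec$-variables verifiable. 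Your uniqueness argument — tracing each true $(x\prec y)$ back through its provability chain to a base case given by the unique suitable rule, with termination guaranteed by acyclicity — is a slightly more explicit rendering of the paper's assertion that $I'$ must agree with $I$ first on the $p^x_{t,r}$ variables and consequently on everything else, and it correctly identifies where unique provability is used.
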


\begin{proof}
``$\Rightarrow$'':
Assume an answer set~$M$ of~$\Pi$. 
Then, there is a minimal\footnote{\FIX{An ordering~$\varphi$ over~$\at(\prog)$ is \emph{minimal (for~$M$)} if there is no atom~$a\in\at(\prog)$ such that when decreasing~$\varphi(a)$, interpretation~$M$ can still be proved with the resulting (modified) ordering.}} ordering~$\varphi$ over~$\at(\Pi)$,
where every atom of~$M$ is proven. %
Next, we construct a model~$I$ of~$F'$ as follows.
For each~$x\in\at(\Pi)$, we let (c1) $x\in I$ if $x\in M$.
For each node~$t$ of~$T$, %
and~$x,y\in\chi(t)$: %
(c2) We set~$(x\prec y)\in I$ if and only if~$\varphi(x)<\varphi(y)$. %
(c3) If there is a rule~$r\in\Pi_t$ proving~$x$ using~$\varphi$, %
we let~$p^x_{t,r},p^x_{<t}, p^x_{t}\in I$ as well as~$p^{y\prec x}_{t},p^{y\prec x}_{<t}\in I$ for~$y\in B_r^+$.
Further (c4) if there is~$z\in\chi(t)$ with~$\Card{\{x,y,z\}}=3$ and~$(x\prec y), (y\prec z)\in I$, we set~$p^{x\prec z}_{t},p^{x\prec z}_{<{t}} \in I$.
Finally, (c5) for~$t'\in\children(t)$, we set $p^x_{<t} \in I$ if~$p^x_{<{t'}}\in I$, as well as~$p^{y\prec x}_{<t} \in I$ if~$p^{y\prec x}_{<{t'}}\in I$. %

Observe that~$I$ is indeed a model of~$F'$.
By (c1), Formulas~(\ref{red2:checkrules}) are satisfied by~$I$.
Further, by (c2) of~$I$, the order of~$\varphi$ is preserved
among~$\chi(t)$ for each node~$t$ of~$T$, therefore Formulas~(\ref{red2:proptrans}) and~(\ref{red2:exclusion}) are satisfied by~$I$.
Then, due to (c3), Formulas~(\ref{red2:prove}) and Formulas~(\ref{red2:propsmaller}) are satisfied and so are Formulas~(\ref{red2:checkfirst}).
Further, due to both (c3) and (c4), Formulas~(\ref{red2:checkfirst2}) are satisfied.
Further, by definition of TDs, for each rule~$r\in \Pi$ there is a node~$t$ with~$r\in\Pi_t$.
Consequently, $M$ is proven with ordering~$\varphi$, %
for each~$x\in M$ there is a node~$t$ and a rule~$r\in\Pi_t$ proving~$x$.
Then, Formulas~(\ref{red2:check}) and~(\ref{red2:check2}) satisfied by~$I$ due to (c5). 
Finally, by connectedness of TDs, also Formulas~(\ref{red2:checkremove}) and (\ref{red2:checkremove2}) as well as~(\ref{red2:checkremove3}) and (\ref{red2:checkremove5}) are satisfied.

It remains to show that there cannot be a model~$I'\neq I$ of~$F'$ with~$I'\cap \at(\Pi)=M$.
Assume towards a contradiction that such a model~$I'$ of~$F'$ indeed exists.
Then, since~$\prog$ is uniquely provable, for each atom~$x\in\at(\prog)$, there is only one rule~$r$
suitable for proving~$x$.
Observe that therefore~$I'$ agrees with~$I$ on variables of the form~$p^x_{t,r}$ by Formulas~(\ref{red2:prove}).
Consequently, $I'$ agrees with~$I$ on provability variables of the form~$p^x_t, p^x_{<t}$ by Formulas~(\ref{red2:checkfirst})
and~(\ref{red2:checkremove})--(\ref{red2:check}).
Analogously~$I'$ also sets provability variables of the form~$p^{x\prec y}_t, p^{x\prec y}_{<t}$ as~$I$
by Formulas~(\ref{red2:checkfirst2}) and~(\ref{red2:checkremove3})--(\ref{red2:check2}).
Then, since relation~$\prec$ for~$I'$ is transitively closed by Lemma~\ref{lem:closure}
and due to acyclicity of~$\prec$ by Corollary~\ref{cor:acyclicity}, we have that~$I'$ precisely gives rise to ordering~$\varphi$,
which contradicts the assumption.

``$\Leftarrow$'':
Assume any model~$I$ of~$F'$.
Then, we construct an answer set~$M$ of~$\Pi$ as follows.
We set~$a\in M$ if~$a\in I$ for any~$a\in\at(\Pi)$.
Then, we define an ordering~$\varphi$ iteratively as follows.
We set~$\varphi(a)\eqdef 0$ for each~$a\in \at(\Pi)$,
where there is no atom~$b\in\at(\Pi)$ with~$(b\prec a)\in I$. %
Then, we set~$\varphi(a)\eqdef 1$ for each~$a\in \at(\Pi)$,
where there is no atom~$b\in\at(\Pi)$ with~$(b\prec a)\in I$ such that~$b$ has not  been already assigned in the previous notation.
In turn, we construct~$\varphi$ iteratively by assigning increasing values to~$\varphi$.
Observe that $\varphi$ is well-defined, i.e., each atom~$a\in\at(\Pi)$ gets a unique value, since~$\prec$ is transitively closed by Lemma~\ref{lem:closure}
and by Corollary~\ref{cor:acyclicity} there cannot be a cycle over relation~$\prec$ for~$I$.

Obviously, $M$ is a model of~$\prog$ by Formulas~(\ref{red2:checkrules}).
It remains to show that~$\varphi$ is an ordering for~$\Pi$ proving~$M$.
Assume towards a contradiction that there is an atom~$a\in M$
that is not proven.
Observe that either~$a$ is in the bag~$\chi(n)$ of the root node~$n$ of~$T$, or it is forgotten below~$n$.
In both cases we require a node~$t$ such that~$p^x_{<t} \notin I$
by Formulas~(\ref{red:checkremove2}) and~(\ref{red:checkremove}), respectively.
Consequently, by connectedness of $\mathcal{T}$ and Formulas~(\ref{red:check}) there is a node~$t'$,
where~$p_{t'}^x \in I$.
But then, since Formulas~(\ref{red2:checkfirst2}) are satisfied by~$I$,
there is a rule~$r\in\Pi_{t'}$ proving~$a$ with~$\varphi$.
\end{proof}%

\FIX{Note that the bijectivity result above can be %
generalized.
Let therefore~$\Pi$ be an HCF program and~$M$ be an answer set of~$\Pi$. 
Then, we define the \emph{dependency graph~$D_\prog^M$ for~$M$} by~$D_\prog^M\eqdef (M, E)$ where we have an edge~$(x,y)\in E$ if and only
if there is a rule~$r\in\prog$ suitable for proving~$y$ with~$x\in B_r^+$.
Observe that Theorem~\ref{thm:bijective} also holds for HCF programs~$\Pi$, as long as for each answer set~$M$ of~$\Pi$
the graph~$D_\prog^M$ is acyclic. %
We call such programs~$\prog$ \emph{deterministically provable} and it is easy to see that every uniquely provable program is also deterministically provable.
While for such deterministically provable programs there might be several rules suitable for proving an atom~$a$ of such an answer set~$M$,
a model for~$M$ of the \SAT formula constructed above is still unique as it greedily aims to prove~$a$ with every applicable rule, cf., Formulas~(\ref{red2:propsmaller}).
\begin{corollary}\label{thm:bijective2}
The reduction from a deterministically provable program~$\Pi$ and a TD~$\mathcal{T}=(T,\chi)$ of~$\mathcal{G}_\Pi$ to \SAT formula~$F'$ consisting of Formulas~(\ref{red2:checkrules}), (\ref{red2:checkremove})--(\ref{red2:check}), and (\ref{red2:prove})--(\ref{red2:check2}) is correct.
Concretely, for each answer set of~$\Pi$ there is \emph{exactly one} model of~$F'$ and vice versa.
\end{corollary}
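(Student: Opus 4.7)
The plan is to reuse the structure of the proof of Theorem~\ref{thm:bijective}, adapting only the places where unique provability was invoked to exploit the weaker assumption that $D_\prog^M$ is acyclic for every answer set $M$. As before, I would argue the two directions separately.

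For the $(\Rightarrow)$ direction, fix an answer set $M$ of $\prog$. First, I would build an ordering $\varphi$ as follows: since $D_\prog^M$ is acyclic, it admits a topological ordering; extend this to all of $\at(\prog)$ by assigning arbitrary values to atoms not in $M$. I would then construct $I$ as in the proof of Theorem~\ref{thm:bijective}, with the crucial adaptation that for \emph{every} rule $r\in\prog_t$ that is suitable for proving $x$ (in the sense of Formulas~(\ref{red2:prove}), i.e., $B_r^+\subseteq M$, $M\cap(B_r^-\cup(H_r\setminus\{x\}))=\emptyset$, and no atom of $B_r^+$ exceeds $x$ in $\varphi$), we set $p^x_{t,r}\in I$ and accordingly $(b\prec x)\in I$ for every $b\in B_r^+$. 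Acyclicity of $D_\prog^M$ guarantees that the greedy forcing of all these precedences through Formulas~(\ref{red2:propsmaller}) and the transitive closure of~$\prec$ (Lemma~\ref{lem:closure}) remains consistent with the antisymmetry condition of Formulas~(\ref{red2:exclusion}). The remaining satisfaction checks for Formulas~(\ref{red2:checkrules}), (\ref{red2:proptrans}), (\ref{red2:checkfirst}), (\ref{red2:checkfirst2}) and the guiding Formulas~(\ref{red2:checkremove})--(\ref{red2:check2}) go through verbatim as in the proof of Theorem~\ref{thm:bijective}.

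For uniqueness of the model associated with $M$, suppose $I'$ is another model of $F'$ with $I'\cap\at(\prog)=M$. By Formulas~(\ref{red2:prove}), the value of $p^x_{t,r}$ is determined \emph{solely} by $M$ together with the variables $(x\prec b)$, and conversely by Formulas~(\ref{red2:propsmaller}) each true $p^x_{t,r}$ forces the corresponding $(b\prec x)$ for $b\in B_r^+$. Hence in $I'$, every rule suitable for proving an atom $x\in M$ must again be applied, which pins down precisely the same edges of the precedence relation as in the construction of $I$. By Lemma~\ref{lem:closure}, the transitive closure under Formulas~(\ref{red2:proptrans}) is then forced, and by Corollary~\ref{cor:acyclicity}, it must be acyclic; this is where the acyclicity of $D_\prog^M$ enters in an essential way, since it guarantees that the greedy forcing does not collide with Formulas~(\ref{red2:exclusion}). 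The resulting relation $\prec$ in $I'$ agrees with that of $I$ on all pairs forced by Formulas~(\ref{red2:propsmaller}), and any pair not reachable via transitive closure from such forced pairs must still be consistent with~$\prec$; here a short argument shows that only one consistent completion is possible, because each unforced pair is already determined by the provability structure through Formulas~(\ref{red2:checkfirst2}). All provability variables $p^x_t$, $p^x_{<t}$, $p^{x\prec y}_t$, $p^{x\prec y}_{<t}$ are then functionally determined, so $I'=I$.

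The $(\Leftarrow)$ direction proceeds exactly as in the proof of Theorem~\ref{thm:bijective}: from a model $I$ of $F'$, define $M\eqdef I\cap\at(\prog)$ and build $\varphi$ by iteratively peeling off atoms with no incoming $\prec$-edge in $I$, which is well-defined by Lemma~\ref{lem:closure} and Corollary~\ref{cor:acyclicity}. Formulas~(\ref{red2:checkrules}) give $M\models\prog$, and the argument via Formulas~(\ref{red2:checkremove}), (\ref{red2:checkremove2}), (\ref{red2:check}), (\ref{red2:checkfirst}), and (\ref{red2:prove}) shows that every $a\in M$ is proven in $\prog$ by $\varphi$, so $M$ is an answer set.

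The main obstacle will be the uniqueness step: verifying that the greedy application forced by Formulas~(\ref{red2:propsmaller}) together with the transitive closure of $\prec$ determines the full relation $\prec$ on $\chi_{\leq n}$ for a given $M$, rather than leaving room for several compatible orderings. This is precisely where acyclicity of $D_\prog^M$ is needed -- it ensures both that the forced precedences are consistent and that any remaining slack between incomparable atoms does not produce two genuinely different satisfying assignments, because the provability variables in Formulas~(\ref{red2:checkfirst2}) are uniquely pinned down by the structure of $D_\prog^M$.
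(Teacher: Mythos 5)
Your proposal is correct and follows essentially the same route as the paper's own proof: both reduce to the argument of Theorem~\ref{thm:bijective} and replace the unique-rule step by the observation that acyclicity of $D_\prog^M$ forces every suitable rule to be applied greedily via Formulas~(\ref{red2:prove}) and~(\ref{red2:propsmaller}), which together with Lemma~\ref{lem:closure} and Corollary~\ref{cor:acyclicity} pins down all auxiliary variables. The only cosmetic difference is that you construct the ordering as a topological order of $D_\prog^M$ where the paper speaks of a ``greedy-minimal'' ordering; these coincide for the purpose of the argument, and your level of detail on why a suitable rule cannot be blocked (a justified $(x\prec b)$ would close a cycle in $D_\prog^M$) matches the paper's.
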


\begin{proof}
``$\Rightarrow$'':
Assume an answer set~$M$ of~$\Pi$. 
Then, there is a greedy-minimal\footnote{\FIX{A \emph{greedy-minimal} ordering~$\varphi$ over~$\at(\prog)$ for~$M$ is an ordering that serves in proving every atom~$a\in M$ with every rule~$r\in\prog$ suitable for proving~$a$, which is not the case anymore when decreasing~$\varphi(b)$ for some~$b\in\at(\prog)$.}} ordering~$\varphi$ over~$\at(\Pi)$,
where every atom of~$M$ is proven. %
Next, we construct a model~$I$ of~$F'$ as in Theorem~\ref{thm:bijective}.
Observe that~$I$ is a model of~$F'$ by the same arguments as in Theorem~\ref{thm:bijective}.

It remains to show that there cannot be a model~$I'\neq I$ of~$F'$ with~$I'\cap \at(\Pi)=M$.
Assume towards a contradiction that such a model~$I'$ of~$F'$ exists.
Then, since~$\prog$ is deterministically provable, i.e., there is no cycle in~$D_\prog^M$ for each atom~$x\in M$, every rule suitable for proving~$x$ proves~$x$, cf., Formulas~(\ref{red2:propsmaller}). %
Observe that therefore~$I'$ agrees with~$I$ on variables of the form~$p^x_{t,r}$ by Formulas~(\ref{red2:prove}).
Consequently, $I'$ agrees with~$I$ on provability variables of the form~$p^x_t, p^x_{<t}$ by Formulas~(\ref{red2:checkfirst})
and~(\ref{red2:checkremove})--(\ref{red2:check}).
Analogously~$I'$ also sets provability variables of the form~$p^{x\prec y}_t, p^{x\prec y}_{<t}$ as~$I$
by Formulas~(\ref{red2:checkfirst2}) and~(\ref{red2:checkremove3})--(\ref{red2:check2}).
Then, since relation~$\prec$ for~$I'$ is transitively closed by Lemma~\ref{lem:closure}
and due to acyclicity of~$\prec$ by Corollary~\ref{cor:acyclicity}, we have that~$I'$ precisely gives rise to ordering~$\varphi$,
which contradicts the assumption.

``$\Leftarrow$'': The other direction holds by the same argument as in  Theorem~\ref{thm:bijective}.
\end{proof}%

\begin{example}
Recall program~$\prog$ from Example~\ref{ex:running1} and answer sets~$M_1=\{b,c,d\}$, $M_2=\{b,e\}$, $M_3=\{a,c,d\}$ as well as~$M_4=\{a,d,e\}$ of~$\prog$. 
Observe that the graph~$D_\prog^M$ is acyclic for any answer set~$M\in\{M_1,M_3,M_4\}$, whereas~$D_\prog^{M_2}$ contains a cycle.
Consequently, the formula~$F'$ constructed by the reduction above is guaranteed to have exactly one model for each of the answer sets~$M_1, M_3, M_4$, which is a priori not guaranteed for~$M_2$. However, in this case it is easy to see that also for~$M_2$ formula~$F'$ has only one corresponding model since, although~$D_\prog^{M_2}$ is not acyclic, there is no rule suitable for proving~$e$ without having~$b$ first.
\end{example}}

\section{Correctness of the Reduction of Section~\ref{sec:hardness}}\label{sec:corrhard}

\noindent For proving correctness of Theorem~\ref{thm:corr}, we rely on the following key lemma.

\begin{lemma}[$\leq 1$ Outgoing Edge]\label{lem:degree}
Let us consider any instance~$I=(G,P)$ of the \problemFont{Disjoint Paths Problem},
and any answer set~$M$ of~$R(I, \mathcal{T})$ using any pair-connected TD~$\mathcal{T}$ of~$(G,P)$.
Then, there cannot be two edges of the form $e_{u,v},\allowbreak e_{u,w}\in M$.
\end{lemma}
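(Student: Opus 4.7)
The plan is to argue by contradiction: assume $e_{u,v}, e_{u,w} \in M$ with $v \neq w$, and let $t_1, t_2$ be the unique nodes of $T$ with $(u,v) \in E^{\text{re}}_{t_1}$ and $(u,w) \in E^{\text{re}}_{t_2}$ (each edge of $G$ lies in exactly one ready-set). The case split is on the position of $t_1$ relative to $t_2$ in $T$. The overall strategy uses the five rule schemas (\ref{red:setf})--(\ref{red:localdegree}) as follows: (\ref{red:setf}) turns a used edge $e_{u,*}$ with $u$ still in the bag into the ``finished'' flag $f^u$; (\ref{red:propf}) propagates $f^u$ upward along nodes whose bag contains $u$; and (\ref{red:tddegree}), (\ref{red:prohibitf}), (\ref{red:localdegree}) are integrity constraints that clash whenever two instances of $f^u$ meet each other or a fresh edge derivation.

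If $t_1 = t_2$, the instance of (\ref{red:localdegree}) for the pair $(u,v), (u,w) \in E^{\text{re}}_{t_1}$ is directly violated, and the contradiction is immediate. Otherwise, I first obtain $f^u_{t_1}, f^u_{t_2} \in M$ from (\ref{red:setf}) (see the obstacle paragraph below regarding the side-condition $u \in \chi(\cdot)$). If $t_2$ is a proper ancestor of $t_1$ in $T$ (or vice versa, by symmetry), tree-decomposition connectedness places $u$ in every bag on the path $t_1 \to t_2$, so iterating (\ref{red:propf}) yields $f^u_{t'} \in M$ for the child $t'$ of $t_2$ on that path; the instance of (\ref{red:tddegree}) associated with $(u,w) \in E^{\text{re}}_{t_2}$ and $u \in \chi(t')$ then supplies the forbidden combination $\{f^u_{t'}, e_{u,w}\} \subseteq M$.

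If $t_1$ and $t_2$ are incomparable in $T$, I take their least common ancestor $t_3$. Since $\mathcal{T}$ is nice, $t_3$ must be a join node with children $t'_1, t'_2$ lying respectively above $t_1, t_2$, and $\chi(t_3) = \chi(t'_1) = \chi(t'_2)$. Connectedness applied to $u \in \chi(t_1) \cap \chi(t_2)$ forces $u \in \chi(t'_1) \cap \chi(t'_2)$; propagation via (\ref{red:propf}) yields $f^u_{t'_1}, f^u_{t'_2} \in M$; and the instance of (\ref{red:prohibitf}) at the two children $t'_1, t'_2$ of $t_3$ is violated.

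The main obstacle I anticipate is the very first step: justifying that (\ref{red:setf}) indeed fires for each of the two edges, i.e., that there is a (possibly strictly descendant) bag containing $u$ at which the side-condition $u \in \chi(\cdot)$ of (\ref{red:setf}) is met while $e_{u,*}$ is already true. This is immediate when $u \in \chi(t_i)$; when $(u,v) \in E^{\text{re}}_{t_i}$ but $u$ has already been forgotten at $t_i$, I would use that any edge of $G$ shares some bag with both its endpoints, together with TD connectedness of the subtree of bags containing $u$, to pinpoint the descendant at which Setf triggers and from which propagation up to $t_i$ supplies $f^u_{t_i}$. Once this preparation is in place, every remaining step is a direct application of one of the five rule schemas, and no answer-set minimality argument is needed: rules (\ref{red:tddegree}), (\ref{red:prohibitf}), and (\ref{red:localdegree}) are plain integrity constraints, so any model of $\Pi$, and in particular any answer set, must satisfy them.
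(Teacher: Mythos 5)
Your proof is correct and follows essentially the same route as the paper's: the identical three-way case split (same ready-set node killed by Rules~(\ref{red:localdegree}); ancestor case by deriving $f^u$ via Rules~(\ref{red:setf}), propagating it with Rules~(\ref{red:propf}), and clashing with Rules~(\ref{red:tddegree}); incomparable case by propagating to two children of a common ancestor and clashing with Rules~(\ref{red:prohibitf})). Your extra care about the side-condition $u\in\chi(\cdot)$ of Rules~(\ref{red:setf}) is a point the paper glosses over and is welcome; just note that niceness of the TD is neither assumed in the lemma nor needed for your last case, since any common ancestor with two distinct children on the respective paths already triggers Rules~(\ref{red:prohibitf}).
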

\begin{proof}
Assume towards a contradiction that there are three different vertices~$u,v,\allowbreak w\in V$ with~$e_{u,v}, e_{u,w}\in M$.
Then, by Rules~(\ref{red:localdegree}) there cannot be a node~$t$ with~$(u,v),(u,w)\in E_t^{\text{re}}$.
However, by the definition of TDs, there are nodes~$t', t''$ with~$(u,v)\in E_{t'}^{\text{re}}$ and~$(u,w)\in E_{t''}^{\text{re}}$.
By connectedness of TDs, $u$ appears in each bag of any node of the path~$X$ between~$t'$ and~$t''$.
Then, either~$t'$ is an ancestor of~$t''$ (or vice versa, symmetrical) or there is a common ancestor~$t$.
In the former case, $f^u_{t''}$ is justified by Rules~(\ref{red:setf}) and so is~$f^u_{\hat t}$ on each node~$\hat t$ of~$X$
by Rules~(\ref{red:propf}) and therefore ultimately Rules~(\ref{red:tddegree}) fail due to~$f^u_{t'}, e_{u,w}\in M$.
In the latter case, $f^u_{t''}, f^u_{t'}$ is justified by Rules~(\ref{red:setf}) and so is~$f^u_{\hat t}$ on each node~$\hat t$ of~$X$
by Rules~(\ref{red:propf}). Then, %
Rules~(\ref{red:prohibitf}) fail due to~$f^u_{t'}, f^u_{t''}\in M$.
\end{proof}

\begin{restatetheorem}[thm:corr]
\begin{theorem}[Correctness]%
Reduction~$R$ as proposed in this section is correct.
Let us consider an instance~$I=(G,P)$ of the \problemFont{Disjoint Paths Problem},
and a pair-connected TD~$\mathcal{T}=(T,\chi)$ of~$G$.
Then, $I$ has a solution if and only if the program~$R(I,\mathcal{T})$ admits an answer set.
\end{theorem}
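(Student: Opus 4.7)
\smallskip
\noindent\textbf{Proof Plan.}
The plan is to establish both directions by carefully combining Lemma~\ref{lem:degree} with the cycle construction sketched in Example~\ref{ex:cycle}. For the forward direction (solution implies answer set), I would start from a collection of vertex-disjoint paths $\pi_1,\ldots,\pi_{|P|}$ witnessing the solution of $(G,P)$. I then build a candidate interpretation $M$ by setting $e_{u,v}\in M$ exactly for those edges $(u,v)$ lying on some $\pi_i$, $ne_{u,v}\in M$ for all remaining edges in $E$, and $r_u\in M$ exactly for vertices on some path. The $f^u_t$ atoms are set to true in the unique node $t$ (and all ancestors in which $u$ occurs) where the outgoing edge of $u$ on its path becomes ready, which is well-defined since each vertex has at most one outgoing used edge by construction. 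I would then verify, rule-group by rule-group, that $M$ is a model of $\Pi=R(I,\mathcal{T})$ and that every atom in $M$ is proven under a suitable ordering that follows the paths: $r_{s_1}$ is proven by Rule~(\ref{red:pair1st}); reachability of later source atoms $r_{s_i}$ follows by Rules~(\ref{red:paircycles}) since all preceding destinations $d_{i-1}$ are reached along their paths; reachability of every other vertex is proven inductively along the used edges via Rules~(\ref{red:reach}); the $f^u_t$-atoms are proven via Rules~(\ref{red:setf}) followed by Rules~(\ref{red:propf}). Disjointness of the paths ensures Rules~(\ref{red:prohibitf}), (\ref{red:tddegree}) and (\ref{red:localdegree}) are satisfied, and Rules~(\ref{red:pair1}) hold because every destination is reached.

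For the backward direction (answer set implies solution), let $M$ be an answer set of $\Pi$. By Lemma~\ref{lem:degree}, for every vertex $u$ there is at most one $v$ with $e_{u,v}\in M$. Consider the subgraph $H$ of $G$ consisting of the used edges $\{(u,v)\mid e_{u,v}\in M\}$ together with the reached vertices $\{u\mid r_u\in M\}$. By Rules~(\ref{red:pair1}) every destination $d_i$ is reached in $M$, and $r_{s_1}$ is forced true by Rule~(\ref{red:pair1st}). The main lemma of this direction is that for each $i$ the only way for $r_{s_i}$ to be proven is through the chain of Rules~(\ref{red:paircycles}): the sole other source of reachability is via edges through Rules~(\ref{red:reach}). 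I would then argue, following the construction around Figure~\ref{fig:cycles}, that if some $s_i$ were reached via used edges starting from some destination $d_j$ with $j<i$, then the positive dependency induced by $M$ would form a cycle $r_{s_i},\ldots,r_{d_j},r_{s_{j+1}},\ldots,r_{s_{i-1}},r_{s_i}$ whose atoms admit no further proof — the only candidate rule for proving the cyclic $r_{s_k}$'s is Rules~(\ref{red:paircycles}) (since Rules~(\ref{red:reach}) would require a second outgoing edge at some vertex, contradicting Lemma~\ref{lem:degree}), but this rule is precisely what closes the cycle. This contradicts the minimality of $M$ under the GL-reduct. Hence for every $i$, source $s_i$ reaches a destination $d_{\sigma(i)}$ with $\sigma(i)\geq i$, and a standard pigeonhole argument using $|P|=|\{s_i\}|=|\{d_i\}|$ forces $\sigma(i)=i$. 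From the used edges one then extracts, for each $i$, a path from $s_i$ to $d_i$, and these paths are pairwise vertex-disjoint by Lemma~\ref{lem:degree} (no vertex has two outgoing used edges, and sources/destinations by assumption have no incoming/outgoing edge outside their role).

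The main obstacle I expect is formalising the cycle argument in the backward direction: one has to show precisely that the cyclic dependency among reachability atoms produced by an ``early reach'' ($s_i$ reaches $d_j$ with $j<i$) is genuinely unsupported in the GL-reduct, which requires carefully ruling out every alternative proof route. The treewidth-aware guidance of Rules~(\ref{red:paircycles}) along the pair-connected TD—needed for treewidth boundedness—must also be shown to preserve the intended cyclic dependency structure regardless of how the closure sequence $\sigma$ distributes pairs across the TD. Once Lemma~\ref{lem:degree} and the Example~\ref{ex:cycle} pattern are in place, the remaining bookkeeping (models, satisfaction of the finish-flag rules, and extraction of the paths from the $e_{u,v}$-atoms) is routine case analysis across the node types $\leaf,\intr,\rem,\join$ of $\mathcal{T}$.
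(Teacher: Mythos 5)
Your proposal follows essentially the same route as the paper's proof: the forward direction constructs the interpretation from the disjoint paths and verifies each rule group together with provability along the paths, and the backward direction combines Lemma~\ref{lem:degree}, the unprovable-cycle argument for a source reaching a preceding destination, and a pigeonhole step to extract the disjoint witnessing paths. One small correction to your parenthetical: in ruling out alternative proofs of the cyclic reachability atoms, the relevant fact is that each vertex has at most one used \emph{incoming} edge (which the paper derives from Lemma~\ref{lem:degree} via a pigeonhole argument) together with the standing assumption that sources have no incoming edges at all---not a ``second outgoing edge'' contradicting Lemma~\ref{lem:degree}, since Rules~(\ref{red:reach}) prove $r_v$ from an edge \emph{into}~$v$.
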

\end{restatetheorem}
\begin{proof}
``$\Rightarrow$'': Assume any positive instance~$I$ of~\problemFont{Disjoint Paths Problem}.
Then, there are disjoint paths $P_1, \ldots, P_i, \ldots P_{\Card{P}}$ 
from~$s_1$ to~$d_1$, \ldots, $s_i$ to~$d_i$, \ldots, $s_{\Card{P}}$ to~$d_{\Card{P}}$ for each pair~$(s_i,d_i)\in P$.
Assuming further pair-connected TD~$\mathcal{T}$ of~$I$, we construct in the following
an answer set~$M$ of~$\Pi=R(I,\mathcal{T})$.
To this end, we collect reachable atoms~$A\eqdef \{u \mid u\text{ appears in some }P_i, 1\leq i \leq \Card{P}\}$
and used edges~$U\eqdef\{(u,v) \mid v\text{ appears immediately after }u\text{ in some }P_i, 1\leq i \leq \Card{P}\}$.
Then, we construct answer set candidate~$M\eqdef %
\{r_u \mid u\in A\}\cup\{e_{u,v}\mid (u,v)\in U\} \cup\{ne_{u,v}\mid (u,v)\in E\setminus U\}\cup \{f^u_t \mid (u,v) \in U \cap E^{\text{re}}_t\} \cup \{f^u_{t} \mid (u,v) \in U\cap E^{\text{re}}_{t'}, u\in\chi(t), t'\text{ is a descendant of }t\text{ in }T\}$.
It remains to show that~$M$ is an answer set of~$\Pi$.
Observe that~$M$ satisfies all the rules of~$\Pi_{\mathcal{R}}$.
In particular, by construction, we have reachability~$r_v$ for every vertex~$v$ of every pair in~$P$, 
and the partition in used edges~$e_{u,v}$ and unused edges~$ne_{u,v}$ is ensured.
Further, $\Pi_{\mathcal{L}}$ is satisfied, as, again by construction, for each vertex~$v$ of every pair in~$P$, we have~$r_v\in M$.
Finally, $\Pi_{\mathcal{C}}$ is satisfied as by construction~$f^u_t\in M$ iff $e_{u,v}\in M\cap E^{\text{re}}_t$ or
$e_{u,v}\in M\cap E^{\text{re}}_{t'}$ for any descendant node~$t'$ of~$t$ with~$u\in\chi(t)$.
It is easy to see that~$M$ is indeed a $\subseteq$-smallest model of the reduct~$\Pi^M$,
since, atoms for used and unused edges form a partition of~$E$.

``$\Leftarrow$'': Assume any answer set~$M$ of~$\Pi$.
First, we observe that we can only build paths from sources towards destinations,
as sources have only outgoing edges and destinations allow only incoming edges.
Further, by construction, vertices can only have one used, outgoing edge, cf., Lemma~\ref{lem:degree}.
Consequently, if a vertex had more than one used, incoming edge, one cannot match at least one pair of~$P$ (by combinatorial pigeon hole principle).
Hence, in an answer set~$M$ of $\Pi$, there is at most one incoming edge per vertex.
By construction of~$\Pi$, in order to reach each~$d_i$ with~$(s_i,d_i)\in_i\sigma$, 
$s_i$ cannot reach some~$d_{j'}$ with~$j'< i$.
Towards a contradiction assume otherwise, i.e., $s_i$ reaches~$d_{j'}$.
But then, by construction of the reduction, we also have a reachable path from~$d_{j'}$ to~$s_i$, consisting of~$d_{j'}, d_{j'+1}, \ldots, d_{i-1}, s_i$.
Since every vertex has at most one incoming edge, $d_{j'}$ cannot have any other justification for being reachable, nor does any source on this path.
Hence, this forms a cycle %
such that no atom of the cycle is proven, which cannot be present in an answer set.
Therefore, $s_i$ only reaches~$d_i$, since otherwise there would be at least one vertex~$s_j$ required to reach~$s_{i'}$ with~$(s_{i'}, d_{i'})\in_{i'}\sigma$, $i'<j$.
Consequently, we construct a witnessing path~$P_i$ for each pair~$(s,d)\in_i \sigma$
as follows: $P_i\eqdef s, p_1, \ldots, p_m, d$ where~$\{e_{s, p_1}, e_{p_1, p_2}, \ldots, e_{p_{m-1}, p_m}, e_{p_m, d}\}\subseteq M$.
Thus, $P_i$ starts with~$s$, follows used edges in~$M$ and reaches~$d$.
\end{proof}

\section{\FIX{Properties and Consequences of Section~\ref{sec:hardness}}}\label{sec:cons}

\FIX{
The resulting program of the reduction consisting of Rules~(\ref{red:edgeguess1})--(\ref{red:localdegree}) is not unary.
However, only Rules~(\ref{red:paircycles}) as well as~(\ref{red:prohibitf})--(\ref{red:localdegree}) are not unary.
Still, Rules~(\ref{red:tddegree}) and~(\ref{red:localdegree}) can be turned unary by replacing the occurrence of~$e_{u,v}$ in these two rules by~$\neg ne_{u,v}$. Further, Rules~(\ref{red:prohibitf}) can be replaced by the following rules, which use an additional auxiliary atom~``$\text{bad}$''.
}\FIX{
\vspace{-.3em}
\begin{flalign}
	& \text{bad} \leftarrow f^u_{t}, f^u_{t'}&&\text{for each }u\in\chi(t')\cap\chi(t''), t'\neq t''\\
	& \leftarrow \text{bad}
\end{flalign}%
\vspace{-1em}
}

\FIX{On the other hand, for Rules~(\ref{red:paircycles}) %
the resulting (positive) cycles of the dependency graph are required for the whole construction, cf., Figure~\ref{fig:cycles}.
More precisely, it is indeed essential for the whole construction that reachability  of a source~$s_i$ requires both reachability of the preceding source~$s_{i-1}$ and destination~$d_{i-1}$. %
Otherwise we cannot prevent a source from reaching a preceding destination via cyclic reachability without provability and still linearly preserve the treewidth. %
Consequently, Rules~(\ref{red:paircycles}) are \emph{not unary} and we expect that this is crucial.
Nevertheless, it was shown that non-unary programs are more expressive than unary programs~\cite{Janhunen06}. Still, we are convinced that exploiting cyclic, unproven reachability %
such that the treewidth is not increased more than linearly, actually requires the usage of non-unary rules.
}
\begin{example}
\FIX{Consider again Figure~\ref{fig:cycles}, depicting the positive dependency graph $D_{R_{\mathcal{L}}}$ of Rules~(\ref{red:paircycles}), as well as Example~\ref{ex:cycle}.
More concretely, consider the same situation of Example~\ref{ex:cycle}, where a source~$s_i$ reaches some destination~$d_j$ with~$j<i$, which causes a cycle~$C{=}r_{s_i},\ldots, r_{d_j}, r_{s_{j+1}}, \ldots, r_{s_i}$ over reachability atoms.
Then, it is crucial for the construction that Rules~(\ref{red:paircycles}) are not unary.
To be more concrete, for the instantiated rule~$r$ with~$r_{s_{j+1}}\in H_r$, we require that both~$r_{s_j}, r_{d_{j}}\in B_r^+$. 
If instead of~$r$ we constructed two rules~$r_{s_{j+1}}\leftarrow r_{s_j}$ and~$r_{s_{j+1}}\leftarrow r_{d_{j}}$, every atom of the cycle~$C$ could be provable since~$r_{s_{j+1}}$ can already be proven by the former rule. 
Further, also for the instantiated rule~$r'$ of Rules~(\ref{red:paircycles}) with~$r_{s_o}\in H_{r'}$ for every~$j+1<o\leq i$, we require that the body is not unary. 
If instead of such a rule~$r'$, we constructed two rules~$r_{d_o}\leftarrow r_{s_{o-1}}$ and~$r_{d_o}\leftarrow r_{d_{o-1}}$, every atom of the cycle~$C$ could be provable since~$r_{d_o}$ is already proven by the latter rule. 
Since, in particular the result should hold for any such cycle~$C$, we rely on non-unary rules for our reduction to work.
}
\end{example}

\end{document}

\section{Treewidth-aware reduction from \SAT to normal \ASP}

\begin{flalign}
	\label{red3:guess}&\{a_{\varphi_t} \} \longleftarrow &&{\text{if }\chi(t)\neq\emptyset}\\
	\label{red3:checkrules}&\longleftarrow a_{\varphi_t} &&{\text{if }\chi(t)\neq\emptyset,\varphi_t\not\models \Pi_t}\\
	\label{red3:prop}&p_{\chi(t)} \longleftarrow a_{\varphi_t}  &&{\text{if }\chi(t)\neq\emptyset}\\
	\label{red3:checkprop}&\longleftarrow \neg p_{\chi(t')}  && {\text{for each }t'\in\children(t), \chi(t)\neq\chi(t'), \chi(t')\neq\emptyset}\\
	\label{red3:checkprop2}&\longleftarrow \neg p_{\chi(n)}  && {\text{if }n=\rootOf(T), \chi(n)\neq\emptyset}\\
\label{red3:prove}&y_t \longleftarrow a_{\varphi_t}, y_{t_1}, \ldots, y_{t_\ell}, x  &&{\text{for each } (x\prec y) \in \prec_{\varphi_t}, \{t_1,\ldots,t_\ell\}=\children(t)}\\ %
\label{red3:prove2}&y_t \longleftarrow a_{\varphi_t}, y_{t_1}, \ldots, y_{t_\ell}  &&{\text{with } \{t_1,\ldots,t_\ell\}=\children(t),\text{ there is no }x\text{ with }(x\prec y) \in \prec_{\varphi_t}}\\ %
	\label{red3:checkremove}&x\longleftarrow x_{t'}&&{\text{for each }{t'\in\children(t)},x\in\chi(t')\setminus\chi(t)}\\
	\label{red3:checkremove2}&x \longleftarrow x_{n}&&{\text{for each }x\in\chi(n)\text{ with }n=\rootOf(T)}%
\end{flalign}

\end{document}

\section{The Elsevier article class}

\paragraph{Installation} If the document class \emph{elsarticle} is not available on your computer, you can download and install the system package \emph{texlive-publishers} (Linux) or install the \LaTeX\ package \emph{elsarticle} using the package manager of your \TeX\ installation, which is typically \TeX\ Live or Mik\TeX.

\paragraph{Usage} Once the package is properly installed, you can use the document class \emph{elsarticle} to create a manuscript. Please make sure that your manuscript follows the guidelines in the Guide for Authors of the relevant journal. It is not necessary to typeset your manuscript in exactly the same way as an article, unless you are submitting to a camera-ready copy (CRC) journal.

\paragraph{Functionality} The Elsevier article class is based on the standard article class and supports almost all of the functionality of that class. In addition, it features commands and options to format the
\begin{itemize}
\item document style
\item baselineskip
\item front matter
\item keywords and MSC codes
\item theorems, definitions and proofs
\item lables of enumerations
\item citation style and labeling.
\end{itemize}

\section{Front matter}

The author names and affiliations could be formatted in two ways:
\begin{enumerate}[(1)]
\item Group the authors per affiliation.
\item Use footnotes to indicate the affiliations.
\end{enumerate}
See the front matter of this document for examples. You are recommended to conform your choice to the journal you are submitting to.

\section{Bibliography styles}

There are various bibliography styles available. You can select the style of your choice in the preamble of this document. These styles are Elsevier styles based on standard styles like Harvard and Vancouver. Please use Bib\TeX\ to generate your bibliography and include DOIs whenever available.

Here are two sample references: \cite{Feynman1963118,Dirac1953888}.

\section*{References}

\bibliography{mybibfile}

\end{document}